\newcommand{\rev}[1]{#1}
\newcommand{\del}[1]{}
\DeclareMathOperator*{\argmin}{arg\,min}
\DeclareMathOperator*{\diag}{diag}
\newtheorem{lemma}{Lemma}
\newtheorem{theorem}{Theorem}
\newtheorem{proposition}{Proposition}
\newtheorem{definition}{Definition}
\def\calE{\mathcal{E}}
\def\calS{\mathcal{S}}
\def\calR{\mathcal{R}}
\def\A{\mathcal{A}}
\def\X{\mathcal{X}}
\def\Z{\mathcal{Z}}
\def\Y{\mathcal{Y}}
\def\B{\mathcal{B}}
\def\C{\mathcal{C}}
\def\Y{\mathcal{Y}}
\def\L{\mathcal{L}}
\def\J{\mathcal{J}}
\def\E{\mathbb{E}}
\def\Z{\mathcal{Z}}
\def\1{\mathbf{1}}
\def\P{\mathbb{P}}
\def\R{\mathbb{R}}
\def\N{\mathbb{N}}
\def\one{\mathbbm{1}}
\def\thetast{\theta_*}
\def\gamworst{\bar{\gamma}}
\def\Delmax{\Delta_{\max}}
\def\gamworstae{\bar{\gamma}_{\mathrm{ae}}}
\def\xst{x_*}
\def\cN{\mathcal{N}}
\def\mwtest{\widehat{OPT}}
\def\tol{\textsc{tol}}
\def\opt{\textsc{opt}}
\def\optt{\wt{\textsc{opt}}}
\def\oca{\A}
\def\ocb{\B}
\def\occ{\C}
\def\sg{\mathcal{SG}}
\newcommand{\mc}[1]{\mathcal{#1}}
\def\t{\top}
\DeclareMathOperator*{\trianglem}{\tilde{\triangle}}
\DeclareMathOperator*{\tr}{\text{Trace}}
\DeclareMathOperator*{\ucb}{\text{CB}}
\DeclareMathOperator*{\simp}{\triangle}
\DeclareMathOperator*{\kl}{\mathbf{KL}}
\newcommand{\wt}[1]{\widetilde{#1}}
\newcommand{\jks}[1]{}
\newcommand{\aw}[1]{}
\newcommand{\xmax}{x_{\mathrm{max}}}
\newcommand{\ellmax}{\ell_{\mathrm{max}}}
\newcommand{\norm}[1]{\left\lVert#1\right\rVert}
 \newcommand{\Asb}{A_{\mathrm{semi}}}
\newcommand{\Ab}{A_{\mathrm{band}}}
\newcommand{\cV}{\mathcal{V}}
\newcommand{\poly}{\mathrm{poly}}
\newcommand{\tilO}{\wt{\mathcal{O}}}
\newcommand{\calO}{\mathcal{O}}
\DeclarePairedDelimiter\ceil{\lceil}{\rceil}
\newcommand{\Exp}{\mathbb{E}}
\newcommand{\Roptim}{\calR^{\mathrm{optimism}}}
\newcommand{\Delmin}{\Delta_{\mathrm{min}}}
\newcommand{\Af}{A_{\mathfrak{f}}}
\newcommand{\nf}{n_{\mathfrak{f}}}
\newcommand{\frakf}{\mathfrak{f}}
\newcommand{\band}{\mathrm{band}}
\newcommand{\sband}{\mathrm{semi}}
\newcommand{\Rsfw}{R_{\mathrm{SFW}}}
\DeclareMathOperator*{\argmax}{arg\,max}
\begin{document}

\twocolumn[

\aistatstitle{Experimental Design for Regret Minimization in Linear Bandits}

\aistatsauthor{ Andrew Wagenmaker$^*$ \And Julian Katz-Samuels$^*$ \And  Kevin Jamieson }

\aistatsaddress{ University of Washington \\ \texttt{ajwagen@cs.washington.edu} \And University of Washington \\ \texttt{jkatzsam@cs.washington.edu} \And University of Washington \\ \texttt{jamieson@cs.washington.edu} } ]

\begin{abstract}
In this paper we propose a novel experimental design-based algorithm to minimize regret in online stochastic linear and combinatorial bandits. While existing literature tends to focus on optimism-based algorithms--which have been shown to be suboptimal in many cases--our approach carefully plans which action to take by balancing the tradeoff between information gain and reward, overcoming the failures of optimism. In addition, we leverage tools from the theory of suprema of empirical processes to obtain regret guarantees that scale with the Gaussian width of the action set, avoiding wasteful union bounds. We provide state-of-the-art finite time regret guarantees and show that our algorithm can be applied in both the bandit and semi-bandit feedback regime. In the combinatorial semi-bandit setting, we show that our algorithm is computationally efficient and relies only on calls to a linear maximization oracle. In addition, we show that with slight modification our algorithm can be used for pure exploration, obtaining state-of-the-art pure exploration guarantees in the semi-bandit setting. Finally, we provide, to the best of our knowledge, the first example where optimism fails in the semi-bandit regime, and show that in this setting our algorithm succeeds. 
\end{abstract}

\section{INTRODUCTION}
Multi-armed bandits have received much attention in recent years as they serve as an excellent model for developing algorithms that adeptly deal with the exploration-exploitation tradeoff. In this paper, we consider the stochastic linear bandit problem in which there is a set of arms $\X \subset \R^d$ and an unknown parameter $\thetast \in \R^d$. An agent plays a sequential game where at each round $t$ she chooses an arm $x_t \in \X$ and receives a noisy reward whose mean is $x^\t_t \thetast$. The goal is to maximize the reward over a given time horizon $T$. An important special case of stochastic linear bandits is the combinatorial setting where $\X \subset \{0,1\}^d$, which can be used to model problems such as finding a shortest path in a graph or the best weighted matching in a bipartite graph. We consider both the \emph{bandit feedback} setting, where the agent receives a noisy observation of $x_t^\top \thetast$, and the \emph{semi-bandit feedback} setting,  where the agent receives a noisy observation of $\theta_{*,i}$ for each $i$ with $x_{t,i} = 1$.

Existing regret minimization algorithms for linear bandits suffer from several important shortcomings. First, they typically rely on naive union bounds, which yield regret guarantees scaling as either $\calO(d\sqrt{T})$ or $\calO(\sqrt{d \log(|\X|) T})$. Such union bounds ignore the geometry present in the problem and, as such, can be very wasteful. As the union bound often appears in the confidence interval within the algorithm, this is not simply an analysis issue---it can also affect real performance. Second, in the moderate, non-asymptotic time regime, existing algorithms tend to rely on the principle of \emph{optimism}---pulling only the arms they believe may be optimal. Algorithms relying on this principle are very myopic, foregoing initial exploration which could lead to better long-term reward and instead focusing on obtaining short-term reward, leading to suboptimal long-term performance. This is a well-known effect in the bandit setting but, as we show, is also present in the semi-bandit setting.

In this paper, we develop an algorithm overcoming both of these shortcomings. Rather than employing a naive union bound, we appeal to tools from empirical process theory for controlling the suprema of a Gaussian process, allowing us to obtain confidence bounds that are geometry-dependent and potentially much tighter. In addition, our algorithm relies on careful planning to balance the exploration-exploitation tradeoff, taking into account both the potential information gain as well as the reward obtained when pulling an arm. This planning allows us to collect sufficient information for good long-term performance without incurring too much initial regret and, to the best of our knowledge, is the first planning-based algorithm in the linear bandit setting that provides finite-time guarantees.

We emphasize that we are interested in the non-asymptotic regime and aim to optimize the whole regret bound, including lower-order terms. While several recent works achieve instance-optimal regret, they suffer from loose lower-order terms which dominate the regret for small to moderate $T$. Our results aim to minimize such terms through employing tighter union bounds. We summarize our contributions:
\begin{itemize}
\item We develop a single, general algorithm that achieves a state-of-the-art finite-time regret bound in stochastic linear bandits, in combinatorial bandits with bandit feedback, and in combinatorial bandits with semi-bandit feedback. In addition, our framework is general enough to extend to settings as diverse as partial monitoring and graph bandits.
\item We show that in the combinatorial semi-bandit regime, our algorithm is computationally efficient, relying only on calls to a linear maximization oracle, and state-of-the-art, yielding a significant improvement on existing works in the non-asymptotic time horizon regime.
\item We give the first example for combinatorial bandits with semi-bandit feedback that shows that optimistic strategies such as UCB and Thompson Sampling can do arbitrarily worse than the asymptotic lower bound, and show that our algorithm improves on optimism in this setting by an arbitrarily large factor.
\item As a corollary, we obtain the first computationally efficient algorithm for pure exploration in combinatorial bandits with semi-bandit feedback, and achieve a state-of-the-art sample complexity.
\end{itemize}
This work can be seen as obtaining \emph{problem-dependent minimax} bounds---minimax bounds that depend on the arm set but hold for all values of the reward vector---and are similar in spirit to the bounds on regret minimization in MDPs given by \cite{zanette2019tighter}. For some favorable arm sets $\X$, our bounds are tighter than prior $\X$-independent minimax bounds by large dimension factors. To the best of our knowledge, we are the first to obtain such geometry-dependent minimax bounds for linear bandits.

\newcommand{\tsum}{{\textstyle \sum}}
\section{PRELIMINARIES}
Let $\mathrm{diam}(\X) = \max_{x,y \in \X} \| x - y \|_2$ denote the diameter of $\X \subseteq \R^d$. $\diag(X)$ will refer to the  operator which sets all elements in a matrix $X$ not on the diagonal to 0. $\tilO(.)$ hides logarithmic terms. $\triangle_\X := \{a \in \R^{|\X|} : \norm{a}_1 = 1, \, a_i \geq 0 \, \forall i \}$ denotes the simplex over $\X$.  We use $\lambda \in \triangle_{\X}$ to refer to probability distributions over $\X$ and $\lambda_x$ to denote the probability on $x \in \X$. We let $\tau \in [0,\infty)^{|\X|}$ refer to allocations over $\X$ and, similarly, $\tau_x$ to denote the weight on $x \in \X$. We will somewhat interchangeably use $\tau$ to refer to the vector in $\R^{|\X|}$ and the sum of its elements, $\sum_{x \in \X} \tau_x$, but it will always be clear from context which we are referring to. If $x \in \{0,1\}^d$, we will often write $i \in x$ for $x_i = 1$ and $i \not \in x$ for $x_i = 0$. Throughout, we will let $d$ denote the dimension of the ambient space and  $k = \max_{x \in \X} \| x \|_1$. 

We are interested primarily in regret minimization in linear bandits. Given some set $\X \subseteq \R^d$, at every timestep we choose $x_t \in \X$ and receive reward $x_t^\top \thetast$, for some unknown $\thetast \in \R^d$. We will define regret as:
$$ \calR_T =  T \max_{x \in \X} x^\top \thetast - \sum_{t=1}^T x_t^\top \thetast $$
Throughout, we assume that $\thetast \in [-1,1]^d$. We consider two observation models: semi-bandit feedback and bandit feedback. In the bandit feedback setting, at every timestep we observe:
$$ y_t = x_t^\top \thetast + \eta_t $$
where $\eta_t \sim \cN(0,1)$. In the semi-bandit feedback setting, we assume that our bandit instance is combinatorial, $\X \subseteq \{ 0, 1 \}^d$, and at every timestep we observe:
$$ y_{t,i} = \theta_{*,i} + \eta_{t,i}, \quad \forall i \in x_t $$
where $\eta_t \sim \cN(0,I)$. Note that, while we assume Gaussian noise for simplicity, all our results will hold with sub-Gaussian noise \citep{katz2020empirical}. 
 
In the bandit setting, after $T$ observations, our estimate of $\thetast$ will be the standard least squares estimate:
$$ \hat{\theta} = \bigg ( \sum_{t=1}^T x_t x_t^\top \bigg )^{-1} \sum_{t=1}^T x_t y_t $$
In the semi-bandit setting, we will estimate $\thetast$ coordinate-wise, forming the estimate:
$$ \hat{\theta}_i = \frac{1}{T_i} \sum_{t=1, x_{t,i} =1}^T  y_{t,i}  $$
where $T_i$ is the number of times $x_{t,i} = 1$.  We denote:
$$\Ab(\lambda) = \sum_{x\in\mathcal{X}} \lambda_x x x^\top,   \Asb(\lambda) = \diag \left (\sum_{x \in \X} \lambda_x x x^\top \right )$$
For convenience we assume the optimal arm is unique and denote it by $\xst$. As is standard, we denote the gap of arm $x$ by $\Delta_x :=  \theta_*^\t(\xst - x)$. We denote the minimum gap as $\Delmin = \min_{x \in \X \ : \ \Delta_x > 0} \Delta_x$ and the maximum gap by $\Delmax = \max_{x \in \X} \Delta_x $. 

In the combinatorial setting, $|\X|$ can often be exponentially large in the dimension, making computational efficiency non-trivial since $\X$ cannot be efficiently enumerated. As such, much of the literature on combinatorial bandits has focused on obtaining algorithms that rely only on an argmax oracle:
$$ \text{ORACLE}(v) = \argmax_{x \in \X} x^\top v $$
Efficient argmax oracles are available in many settings, for instance finding the minimum weighted matching in a bipartite graph and finding the shortest path in a directed acyclic graph.

\section{MOTIVATING EXAMPLES}

Before presenting our algorithm and main results, we present several examples that motivate the necessity of planning and the wastefulness of naive union bounds, and illustrate how our algorithm is able to make improvements in both these aspects.

First, we show that an optimistic strategy cannot be optimal for combinatorial bandits with semi-bandit feedback. Consider a generic optimistic algorithm that maintains an estimate $\widehat{\theta}_t$ of $\theta$ at round $t$ and selects the maximizer of an upper confidence bound, $x_t = \argmax_{x \in \X} x^\t \widehat{\theta}_t + \ucb(x,\{x_s\}_{s=1}^{t-1} )$. We make two assumptions on the confidence bound $\ucb(\cdot, \cdot)$. First, we assume that $\P[ \exists t \leq T, \exists x \in \X: |x^\t (\widehat{\theta}_t -\theta)| > \ucb(x,\{x_s\}_{s=1}^{t-1}) ] \leq 1/T$. Second, we assume that the confidence bound is at least as good as a confidence bound formed from taking the least squares estimate
\begin{align*}
\ucb(x,\{x_s\}_{s=1}^{t-1}) \leq  \sqrt{\alpha \norm{x}_{(\sum_{s=1}^{t-1} x_s x_s^\t )^{-1} }^2 \log(T)}
\end{align*}
where $\alpha > 0$ is a universal constant. We call this algorithm the \emph{generic optimistic algorithm} and let $\Roptim_T$ denote its regret. Then we have the following.

\begin{proposition}
\label{prop:opt_counterexample_semibandit}
Fix any $m \in \N $ and $\epsilon \in (0,1)$. Then there exists a $\calO(m)$-dimensional combinatorial bandit problem with semi-bandit feedback where:
\begin{align*}
\limsup_{T \longrightarrow \infty} \frac{\Exp[\Roptim_T]}{\log(T)} & = \Omega \left ( \frac{m}{\epsilon} \right ).
\end{align*}
and Algorithm \ref{alg:gw_ae_comp} has expected regret bounded as, for any $T$:
$$ \Exp[\calR_T] \leq \calO \left ( \min \left \{ \frac{\sqrt{m} \log (T)}{\epsilon^2}, \frac{m\log(T)}{\epsilon} \right \} \right). $$
\end{proposition}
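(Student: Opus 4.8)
The statement has two parts --- a regret lower bound for the generic optimistic algorithm and a regret upper bound for Algorithm~\ref{alg:gw_ae_comp} --- which I would establish separately on a common hard instance. \textbf{The instance.} I would use $m$ (essentially) independent copies of a combinatorial ``end-of-optimism'' gadget, each supported on a constant number of coordinates, so the dimension is $\calO(m)$. Block $j$ contributes a sub-action that, together with the per-block optimal sub-actions of the other blocks, forms the unique best arm $\xst$; a near-optimal arm $x^{(j)}$ with gap $\Delta_{x^{(j)}}=\Theta(\epsilon)$ whose suboptimality is certified only by learning one coordinate $i_j$; and a designated ``informative'' action containing $i_j$ (indeed, one informative action can contain $i_j$ for all $j$) but having a large gap, so that --- being far from optimal --- an optimistic algorithm never plays it. One picks $\thetast\in[-1,1]^d$ realizing this gap pattern with $\xst$ the unique optimum and checks the gaps by a direct computation. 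The key feature is that under semi-bandit feedback the only cheap way to learn $\theta_{*,i_j}$ is to play $x^{(j)}$ (which costs $\Theta(\epsilon)$ each time) or the informative action (which optimism refuses to play).

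\textbf{Lower bound for optimism.} First, the assumed validity of $\ucb$ plus the standard optimistic analysis shows the algorithm is consistent: $\Exp[\Roptim_T]=o(T^a)$ for all $a>0$. Second, using the assumed bound $\ucb(x,\cdot)\le\sqrt{\alpha\|x\|^2_{(\sum_s x_sx_s^\t)^{-1}}\log(T)}$, an arm $x$ is selected at round $t$ only while $2\,\ucb(x,\cdot)\ge\Delta_x$, hence is played $\calO(\log(T)/\Delta_x^2)$ times in total; the informative action(s), having a large gap, are thus played only a lower-order number of times, so essentially all information about each $\theta_{*,i_j}$ must come from plays of $x^{(j)}$. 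Third, a change-of-measure/regret-lower-bound argument of Graves--Lai and Lattimore--Szepesvari type --- comparing $\thetast$ against the alternative that flips the sign of block $j$'s gap and combining consistency on both instances via Bretagnolle--Huber --- forces $\Exp[\text{plays of }x^{(j)}]\ge(1-o(1))\tfrac{\log(T)}{2\epsilon^2}$, and hence $\Omega(\log(T)/\epsilon)$ expected regret attributable to block $j$. Since the gadgets use disjoint coordinates, perturbing block $j$ leaves the other blocks' information intact, so the $m$ per-block bounds add: $\limsup_{T\to\infty}\Exp[\Roptim_T]/\log(T)=\Omega(m/\epsilon)$.

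\textbf{Upper bound for the algorithm.} This follows by instantiating our main finite-time regret guarantee for Algorithm~\ref{alg:gw_ae_comp} on this instance and evaluating the relevant complexity quantities. The instance-dependent, $\sum_x\Delta_x^{-1}$-type term evaluates to $\calO(m\log(T)/\epsilon)$ --- there are $\Theta(m)$ arms of gap $\Theta(\epsilon)$ and the remaining arms have constant-order gaps --- and here it matters that the algorithm uses Gaussian-width-based confidence sets rather than a union bound over the exponentially many actions, which would cost an extra $\log|\X|=\Theta(m)$ factor. The minimax/Gaussian-width term evaluates to $\calO(\sqrt m\log(T)/\epsilon^2)$: the algorithm's experimental design places its exploration on the informative action, which reveals the relevant coordinate of every block simultaneously, so $\calO(\log(T)/\epsilon^2)$ plays of it resolve all $m$ blocks, and its (large) gap converts this into $\calO(\sqrt m\log(T)/\epsilon^2)$ regret. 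The claimed bound is the minimum of the two.

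\textbf{Main obstacle.} The crux is the optimism lower bound, specifically the quantitative ``trapping'' step: proving, via the assumed $\ucb$ upper bound and the (deliberately large) gap of the informative action, that the optimistic algorithm accumulates only a lower-order amount of information about each $i_j$ from anything other than $x^{(j)}$, and then verifying that the per-block change-of-measure bounds genuinely sum to $\Omega(m/\epsilon)$ rather than collapsing --- which relies on the disjoint supports of the gadgets and the coordinate-wise nature of semi-bandit feedback. The upper bound is comparatively routine given the main regret theorem, apart from computing the Gaussian width of this action set.
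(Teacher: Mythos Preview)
Your high-level strategy matches the paper's: build an instance where a single high-gap ``informative'' arm can certify all $m$ gaps at once, so the planner plays it while optimism refuses. But the details you leave unspecified are exactly where the argument lives, and your product-of-gadgets framing points in the wrong direction.

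\textbf{The instance.} The paper does not use $m$ independent gadgets with $\xst$ spanning all blocks. It takes $\X=\{\{1\},\dots,\{m\},\mathbf 1\}\subset\{0,1\}^{2m+\sqrt m}$ with $\theta_1=1$, $\theta_i=1-\epsilon$ for $2\le i\le m$, and $m+\sqrt m$ extra coordinates chosen so that $\Delta_{\mathbf 1}=\sqrt m+1$. The near-optimal arms are \emph{singletons}; $\mathbf 1$ reveals every coordinate at once. The specific value $\Delta_{\mathbf 1}=\Theta(\sqrt m)$ is the crux --- it is what converts $\Theta(\log T/\epsilon^2)$ plays of the informative arm into $\Theta(\sqrt m\log T/\epsilon^2)$ regret rather than $\Theta(m\log T/\epsilon)$ --- and it is engineered by the extra negative coordinates, not inherited from any block structure. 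In your product-of-$\epsilon$-gap-gadgets sketch, the natural candidate for the informative arm is the full-swap arm, whose gap is $m\epsilon$; with that gap Algorithm~\ref{alg:gw_ae_comp} gets only $O(m\log T/\epsilon)$ and there is no separation from optimism. You would have to add structure outside the product to force the informative gap down to $\Theta(\sqrt m)$, and you do not say how.

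\textbf{The upper bound.} The $\sqrt m\log T/\epsilon^2$ term does not fall out of Theorem~\ref{thm:inefficient_regret_bound}. On this instance $\gamworst(\Asb)=\Theta(m)$: for $\epsilon'\in[\epsilon,\sqrt m)$ the set $\X_{\epsilon'}$ contains only singletons, so the infimizing $\lambda$ in the definition of $\gamworst$ cannot borrow $\mathbf 1$, and the best one can do is $\Theta(m\log m)$. The main theorem therefore yields only the $O(m\log T/\epsilon)$ branch (via $d,\gamworst=\Theta(m)$ and $\Delmin=\epsilon$ --- not via a $\sum_x\Delta_x^{-1}$ term, which the paper never states). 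The paper obtains $\sqrt m/\epsilon^2$ by a direct, instance-specific calculation: it exhibits the allocation ``all mass on $\mathbf 1$'' as feasible for the per-round design \eqref{eq:vl}, bounds the Gaussian-width constraint by $\tilO(1/\epsilon_\ell)$ for this particular $\X$, deduces $\tau_\ell=\tilO(1/\epsilon_\ell^2)$ suffices, and multiplies by $\Delta_{\mathbf 1}=\Theta(\sqrt m)$. You describe essentially this computation in words, but it is bespoke, not a corollary of the general regret theorem.
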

Thus, treating $\epsilon$ as a constant, the asymptotic regret of the generic optimistic algorithm is loose by a square root dimension factor, and Algorithm \ref{alg:gw_ae_comp} in the current paper improves over optimism by an arbitrarily large factor. As it also relies on the principle of optimism, albeit in a randomized fashion, Thompson Sampling will be suboptimal by this same factor on this instance. A similar instance can also be found in the bandit feedback setting. The improvement in Algorithm \ref{alg:gw_ae_comp} is due to its ability to pull informative but suboptimal arms if the information gain outweighs the regret incurred, reducing the cumulative regret. Optimistic algorithms, in contrast, will only pull arms they believe may be optimal, and so do not effectively take into account the information gain which, in some cases, causes them to be very suboptimal. 

To illustrate the improvement we gain by applying a less naive union bound, we will consider the following combinatorial class:
\begin{align*}
\X = \left \{x \in \{0,1\}^{m+n} : \sum_{i=1}^{m} x_i = k, \sum_{i=m+1}^{m+n} x_i = \ell \right \}
\end{align*}
where $d = n+m$. This class corresponds to the Cartesian product of a Top-$k$ problem on dimension $m$ and a Top-$\ell$ problem on dimension $n$. As we will show, the minimax regret of Algorithm \ref{alg:gw_ae_comp} scales with $\gamworst(A)$, a measure of the Gaussian width of $\X$, as defined below in (\ref{eq:gamworst}). In contrast, algorithms that apply naive union bounds have regret that scales either with $(m+n) \log | \X |$ or $(m+n)^2$. The following proposition illustrates the improvement in scaling we are able to obtain, as well as the subtle dependence of minimax regret on the geometry of $\X$.
\begin{proposition}\label{prop:gw_topk_product}
For $\frakf \in \{ \band, \sband \}$, on the product of Top-$k$ instances described above, we have:
$$ \gamworst(\Af) \leq \calO(k m + \ell n ), \quad \log |\X| \geq \Omega(k + \ell) $$
This implies there exist settings of $m,n,k$, and $\ell$ such that the regret of Algorithm \ref{alg:gw_ae_comp} with either bandit feedback or semi-bandit feedback will be bounded:
$$ \Exp[\calR_T] \leq \tilO \left (d^{1/2} \sqrt{ T} \right ) $$
while algorithms employing naive union bounds will achieve regret bounds scaling at best as:
$$ \Exp[\calR_T] \leq \tilO \left (d^{2/3} \sqrt {T} \right ). $$
\end{proposition}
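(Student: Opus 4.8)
The plan is to establish the three assertions in turn: the upper bound $\gamworst(\Af)\le\calO(km+\ell n)$ for $\frakf\in\{\band,\sband\}$, the lower bound $\log|\X|\ge\Omega(k+\ell)$, and then to exhibit a family of parameters $(m,n,k,\ell)$ realizing the claimed regret separation by feeding these two facts into our main regret guarantee and into the standard union-bound guarantees, respectively. For the first assertion, write $\X=\X_1\times\X_2$ where $\X_1\subseteq\{0,1\}^m$ is the Top-$k$ set and $\X_2\subseteq\{0,1\}^n$ is the Top-$\ell$ set, and evaluate the design objective $\gamworst$ of (\ref{eq:gamworst}) at the product allocation $\bar\lambda=\bar\lambda_1\otimes\bar\lambda_2$, $\bar\lambda_j$ uniform over the $j$-th factor. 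Under semi-bandit feedback $\Asb(\bar\lambda)=\diag\big(\tfrac km I_m,\tfrac\ell n I_n\big)$ is block diagonal, so the Gaussian process appearing in $\gamworst$, indexed by $x\in\X$ and driven by $\Asb(\bar\lambda)^{-1/2}$, splits into the sum of two independent processes indexed by $\X_1$ and $\X_2$. One bounds each factor's contribution by the value of $\gamworst$ for a single Top-$k$ (resp. Top-$\ell$) instance under its uniform design — precisely the single-block computation carried out earlier in the paper, giving $\calO(km)$ and $\calO(\ell n)$ — and combines them via $(a+b)^2\le 2a^2+2b^2$ to get $\calO(km+\ell n)$.

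For bandit feedback the allocation is the same, but $\Ab(\bar\lambda)$ is no longer block diagonal: it equals $\diag(M_1,M_2)$ plus the rank-one coupling $\tfrac{k\ell}{mn}\,\1_m\1_n^\t$ and its transpose, and it is rank-deficient, since $(\ell\1_m,-k\1_n)$ lies in the kernel of $\Ab(\lambda)$ for every $\lambda$ (because $\sum_{i\le m}x_i$ and $\sum_{i>m}x_i$ are constant on $\X$). The point is that the design objective only sees differences $x-\xst$, which are orthogonal to $\1_m\oplus 0$, to $0\oplus\1_n$, and hence to the coupling direction; restricted to the subspace $\cV$ spanned by these differences the cross term vanishes and $\Ab(\bar\lambda)|_{\cV}\succeq c\,\Asb(\bar\lambda)|_{\cV}$ for a universal $c$ (using that the uniform Top-$k$ design has least eigenvalue $\Theta(k/m)$ on $\1_m^\perp$; one may assume $k\le m/2,\ \ell\le n/2$ by the symmetry $x\mapsto\1-x$). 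Hence $\Ab(\bar\lambda)^{\dagger}\preceq (1/c)\,\Asb(\bar\lambda)^{-1}$ on $\cV$, so the $\band$-process is, up to a constant rescaling of its index set, dominated by the $\sband$-process (Sudakov–Fernique), and the same $\calO(km+\ell n)$ bound follows.

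The second assertion is a binomial estimate: $|\X|=\binom mk\binom n\ell$, and $\binom mk\ge(m/k)^k$ gives $\log\binom mk\ge k\log(m/k)\ge\Omega(k)$ whenever $m\ge 2k$, and symmetrically $\log\binom n\ell\ge\Omega(\ell)$, so $\log|\X|\ge\Omega(k+\ell)$. For the last assertion, take $\ell=\Theta(1)$, $k=\lceil n^{1/3}\rceil$, $m=\lceil n^{2/3}\rceil$ and let $n\to\infty$; then $d=m+n=\Theta(n)$, so $km+\ell n=\Theta(n)=\Theta(d)$ and, by the first assertion and our main regret bound for Algorithm \ref{alg:gw_ae_comp}, $\Exp[\calR_T]\le\tilO\big(\sqrt{\gamworst(\Af)\,T}\big)=\tilO(\sqrt{dT})=\tilO(d^{1/2}\sqrt T)$ once $T$ is past the point where the leading term dominates. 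Conversely $\log|\X|\le k\log(em/k)+\ell\log(en/\ell)=\tilO(n^{1/3})=\tilO(d^{1/3})$ while by the second assertion $\log|\X|=\tilde\Omega(d^{1/3})$; substituting $\log|\X|=\tilde\Theta(d^{1/3})$ into the standard self-normalized/union-bound linear-bandit regret bound $\tilO(\min\{\sqrt{d\log|\X|}\,,\,d\}\sqrt T)$ yields exactly $\tilde\Theta(d^{2/3}\sqrt T)$, so $\tilO(d^{2/3}\sqrt T)$ is the best guarantee obtainable by such algorithms on this instance.

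The main obstacle I anticipate is the bandit-feedback case of the first assertion: one must argue carefully that the off-diagonal coupling and the built-in rank deficiency of $\Ab(\bar\lambda)$ are harmless because they act trivially on the relevant difference subspace, and verify the spectral domination $\Ab(\bar\lambda)\succeq c\,\Asb(\bar\lambda)$ there — getting both the $\band$ and $\sband$ objectives down to the same $\calO(km+\ell n)$ is the crux. The binomial bound and the parameter bookkeeping (checking $1\le k\le m$, $1\le\ell\le n$, and that $d$, $\gamworst(\Af)$, and $\log|\X|$ simultaneously sit at the intended polynomial scales) are comparatively routine.
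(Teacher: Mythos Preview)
Your proposal has a genuine gap in the first assertion. The definition of $\gamworst(\Af)$ in \eqref{eq:gamworst} requires, for \emph{every} $\epsilon>0$, an allocation $\lambda\in\triangle_{\X_\epsilon}$, i.e.\ one supported on the sublevel set $\X_\epsilon=\{x:\Delta_x\le\epsilon\}$. Your uniform product allocation $\bar\lambda=\bar\lambda_1\otimes\bar\lambda_2$ is supported on all of $\X$, so it is only admissible at $\epsilon\ge\Delmax$; for smaller $\epsilon$ you have proposed no feasible design, and the supremum over $\epsilon$ in $\gamworst$ is therefore not controlled. The clean block-diagonal computation for $\Asb(\bar\lambda)$ and the spectral comparison $\Ab(\bar\lambda)\succeq c\,\Asb(\bar\lambda)$ on the difference subspace are both correct for the uniform design, but they do not address the actual quantity being bounded.

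The paper handles this by working with an arbitrary $S=\X_\epsilon$ and observing that the two ``factor'' sets $S_1=\{(\Pi_{[m]}x,\xst_{m+1:m+n}):x\in S\}$ and $S_2=\{(\xst_{1:m},\Pi_{[m+n]\setminus[m]}x):x\in S\}$ are themselves contained in $S$ (because replacing one block of $x\in S$ by the corresponding block of $\xst$ can only decrease the gap). One then uses the additive splitting you describe, together with the union-bound-over-$S_j$ estimate $\Exp_\eta[\sup_{x\in S_j}x^\top A(\lambda)^{-1/2}\eta]^2\lesssim\log|S_j|\cdot\max_{x\in S_j}\|x\|_{A(\lambda)^{-1}}^2$, and finally Kiefer--Wolfowitz restricted to $S_j$: since $S_1\subset S$ spans a subspace of dimension at most $m+1$ (all vectors agree on the last $n$ coordinates), $\min_{\lambda\in\triangle_S}\max_{x\in S_1}\|x\|_{A(\lambda)^{-1}}^2\le m+1$, and similarly for $S_2$. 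This yields $\calO(km\log m+\ell n\log n)$ uniformly in $\epsilon$, which is the bound claimed (up to logs). A side benefit of this route is that it treats $\band$ and $\sband$ simultaneously via Kiefer--Wolfowitz / Proposition~\ref{prop:semi_kf}, so your separate spectral argument for the bandit case is unnecessary.

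Your parameter choice $(\ell,k,m)=(\Theta(1),n^{1/3},n^{2/3})$ differs from the paper's $(\ell,k,n)=(\Theta(1),\sqrt m,m^{3/2})$, but both give $\gamworst=\Theta(d)$ and $\log|\X|=\tilde\Theta(d^{1/3})$, so either works for the separation; the binomial estimates are fine.
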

In the appendix we discuss in more detail how the regret scales for specific algorithms in this setting. The regret bound we present for our algorithm in Proposition \ref{prop:gw_topk_product} is in fact state-of-the-art---all other existing algorithms will incur the larger dimension dependence.

\section{EXPERIMENTAL DESIGN FOR REGRET MINIMIZATION}
 \subsection{Gaussian Width}
Before introducing our algorithm, we present a final concept critical to our results. For a fixed $\theta_*$, let $ \X_\epsilon = \{ x \in \X \ : \ \Delta_x \leq \epsilon \}$, then, for $\frakf \in \{ \band, \sband \}$:
\begin{equation}\label{eq:gamworst}
\gamworst(\Af) = \sup_{\epsilon > 0} \inf_{\lambda \in \triangle_{\X_\epsilon}} \Exp_{\eta} \left [ \sup_{x \in \X_\epsilon} x^\top \Af(\lambda)^{-1/2} \eta \right ]^2 
\end{equation}
Intuitively, $\gamworst(\Af)$ is the largest Gaussian width of any subset of $\X$ formed by taking all $x \in \X$ with gap bounded by $\epsilon$. The following results are helpful in giving some sense of the scaling of $\gamworst(\Af)$.

\begin{proposition}\label{prop:gw_bound_union}
For any $\X \subseteq \R^d$ and $\frakf \in \{ \band, \sband \}$, we have:
$$ \gamworst(\Af) \leq c\min \{ d \log |\X |, d^2 \}.$$
\end{proposition}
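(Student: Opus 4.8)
The plan is to bound the quantity $\gamworst(\Af) = \sup_{\epsilon > 0} \inf_{\lambda} \Exp_\eta[\sup_{x \in \X_\epsilon} x^\top \Af(\lambda)^{-1/2}\eta]^2$ by exhibiting, for every $\epsilon$, a single good allocation $\lambda$ (not the infimizing one, just a convenient one) and then controlling the resulting Gaussian width from above by two separate arguments, taking the minimum at the end. For the $d^2$ bound I would use a crude union bound over $x \in \X$, while for the $d \log|\X|$ bound I would use the standard maximal inequality for a finite collection of Gaussians. The key observation tying both together is that for any $\X$ there is an allocation $\lambda$ (the $G$-optimal / John's ellipsoid design restricted to $\X_\epsilon$, or simply any near-optimal design for the experimental design objective $\min_\lambda \max_{x} x^\top \Af(\lambda)^{-1} x$) under which $\max_{x \in \X_\epsilon} x^\top \Af(\lambda)^{-1} x \le c\, d$ — this is the Kiefer--Wolfowitz equivalence theorem in the bandit case $\Af = \Ab$, and an analogous, even easier fact in the semi-bandit case $\Af = \Asb$ where $\Asb(\lambda)$ is diagonal.

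The main steps, in order: (1) Fix $\epsilon > 0$ and work with the subset $\X_\epsilon$. Choose $\lambda$ to be a $G$-optimal design supported on $\X_\epsilon$, so that $\sup_{x \in \X_\epsilon} \norm{x}_{\Af(\lambda)^{-1}}^2 \le c\, d$; handle the degenerate case where $\Af(\lambda)$ is singular by restricting to the span of $\X_\epsilon$ and replacing $d$ by the effective dimension, which only helps. (2) For fixed $x$, the random variable $x^\top \Af(\lambda)^{-1/2}\eta$ is a centered Gaussian with variance $\norm{x}_{\Af(\lambda)^{-1}}^2 \le c\,d$. (3) Apply the standard bound $\Exp_\eta[\max_{i \le N} Z_i] \le \sigma\sqrt{2 \log N}$ for $N$ centered Gaussians each with variance at most $\sigma^2$: with $N = |\X_\epsilon| \le |\X|$ and $\sigma^2 = c\,d$ this gives $\Exp_\eta[\sup_{x \in \X_\epsilon} x^\top \Af(\lambda)^{-1/2}\eta] \le \sqrt{2 c\, d \log |\X|}$, hence squaring and taking $\sup_\epsilon$ yields $\gamworst(\Af) \le 2c\, d\log|\X|$. (4) For the dimension-only bound, instead bound the supremum by a sum (or, more cleanly, use that $\Exp_\eta[\sup_{x} x^\top \Af(\lambda)^{-1/2}\eta] \le \sqrt{\Exp_\eta[\sup_x (x^\top\Af(\lambda)^{-1/2}\eta)^2]}$ is too lossy; better: note $\sup_{x \in \X_\epsilon} x^\top \Af(\lambda)^{-1/2}\eta \le \sup_{v : \norm{v}_2 \le \sqrt{cd}} v^\top \eta$ only if $\X_\epsilon$ lives in a low-dim space — instead directly use $\Exp[\sup_x x^\top M \eta] \le \sqrt{d} \cdot \sup_x \norm{M^\top x}_2$ via Cauchy–Schwarz after observing $\norm{\eta}_2 \lesssim \sqrt d$ in expectation). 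Concretely: $\sup_{x\in\X_\epsilon} x^\top \Af(\lambda)^{-1/2}\eta \le \big(\sup_{x \in \X_\epsilon}\norm{x}_{\Af(\lambda)^{-1}}\big)\norm{\eta}_2 \le \sqrt{c\,d}\,\norm{\eta}_2$, and $\Exp\norm{\eta}_2 \le \sqrt d$, giving $\gamworst(\Af) \le c\, d^2$. (5) Combine (3) and (5): $\gamworst(\Af) \le c\min\{d\log|\X|, d^2\}$.

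The main obstacle is step (1): justifying that a single allocation $\lambda$ simultaneously achieves $\max_{x\in\X_\epsilon}\norm{x}_{\Af(\lambda)^{-1}}^2 \le c\,d$ in both feedback models, including the rank-deficient case. In the bandit case this is exactly Kiefer–Wolfowitz, but one must be careful that $\X_\epsilon$ may not span $\R^d$, so the clean statement is that the $G$-optimal value equals the effective dimension $\dim(\mathrm{span}(\X_\epsilon)) \le d$; the $\Af(\lambda)^{-1}$ should then be read as a pseudoinverse on that span, which is also how $\gamworst$ is implicitly defined for the quantity to be finite. In the semi-bandit case one has to check the analogous fact for $\Asb(\lambda) = \diag(\sum_x \lambda_x xx^\top)$, where the relevant inequality follows because each coordinate $i$ with $\sum_x \lambda_x x_i > 0$ contributes $x_i^2 / \sum_{x'}\lambda_{x'} x'_i \le 1/\sum_{x'}\lambda_{x'}x'_i$, and choosing $\lambda$ uniform over a spanning subset, or more carefully a design balancing coordinate coverage, bounds $\sum_i x_i^2/(\ldots) \le d$ (in fact $\le k$ with the right design, but $d$ suffices here). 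A secondary, purely cosmetic obstacle is merging the universal constants from the two arguments into the single $c$ in the statement.
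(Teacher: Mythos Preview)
Your proposal is correct and takes essentially the same approach as the paper. The paper's proof likewise chooses the $G$-optimal design on $\X_\epsilon$ so that $\max_x \norm{x}_{\Af(\lambda)^{-1}}^2 \le d$ (citing Kiefer--Wolfowitz for $\Ab$ and its diagonal analogue, Proposition~\ref{prop:semi_kf}, for $\Asb$), and then bounds the Gaussian width of the transformed set $\{\Af(\lambda)^{-1/2}x : x \in \X_\epsilon\}$ twice: once by $c\sqrt{d}\cdot\mathrm{diam}$ (Vershynin, Proposition~7.5.2) and once by $c\sqrt{\log|\X|}\cdot\mathrm{diam}$ (Vershynin, Exercise~7.5.10), with $\mathrm{diam}\le 2\sqrt{d}$. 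Your Cauchy--Schwarz argument for the $d^2$ bound and your direct use of the finite-maximum inequality $\Exp[\max_i Z_i]\le \sigma\sqrt{2\log N}$ for the $d\log|\X|$ bound are exactly the elementary derivations underlying those two cited results, so the arguments coincide.
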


\begin{proposition}\label{prop:gw_bound_comb}
If $\X \subseteq \{ 0,1 \}^d$, $k = \max_{x \in \X} \| x \|_1$, and $\frakf \in \{ \band, \sband \}$, then, for $d \geq 3$:
$$ \gamworst(\Af) \leq c d k \log d. $$
\end{proposition}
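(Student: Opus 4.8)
The plan is to deduce this from Proposition \ref{prop:gw_bound_union} together with an elementary bound on $\log|\X|$ that exploits the combinatorial structure. Proposition \ref{prop:gw_bound_union} already gives $\gamworst(\Af) \le c\, d\log|\X|$ for $\frakf \in \{\band,\sband\}$, so it suffices to show that $\log|\X| \le c'\, k\log d$ for a universal constant $c'$ whenever $\X \subseteq \{0,1\}^d$, $k = \max_{x\in\X}\|x\|_1$, and $d \ge 3$; combining the two inequalities then yields the claim with the constant $cc'$. We may assume $k \ge 1$, since $k = 0$ forces $\X \subseteq \{0\}$ and then $\gamworst(\Af) = 0$.

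The main (and only) step is to count $\X$. Since every $x \in \X$ is a binary vector with at most $k$ nonzero entries, $\X$ is contained in the set of binary vectors of Hamming weight at most $k$, so $|\X| \le \sum_{j=0}^{k}\binom{d}{j}$. Using the standard binomial sum estimate $\sum_{j=0}^{k}\binom{d}{j} \le (ed/k)^k$ (valid for $1 \le k \le d$, which holds here because $k \le d$ always), we get
\[
\log|\X| \;\le\; k\log(ed/k) \;=\; k\big(1 + \log d - \log k\big) \;\le\; k\big(1+\log d\big),
\]
the last inequality since $k \ge 1$. Finally, the hypothesis $d \ge 3$ gives $\log d \ge \log 3 > 1$, hence $1 + \log d \le 2\log d$, so $\log|\X| \le 2k\log d$. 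Plugging into Proposition \ref{prop:gw_bound_union} gives $\gamworst(\Af) \le 2c\, d k \log d$, which is the desired bound.

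I do not anticipate any genuine obstacle; the only points requiring care are the constant bookkeeping in the binomial estimate and checking that $d \ge 3$ is exactly what lets one replace $1+\log d$ by $2\log d$. A self-contained alternative that avoids invoking Proposition \ref{prop:gw_bound_union} would mirror its proof directly for $\X_\epsilon$: take a $G$-optimal (Kiefer--Wolfowitz) design $\lambda$ on $\X_\epsilon$, so that $\max_{x \in \X_\epsilon}\|x\|_{\Af(\lambda)^{-1}}^2 \le d$, observe that each $x^\top \Af(\lambda)^{-1/2}\eta$ is a centered Gaussian of variance at most $d$, and control the supremum over the $|\X_\epsilon| \le (ed/k)^k$ arms by a union bound, giving $\Exp_\eta[\sup_{x\in\X_\epsilon} x^\top \Af(\lambda)^{-1/2}\eta] \le \sqrt{2 d \log|\X_\epsilon|} \le \sqrt{4 d k \log d}$, and hence the same conclusion after taking the supremum over $\epsilon$.
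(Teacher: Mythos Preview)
Your proposal is correct and follows essentially the same approach as the paper: invoke Proposition~\ref{prop:gw_bound_union} to get $\gamworst(\Af)\le c\,d\log|\X|$, then bound $|\X|$ by the number of binary vectors of Hamming weight at most $k$. The only cosmetic difference is that the paper bounds $\sum_{j\le k}\binom{d}{j}$ by a geometric series to get $|\X|\le c\,d^k$, whereas you use the slightly sharper $(ed/k)^k$ estimate and then explicitly use $d\ge 3$ to absorb the $+1$; both yield $\log|\X|=O(k\log d)$ and the same conclusion.
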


Note that these upper bounds are often loose. The following results shows that, in some cases, we pay a $d$ instead of $dk$.

\begin{proposition}\label{prop:gw_topk_plus1}
There exists a combinatorial bandit instance in $\R^d$ with $k = \sqrt{d}$ where:
$$ \gamworst(\Asb) \leq c d \log(d). $$
\end{proposition}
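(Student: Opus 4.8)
The plan is to exhibit one explicit combinatorial instance for which the quantity $\gamworst(\Asb)$ can be evaluated by hand. I would take $d$ to be a perfect square with $\sqrt d \ge 2$ and set $k = \sqrt d$. Designate a block $F$ of $k-1$ coordinates together with $N := d-k+1$ further ``choice'' coordinates $c_1,\dots,c_N$, let $\mathbf{1}_F$ be the indicator vector of $F$ and $e_{c_j}$ the standard basis vectors, and take
\begin{align*}
\X \;=\; \bigl\{\, \mathbf{1}_F + e_{c_j} \;:\; j \in [N]\,\bigr\},
\end{align*}
so that $\|x\|_1 = (k-1)+1 = k = \sqrt d$ for every $x \in \X$. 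For the reward vector I would put $\theta_*$ equal to $0$ on $F$ and on $c_1$, and $\theta_{*,c_j} = -\delta_j$ with arbitrary $\delta_j \in (0,1]$ for $j \ge 2$; then $x_* = \mathbf{1}_F + e_{c_1}$ is the unique optimal arm and $\Delta_{\mathbf{1}_F + e_{c_j}} = \delta_j$. The $\delta_j$ may be chosen all equal, which is the cleanest case, and the final bound will not depend on them.

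To bound $\gamworst(\Asb)$ from above I would, for each $\epsilon > 0$, upper bound the inner infimum by the uniform design on $\X_\epsilon$. Writing $\X_\epsilon = \{x_*\}\cup\{\mathbf{1}_F + e_{c_j} : \delta_j\le\epsilon\}$ and $s := |\X_\epsilon| \le N$, every arm in $\X_\epsilon$ contains all of $F$ and exactly one choice coordinate, so with $\lambda$ uniform on $\X_\epsilon$ the matrix $\Asb(\lambda)$ is diagonal with entries $1$ on $F$ and $1/s$ on each of the $s$ choice coordinates appearing in $\X_\epsilon$. Consequently
\begin{align*}
\sup_{x\in\X_\epsilon} x^\top \Asb(\lambda)^{-1/2}\eta \;=\; \sum_{i\in F}\eta_i \;+\; \sqrt{s}\,\max_{j\,:\,\delta_j\le\epsilon}\eta_{c_j},
\end{align*}
and by linearity of expectation the first sum contributes $0$, while the second is $\sqrt{s}$ times the expected maximum of $s$ i.i.d.\ standard Gaussians, hence at most $\sqrt{s}\sqrt{2\log s}$.

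Squaring and taking the supremum over $\epsilon$ then gives $\gamworst(\Asb) \le \sup_{\epsilon}2 s_\epsilon\log s_\epsilon \le 2N\log N \le 2d\log d$, so $c = 2$ suffices, while $k = \max_{x\in\X}\|x\|_1 = \sqrt d$ by construction. (Equivalently, $\X$ is the product of a trivial always-selected block of $k-1$ coordinates with a Top-$1$ instance on the remaining $d-k+1$ coordinates, so the bound is also an instance of Proposition~\ref{prop:gw_topk_product} with $\ell = 1$.)

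The only genuine difficulty is choosing the instance. The $\calO(dk\log d)$ bound of Proposition~\ref{prop:gw_bound_comb} is wasteful exactly when most of each arm's mass lies on coordinates that carry no information, so one wants each of the $\Theta(d)$ informative coordinates to be covered by a single arm --- which forces the design's coverage there down to $1/N$, hence a $\sqrt N$ scaling --- yet the Gaussian width only picks up a $\sqrt{\log N}$ factor from the maximum of $N$ Gaussians, giving $N\log N = \tilO(d)$ rather than $dk$. Once the instance is fixed the computation above is routine; the only point to verify is that $\X_\epsilon$ never grows large enough to spoil the bound, which is immediate since $s_\epsilon \le N \le d$.
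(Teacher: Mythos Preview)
Your proposal is correct and takes a genuinely different route from the paper's argument. The paper constructs the instance as the Top-$k$ action set augmented with the all-ones vector $\mathbf{1}$, and (assuming $\theta_i\ge 0$ so that $\mathbf{1}$ is optimal and hence lies in every $\X_\epsilon$) simply puts all the design mass on $\mathbf{1}$, giving $\Asb(\lambda)=I$; the bound then comes from controlling $\E_\eta[\max_{x\in\X_\epsilon}x^\top\eta]^2$ via the Gaussian width of the rescaled $\ell_1$ ball, yielding $c(k^2\log d + d)$ and hence $\tilO(d)$ at $k=\sqrt d$. Your construction instead fixes an always-included block of $k-1$ coordinates and a single varying coordinate, takes the uniform design on $\X_\epsilon$, and reads off the supremum as a deterministic sum plus $\sqrt{s}\max_{j}\eta_{c_j}$, which is a more elementary computation. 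A small bonus of your instance is that $\max_{x\in\X}\|x\|_1$ is exactly $\sqrt d$, whereas in the paper's instance $\mathbf{1}$ has $\|\mathbf{1}\|_1=d$, so there the Top-$k$ parameter and the preliminaries' definition of $k$ diverge. Conversely, the paper's choice makes the design matrix the identity without any coordinates left uncovered, while in your argument the choice coordinates with $\delta_j>\epsilon$ are not covered by $\Asb(\lambda)$ and one should remark that the supremum is only over arms supported on the covered subspace (equivalently, work with the pseudoinverse); this is routine but worth a sentence. Your parenthetical link to Proposition~\ref{prop:gw_topk_product} is also apt: your instance is literally the product of a trivial Top-$(k-1)$-of-$(k-1)$ block with a Top-$1$-of-$(d-k+1)$ block.
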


The Gaussian width is critical in avoiding wasteful union bounds, allowing instead for geometry-dependent confidence intervals. The following confidence interval will form a key piece in our analysis.

\begin{proposition}[Tsirelson-Ibragimov-Sudakov Inequality \citep{katz2020empirical,cirel1976norms}]\label{prop:tis}
Consider playing arm $x$ $\tau_x$ times, where $\tau$ is an allocation chosen deterministically. Assume $\frakf \in \{ \mathrm{band}, \sband \}$ is set to correspond to the type of feedback received and let $\hat{\theta}$ be the least squares estimate of $\thetast$ from these observations. Then,  simultaneously for all $x \in \mathcal{X}$, with probability at least $1 - \delta$:
\begin{align*}
& |x^\top (\hat{\theta} - \theta_*)| \leq \mathbb{E}_{\eta \sim \mathcal{N}(0,I)} \left [ \sup_{x \in \mathcal{X}} x^\top \Af(\tau)^{-1/2} \eta \right ] \\
& \qquad \qquad \qquad \qquad + \sqrt{ 2\sup_{x \in \mathcal{X}} \| x \|_{\Af(\tau)^{-1}}^2 \log(2/\delta)} .
\end{align*}
\end{proposition}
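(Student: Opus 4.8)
The plan is to reduce everything to the classical Gaussian concentration inequality for Lipschitz functions (Borell's inequality, imported as a black box following \citep{cirel1976norms}). The first step is to put the estimation error into a clean distributional form. Since the allocation $\tau$ is deterministic, in the bandit case $\sum_t x_t x_t^\top = \Ab(\tau)$ and $\sum_t x_t y_t = \Ab(\tau)\theta_* + \sum_t x_t \eta_t$, so $\hat\theta - \theta_* = \Ab(\tau)^{-1}\sum_t x_t\eta_t$ is a centered Gaussian vector with covariance $\Ab(\tau)^{-1}$ (using that $\sum_t x_t\eta_t \sim \cN(0,\Ab(\tau))$). In the semi-bandit case $\hat\theta_i - \theta_{*,i} = T_i^{-1}\sum_{t:x_{t,i}=1}\eta_{t,i}$, and since $x_i \in \{0,1\}$ we have $T_i = \sum_x \tau_x x_i = \Asb(\tau)_{ii}$, so $\hat\theta - \theta_*$ is again centered Gaussian with (diagonal) covariance $\Asb(\tau)^{-1}$. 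In both cases, assuming $\Af(\tau)$ is nonsingular as implicit in the statement, we may write $\hat\theta - \theta_* \stackrel{d}{=} \Af(\tau)^{-1/2}\xi$ with $\xi \sim \cN(0,I_d)$, hence $x^\top(\hat\theta - \theta_*) \stackrel{d}{=} x^\top \Af(\tau)^{-1/2}\xi$ jointly over $x \in \X$.

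Next I would handle the two-sided supremum over $\X$. Writing $R$ for the claimed right-hand side, the event $\{\exists x \in \X : |x^\top(\hat\theta - \theta_*)| > R\}$ is contained in $\{\sup_{x \in \X} x^\top\Af(\tau)^{-1/2}\xi > R\} \cup \{\sup_{x \in \X} (-x)^\top\Af(\tau)^{-1/2}\xi > R\}$, so by a union bound it suffices to bound each of these one-sided events by $\delta/2$. Define $f(v) = \sup_{x \in \X} x^\top \Af(\tau)^{-1/2} v$ and $g(v) = \sup_{x \in \X} (-x)^\top \Af(\tau)^{-1/2} v$; both are suprema of linear forms, hence convex, and both are Lipschitz in the Euclidean norm with the same constant $L := \sup_{x \in \X}\|\Af(\tau)^{-1/2}x\|_2 = \sup_{x \in \X}\|x\|_{\Af(\tau)^{-1}}$, since $f(v) - f(v') \leq \sup_x x^\top \Af(\tau)^{-1/2}(v - v') \leq L\|v - v'\|_2$ and symmetrically, and likewise for $g$.

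Then I would apply Gaussian concentration: for any $L$-Lipschitz $F$ and $\xi \sim \cN(0,I_d)$, $\P[F(\xi) \geq \Exp[F(\xi)] + t] \leq \exp(-t^2/(2L^2))$. Taking $F = f$ and $F = g$ with $t = \sqrt{2L^2 \log(2/\delta)}$ gives probability at most $\delta/2$ each. The last point is to identify the two centering terms: by symmetry of the standard Gaussian, $\sup_{x \in \X}(-x)^\top\Af(\tau)^{-1/2}\xi$ has the same law as $\sup_{x \in \X} x^\top\Af(\tau)^{-1/2}\xi$, so $\Exp[g(\xi)] = \Exp[f(\xi)] = \Exp_{\eta \sim \cN(0,I)}[\sup_{x \in \X} x^\top\Af(\tau)^{-1/2}\eta]$. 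Substituting this and $L^2 = \sup_{x \in \X}\|x\|_{\Af(\tau)^{-1}}^2$ matches the bound with $R$, and combining the two $\delta/2$ bounds via the union bound finishes the proof.

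The only mildly delicate points are (i) verifying that the coordinate-wise semi-bandit estimator genuinely has covariance $\Asb(\tau)^{-1}$, so that both feedback models are subsumed by the single identity $\hat\theta - \theta_* \stackrel{d}{=} \Af(\tau)^{-1/2}\xi$, and (ii) ensuring the expectation appearing in the bound is the \emph{one-sided} Gaussian width $\Exp[\sup_x x^\top\Af(\tau)^{-1/2}\eta]$ rather than the expected sup of $|x^\top\Af(\tau)^{-1/2}\eta|$; the Gaussian symmetry argument resolves (ii), and there is no further obstacle since Borell's inequality does the heavy lifting.
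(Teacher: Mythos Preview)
Your proof is correct and follows essentially the same route as the paper: establish $\hat\theta - \theta_* \stackrel{d}{=} \Af(\tau)^{-1/2}\xi$ for both feedback types (the paper only spells out the semi-bandit case, deferring bandit feedback to \cite{katz2020empirical}), then apply Gaussian concentration to the supremum process. Your Lipschitz/Borell formulation is the same inequality the paper invokes as TIS (Theorem~\ref{thm:tis_full}), and your explicit union-bound-plus-symmetry treatment of the absolute value makes precise a step the paper leaves implicit.
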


\subsection{Algorithm Overview}

\begin{algorithm}[t]
\begin{algorithmic}[1]
\State \textbf{Input:} Set of arms $\mathcal{X}$,  largest gap $\Delmax$,  confidence $\delta$, total time $T$, feedback type $\mathfrak{f} \in \{\mathrm{band},\sband\}$
\State $\hat{\theta}_0 \leftarrow 0, x_1 \leftarrow 0, \hat{\Delta}_x \leftarrow 0, \ell \leftarrow 1$
\While{total pulls less than $T$}
 	 \State $\epsilon_\ell \leftarrow \Delmax 2^{-\ell}$
	  \State Let $\tau_\ell$ be a solution to:
	  \begin{align}\label{eq:vl_inefficient}
	  \begin{split}
	 & \argmin_{\tau}  \ \sum_{x \in \X} 2(\epsilon_\ell + \hat{\Delta}_x) \tau_x  \\
	& \text{ s.t. } \mathbb{E}_\eta \left [ \max_{x \in \X} \frac{(x_\ell - x)^\top \Af(\tau)^{-1/2} \eta}{\epsilon_\ell + \hat{\Delta}_x} \right ] \\
	& \qquad + \sqrt{2 \sup_{x \in \X} \frac{ \| x \|_{\Af(\tau)^{-1}}^2}{(\epsilon_\ell + \hat{\Delta}_x)^2} \log(2\ell^3/\delta)} \leq \frac{1}{128} 
	\end{split}
	\end{align}  
	\If{$ \sum_{x \in \X} (\epsilon_\ell + \hat{\Delta}_x) \tau_{\ell,x} > T \epsilon_\ell $}
		\State \textbf{break} \label{line:break_minimax2}
	\EndIf
	\State $\alpha_\ell \leftarrow$ SPARSE$(\tau_\ell,\nf)$
	 \State Pull arm $x$ $\lceil \alpha_{\ell,x} \rceil$ times, compute $\hat{\theta}_\ell$
	 \State $x_{\ell+1} \leftarrow \argmax_{x \in \X} x^\top \hat{\theta}_{\ell}$, $\hat{\Delta}_x \leftarrow \hat{\theta}_{\ell}^\top (x_{\ell +1} - x)$
	 \If{MINGAP$(\hat{\theta}_\ell,\X) > 2 \epsilon_\ell $}
	 	\State \textbf{break}
	\EndIf
	 \State $\ell \leftarrow \ell + 1$
\EndWhile
\State Pull $\hat{x} = \argmax_{x \in \X} x^\top \hat{\theta}_{\ell-1}$ for all remaining time
\end{algorithmic}
\caption{\textbf{Regret} \textbf{M}inimizing \textbf{E}xperimental \textbf{D}esign: \textbf{RegretMED}}
\label{alg:gw_ae_comp}
\end{algorithm}

We next present our algorithm, RegretMED, in Algorithm \ref{alg:gw_ae_comp}. Inspired by several recent algorithms achieving asymptotically optimal regret \citep{lattimore2017end}, at every epoch our algorithm finds a new allocation by solving an experimental design problem (\ref{eq:vl_inefficient}). This minimizes an upper bound on the regret incurred in the epoch while ensuring the allocation produced will explore enough to improve the estimates of the gaps for each arm, thereby balancing exploration and exploitation and allowing us to obtain a tight bound on finite-time regret. We apply the TIS inequality to bound the estimation error of our gaps, which motivates the constraint in (\ref{eq:vl_inefficient}). Critically, this yields a regret bound scaling with the Gaussian width of the action set.

We define SPARSE$(\tau,n) : \R^{|\X|} \rightarrow \R^{|\X|}$ to be a function taking as input an allocation and returning a new allocation that is $n$ sparse and approximating the solution to (\ref{eq:vl_inefficient}). So long as $n \geq d+1$ in the semi-bandit setting and $n \geq d^2 + d +1$ in the bandit setting, it is possible to find a distribution $\alpha$ that is $n$ sparse and will achieve the same value of the constraint and objective of (\ref{eq:vl_inefficient}), see Lemma \ref{lem:rounding}. MINGAP$(\theta,\X)$ takes as input an estimate of $\theta$ and returns the gap between the best and second best arms in $\X$ with respect to this $\theta$. It is possible to compute this quantity efficiently with only calls to a linear maximization oracle (see Appendix \ref{sec:pure_exploration_proof}).

While Algorithm \ref{alg:gw_ae_comp} takes as input $\Delmax$, we require this only to simplify the analysis. In practice, we can use an upper bound instead without changing the final regret of our algorithm by more than a logarithmic factor. Since $\Delmax \leq \sqrt{d} \mathrm{diam}(\X)$, an upper bound can be obtained without knowledge of $\thetast$.

\textbf{Key Theoretical Tools:} We briefly describe the key theoretical tools employed by RegretMED. First, we note that an experimental design based algorithm is novel in the setting of regret minimization. As we have shown, this approach allows us to perform properly on challenging instances by explicitly balancing the information gain and reward, while also yielding a computationally feasible solution in the semi-bandit regime. Our second innovation is the use of the TIS inequality to obtain tight concentration bounds. While we are not the first to utilize this in the linear bandit setting \citep{katz2020empirical}, it previously was only utilized in the best arm identification setting, and our work therefore shows how it can be applied in the regret minimization setting as well. The use of the TIS inequality yields two important improvements over more naive union bounds. First, it provides tighter confidence intervals in the non-asymptotic time regime and therefore yields improved regret bounds. Second, as we will see, it allows us to write the constraint for our experiment design problem \eqref{eq:vl_inefficient} in a form that is linear in the the decision variable. This allows us to reduce solving the optimization to calls of a linear maximization oracle, and is a key piece in showing our algorithm is computationally efficient.

\subsection{Main Regret Bound}
We now state our main regret bound. Define
\begin{align*} 
\ellmax(T) &:=  \log_2 \left (   \frac{\max_{x \in \X} \| x \|_2}{\min_{x \in \X} \| x \|_2} \left (  \Delmax \sqrt{T} + 3 \right ) \right )  \\
& = \calO(\log(T))
\end{align*}
and
$\ellmax(\thetast) := \lceil \log(4 \Delmax / \Delmin) \rceil$. 
Let $n_{\mathrm{band}} = d^2+d+1, n_{\sband} = d+1$.

\begin{theorem}\label{thm:inefficient_regret_bound} With $\frakf \in \{ \mathrm{band}, \sband \}$ set to correspond to the type of feedback received, Algorithm \ref{alg:gw_ae_comp} will have gap-dependent regret bounded, with probability $1-\delta$, as:
\begin{align*}
& c_1  \Delmax  \ellmax(\thetast)^2  (d +  \nf) \\
&  \quad + \frac{c_2 \Big (\gamworst(\Af)  \ellmax(\thetast)^2 + d \log( \ellmax(\thetast)/\delta) \Big ) }{\Delmin} 
\end{align*}
and minimax regret bounded as:
\begin{align*}
& c_1  \Delmax \ellmax(T)^2 (d + \nf)  \\
& \  +  \ellmax(T) \sqrt{c_2 (\gamworst(\Af) \ellmax(T)^2 + d \log(\ellmax(T)/\delta)) T}
\end{align*}
for absolute constants $c_1$ and $c_2$.
\end{theorem}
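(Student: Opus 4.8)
The plan is to analyze RegretMED epoch by epoch along the schedule $\epsilon_\ell=\Delmax 2^{-\ell}$. First I would fix the good event. Conditioned on the first $\ell-1$ epochs, the reference arm $x_\ell$ and the gap estimates $\hat\Delta_x$ entering epoch $\ell$ are deterministic, so Proposition~\ref{prop:tis} applies to the allocation actually played and to the rescaled arm set $\{(x_\ell-x)/(\epsilon_\ell+\hat\Delta_x):x\in\X\}$; taking confidence $\delta/\ell^3$ and using Lemma~\ref{lem:rounding} to carry feasibility of $\tau_\ell$ for \eqref{eq:vl_inefficient} over to the sparsified/rounded allocation (rounding up only enlarges the design matrix, hence shrinks the bound), we get an event $\calE$ with $\P[\calE]\ge1-\delta$ on which $|(x_\ell-x)^\top(\hat\theta_\ell-\thetast)|\le\tfrac1{128}(\epsilon_\ell+\hat\Delta_x)$ for all $x\in\X$ and all $\ell$. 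Everything afterwards is deterministic given $\calE$. Next I would prove by induction the invariant that, entering epoch $\ell$, for all $x$ one has $\tfrac12\Delta_x-c\epsilon_\ell\le\hat\Delta_x\le 2\Delta_x+c\epsilon_\ell$ (so $\epsilon_\ell+\hat\Delta_x$ is within a constant factor of $\epsilon_\ell+\Delta_x$, and $\hat\Delta_x\ge0$ since $x_\ell$ is the empirical maximiser) and $\Delta_{x_\ell}\le c\epsilon_\ell$: expand $\hat\Delta_x=\hat\theta_\ell^\top(x_{\ell+1}-x)$, use optimality of $x_{\ell+1}$ for $\hat\theta_\ell$ to get $\hat\Delta_x\ge\Delta_x+(\hat\theta_\ell-\thetast)^\top(\xst-x)$ and $\Delta_{x_{\ell+1}}\le|(\hat\theta_\ell-\thetast)^\top(\xst-x_{\ell+1})|$, bound every inner product by routing through $x_\ell$ and invoking the display from $\calE$ plus the inductive hypothesis (which converts the $\hat\Delta$ factors there into $\Delta+\epsilon$); since $\tfrac1{128}$ is small and $\epsilon_{\ell-1}=2\epsilon_\ell$, the recursion contracts. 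The payoff is $\Delta_x\le 2(\epsilon_\ell+\hat\Delta_x)$ for all $x$, so the objective of \eqref{eq:vl_inefficient} upper bounds the regret of its own allocation; the per-epoch regret of the played allocation is therefore at most $\mathrm{val}_\ell+\calO(\nf\Delmax)$, where $\mathrm{val}_\ell$ is the optimal value of \eqref{eq:vl_inefficient} and the second term is the rounding overhead ($\alpha_\ell$ is $\nf$-sparse, each ceiling costing a gap $\le\Delmax$).

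The crux is bounding $\mathrm{val}_\ell$ by the Gaussian width: I would show
\[
\mathrm{val}_\ell\ \le\ \calO\!\left(\frac{\gamworst(\Af)\,\ellmax^2+d\log(\ell^3/\delta)}{\epsilon_\ell}\right),
\]
with $\ellmax$ denoting $\ellmax(\thetast)$ or $\ellmax(T)$ as appropriate. Restricting to feasible $\tau=t\lambda$ with $\lambda$ a distribution gives $\Af(\tau)^{-1/2}=t^{-1/2}\Af(\lambda)^{-1/2}$, so both terms of the constraint scale as $t^{-1/2}$: feasibility needs $t=\calO(W(\lambda)^2\vee\rho(\lambda)\log(\ell^3/\delta))$ where $W(\lambda)=\Exp_\eta[\max_x(x_\ell-x)^\top\Af(\lambda)^{-1/2}\eta/(\epsilon_\ell+\hat\Delta_x)]$ and $\rho(\lambda)=\sup_x\|x\|_{\Af(\lambda)^{-1}}^2/(\epsilon_\ell+\hat\Delta_x)^2$, while the objective is $t\cdot\sum_x2(\epsilon_\ell+\hat\Delta_x)\lambda_x$. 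I would take $\lambda=\sum_{j\ge0}w_j\lambda_j$, a geometric mixture ($w_j\propto 4^{-j}$) over dyadic shells with $\hat\Delta_x\asymp 2^j\epsilon_\ell$ — of which, by the invariant, $\calO(\ell)=\calO(\ellmax)$ are nonempty and each is contained in $\X_{\calO(2^j\epsilon_\ell)}$ — where $\lambda_j$ is a near-optimiser in the definition \eqref{eq:gamworst} of $\gamworst$ for $\X_{\calO(2^j\epsilon_\ell)}$, blended with a $G$-optimal design on that same set. Splitting $\max_x$ over shells, on shell $j$ the normaliser is $\asymp 2^j\epsilon_\ell$; the mixture domination $\Af(\lambda)^{-1}\preceq w_j^{-1}\Af(\lambda_j)^{-1}$ costs a factor $2^j$ that cancels the $2^j$ in the denominator, leaving per-shell contribution $\calO(\sqrt{\gamworst(\Af)}/\epsilon_\ell)$ (one passes from $x_\ell-x$ to the difference set $\X_{\calO(2^j\epsilon_\ell)}-\X_{\calO(2^j\epsilon_\ell)}$, whose Gaussian width is at most twice that of $\X_{\calO(2^j\epsilon_\ell)}$, using symmetry of $\eta$ and $x_\ell\in\X_{\calO(2^j\epsilon_\ell)}$); summing over the $\calO(\ellmax)$ shells yields $W(\lambda)=\calO(\ellmax\sqrt{\gamworst(\Af)}/\epsilon_\ell)$. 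The $G$-optimal blends give $\rho(\lambda)=\calO(d/\epsilon_\ell^2)$, while the geometric weights keep $\sum_x2(\epsilon_\ell+\hat\Delta_x)\lambda_x=\calO(\epsilon_\ell)$; multiplying $t$ by this yields the displayed bound. I expect this to be the main obstacle: it is where the TIS inequality is genuinely needed (it is what puts the constraint in the Gaussian-width-plus-$\sqrt{\log}$ form that makes the dyadic/mixture argument run), and where the $\gamworst$ dependence and all the $\ellmax$ factors are generated — getting the precise powers of $\ellmax$ is the fussy part.

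Finally I would sum over epochs and handle the three stopping rules. If the loop exits via $\mathrm{MINGAP}>2\epsilon_\ell$: the invariant shows that once $\epsilon_\ell\lesssim\Delmin$ the estimate $\hat\theta_\ell$ separates $\xst$, so this occurs at epoch $L=\calO(\ellmax(\thetast))$ with $x_\ell=\xst$ thereafter and zero final-phase regret; summing per-epoch regret $\le\mathrm{val}_\ell+\calO(\nf\Delmax)$ over $\ell\le L$ via the geometric series $\sum_{\ell\le L}\epsilon_\ell^{-1}=\calO(\Delmin^{-1})$ and $\log(\ell^3/\delta)=\calO(\log(\ellmax(\thetast)/\delta))$ gives the gap-dependent bound, the $\Delmax(d+\nf)$ term being the rounding overhead aggregated over the $\calO(\ellmax(\thetast))$ epochs. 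If instead the loop exits at line~\ref{line:break_minimax2} or because the budget $T$ is exhausted: combining the exit condition $\mathrm{val}_\ell>T\epsilon_\ell$ (resp.\ $T\le\sum_{j\le\ell}\sum_x\tau_{j,x}$, using $\sum_x\tau_{j,x}\le\mathrm{val}_j/(2\epsilon_j)$) with the Step-3 bound on $\mathrm{val}_\ell$ forces $\epsilon_\ell\lesssim\sqrt{(\gamworst(\Af)\ellmax(T)^2+d\log(\ellmax(T)/\delta))/T}$; the completed epochs $j<\ell$ contribute $\sum_{j<\ell}(\calO(T\epsilon_j)+\calO(\nf\Delmax))=\calO(T\epsilon_\ell+\ellmax(T)\nf\Delmax)$ by the no-exit condition $\mathrm{val}_j\le T\epsilon_j$ and $\Delta_x\le 2(\epsilon_j+\hat\Delta_x)$, and the final phase contributes $\le T\Delta_{x_\ell}=\calO(T\epsilon_\ell)$ by the invariant; substituting the bound on $\epsilon_\ell$ and carrying the $\ellmax(T)$ factors through the geometric sums and shell decomposition yields the claimed minimax bound $\calO(\Delmax\ellmax(T)^2(d+\nf)+\ellmax(T)\sqrt{(\gamworst(\Af)\ellmax(T)^2+d\log(\ellmax(T)/\delta))\,T})$.
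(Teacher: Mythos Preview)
Your proposal is correct and follows essentially the same route as the paper: a TIS-based good event, an inductive sandwich $\epsilon_\ell+\hat\Delta_x\asymp\epsilon_\ell+\Delta_x$ (the paper cites Lemma~1 of \cite{katz2020empirical} for this), and then a feasible allocation built from a dyadic shell decomposition using per-shell $\gamworst$-optimal designs plus Kiefer--Wolfowitz/Proposition~\ref{prop:semi_kf} designs, combined via Sudakov--Fernique domination. Your geometric-weight mixture $w_j\propto 4^{-j}$ is a clean reparametrization of the paper's construction $\bar\tau=\ell^2\sum_{j\le\ell}\tau_j^{\mathrm{gw}}+c\,d\log(2\ell^3/\delta)\sum_{j\le\ell}\epsilon_j^{-2}\lambda_j^{\mathrm{kf}}$ (uniform $\ell^2$ scaling plus per-shell $\epsilon_j^{-2}$), and your single-index good event (TIS applied directly to the rescaled set $\{(x_\ell-x)/(\epsilon_\ell+\hat\Delta_x)\}$) is a mild streamlining of the paper's double-indexed $\calE_{k,j}$ over the fixed sets $\calS_j$; both yield the same per-epoch bound $\mathrm{val}_\ell\lesssim(\gamworst\ell^2+d\log(\ell/\delta))/\epsilon_\ell$ and the same endgame.
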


The proof of this result is deferred to Appendix \ref{sec:regret_proof}. See Section \ref{sec:related} and Table \ref{tab:regret_summary} for a summary of how this bound scales in particular settings of interest. As a brief comparison, in the semi-bandit feedback setting, considering expected regret, we obtain a leading term of order $\calO \left ( \frac{d \log(T)}{\Delmin} \right )$, which matches the lower bound \citep{degenne2016combinatorial}, while the previous state-of-the-art scaled as $\calO \left ( \frac{d \log^2(k) \log(T)}{\Delmin} \right ) $ \citep{perrault2020statistical}. Algorithm \ref{alg:gw_ae_comp} is then the first algorithm to achieve the lower bound for arbitrary combinatorial structures. In the bandit feedback setting our minimax regret scales as $\tilO(\sqrt{(\gamworst(\Ab) + d) T})$ while LinUCB obtains regret scaling as $\tilO(d\sqrt{T})$ \citep{abbasi2011improved}. Proposition \ref{prop:gw_bound_union} shows that we are never worse than the LinUCB regret and, as Proposition \ref{prop:gw_topk_product} shows, we can sometimes be much better. In Appendix \ref{sec:aegw}, we present a modified algorithm which avoids the factors of $\ellmax(T)$ on the leading term of the minimax regret, although it suffers from several other shortcomings.

\subsection{Computationally Efficient Algorithm}

While Algorithm \ref{alg:gw_ae_comp} can be run in settings where $\X$ is enumerable, it becomes computationally infeasible for very large $\X$, as (\ref{eq:vl_inefficient}) cannot be solved via a linear maximization oracle. In place of (\ref{eq:vl_inefficient}), consider instead solving:
 \begin{align}
 & \argmin_{\tau}  \ \sum_{x \in \X} 2(\epsilon_\ell + \hat{\Delta}_x) \tau_x   \label{eq:vl} \\
& \textstyle\text{ s.t. } \mathbb{E}_\eta \left [ \max_{x \in \X} \frac{(x_\ell - x)^\top A(\tau)^{-1/2} \eta}{\epsilon_\ell + \hat{\Delta}_x} \right ] \leq \frac{1/128}{1 + \sqrt{\pi \log(2\ell^3/\delta)}} \nonumber
\end{align}  
As we show in Theorem \ref{thm:comp_complex}, we can solve this problem with a computationally feasible algorithm in the semi-bandit feedback regime. \del{Critically, this can be computed efficiently due to the linear nature of the constraint, which is obtainable only through applying the TIS inequality.} Running this modified version of Algorithm \ref{alg:gw_ae_comp}, we obtain the following regret bound. 

\begin{theorem}\label{thm:efficient_regret_bound}
Assume $\frakf \in \{ \mathrm{band}, \sband \}$ is set to correspond to the type of feedback received. Consider running Algorithm \ref{alg:gw_ae_comp} but now setting $\tau_\ell$ to be an approximate solution to \eqref{eq:vl}. Then with probability at least $1 - \delta$, the gap-dependent regret will be bounded as:
\begin{align*}
& c_1  \Delmax  \ellmax(\thetast)^2  (d + \nf) \\
& \qquad  + \frac{c_2 \gamworst(\Af) \log(\ellmax(\thetast)/\delta) \ellmax(\thetast)^2}{\Delmin} 
\end{align*}
and the minimax regret will be bounded as:
\begin{align*}
& c_1  \Delmax \ellmax(T)^2 (d + \nf) \\
& \qquad + \ellmax(T)^2 \sqrt{c_2  \gamworst(\Af) \log(\ellmax(T)/\delta) T}
\end{align*}
for absolute constants $c_1$, $c_2$.
\end{theorem}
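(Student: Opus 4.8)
The plan is to reduce the statement to Theorem~\ref{thm:inefficient_regret_bound}. I will show that running Algorithm~\ref{alg:gw_ae_comp} with the relaxed design~\eqref{eq:vl} in place of~\eqref{eq:vl_inefficient} changes nothing essential except that (a) the allocation it produces still certifies every concentration guarantee used in the proof of Theorem~\ref{thm:inefficient_regret_bound}, and (b) the per-epoch exploration cost $\sum_{x\in\X}2(\epsilon_\ell+\hat\Delta_x)\tau_{\ell,x}$ is inflated by at most a $\log(\ellmax/\delta)$ factor. Granting (a) and (b), the regret proof of Theorem~\ref{thm:inefficient_regret_bound} reruns essentially verbatim, with that single extra logarithmic factor carried through the $\gamworst(\Af)$-dependent terms.

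For (a): the constraint of~\eqref{eq:vl_inefficient} enters the proof of Theorem~\ref{thm:inefficient_regret_bound} only to ensure, via the TIS bound (Proposition~\ref{prop:tis}) applied to the shifts $\{(x_\ell-x)/(\epsilon_\ell+\hat\Delta_x):x\in\X\}$, that $|(x_\ell-x)^\top(\hat\theta_\ell-\theta_*)|\le\tfrac{1}{128}(\epsilon_\ell+\hat\Delta_x)$ for all $x$; I claim the solution of~\eqref{eq:vl} yields the same up to an absolute constant. Fix an epoch $\ell\ge 2$, so $x_\ell=\argmax_x x^\top\hat\theta_{\ell-1}\in\X$ (the epoch $\ell=1$, where $x_1=0$, is identical after using that $-\eta$ has the law of $\eta$), and let $\tau_\ell$ be the approximate $\nf$-sparse solution to~\eqref{eq:vl} actually returned, so up to the absolute constant lost in the approximation and in SPARSE (Lemma~\ref{lem:rounding}) we have $\mathbb{E}_\eta[\max_{x\in\X}(x_\ell-x)^\top\Af(\tau_\ell)^{-1/2}\eta/(\epsilon_\ell+\hat\Delta_x)]\le\tfrac{1}{128}\big(1+\sqrt{\pi\log(2\ell^3/\delta)}\big)^{-1}$. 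Writing $G_x:=(x_\ell-x)^\top\Af(\tau_\ell)^{-1/2}\eta/(\epsilon_\ell+\hat\Delta_x)$, since $x_\ell\in\X$ the value $G_{x_\ell}=0$ lies in this Gaussian process, so an elementary Gaussian-process comparison gives
\begin{align*}
\sup_{x\in\X}\frac{\|x_\ell-x\|_{\Af(\tau_\ell)^{-1}}}{\epsilon_\ell+\hat\Delta_x}\;\le\; c\,\mathbb{E}_\eta\!\left[\max_{x\in\X}G_x\right]
\end{align*}
for an absolute constant $c$ --- this is exactly the source of the factor $\sqrt\pi$ on the right-hand side of~\eqref{eq:vl}. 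Feeding both estimates into Proposition~\ref{prop:tis} bounds $|(x_\ell-x)^\top(\hat\theta_\ell-\theta_*)|/(\epsilon_\ell+\hat\Delta_x)$ by $(1+c\sqrt{2\log(2\ell^3/\delta)})\cdot\tfrac{1}{128}\big(1+\sqrt{\pi\log(2\ell^3/\delta)}\big)^{-1}$, an absolute constant, which is the property required; moreover $\|x\|_{\Af(\tau_\ell)^{-1}}\le 2\sup_{x'}\|x_\ell-x'\|_{\Af(\tau_\ell)^{-1}}$ since $x_\ell\in\X$, so the $\|x\|$-form of the TIS remainder is controlled as well. Hence the accuracy of the gap estimates $\hat\Delta_x$, the correctness of both \textbf{break} conditions, and the epoch count $\ellmax(\thetast)$ (resp.\ $\ellmax(T)$) hold exactly as before.

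For (b): I compare the optimal values of the two programs by scaling. If $\tau_\ell^\star$ is optimal for~\eqref{eq:vl_inefficient} then its expectation term is at most $1/128$; setting $c_\ell:=(1+\sqrt{\pi\log(2\ell^3/\delta)})^2$ and using $\Af(c_\ell\tau_\ell^\star)^{-1/2}=c_\ell^{-1/2}\Af(\tau_\ell^\star)^{-1/2}$, the allocation $c_\ell\tau_\ell^\star$ has expectation term at most $c_\ell^{-1/2}/128\le\tfrac{1}{128}(1+\sqrt{\pi\log(2\ell^3/\delta)})^{-1}$, hence is feasible for~\eqref{eq:vl}, with objective $c_\ell$ times that of $\tau_\ell^\star$. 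Thus the optimum of~\eqref{eq:vl}, and up to a further constant the value of the sparse approximate solution actually used, is at most $c_\ell=O(\log(\ell/\delta))\le O(\log(\ellmax/\delta))$ times the optimum of~\eqref{eq:vl_inefficient}. In the epoch-by-epoch regret accounting of Theorem~\ref{thm:inefficient_regret_bound}, every appearance of the per-epoch exploration cost therefore gains one $O(\log(\ellmax/\delta))$ factor, which multiplies precisely the $\gamworst(\Af)$-dependent contributions; the $\Delmax\ellmax^2(d+\nf)$ term comes from the $\lceil\cdot\rceil$ rounding and the final $\hat x$-phase and is untouched. Summing the geometric series in $1/\epsilon_\ell$ over the epochs exactly as in Theorem~\ref{thm:inefficient_regret_bound}, and noting that the now $\log(\ellmax/\delta)$-inflated $\gamworst(\Af)$ term absorbs the additive $d\log(\ellmax/\delta)$ term of that theorem (using $\gamworst(\Af)=\Omega(d)$, which holds when $\X$ spans $\R^d$), gives the two claimed bounds.

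The main obstacle is step (a), and within it the Gaussian-process comparison: one must argue that the maximal standard deviation of the \emph{data-dependent}, non-symmetric process $\{(x_\ell-x)^\top\Af(\tau_\ell)^{-1/2}\eta/(\epsilon_\ell+\hat\Delta_x)\}_{x\in\X}$ is bounded by a constant multiple of its expected maximum, and that the first epoch, the error from SPARSE, and the passage between the $\|x_\ell-x\|$ and $\|x\|$ forms of the TIS remainder each cost only absolute constants. Everything past that point is a careful but routine re-run of the proof of Theorem~\ref{thm:inefficient_regret_bound} with one additional $\log(\ellmax/\delta)$ tracked through the Gaussian-width terms.
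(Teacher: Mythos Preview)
Your approach is structurally different from the paper's: you attempt to \emph{reduce} Theorem~\ref{thm:efficient_regret_bound} to Theorem~\ref{thm:inefficient_regret_bound} by (a) showing that the constraint of~\eqref{eq:vl} still certifies the good event and (b) showing that the optimal value of~\eqref{eq:vl} is at most $c_\ell=O(\log(\ell/\delta))$ times that of~\eqref{eq:vl_inefficient}. The paper does the opposite --- it proves Theorem~\ref{thm:efficient_regret_bound} \emph{first} and \emph{directly}, by constructing a feasible allocation for~\eqref{eq:vl} (namely $\bar\tau=\ell^2\sum_{j\le\ell}\tau_j^{\mathrm{gw}}$ with $\tau_j^{\mathrm{gw}}\propto\gamworst(\Af)\epsilon_j^{-2}(1+\sqrt{\pi\log(2\ell^3/\delta)})^2$), and Theorem~\ref{thm:inefficient_regret_bound} is then proved afterwards by a parallel argument. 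Your step~(a) is correct and is precisely the content of Lemma~11 of \cite{katz2020empirical}, which the paper also invokes.

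There is a genuine gap in step~(b). Your scaling argument correctly gives $\mathrm{OPT}(\ref{eq:vl})\le c_\ell\cdot\mathrm{OPT}(\ref{eq:vl_inefficient})$, but Theorem~\ref{thm:inefficient_regret_bound}'s per-epoch bound on $\mathrm{OPT}(\ref{eq:vl_inefficient})$ has \emph{two} pieces: a Gaussian-width piece $\ell^2\gamworst(\Af)/\epsilon_\ell$ and a Kiefer--Wolfowitz piece $d\log(\ell/\delta)/\epsilon_\ell$ (the latter needed to handle the variance term in the constraint of~\eqref{eq:vl_inefficient}). After multiplying by $c_\ell$, the second piece becomes $d\log^2(\ellmax/\delta)/\Delmin$, which does not appear in the stated bound. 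Your proposed absorption via $\gamworst(\Af)=\Omega(d)$ is false: take $\X=\{e_1+e_2,\,e_1-e_2\}\subset\R^2$; with the uniform design $\Af(\lambda)=I$, and $\E_\eta[\max_{x\in\X}x^\top\eta]=\E[\eta_1+|\eta_2|]=\sqrt{2/\pi}$, so $\gamworst(\Af)\le 2/\pi<2=d$. Nor can this term be absorbed by the $\Delmax\ellmax^2(d+\nf)$ additive term when $\delta$ is small.

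The paper's direct construction sidesteps this entirely: because the constraint of~\eqref{eq:vl} has \emph{no} separate variance term, a feasible solution can be built from the Gaussian-width design alone --- no Kiefer--Wolfowitz component is needed --- and hence no stray $d\log/\epsilon_\ell$ term appears. If you want to salvage your reduction idea, you should scale not the optimal solution of~\eqref{eq:vl_inefficient} but only the Gaussian-width part of the \emph{constructed} feasible solution in Theorem~\ref{thm:inefficient_regret_bound}'s proof; at that point, however, you are essentially rederiving the paper's direct argument.
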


In the semi-bandit setting, we can apply Theorem \ref{thm:comp_complex} to compute an approximate solution to \eqref{eq:vl} in polynomial time, as described below. See Section \ref{sec:related} and Table \ref{tab:regret_summary} for an in-depth discussion of how our result compares to existing works.

Note that the minimax regret guarantees given in Theorems \ref{thm:inefficient_regret_bound} and \ref{thm:efficient_regret_bound} depend on $\thetast$ through $\gamworst(\Af)$ and $\Delmax$. This dependence can be removed by simply taking a supremum of $\gamworst(\Af)$ over $\thetast$ and using the upper bound $\Delmax \leq \sqrt{d} \mathrm{diam}(\X)$. While we state our regret bounds in high probability, expected regret bounds can also be obtained by setting $\delta = 1/T$.

\subsection{Pure Exploration with Semi-Bandit Feedback}
Although our algorithm is designed to minimize regret, a slight modification gives a computationally efficient algorithm for best arm identification in the semi-bandit feedback setting. In particular, instead of (\ref{eq:vl_inefficient}) consider solving:
 \begin{align}
 & \argmin_{\tau}  \ \sum_{x \in \X} \tau_x \label{eq:opt_exploration} \\
& \textstyle\text{ s.t. } \mathbb{E}_\eta \left [ \max_{x \in \X} \frac{(x_\ell - x)^\top \Af(\tau)^{-1/2} \eta}{\epsilon_\ell + \hat{\Delta}_x} \right ] \leq \frac{1/128}{1 + \sqrt{\pi \log(2\ell^3/\delta)}} \nonumber
\end{align}  
Then we have the following.

\begin{theorem}\label{thm:pure_exploration}
Define
\begin{align*}
\textstyle\rho^* & := \inf_{\lambda \in \triangle} \sup_{x \in \X \setminus \{x_* \}} \frac{\norm{x_*-x}^2_{\Asb(\lambda)^{-1}}}{[\thetast^\t( x_*-x)]^2} \\
\textstyle\gamma^* & := \inf_{\lambda \in \triangle}  \E_{\eta} \left [ \sup_{x \in \X \setminus \{x_* \}} \frac{(x_*-x)^\t \Asb(\lambda)^{-1/2} \eta}{ \thetast^\t(x_*-x )} \right ]^2
\end{align*}
Let $\delta \in (0,1)$. Run Algorithm \ref{alg:gw_ae_comp} but replace \eqref{eq:vl_inefficient} with \eqref{eq:opt_exploration} and omit the break on line \ref{line:break_minimax2}. Invoke Theorem \ref{thm:comp_complex} to efficiently find an approximate solution to \eqref{eq:opt_exploration}. Then, with probability $1-\delta$, the algorithm will terminate after collecting at most:
$$ c\Big ( [\gamma^* + \rho^*] \log(\ellmax(\thetast)/\delta) + d \Big )\ellmax(\thetast)  $$
samples and we will have $\hat{x} = x_*$.
\end{theorem}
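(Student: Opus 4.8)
The plan is to mirror the regret analysis of Theorem~\ref{thm:inefficient_regret_bound}, but track the total sample complexity instead of accumulated regret, and exploit the stopping rule \textsc{MINGAP}$(\hat\theta_\ell,\X) > 2\epsilon_\ell$ to guarantee correctness. First I would set up the good event: by Proposition~\ref{prop:tis} applied at each epoch $\ell$ with confidence $\delta/\ell^3$, on an event of probability at least $1-\delta$ we have, simultaneously for all $x$ and all epochs, the estimation-error bound $|(x_\ell-x)^\t(\hat\theta_\ell-\thetast)| \le (\epsilon_\ell+\hat\Delta_x)/64$ (say), where the constant comes from combining the TIS expectation term and the variance term, both of which are controlled by the constraint in \eqref{eq:opt_exploration} --- note the right-hand side $\tfrac{1/128}{1+\sqrt{\pi\log(2\ell^3/\delta)}}$ is exactly chosen so that multiplying through by $(1+\sqrt{\pi\log(2\ell^3/\delta)})$ bounds both terms of the Proposition~\ref{prop:tis} guarantee. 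On this event, a standard induction shows that the empirical gaps $\hat\Delta_x$ are within a constant factor of the true gaps $\Delta_x$ at every epoch, that $x_*$ is never eliminated (it always has $\hat\Delta_{x_*}$ close to $0$), and that once $\epsilon_\ell \lesssim \Delmin$ the \textsc{MINGAP} test fires and we output $\hat x = x_*$; this gives correctness and shows the algorithm runs for at most $\ellmax(\thetast) = \lceil\log(4\Delmax/\Delmin)\rceil$ epochs.

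Next I would bound the number of samples drawn in epoch $\ell$. The allocation $\tau_\ell$ solves \eqref{eq:opt_exploration}, so $\sum_x \tau_{\ell,x}$ equals the optimal value of that program; after \textsc{SPARSE} rounding (Lemma~\ref{lem:rounding}) and the ceiling operations, the number of pulls is at most $\sum_x \tau_{\ell,x} + \nf$. The key step is to upper bound the optimal value of \eqref{eq:opt_exploration} in terms of $\gamma^*$ and $\rho^*$. Because on the good event $\epsilon_\ell + \hat\Delta_x$ is within a constant of $\max\{\epsilon_\ell,\Delta_x\} \ge \tfrac12(\epsilon_\ell + \Delta_x)$, and because $x_\ell = x_*$ on the good event (for $\ell$ large enough; the first couple of epochs contribute only an additive $O(d)$ via rounding), the constraint in \eqref{eq:opt_exploration} is implied --- up to constants --- by the constraint defining $\gamma^*$ after rescaling by $\log(\ell^3/\delta)$, and the variance-type quantity $\rho^*$ controls the second term that must also be made small. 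Concretely, I would take a candidate allocation $\tau = c(\gamma^* + \rho^*)\log(\ell^3/\delta)\,\lambda^*$, where $\lambda^*$ is a near-optimizer of the $\gamma^*$/$\rho^*$ programs, verify it is feasible for \eqref{eq:opt_exploration} using that $\Af(c\lambda)^{-1/2} = c^{-1/2}\Af(\lambda)^{-1/2}$ and $\Af(c\lambda)^{-1} = c^{-1}\Af(\lambda)^{-1}$, and conclude the optimal value is $O((\gamma^*+\rho^*)\log(\ell^3/\delta))$. Note there is a subtlety: the denominators in \eqref{eq:opt_exploration} use $\epsilon_\ell + \hat\Delta_x$ rather than $\Delta_x$, so for arms with $\Delta_x < \epsilon_\ell$ the relevant normalization is $\epsilon_\ell$, not the true gap; handling these "not-yet-resolved" arms requires a short argument that their contribution is still absorbed, since $x_*$ itself is such an arm and the $\gamma^*,\rho^*$ definitions already range over all $x \ne x_*$ with the true-gap normalization, which only makes the constraint harder, hence the bound still goes through (possibly needing $\epsilon_\ell \gtrsim \Delmin$, which holds by the stopping analysis).

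Finally I would sum over epochs: $\sum_{\ell=1}^{\ellmax(\thetast)} \big(O((\gamma^*+\rho^*)\log(\ell^3/\delta)) + \nf\big) = O\big((\gamma^*+\rho^*)\log(\ellmax(\thetast)/\delta) + \nf\big)\,\ellmax(\thetast)$, and since in the semi-bandit regime $\nf = n_{\sband} = d+1$, this is exactly the claimed bound $c([\gamma^*+\rho^*]\log(\ellmax(\thetast)/\delta) + d)\ellmax(\thetast)$. The computational-efficiency claim (that \eqref{eq:opt_exploration} can be solved approximately in polynomial time with a linear maximization oracle) is inherited directly from Theorem~\ref{thm:comp_complex}, whose applicability I would just invoke after checking that \eqref{eq:opt_exploration} has the same structural form (linear objective, TIS-style linear-in-$A(\tau)^{-1}$ constraint) as \eqref{eq:vl}.

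The main obstacle I expect is the matching of the experimental-design program \eqref{eq:opt_exploration}, whose constraint is stated with the shifted, data-dependent normalization $\epsilon_\ell + \hat\Delta_x$ and only the expectation (TIS) term, against the two instance-optimal quantities $\gamma^*$ (a Gaussian-width term) and $\rho^*$ (a variance term) which use the true gaps and appear additively. Making this comparison tight up to universal constants --- in particular showing that feasibility for \eqref{eq:opt_exploration} costs only $O((\gamma^*+\rho^*)\log(\ell^3/\delta))$ total mass, correctly accounting for both the expectation term and the variance term that is implicitly controlled, and for the transition between the "$\epsilon_\ell$-dominated" and "$\Delta_x$-dominated" regimes for each arm --- is the delicate part; everything else is a routine adaptation of the elimination-style argument underlying Theorem~\ref{thm:inefficient_regret_bound}.
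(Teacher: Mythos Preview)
Your overall architecture---good event via TIS, induction on gap estimates, termination after $\ellmax(\thetast)$ epochs, then bounding the per-epoch sample count by the optimal value of \eqref{eq:opt_exploration} and summing---matches the paper exactly, and your final summation is correct. The substantive difference is in how you think $\rho^*$ enters.

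You attribute $\rho^*$ to ``the variance term that must also be made small'' in a TIS-style bound. But the constraint in \eqref{eq:opt_exploration} contains \emph{only} the expectation term; there is no separate variance term to control (the $(1+\sqrt{\pi\log(2\ell^3/\delta)})$ factor already absorbs it via Lemma~11 of \cite{katz2020empirical}). In the paper, $\rho^*$ arises from two different places while bounding the single expectation: first, one writes $(x_\ell - x) = (x_* - x) - (x_* - x_\ell)$ and bounds the second piece by $\|x_* - x_\ell\|_{\Asb(\lambda)^{-1}}^2/\epsilon_\ell^2 \le \rho^*$ using that $\Delta_{x_\ell} \lesssim \epsilon_\ell$ on the good event; second, when passing from $\max_{x\in\X}$ (which includes $x=x_*$, contributing $0$) to $\max_{x\ne x_*}$ as in the definition of $\gamma^*$, one needs a centering inequality (Exercise~7.6.9 of \cite{vershynin2018high}) that produces an additive $\|x_* - x_0\|_{\Asb(\lambda)^{-1}}^2/\Delta_{x_0}^2 \le \rho^*$ for any fixed $x_0 \ne x_*$. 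Both contributions are of $\rho^*$ type, and neither comes from a variance term in the constraint.

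Relatedly, your plan to assume ``$x_\ell = x_*$ on the good event for $\ell$ large enough'' is not how the paper proceeds and would require a separate argument; the paper never assumes this, handling arbitrary $x_\ell$ through the split above. Once you replace your ``variance-term'' story with this decomposition, your feasibility-check strategy (take $\tau = N\lambda^*$ with $N \asymp (\gamma^*+\rho^*)\log(\ell/\delta)$ and verify the constraint) goes through exactly as you outline.
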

We state and prove a lower bound for this problem in the appendix, Theorem \ref{thm:lb_semi_bai}, which shows that this sample complexity is near-optimal. To the best of our knowledge, this is the first general, computationally efficient, and near optimal algorithm for pure exploration with semi-bandit feedback.

\subsection{Optimization}\label{sec:main_opt}

In this section, we provide a polynomial-time algorithm for solving \eqref{eq:vl} in the semi-bandit feedback setting. The generic optimization problem can be written as follows for a fixed $T \in \N$, $\bar{x} \in \X$, $\bar{\theta} \in \R^d$, and $\beta > 0$:
\begin{align}
& \min_{\tau \in [T], \lambda \in \triangle_\X}  \tau \sum_{x \in \X} \bar{\theta}^\t(\bar{x}-x)  \lambda_x + \tau \beta \label{eq:opt_problem} \\
& \ \  \text{ s.t. } \mathbb{E}_\eta \left [ \max_{x \in \X} \frac{(\bar{x} - x)^\top \Asb (\lambda)^{-1/2} \eta}{\beta + \bar{\theta}^\t ( \bar{x}-x)} \right ]  \leq \sqrt{\tau} C \nonumber.
\end{align}
The following result shows that there exists a polynomial-time algorithm that finds an approximately optimal solution, i.e., it is within a constant approximation factor of the optimal solution.
\begin{theorem}\label{thm:comp_complex}
Let $\opt$ be the optimal value of \eqref{eq:opt_problem}. There exists an Algorithm that returns $(\bar{\tau},\bar{\lambda})$ such that $\bar{\lambda} \in \triangle_\X$, $\bar{\tau} \leq 2 T$, and, with probability at least $1-\delta - \frac{1}{2^d}$: 
\begin{align*}
& \bar{\tau} \sum_{x \in \X} \bar{\theta}^\t(\bar{x}-x)  \bar{\lambda}_x + \bar{\tau} \beta \leq 4\opt + 2 \\
& \mathbb{E}_\eta \left [ \max_{x \in \X} \frac{(\bar{x}-x)^\top \Asb(\bar{\lambda})^{-1/2} \eta}{\beta + \bar{\theta}^\t ( \bar{x}-x)} \right ]   \leq \sqrt{\bar{\tau}} C \nonumber.
\end{align*}
Furthermore, the number linear maximization oracle calls is polynomial in $(d, \beta, T, \log(1/\delta))$. 
\end{theorem}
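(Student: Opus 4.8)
The plan is to use the diagonal structure of $\Asb$ in the semi-bandit regime to recast \eqref{eq:opt_problem} as a convex program in $\R^d$ that can be solved with a linear maximization oracle. Since $x\in\{0,1\}^d$ we have $\Asb(\lambda)=\diag(p)$ with $p:=\sum_{x\in\X}\lambda_x x$. Replacing the pair $(\tau,\lambda)$ by the (unnormalized) allocation $\tau_x:=\tau\lambda_x\ge 0$ and writing $q:=\sum_{x\in\X}\tau_x x$, the objective of \eqref{eq:opt_problem} becomes $\sum_{x\in\X}c_x\tau_x$ with $c_x:=\beta+\bar\theta^\t(\bar x-x)\ge\beta>0$ (as holds in all our invocations), the scalar budget becomes $\norm{\tau}_1\le T$, and --- using $\Asb(\lambda)^{-1/2}=\sqrt{\tau}\,\diag(q)^{-1/2}$ --- the constraint becomes $f(q)\le C$ where
\[
 f(q):=\E_\eta\Big[\max_{x\in\X}\ \tfrac{(\bar x-x)^\t\diag(q)^{-1/2}\eta}{c_x}\Big].
\]
Now perform the change of variables $r_i:=q_i^{-1/2}$: for a fixed $\eta$, $\tfrac{(\bar x-x)^\t\diag(q)^{-1/2}\eta}{c_x}=\langle\gamma_x(\eta),r\rangle$ with $\gamma_{x,i}(\eta):=c_x^{-1}(\bar x_i-x_i)\eta_i$, so $f$ is an expectation of maxima of \emph{linear} functions of $r$; after replacing $\E_\eta$ by an empirical average $\hat f$ over i.i.d.\ draws $\eta^{(1)},\dots,\eta^{(N)}$, the constraint $\hat f(r)\le C$ is an explicit convex, piecewise-linear constraint in the decision variable --- this is the linearization referred to in Section~\ref{sec:main_opt}, and it relies crucially on $\Asb$ being diagonal.

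For the objective, observe that for a target exploration level $q$ the cheapest allocation delivering at least $q$ observations per coordinate is the value of the linear program $\mathrm{cost}(q):=\min\{\tsum_x c_x\tau_x:\tsum_x\tau_x x\ge q,\ \norm{\tau}_1\le 2T,\ \tau\ge 0\}$ (set to $+\infty$ if infeasible). As the optimal value of an LP it is convex in the right-hand side $q$ and, because the constraint reads ``$\ge q$'', coordinatewise nondecreasing; composing with the coordinatewise-convex map $q_i=r_i^{-2}$ therefore yields a convex function $r\mapsto\mathrm{cost}((r_i^{-2})_i)$. Since the true $f$ is coordinatewise nonincreasing (more observations cannot increase a Gaussian width, by Sudakov-Fernique comparison), relaxing ``$=q$'' to ``$\ge q$'' loses nothing. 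The upshot is the $d$-dimensional convex program $\min_{r>0}\ \mathrm{cost}((r_i^{-2})_i)$ s.t.\ $\hat f(r)\le C$, whose value, using the elementary scaling identity $f(\alpha q)=\alpha^{-1/2}f(q)$ (which lets us buy an $O(1)$ slack in the constraint with an $O(1)$ blow-up of the allocation while respecting the budget), lies within a constant factor of $\opt$.

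Each ingredient is oracle-computable in polynomial time. To evaluate $\hat f(r)$ and a subgradient we need, for each sample $j$, the quantity $\max_{x\in\X}\langle r\odot\eta^{(j)},\bar x-x\rangle/c_x$; this is obtained by binary search on its value $v$, since testing ``$\exists x:\ \langle r\odot\eta^{(j)},\bar x-x\rangle\ge v\,c_x$'' reduces to one linear-maximization call over $x$ of the affine objective $\langle v\bar\theta-r\odot\eta^{(j)},x\rangle$, and the maximizer supplies the subgradient $\gamma_x(\eta^{(j)})$. To evaluate $\mathrm{cost}(q)$ and a subgradient (the optimal dual vector for the ``$\ge q$'' constraints) we solve its dual, an LP over $(\mu,\nu)\in\R^{d+1}$ whose only non-trivial constraint, $\max_{x\in\X}\langle\mu+\bar\theta,x\rangle\le\nu+\beta+\bar\theta^\t\bar x$, is a single oracle call; this LP is solved by the ellipsoid method. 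Running the ellipsoid method (or projected subgradient descent) on the $d$-dimensional program returns an approximately optimal $r^\star$; setting $q^\star:=((r^\star_i)^{-2})_i$ and solving the primal of $\mathrm{cost}(q^\star)$ by column generation --- each step adding the $x$ of maximal reduced cost via one oracle call --- yields a basic, hence $(d+1)$-sparse, allocation $\bar\tau'$ with $\norm{\bar\tau'}_1\le 2T$, which after the $O(1)$ rescaling (with the intermediate LP budget chosen so the final allocation stays $\le 2T$) gives $\bar\tau$, and $\bar\lambda:=\bar\tau/\norm{\bar\tau}_1\in\triangle_\X$. All loops run for $\poly(d,\beta,T,\log(1/\delta))$ iterations, each using $\tilO(1)$ oracle calls.

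It remains to show $\hat f\approx f$. For fixed $r$, each summand $\max_x\langle\gamma_x(\eta^{(j)}),r\rangle$ is a maximum of affine functions of $\eta^{(j)}$ with gradient $\ell_2$-norm at most $\norm{r}_2/\beta\le\sqrt d/\beta$, hence a Lipschitz functional of a Gaussian vector; by the concentration inequality of Proposition~\ref{prop:tis}, $\hat f(r)-f(r)$ is sub-Gaussian with variance proxy $\calO(d/(\beta^2 N))$. A covering argument over $r\in[1/\sqrt{2T},1]^d$ --- legitimate once one argues that every near-optimal $q$ has all coordinates in $[1,2T]$, adding a negligible uniform allocation if necessary --- upgrades this to $\sup_r|\hat f(r)-f(r)|$ at most a small constant fraction of $C$ with probability at least $1-2^{-d}$, provided $N=\poly(d,1/\beta,\log T)$. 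Putting everything together, surrogate feasibility $\hat f(q(\bar\tau))\le C$ implies $f(q(\bar\tau))\le C$ after the final rescaling, which since $\Asb(\bar\lambda)^{-1/2}=\sqrt{\bar\tau}\,\diag(q(\bar\tau))^{-1/2}$ gives exactly $\E_\eta[\max_x(\bar x-x)^\t\Asb(\bar\lambda)^{-1/2}\eta/c_x]\le\sqrt{\bar\tau}\,C$; and the constant-factor optimization error times the surrogate-versus-true comparison yields objective at most $4\,\opt+2$ (the ``$+2$'' absorbing the additive error of the oracle-based LP solves and the cost of guaranteeing a minimal number of pulls), with the ``$-\delta$'' coming from the external confidence level entering $C$. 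I expect the main obstacle to be this last step: the natural domain of $r$ is unbounded (as $q_i\to 0$, $\hat f$ blows up and ceases to be Lipschitz), so one must first pin down a priori lower bounds on the coordinates of any near-optimal allocation, and then run the covering/Gaussian-concentration argument carefully enough to obtain a $2^{-d}$ failure probability with a polynomial sample size while keeping all the accumulated multiplicative constants below $4$.
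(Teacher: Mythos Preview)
Your approach is valid but takes a genuinely different route from the paper. The paper never reduces to a $d$-dimensional program; instead it (i) grids over $\tau\in\{2^k\}$, (ii) for each fixed $\tau$ runs a binary search over the objective value, (iii) solves each resulting feasibility problem by the Plotkin--Shmoys--Tardos reduction to multiplicative weights, and (iv) minimizes the Lagrangian $\L(\kappa_1,\kappa_2;\tau;\lambda)$ over a restricted simplex $\trianglem_\X=\{\lambda\in\triangle_\X:\lambda_i\ge\psi,\ i\in[d]\}$ via stochastic Frank--Wolfe, using one fresh Gaussian per gradient step. Sparsity of the returned $\bar\lambda$ comes for free from Frank--Wolfe. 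Your reduction, by contrast, exploits $\Asb(\lambda)=\diag(\sum_x\lambda_x x)$ to collapse the $|\X|$-dimensional decision to the coverage vector $q\in\R^d$, then solves a single convex program in $r=q^{-1/2}$ whose objective is an LP value function and whose constraint is an SAA of a max of linear forms; you recover a sparse allocation afterwards by column generation on the inner LP.

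Both proofs share the two nontrivial devices: the binary search that turns $\max_x\langle v,\bar x-x\rangle/c_x$ into linear-oracle calls (your oracle and the paper's Algorithm~\ref{alg:compute_max} are the same idea), and the ``add a small uniform allocation'' trick to keep the design nonsingular (your $q_i\ge 1$ versus the paper's $\trianglem_\X$ with $\psi=\min(\tfrac{1}{4d\Delmax T},\tfrac{1}{4d})$, which is exactly where the additive ``$+2$'' and one factor of $2$ in ``$4\opt$'' come from). Your route is arguably cleaner conceptually---one convex program in $d$ variables instead of four nested loops---while the paper's avoids the ellipsoid method and the uniform SAA/covering step (it never needs $\sup_r|\hat f-f|$, only pointwise stochastic gradients), which makes the constant-tracking to land on exactly $4\opt+2$ more transparent and yields an algorithm one would actually run. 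One small wobble: the ``$-\delta$'' in the probability does not come from $C$; in the paper it is the accumulated failure probability of the randomized subroutines (SFW and \textsc{estSup}), and in your scheme it should simply be an extra $\delta$ budget in the SAA concentration step alongside the $2^{-d}$ from the covering.
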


We briefly sketch the algorithmic approach. We recast \eqref{eq:opt_problem} as a series of feasibility problems and employ the Plotkin-Shmoys-Tardos reduction of convex feasibility programs to online learning to solve each of these feasibility programs using the multiplicative weights update algorithm. To employ this reduction, we fix $\tau$ and develop a solver for the Lagrangian of  \eqref{eq:opt_problem}, $\L(\kappa ; \lambda)$, which we show to be convex and strongly-smooth in $\lambda$ over a carefully constructed subset of the simplex $\trianglem_\X \subset \triangle_\X$. We solve $\min_{\lambda \in \trianglem_\X} \L(\kappa ; \lambda)$ by employing stochastic Frank-Wolfe, which maintains sparse iterates to overcome the challenge posed by the exponential number of variables in $\L(\kappa ; \lambda)$. Evaluating the gradient requires computing for $\eta \sim N(0,I)$
\begin{align*}
\argmax_{x \in \X}  \frac{(\bar{x} - x)^\top \Asb(\lambda)^{-1/2} \eta}{\beta + \bar{\theta}^\t ( \bar{x}-x)},
\end{align*}
which can be solved using only linear maximization oracle calls via the binary search procedure from \cite{katz2020empirical}. The proof of this result and full algorithm is given in Section \ref{sec:comp}.

\begin{figure*}
     \centering
     \hfill
     \begin{minipage}[b]{0.33\textwidth}
         \centering
          \includegraphics[width=\linewidth]{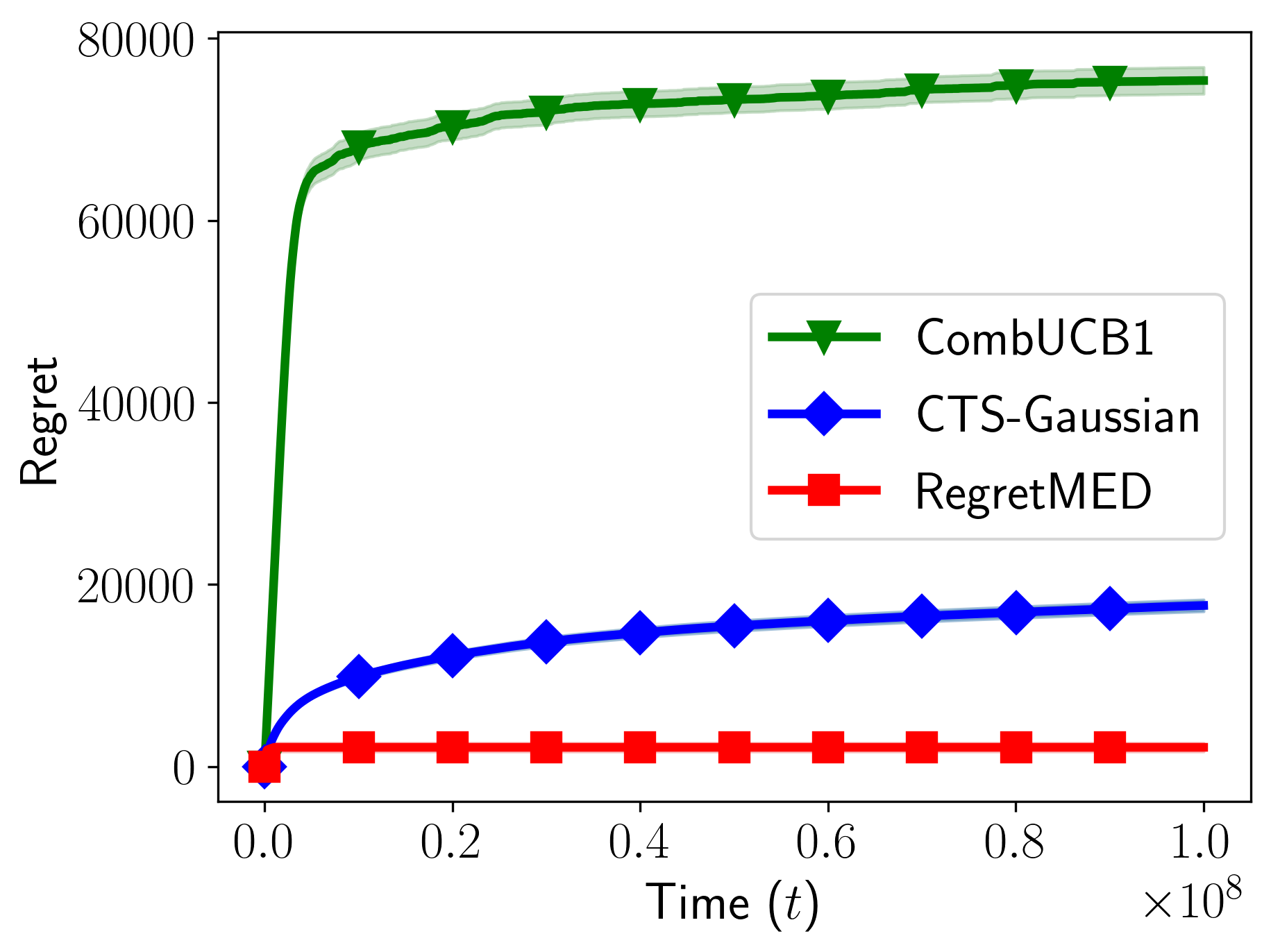}
  \caption{Resource allocation example with $d = 5$.}
  \label{fig:resource5}
     \end{minipage}
     \hfill
     \begin{minipage}[b]{0.335\textwidth}
         \centering
          \includegraphics[width=\linewidth]{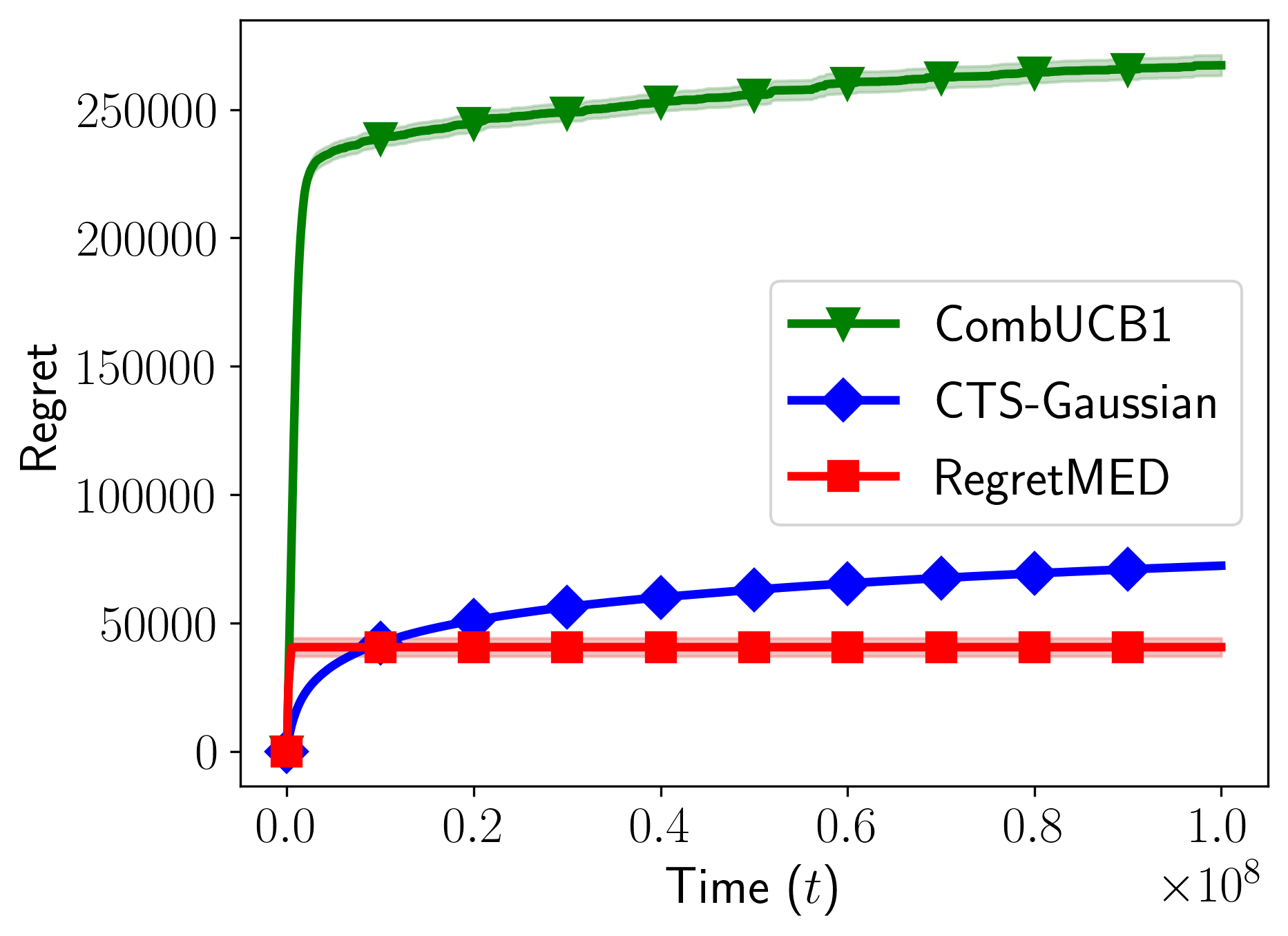}
  	\caption{Resource allocation example with $d = 25$.}
	\label{fig:resource25}
     \end{minipage}
          \begin{minipage}[b]{0.32\textwidth}
         \centering
          \includegraphics[width=\linewidth]{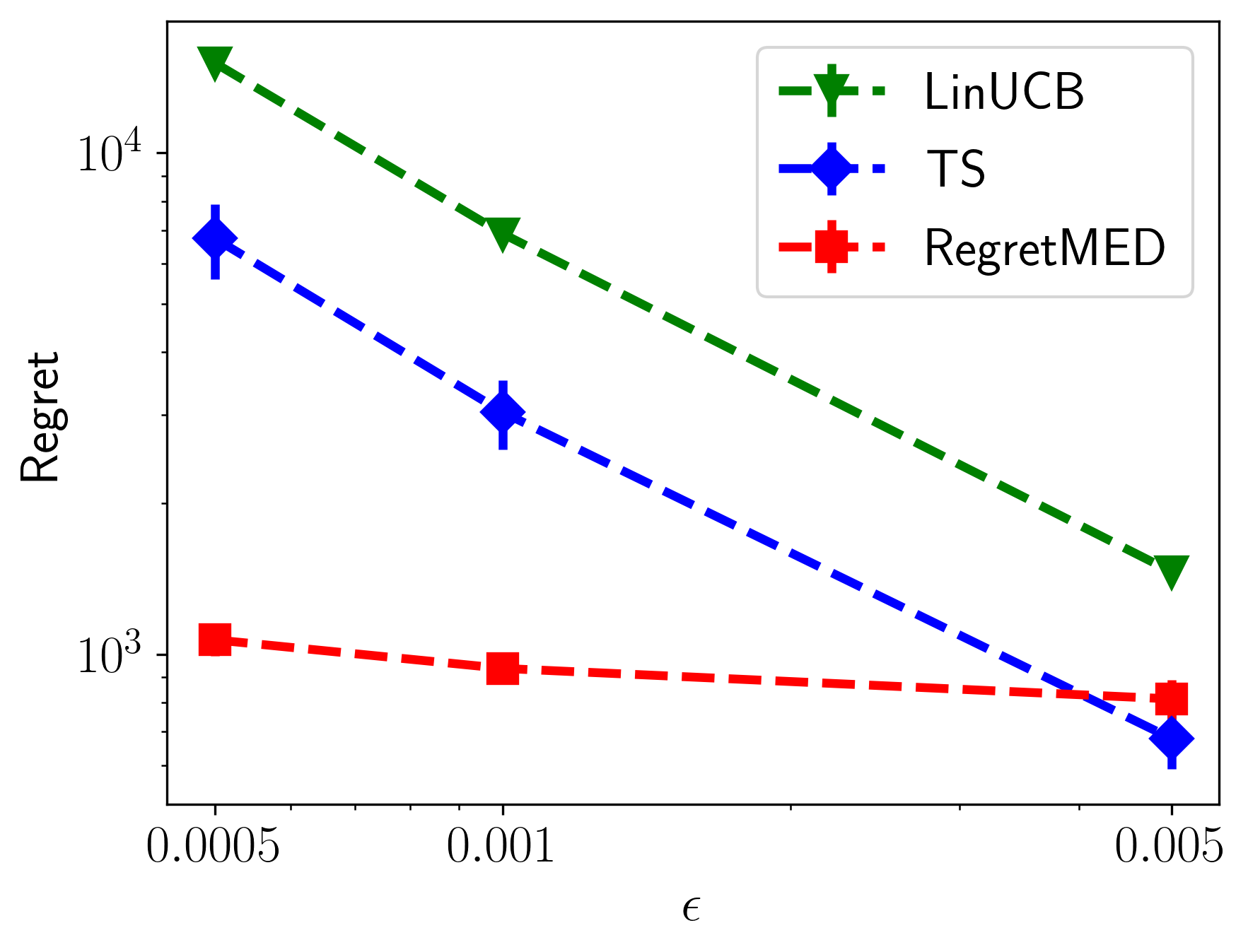}
  \caption{End of Optimism example varying $\epsilon$.}
       \label{fig:soare}
     \end{minipage}
\end{figure*}

\textbf{Rounding:} The allocation $\tau_\ell$ is not integer, so must be rounded. Naive rounding could incur problematically large regret, so we instead seek a sparse allocation, which will allow us to round without incurring significant regret. Recalling that $n_{\mathrm{band}} = d^2+d+1, n_{\sband} = d+1$, we have:

\begin{lemma}\label{lem:rounding}
Given $\tau_\ell$ a solution to \eqref{eq:vl_inefficient} or \eqref{eq:vl}, there exists an $\nf$-sparse $\alpha_\ell$ which achieves the same value of the constraint and objective of \eqref{eq:vl_inefficient} or \eqref{eq:vl}, respectively. Furthermore, in the semi-bandit setting, if we run the procedure of Theorem \ref{thm:comp_complex} to find an approximate solution to \eqref{eq:vl}, we can compute $\alpha_\ell$ in time $\poly(d,\Delmin,T,1/\delta)$.
\end{lemma}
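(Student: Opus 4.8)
The plan is to reduce the rounding statement to a dimension-counting argument about the feasible region of the convex programs \eqref{eq:vl_inefficient} and \eqref{eq:vl}, in the spirit of Carathéodory's theorem. The key observation is that both the objective and the constraint in these programs depend on the allocation $\tau$ only through a small number of linear or ``low-dimensional nonlinear'' functionals of the matrix $\Af(\tau) = \sum_{x \in \X} \tau_x \cdot (x x^\t \text{ or } \diag(x x^\t))$. Specifically, in the semi-bandit setting $\Asb(\tau)$ is a diagonal matrix, so it is determined by $d$ numbers; in the bandit setting $\Ab(\tau)$ is a symmetric $d \times d$ matrix, determined by $d(d+1)/2$ numbers. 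The objective $\sum_x 2(\epsilon_\ell + \hat\Delta_x)\tau_x$ contributes one more scalar. Since both the objective value and the constraint value are functions purely of $\Af(\tau)$ (and the objective-sum, which is itself linear in $\tau$), the set of achievable $(\Af(\tau), \text{objective})$ pairs over all $\tau \in [0,\infty)^{|\X|}$ is a convex cone in $\R^{m+1}$ where $m = d$ (semi-bandit) or $m = d^2 + d$... more precisely one arranges the count so that $\nf = d+1$ resp. $d^2 + d + 1$ suffices.

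Concretely, here is the sequence of steps I would carry out. First, fix the solution $\tau_\ell$ and let $M^\star = \Af(\tau_\ell)$ and $v^\star = \sum_{x} 2(\epsilon_\ell + \hat\Delta_x)\tau_{\ell,x}$. Consider the map $\Phi : \tau \mapsto (\Af(\tau), \sum_x 2(\epsilon_\ell+\hat\Delta_x)\tau_x) \in \calS \times \R$, where $\calS$ is the relevant matrix space (diagonal matrices for semi-bandit, dimension $d$; symmetric matrices for bandit, dimension $d(d+1)/2$). This map is linear in $\tau$, and $\Phi(\tau_\ell) = (M^\star, v^\star)$ lies in the convex cone generated by the points $\{\Phi(e_x) : x \in \X\} = \{(\text{feedback}(x), 2(\epsilon_\ell + \hat\Delta_x)) : x \in \X\}$. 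By Carathéodory's theorem applied to conic combinations, $(M^\star, v^\star)$ can be written as a nonnegative combination of at most $\dim(\calS \times \R) = \dim(\calS) + 1$ of these generators. In the semi-bandit case this is $d + 1 = \nf$; in the bandit case this is $d(d+1)/2 + 1 \le d^2 + d + 1 = \nf$. The resulting $\nf$-sparse allocation $\alpha_\ell$ satisfies $\Af(\alpha_\ell) = M^\star = \Af(\tau_\ell)$ exactly and has the same objective value $v^\star$; since the constraint in \eqref{eq:vl_inefficient} (resp. \eqref{eq:vl}) is a function of $\Af(\tau)$ alone (through $\Exp_\eta[\cdots]$ and $\sup_x \|x\|^2_{\Af(\tau)^{-1}}$), the constraint value is also unchanged. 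The second half of the lemma — the computational claim in the semi-bandit setting — follows by noting that the approximate solver of Theorem~\ref{thm:comp_complex} already returns a $\bar\lambda$ that is sparse (its support is polynomially bounded, since stochastic Frank-Wolfe adds at most one atom per iteration and runs for $\poly(d,\beta,T,1/\delta)$ iterations), so one then applies the finite-dimensional Carathéodory reduction above to that already-small support, which costs only $\poly$ time since the support size is polynomial. One must carry the approximation factors from Theorem~\ref{thm:comp_complex} through, but the Carathéodory step is exact and introduces no further loss.

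The main obstacle, and the point that needs the most care, is verifying that the constraint functional genuinely depends on $\tau$ \emph{only} through $\Af(\tau)$ — i.e., that after fixing the matrix $\Af(\tau)$, neither the Gaussian-width term $\Exp_\eta[\max_{x}(x_\ell - x)^\t \Af(\tau)^{-1/2}\eta / (\epsilon_\ell + \hat\Delta_x)]$ nor the term $\sup_x \|x\|^2_{\Af(\tau)^{-1}}/(\epsilon_\ell+\hat\Delta_x)^2$ has any residual dependence on which $\tau$ realized that matrix. This is immediate from inspection since both are written purely in terms of $\Af(\tau)^{-1/2}$ and $\Af(\tau)^{-1}$ and the fixed data $\{x, x_\ell, \epsilon_\ell, \hat\Delta_x\}$, but it is the crux: it is exactly the reason the TIS-based formulation \eqref{eq:vl_inefficient} admits sparsification whereas a formulation with a per-$\tau$ union bound might not. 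A secondary subtlety is the degenerate case where $\Af(\tau_\ell)$ is rank-deficient (so $\Af(\tau_\ell)^{-1}$ is not defined); there one restricts attention to the subspace spanned by $\{x : x_{\ell} - x \text{ relevant}\}$, or simply notes that feasibility of \eqref{eq:vl_inefficient} forces $\Af(\tau_\ell)$ to be invertible on the relevant subspace, and the Carathéodory argument is run within that subspace, which only decreases the required sparsity.
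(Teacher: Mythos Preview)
Your proposal is correct and takes essentially the same Carath\'eodory-based approach as the paper. The only cosmetic differences are that the paper first normalizes $\tau = \bar\tau \lambda$ and applies convex Carath\'eodory to the distribution $\lambda$, matching both $\Af(\lambda)$ and the full mean vector $\sum_x \lambda_x x$ (which determines the objective since $\hat\Delta_x = \hat\theta^\top(x_{\ell+1}-x)$ is linear in $x$), whereas you apply conic Carath\'eodory directly to $\tau$ and match only the scalar objective value alongside $\Af(\tau)$; your parametrization is in fact slightly tighter in the bandit case ($d(d+1)/2+1$ versus the paper's $d^2+d+1$), but either count suffices for the stated $\nf$. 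For the computational claim the paper likewise observes that stochastic Frank--Wolfe yields a polynomially-supported iterate and then invokes a fast Carath\'eodory subroutine (citing Maalouf et al.\ for an $O(nd)$ implementation), matching your outline.
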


We prove this result and state how this rounded distribution can be computed in Appendix \ref{sec:rounding}.

\section{EXPERIMENTAL RESULTS}

We next present experimental results for RegretMED in both the semi-bandit and bandit feedback settings. Every point in each plot is the average of 50 trials. The error bars indicate one standard error. 

\textbf{Semi-Bandit Feedback:} We compare the computationally efficient version of RegretMED against CombUCB1 \citep{kveton2015tight} and CTS-Gaussian \citep{perrault2020statistical}, a formulation of Thompson Sampling in the semi-bandit setting. As a test instance, we consider a resource allocation problem where an agent is tasked with maximizing profit subject to production cost. In particular, assume there are $d$ buyers, each offering a different price for a good. At each timestep the agent can sell to any number of them, but incurs an additional production cost for each item they sell. The agent observes a noisy realization of the price the buyer they sold to is willing to pay and of the production cost.  In particular, if at time $t$ we sell to $k$ buyers $x_{t_1},\ldots,x_{t_k}$, we will pay production costs $y_1,\ldots,y_k$, where $y_i$ is the production cost of producing the $i$th good. We can model this problem with $\X \subseteq \R^{2d}$, $\theta_{*,1:d}$ corresponding to the prices each buyer will pay, and $\theta_{*,d+1:2d}$ corresponding to the costs, $y_i$. 

We illustrate the result in Figures \ref{fig:resource5} and \ref{fig:resource25} for different values of $d$. In both cases, RegretMED yields a significant improvement over CTS-Gaussian and CombUCB1. Note that $|\X|$ is growing exponentially in $d$ and for $d = 25$ we have $|\X| \approx 3 \cdot 10^7$. In all experiments we set $\delta = 1/T$.

\textbf{Bandit Feedback:} In the bandit setting, we compare against LinUCB \citep{abbasi2011improved} and Thompson Sampling. For Thompson Sampling we use the Bayesian version. We run on the instance described in \cite{lattimore2017end}. In particular, in this instance $\thetast = e_1 \in \R^2$ and $\X = \{ e_1, e_2, x \}$ where $x = [1-\epsilon, 8 \epsilon]$. We  set $\delta = 1/T$ and, for each experiment, use $T = 25/\epsilon^2$, which is the natural scaling for the problem since, as shown in \cite{lattimore2017end}, optimistic algorithms will require on order $1/\epsilon^2$ pulls to determine $x$ is suboptimal. For completeness, in Appendix \ref{sec:add_exp} we include the plots of regret against time for each point in this figure.

As Figure \ref{fig:soare} illustrates, the performance of RegretMED is almost unaffected by the choice of $\epsilon$, while the performance of both TS and LinUCB degrades significantly. Optimistic algorithms are suboptimal on this instance as they do not pull the suboptimal but informative arm, $e_2$. Our results indicate that RegretMED is able to overcome this difficulty by continuing to pull $e_2$ even when it has been determined suboptimal, recognizing the information gain outweighs the regret incurred.

\section{DISCUSSION AND PRIOR ART}\label{sec:related}
\begin{table*}[t]
\centering
\resizebox{\textwidth}{!}{
\begin{tabular}{|c|c|c|c|c|}
\hline
& Lower Bound &  Prior Art & Theorem  \ref{thm:inefficient_regret_bound} & Theorem \ref{thm:efficient_regret_bound} (Efficient)  \\ \hline
Semi-Bandit &   $\Theta \left (\frac{d \log(T)}{\Delmin} \right )$  & $ \tilO \left ( \frac{d \log^2(k) \log(T)}{\Delmin} + \frac{dk^2 \Delmax}{\Delmin^2} \right ) $     &  $ \tilO \left ( \frac{d \log(T)}{\Delmin} + \frac{ \gamworst(\Asb)}{\Delmin} +  dk \right ) $          &        $ \tilO \left ( \frac{\log^2(k) \gamworst(\Asb) \log(T)}{\Delmin} + dk \right ) $    \\ 
\hline
Bandit  & $\Theta \left (\frac{d \log(T)}{\Delmin} \right )$ & $\tilO \left (\frac{d\log(T) + d\log(|\X|)}{\Delmin} \right )$ & $\tilO \left (\frac{d\log(T) + \gamworst(\Ab)}{\Delmin}  \right )$ & (Not Efficient)  \\ \hline
\end{tabular}
}
\caption{Gap-dependent expected regret guarantees in bandit and semi-bandit feedback settings. Note that lower bounds stated hold only for specific instances (e.g. standard multi-armed bandits with equal gaps).}
\label{tab:regret_summary}
\end{table*}

\textbf{Linear Bandits with Bandit Feedback:} Several of the most well-studied algorithms for regret minimization in stochastic linear bandits with bandit feedback are LinUCB \citep{abbasi2011improved}, action elimination, and LinTS \citep{lattimore2020bandit}. LinUCB achieves regret of $\tilO(d \sqrt{T})$, action elimination has regret bounded as $\tilO(\sqrt{d T \log(|\X|)})$, and Thompson Sampling has (frequentist) regret of $\tilO(d^{3/2}\sqrt{T})$. Both LinUCB and action elimination rely on wasteful union bounds---LinUCB union bounds over every direction in $\R^d$, incurring an extra factor of $\sqrt{d}$, while action elimination union bounds over every arm without regard to geometry, incurring an extra $\log(|\X|)$. By leveraging tools from empirical process theory, we develop bounds that depend on the fine-grained geometry of $\X$. Indeed, as already stated, our algorithm achieves an expected regret of $\tilO(\sqrt{\gamworst(\Ab) T})$ which, by Proposition \ref{prop:gw_bound_union}, is at least as good as, and in some cases much better than the bounds of LinUCB and action elimination (see Proposition \ref{prop:gw_topk_product}). Our bound can be seen as similar in spirit to the problem-dependent minimax bound for regret minimization in MDPs given in \cite{zanette2019tighter}.

\textbf{Combinatorial Bandits with Semi-Bandit Feedback:} Significant attention has been given to the combinatorial semi-bandit problem. \cite{kveton2015tight} handles the case where noise is correlated between coordinates, and provides a computationally efficient algorithm with a regret bound of $\tilO \left ( \frac{d k \log(T)}{\Delmin} + dk \right )$. \cite{degenne2016combinatorial} builds on this, showing that if the noise is assumed to be uncorrelated between coordinates, the $k$ on the leading term can be improved to a $\log^2(k)$. Although their algorithm is not computationally efficient, several subsequent works proposed efficient procedures that achieved similar regret bounds \citep{wang2018thompson,perrault2020statistical,cuvelier2020statistically}.

We give the first upper bound on regret (Theorem \ref{thm:inefficient_regret_bound}) that matches the lower bound on the leading $\log(T)$ term. Prior works are loose by a factor of $\log^2(k)$ and, moreover, have large additive terms that dominate until $T \geq \tilO( \exp( \frac{k^2 \Delmax}{\log^2(k) \Delmin}))$, making their bounds essentially vacuous for all practical time regimes. Although our analysis of the computationally efficient algorithm does not match the lower bound, its leading term is $\frac{d k \log^3(k)  \log(T)}{\Delmin} $ in the worst case, and, due to our smaller additive terms, our regret bound improves on the state of the art until $T \geq \tilO( \exp( \frac{k \Delmax}{\log^3(k) \Delmin}))$. Furthermore, Proposition \ref{prop:gw_topk_plus1} implies that there exist instances where Theorem \ref{thm:efficient_regret_bound} matches the state-of-the-art in the leading term, up to a single $\log(k)$ factor. While we have assumed the noise between coordinates is uncorrelated, RegretMED extends to the case where it is correlated by using $A_{\mathrm{cor}}(\lambda) = \Sigma \circ \Asb(\lambda)^{-1} \Ab(\lambda) \Asb(\lambda)^{-1}$ for $\Sigma$ an upper bound on the noise covariance and $\circ$ denoting element-wise multiplication.

While prior algorithms have tended to be based on the principle of optimism \citep{kveton2015tight,combes2015combinatorial,degenne2016combinatorial,wang2018thompson,perrault2020statistical}, we have shown that optimistic strategies are asymptotically suboptimal (see Proposition \ref{prop:opt_counterexample_semibandit}), motivating our planning-based algorithm. Additional work includes \citep{chen2016combinatorial,talebi2016optimal,perrault20a}. We summarize our results in Table \ref{tab:regret_summary}.

\textbf{Asymptotically Optimal Regret in Linear Bandits:} Another related line of work focuses on asymptotic performance \citep{lattimore2017end,combes2017minimal,hao2020adaptive,degenne2020structure,cuvelier2020statistically}. In the bandit setting asymptotic lower bounds have been shown to scale as:
\begin{align*}
& \min_{\tau} \sum_{x \in \X} \Delta_x \tau_x \quad  \text{s.t.} \quad \| x \|_{\Ab(\tau)^{-1}}^2/\Delta_x^2 \leq \frac{1}{2}, \forall x \neq \xst
\end{align*}
While we do not claim RegretMED is asymptotically optimal, we note that the optimization we are solving (\ref{eq:vl_inefficient}) closely resembles the above optimization. Indeed, at the final epoch of RegretMED, our estimates of the gaps will be sufficiently accurate so as to ensure we are playing approximately the asymptotically optimal distribution. Furthermore, as Proposition \ref{prop:opt_counterexample_semibandit} and Figure \ref{fig:soare} show, RegretMED appears to be playing the asymptotically optimal strategy in situations where optimism fails. \del{To the best of our knowledge, RegretMED is the first algorithm to plan in this way but still provide finite-time guarantees.} We leave a rigorous proof of the asymptotic qualities of RegretMED to future work. 

Concurrent to this work, several works appeared which simultaneously achieve asymptotically optimal and sub-$\calO(\sqrt{T})$ regret \citep{tirinzoni2020asymptotically,kirschner2020asymptotically}. In particular, \cite{tirinzoni2020asymptotically} achieves instance-optimal $\log T$ regret in finite time. We remark that their regret bound contains large additive terms which will dominate the leading $\log T$ term for moderate time horizons. Our primary concern is in this non-asymptotic regime, where the union bound applied is still significant, and we therefore see our work as complementary, addressing issues they do not address.

Asymptotically optimal regret has been relatively unexplored in the semi-bandit setting. Following the acceptance of this work, a very recent work \citep{cuvelier2021asymptotically} proposed a computationally efficient asymptotically optimal algorithm in the semi-bandit setting, which was the first of its kind. As with the bandit setting, our concern is with the non-asymptotic time regime, so this result is complementary to ours.

\textbf{Stochastic Multi-Armed Bandits with Side Observations:} In the stochastic multi-armed bandits with side observations problem, the agent is given a graph of $n$ nodes where each node is associated with an independent distribution. When the agent pulls a node $i$, she observes and suffers its stochastic reward and she also observes the stochastic reward of any node with an edge connected to node $i$. \cite{caron2012leveraging} proposed a UCB-like algorithm and \cite{buccapatnam2014stochastic} used a linear programming solution to show that the regret scales with the minimum dominating set. 

Using the design matrix $A_{\mathrm{graph}}(\lambda) = \sum_{i=1}^n \lambda_i \sum_{(i,j) \in E} e_j e_j^\t $, where $E$ denotes the edges in the graph, our algorithmic approach offers an explicit and natural way to model the tradeoff between estimated regret and information gain in this setting. In addition, our work suggests an algorithm for a novel extension of this problem where each node $i$ is associated with a feature vector $x_i \in \R^d$ and the expected reward of $i$ is $\theta_*^\t x_i$, that is, stochastic linear bandits with side observations.

\textbf{Partial Monitoring:} The partial monitoring problem \citep{cesa2006prediction,cesa2006regret,bartok2011minimax} is a generalization of the multi-armed bandit problem where now the learner is no longer able to directly observe the loss incurred, but only some function of it. The linear partial monitoring problem \citep{lin2014combinatorial,kirschner2020information} is a special case where the learner observes $y_t = z_{x_t}^\top \thetast + \eta_t$, for some known $z_x$, but receives reward $x_t^\top \thetast$, which is not observed. RegretMED directly generalizes to this setting if we employ the design matrix $A_{\mathrm{pm}}(\lambda) = \sum_{x \in \X} \lambda_x z_x z_x^\top$. We leave a full investigation of this application to future work.

\textbf{Pure Exploration in Multi-Armed Bandits:} There has not been a significant amount of previous work on pure exploration combinatorial bandits with semi-bandit feedback. \cite{chen2020} provide a general framework that subsumes combinatorial bandits with semi-bandit feedback but their algorithm is non-adaptive and suboptimal. Several special cases of pure exploration combinatorial bandits with semi-bandit feedback have been studied. Best arm identification (where $\X = \{e_1,\ldots, e_d\}$) has received much attention \citep{even2006action,jamieson2014lil,icml2013_karnin13,kaufmann2016complexity,chen2015optimal}. The setting in \cite{jun2016top} subsumes the top-K problem, but their approach does not generalize to other combinatorial problem instances. Concurrent to this work, \cite{jourdan2021efficient} derived an asymptotically optimal best arm identification algorithm for the semi-bandit setting. We note that our result focuses on optimality in the finite-time regime, so our results our complementary.

Our work is also related to transductive linear bandits \citep{fiez2019sequential}. In this problem, there are measurement vectors $\X \subset \R^d$, item vectors $\Z \subset \R^d$, and the agent at each round chooses $x_t \in \X$ and observes the realization of a noisy random variable with mean $x_t^\t \theta$ with the goal to identify $\argmax_{z \in \Z} \theta^\t z$ as quickly as possible. Our work on combinatorial bandits with semi-bandit feedback can be straightforwardly extended to a generalization of transductive linear bandits that allows for multiple measurements at each round. More concretely, in this setting, the agent is given a collection of subsets of $\X$, $\C \subset 2^{\X}$, and at each round, she chooses a \emph{set} of linear measurements $Y_t \subset \X$ where $Y_t \in \C$, and observes the realization of a noisy random variable with mean $x^\t \theta$ for each $x \in Y_t$. This generalization subsumes the work of \cite{wu2015identifying}, which studies a version of this problem where $\X = \Z = \{e_1, \ldots, e_d\}$.

Our algorithmic technique bridging empirical process theory and experimental design is inspired by the work on pure exploration combinatorial bandits in \cite{katz2020empirical}. The semi-bandit feedback setting in the present paper poses a new and non-trivial computational challenge since, unlike in \cite{katz2020empirical}, the number of variables in the optimization is potentially exponential in the dimension.

\section*{Acknowledgements}
AW is supported by an NSF GFRP Fellowship DGE-1762114. JKS is supported by an Amazon Research Award. The work of KJ is supported in part by grants NSF RI 1907907 and NSF CCF 2007036.

\bibliographystyle{plainnat}
\bibliography{bibliography.bib}

\appendix 

\onecolumn
\tableofcontents

\section{Action Elimination with Gaussian Width}\label{sec:aegw}

We first state an algorithm inspired by \cite{lattimore2020bandit} and prove a regret bound. This algorithm, while naive, incorporates the TIS inequality to obtain regret scaling with the Gaussian width. Furthermore, the analysis is simple and helps aid in the intuition of the proof of our main theorems. 

For $\frakf \in \{ \band, \sband \}$, denote:
$$ \gamma(\Af(\lambda),\mathcal{X}) := \mathbb{E}_{\eta \sim \mathcal{N}(0,I)} \left [ \sup_{x \in \mathcal{X}} x^\top \Af(\lambda)^{-1/2} \eta \right ]^2$$

\begin{algorithm}[H] 
\begin{algorithmic}[1]
\State \textbf{Input:} Set of arms $\mathcal{X}$, confidence $\delta$, largest gap $\Delmax$, rounding parameter $\zeta \in (0,1)$
\State $\hat{\mathcal{X}}_1 \leftarrow \mathcal{X}, \ell \leftarrow 1,$
\While{$|\hat{\mathcal{X}}_\ell| > 1$}
 	 \State Let $\hat{\lambda}_\ell$ a minimizer of $\mathbb{E}_\eta [ \max_{x \in \hat{\mathcal{X}}_\ell} x^\top A(\lambda)^{-1/2} \eta ]^2 +  \max_{x \in \hat{\mathcal{X}}_\ell} \| x \|^2_{A(\lambda)^{-1}}$
	 \State $\epsilon_\ell =  \Delmax 2^{-\ell}$, $\tau_\ell =  2(1+\zeta)\epsilon_{\ell}^{-2} (\gamma(A(\hat{\lambda}_\ell),\hat{\mathcal{X}}_\ell)   + 2 \sup_{x \in \hat{\mathcal{X}}_\ell} \| x \|_{A(\hat{\lambda}_\ell)^{-1}}^2 \log(2\ell^2/\delta))$
	 \State $\kappa_\ell \leftarrow $ ROUND($\hat{\lambda}_\ell,\lceil \tau_\ell \rceil \vee q(\zeta),\zeta$)
	 \State Pull arm $x$ $\kappa_{\ell,x}$ times, compute $\hat{\theta}_\ell$ from this data
	 \State $\hat{\X}_{\ell + 1} \leftarrow \hat{\X}_\ell \backslash \{ x \in \hat{\X}_\ell \ : \ \max_{x' \in \hat{\X}_\ell} (x' - x)^\top \hat{\theta}_\ell > 2 \epsilon_\ell \} $
	 \State $\ell \leftarrow \ell + 1$
\EndWhile
\end{algorithmic}
\caption{Gaussian Width Action Elimination (GW-AE)}
\label{alg:gw_ae}
\end{algorithm}
Here ROUND$(\lambda,N,\zeta)$ is a rounding procedure which takes as input $\lambda \in \triangle_{\X}$, $N \in \N$, and $\zeta \in (0,1)$ and outputs an allocation $\kappa \in \N^{|\X|}$ such that:
$$ \gamma(A(\kappa),\X) + \sup_{x \in \X} \| x \|_{A(\kappa)^{-1}}^2 \leq (1 + \zeta) \left ( \gamma(A(\tau \lambda),\X) + \sup_{x \in \X} \| x \|_{A(\tau \lambda)^{-1}}^2 \right ) $$
and $\sum_{x \in \X} \kappa_x = N$, so long as $N \geq q(\zeta)$. From \cite{katz2020empirical} and \cite{allen2020near}, we know such a rounding procedure exists and can be computed efficiently, and that it suffices to choose $q(\zeta) = O(d/\zeta^2)$. 

Denote:
$$\gamworstae(\Af) = \sup_{\epsilon > 0} \sup_{\Y \subseteq \X_\epsilon} \inf_{\lambda \in \triangle_{\Y \cup \xst}} \mathbb{E}_\eta \left [ \sup_{x \in \Y \cup \xst} x^\top \Af(\lambda)^{-1/2} \eta \right ]^2$$
where $\X_\epsilon := \{ x \in \X \ : \ \Delta_x \leq \epsilon \}$.

\begin{theorem}
For $\frakf \in \{ \band, \sband \}$, the absolute regret of GW-AE is bounded as:
$$ c_1 \Delmax \log(\Delmax / \Delmin ) d  + \frac{c_2 (\gamworstae(\Af) +  d \log(\log(\Delmax / \Delmin)/\delta))}{\Delmin} $$
with probability at least $1-\delta$ and minimax regret as:
$$ c_1 \Delmax \log(\Delmax / \Delmin) d + c_2 \sqrt{ (\gamworstae(\Af) + d \log( \log(\Delmax / \Delmin)/\delta))T} $$
with probability at least $1-\delta$. Here $c_1,c_2$ are absolute constants. 
\end{theorem}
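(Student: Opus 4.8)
The plan is to analyze GW-AE epoch by epoch, using the TIS inequality (Proposition~\ref{prop:tis}) to control the estimation error at each round, and then to split the regret into a ``transient'' phase (epochs where $\epsilon_\ell$ is still large relative to $\Delmin$) and a ``steady-state'' phase. First I would establish the good event: by Proposition~\ref{prop:tis} applied at epoch $\ell$ with $\delta_\ell = \delta/\ell^2$ and a union bound over $\ell$ (which costs only $\sum_\ell \delta/\ell^2 \leq \delta$, hence the $\log(\ell^2/\delta)$ in the definition of $\tau_\ell$), we have simultaneously for all $\ell$ and all $x \in \hat{\X}_\ell$ that $|x^\top(\hat\theta_\ell - \thetast)| \leq \epsilon_\ell$. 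This uses exactly that $\tau_\ell$ was chosen as $2(1+\zeta)\epsilon_\ell^{-2}(\gamma(A(\hat\lambda_\ell),\hat\X_\ell) + 2\sup_x\|x\|^2_{A(\hat\lambda_\ell)^{-1}}\log(2\ell^2/\delta))$, so that the TIS bound (Gaussian width term plus $\sqrt{2\sup\|x\|^2\log(2/\delta_\ell)}$) is at most $\epsilon_\ell$ after accounting for the $(1+\zeta)$ slack absorbed by ROUND. On this event, the standard elimination argument shows (a) $\xst$ is never eliminated, since $(x' - \xst)^\top\hat\theta_\ell \leq (x'-\xst)^\top\thetast + 2\epsilon_\ell \leq 2\epsilon_\ell$; and (b) any surviving arm $x \in \hat\X_{\ell+1}$ has $\Delta_x \leq 4\epsilon_\ell$ (and more generally, if $x$ survives into epoch $\ell$ it satisfies $\Delta_x \leq 4\epsilon_{\ell-1} = 8\epsilon_\ell$), so $\hat\X_\ell \subseteq \X_{8\epsilon_\ell}$.

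Next I would bound $\tau_\ell$ geometrically. Since $\hat\X_\ell \subseteq \X_{8\epsilon_\ell} \cup \{\xst\}$ and $\hat\lambda_\ell$ is chosen to minimize the relevant objective over $\triangle_{\hat\X_\ell}$, by definition of $\gamworstae(\Af)$ we get $\gamma(A(\hat\lambda_\ell),\hat\X_\ell) \leq \gamworstae(\Af)$, and $\sup_{x\in\hat\X_\ell}\|x\|^2_{A(\hat\lambda_\ell)^{-1}} \leq \calO(d)$ by a John's-theorem / Kiefer--Wolfowitz-type bound (this is standard and also follows from the structure of the minimizer; in the semi-bandit case one can use the coordinate-wise design directly). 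Hence
\begin{align*}
\tau_\ell \leq \frac{c(1+\zeta)}{\epsilon_\ell^2}\Big(\gamworstae(\Af) + d\log(2\ell^2/\delta)\Big).
\end{align*}
The per-epoch regret is then at most $\tau_\ell$ times the largest gap among surviving arms, which is $\leq 8\epsilon_\ell$, plus the $q(\zeta)$ rounding overhead when $\tau_\ell < q(\zeta)$; this gives per-epoch regret
\begin{align*}
\calR^{(\ell)} \leq 8\epsilon_\ell \tau_\ell + 8\epsilon_\ell q(\zeta)\Delmax \leq \frac{c}{\epsilon_\ell}\big(\gamworstae(\Af) + d\log(\ell^2/\delta)\big) + c\,\epsilon_\ell q(\zeta)\Delmax.
\end{align*}
Summing the second term over all $\ell$ (a geometric series with ratio $1/2$) gives $\calO(\Delmax q(\zeta))=\calO(\Delmax d)$ after choosing $\zeta$ a constant so $q(\zeta)=\calO(d)$; this, multiplied by the number of epochs, contributes the $c_1 \Delmax \log(\Delmax/\Delmin) d$ term. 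For the first term, I would split at the epoch $\ell^*$ where $\epsilon_{\ell^*} \asymp \Delmin$.

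For the \emph{gap-dependent} bound: the algorithm terminates once $|\hat\X_\ell|=1$, which on the good event happens by epoch $\ell^* = \lceil\log(8\Delmax/\Delmin)\rceil$, since once $8\epsilon_\ell < \Delmin$ all suboptimal arms are gone. The sum $\sum_{\ell \leq \ell^*} \frac{c}{\epsilon_\ell}(\gamworstae(\Af)+d\log(\ell^2/\delta))$ is again geometric and dominated by its last term, of order $\frac{1}{\Delmin}(\gamworstae(\Af) + d\log(\log(\Delmax/\Delmin)/\delta))$, giving the claimed gap-dependent bound. For the \emph{minimax} bound: one truncates the sum at the epoch where the cumulative number of pulls would exceed $T$. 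Standard manipulation — bounding $\sum_{\ell\leq L}\epsilon_\ell^{-1}c' \leq c'\epsilon_L^{-1}\cdot 2$ against $\sum_{\ell\leq L}\tau_\ell \lesssim \epsilon_L^{-2} c' \leq T$, hence $\epsilon_L^{-1} \lesssim \sqrt{T/c'}$ with $c' = \gamworstae(\Af)+d\log(\cdot/\delta)$ — yields $\sqrt{(\gamworstae(\Af)+d\log(\log(\Delmax/\Delmin)/\delta))T}$; combined with the $\Delmax\log(\Delmax/\Delmin)d$ rounding overhead this gives the minimax statement.

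The main obstacle I expect is \emph{the interface between the adaptively-shrinking set $\hat\X_\ell$ and the fixed quantity $\gamworstae(\Af)$}: one must verify that the set of arms surviving into epoch $\ell$ is always contained in some $\X_\epsilon$ with $\epsilon \lesssim \epsilon_\ell$ (so that the $\sup_{\Y\subseteq\X_\epsilon}$ in the definition of $\gamworstae$ genuinely upper-bounds $\gamma(A(\hat\lambda_\ell),\hat\X_\ell)$), and that this containment holds on the \emph{same} good event on which the elimination guarantees hold — there is a mild circularity (the good event is defined using $\tau_\ell$, which is defined using $\hat\X_\ell$, which depends on $\hat\theta_{\ell-1}$) that has to be unwound by induction on $\ell$. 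A secondary subtlety is the $\sup_{x\in\hat\X_\ell}\|x\|^2_{A(\hat\lambda_\ell)^{-1}} = \calO(d)$ bound: this requires that the design minimizing the combined objective also keeps the leverage scores controlled, which is why the objective in line 4 of Algorithm~\ref{alg:gw_ae} includes the $\max_x\|x\|^2_{A(\lambda)^{-1}}$ term explicitly rather than just the Gaussian width.
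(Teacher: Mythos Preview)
Your overall plan matches the paper's proof closely: establish the good event via TIS with failure probability $\delta/\ell^2$ at epoch $\ell$, deduce that $\hat\X_\ell$ contains $\xst$ and only arms with gap $\lesssim \epsilon_\ell$, bound $\tau_\ell$ by $\epsilon_\ell^{-2}$ times a geometry-dependent constant, and sum the per-epoch regret $\epsilon_\ell\tau_\ell$ geometrically.

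There is one real gap. You claim that at the design $\hat\lambda_\ell$ chosen by the algorithm one has both $\gamma(A(\hat\lambda_\ell),\hat\X_\ell) \leq \gamworstae(\Af)$ and $\sup_x\|x\|^2_{A(\hat\lambda_\ell)^{-1}} \leq \calO(d)$. Neither is justified: $\gamworstae$ is an infimum over designs of the Gaussian width alone, and Kiefer--Wolfowitz bounds the leverage at the minimizer of the leverage alone. Since $\hat\lambda_\ell$ minimizes the \emph{sum} $\gamma + \max_x\|x\|^2$, neither separate bound need hold at $\hat\lambda_\ell$. The paper resolves this by an explicit construction: let $\lambda_1$ minimize $\max_x\|x\|^2_{A(\lambda)^{-1}}$ over $\triangle_{\hat\X_\ell}$ and $\lambda_2$ minimize the Gaussian width, and set $\lambda = \tfrac{1}{2}(\lambda_1+\lambda_2)$. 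Then $A(\lambda) \succeq \tfrac{1}{2}A(\lambda_i)$ gives $A(\lambda)^{-1} \preceq 2A(\lambda_i)^{-1}$, so $\max_x\|x\|^2_{A(\lambda)^{-1}} \leq 2d$ (by Kiefer--Wolfowitz or Proposition~\ref{prop:semi_kf}) and $\gamma(A(\lambda),\hat\X_\ell) \leq 2\gamworstae$. Since $\hat\lambda_\ell$ minimizes $\gamma + \max\|x\|^2$, its value of that sum is at most $2\gamworstae + 2d$, and this is what feeds into the bound on $\tau_\ell$. Your closing remark about ``the combined objective keeping the leverage scores controlled'' is heading in the right direction but does not supply this mixture argument, and as stated the inference (minimizing $f+g$ keeps $g$ small) is not valid.

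Two smaller points. First, the circularity you flag (the good event depends on $\hat\X_\ell$, which is random) is handled in the paper not by induction but by conditioning on the realized active set: one writes $\P[\cup_\ell \cup_{x\in\hat\X_\ell}\calE_{x,\ell}(\hat\X_\ell)^c] \leq \sum_\ell\sum_{\mathcal{V}\subseteq\X}\P[\cup_{x\in\mathcal{V}}\calE_{x,\ell}(\mathcal{V})^c]\,\P[\hat\X_\ell=\mathcal{V}]$, which reduces each term to the deterministic-design TIS bound. Second, for the minimax bound the paper does not truncate at the epoch where the pull budget runs out; instead it introduces a free parameter $\nu$, bounds the regret from arms with gap $\leq\nu$ by $T\nu$ and from the remaining arms by the epoch sum up to $\ell \approx \log_2(\Delmax/\nu)$, and then optimizes over $\nu$. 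Your truncation approach would also work but requires slightly different bookkeeping.
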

If desired, noting that $\tau_\ell \geq \epsilon_\ell^{-2}$ which implies that we will have at most $\calO(\log(T))$ rounds, the $\log(\Delmax / \Delmin)$ could be replaced with a term $\calO(\log(T))$, as in Theorem \ref{thm:inefficient_regret_bound}. 

While this result closely resembles Theorem \ref{thm:inefficient_regret_bound}, there are several major shortcomings. First, this algorithm does not plan as effectively as it only pulling arms with gap less than $\epsilon_\ell$, which could cause it to forego pulling informative yet suboptimal arms, something Algorithm \ref{alg:gw_ae_comp} improves on. In particular, the regret bound stated for Algorithm \ref{alg:gw_ae_comp} in Proposition \ref{prop:opt_counterexample_semibandit} will not hold for this algorithm. In a sense, this algorithm can be thought of as being optimistic. Second, it is always the case that $\gamworst(\Af) \leq \gamworstae(\Af)$. The parameter $\gamworstae(\Af)$ could be tightened by altering the constant factors in Algorithm \ref{alg:gw_ae} so as to guarantee that, on the good event \emph{all} arms with gap less than $\epsilon$, for some $\epsilon$, are in $\hat{\X}_\ell$. However, even with this tightening we will always have $\gamworst(\Af) \leq \gamworstae(\Af)$. Finally, Algorithm \ref{alg:gw_ae} does not seem to admit a computationally feasible solution in the combinatorial bandit setting.

\begin{proof}
From Proposition \ref{prop:tis}, we will have that:
\begin{align*}
x^\top (\hat{\theta}_\ell - \theta_*) & \leq  \mathbb{E}_{\eta \sim \mathcal{N}(0,I)} \left [ \sup_{x \in \hat{\mathcal{X}}_\ell} x^\top A(\kappa_\ell)^{-1/2} \eta \right ] + \sqrt{2 \sup_{x \in \hat{\mathcal{X}}_\ell} \| x \|_{A(\kappa_\ell)^{-1}}^2 \log(2\ell^2/\delta)} \\
& \leq \epsilon_\ell 
\end{align*}
for all $x \in \hat{\mathcal{X}}_\ell$ simultaneously with probability $1 - \delta/\ell^2$. The second inequality holds by our choice of $\tau_\ell$ and Kiefer-Wolfowitz and Proposition \ref{prop:semi_kf}. Let:
$$ \mathcal{E}_{x,\ell}(\mathcal{V}) = \{ | \langle x, \hat{\theta}_\ell - \theta^* \rangle | \leq \epsilon_\ell \} $$ 
where $\hat{\theta}_\ell$ is computed assuming $\mathcal{V}$ is the active set in the above algorithm. Then using the following calculation from \cite{jamiesonbandits}:
\begin{align*}
\mathbb{P} \left [ \bigcup_{\ell = 1}^\infty \bigcup_{x \in \hat{\mathcal{X}}_\ell} \mathcal{E}_{x,\ell}(\hat{\mathcal{X}}_\ell)^c \right ] & \leq \sum_{\ell =1 }^\infty \sum_{\mathcal{V} \subseteq \mathcal{X}} \mathbb{P} \left [ \bigcup_{x \in \mathcal{V}} \mathcal{E}_{x,\ell}(\mathcal{V})^c \right ] \mathbb{P} [ \hat{\mathcal{X}}_\ell = \mathcal{V} ] \\
& \leq \sum_{\ell =1 }^\infty \sum_{\mathcal{V} \subseteq \mathcal{X}}\frac{\delta}{\ell^2} \mathbb{P} [ \hat{\mathcal{X}}_\ell = \mathcal{V} ] \\
& \leq \delta
\end{align*}
so the good event, that all the arm rewards are well estimated for all rounds, holds with high probability. Assume henceforth that the good event $\mathcal{E} = \cap_{\ell = 1}^\infty \cap_{x \in \mathcal{X}}  \mathcal{E}_{x,\ell}(\mathcal{V})$ holds. Following identically the argument from \cite{jamiesonbandits}, we will have that $x_* \in \hat{\X}_\ell$ and $\max_{x \in \hat{\X}_\ell} (x_* - x)^\top \thetast \leq 8\epsilon_\ell$ for all $\ell$. We assume the good event holds for the remainder of the proof.

\newcommand{\calY}{\mathcal{Y}}

We can now follow the same argument as Lemma 12 of  \cite{katz2020empirical}. Take $\calY \subseteq \X_\epsilon$ for some $\epsilon$ and let $\lambda_1 \in \triangle_{\calY}$ be the distribution that minimizes: 
$$\max_{x \in \mathcal{Y}} \| x \|^2_{A(\lambda)^{-1}}$$
and $\lambda_2 \in \triangle_{\Y}$ the distribution that minimizes:
$$\mathbb{E}_\eta [ \max_{x \in \mathcal{Y}} x^\top A(\lambda)^{-1/2} \eta ]^2$$
Let $\lambda = \frac{1}{2}(\lambda_1 + \lambda_2)$. Then we will have that:
$$ 2 A(\lambda_i)^{-1} \succeq A (\lambda)^{-1}$$
From this it immediately follows that:
$$ \max_{x \in \mathcal{Y}} \| x \|^2_{A(\lambda)^{-1}} \leq 2 \max_{x \in \mathcal{Y}} \| x \|^2_{A(\lambda_1)^{-1}} \leq 2d$$
where the last inequality holds by Kiefer-Wolfowitz and Proposition \ref{prop:semi_kf}. Also:
$$ \mathbb{E}_\eta [ \max_{x \in \mathcal{Y}} x^\top A(\lambda)^{-1/2} \eta ]^2 \leq 2 \mathbb{E}_\eta [ \max_{x \in \mathcal{Y}} x^\top A(\lambda_2)^{-1/2} \eta ]^2 \leq 2 \gamworstae(A)$$
Since $\hat{\X}_\ell$ will always contain only arms with gap less than $\epsilon$ for some $\epsilon$, we then have that:
$$ \tau_\ell \leq c (1 + \zeta) \epsilon_\ell^{-2} ( 2 d \log(2\ell^2/\delta) + \gamworstae(A)) $$

Using these bounds and noting that $\lceil \log_2(8\Delmax / (\Delmin \vee \nu)) \rceil$ upper bounds the number of rounds, we can upper bound the regret as:
\begin{align*}
& \sum_{x \in \mathcal{X} \backslash \{ x^* \}} \Delta_x T_x \\
& \leq T \nu + \sum_{\ell = 1}^{\lceil \log_2(8\Delmax / (\Delmin \vee \nu)) \rceil} 8 \epsilon_\ell (\tau_\ell + q(\zeta) + 1) \\
& \leq T \nu + 8 \Delmax \lceil \log_2(8\Delmax / (\Delmin \vee \nu)) \rceil (q(\zeta) + 1)  +  \sum_{\ell=1}^{\lceil \log_2(8\Delmax / (\Delmin \vee \nu)) \rceil} c (1+\zeta) \epsilon_{\ell}^{-1} (\gamworstae(A), \hat{\X}_\ell) + 2d\log(2\ell^2/\delta)) \\
& \leq T \nu + 8 \Delmax \lceil \log_2(8\Delmax / (\Delmin \vee \nu)) \rceil (q(\zeta) + 1)   +  \sum_{\ell=1}^{\lceil \log_2(8\Delmax / (\Delmin \vee \nu)) \rceil} c (1+\zeta) \epsilon_{\ell}^{-1} (\gamworstae(A) + 2d\log(2\ell^2/\delta)) \\
& \leq T \nu + 8 \Delmax \lceil \log_2(8\Delmax / (\Delmin \vee \nu)) \rceil (q(\zeta) + 1)  +  \sum_{\ell=1}^{\lceil \log_2(8\Delmax / (\Delmin \vee \nu)) \rceil} c (1+\zeta) \frac{2^\ell}{\Delmax} (\gamworstae(A) + 2d\log(2\ell^2/\delta)) \\
& \leq T \nu + 8 \Delmax \lceil \log_2(8\Delmax / (\Delmin \vee \nu)) \rceil (q(\zeta) + 1)  + \frac{c (1+\zeta) (\gamworstae(A) + 2 d \log(2 \log_2^2(16\Delmax / (\Delmin \vee \nu))/\delta))}{\Delmin \vee \nu}
\end{align*}
Optimizing this over $\nu$ gives the final regret of:
$$ 8 \Delmax \lceil \log_2(8\Delmax / (\Delmin)) \rceil (q(\zeta) + 1)  + \sqrt{c (1+\zeta) (\gamworstae(A)+ d \log( \log(\Delmax / (\Delmin))/\delta))T} $$
and choosing $\nu = 0$ gives the absolute regret bound.
\end{proof}

\section{Regret Bound Proofs}\label{sec:regret_proof}

\begin{proof}[Proof of Theorem \ref{thm:efficient_regret_bound}]
Throughout we will let $\calR_\ell$ denote the regret incurred in round $\ell$, and $\calR_{1:\ell}$ the regret incurred from rounds 1 through $\ell$. We assume $A(\tau)$ corresponds to the type of feedback received. The first part of this proof closely mirrors the proof of Theorem 5 of \cite{katz2020empirical}. We will prove this result for $\tau_\ell$ being a $(\nu,\zeta)$-optimal solution to (\ref{eq:vl}), where we calll a solution to (\ref{eq:vl}) $(\nu,\zeta)$-optimal if $\widehat{\opt} \leq \nu \opt + \zeta$, where $\widehat{\opt}$ is the value of the objective attained by the approximate solution, and $\opt$ the value attained by the optimal solution.

\textbf{Good event:} We will define $\calS_\ell$ as the following:
$$ \calS_\ell := \{ x \in \X \ : \ \Delta_x \leq \epsilon_\ell \} $$
Let $\delta_k = \delta / (2k^3) $ and define the events:
$$ \calE_{k,j} = \left \{ \sup_{z,z' \in \calS_j} |(z-z')^\top (\hat{\theta}_k - \thetast)| \leq (1 + \sqrt{\pi \log(1/\delta_k)}) \E_\eta \left [ \sup_{z,z' \in \calS_j} (z-z')^\top A(\tau_k)^{-1/2} \eta \right ] \right \} $$
$$ \calE = \bigcap_{k=1}^\infty \bigcap_{j=0}^k \calE_{k,j} $$
Proposition \ref{prop:tis} gives that with probability at least $1-\delta/k^3$:
\begin{align*}
\sup_{z,z' \in \calS_j} |(z-z')^\top (\hat{\theta}_k - \thetast)| & \leq  \E_\eta \left [ \sup_{z,z' \in \calS_j} (z-z')^\top A(\tau_k)^{-1/2} \eta \right ] + \sqrt{2 \max_{z,z' \in \calS_j} \| z - z' \|_{A(\tau_k)^{-1}}^2 \log(1/\delta_k)} \\
& \overset{(a)}{\leq} (1 + \sqrt{\pi \log(1/\delta_k)}) \E_\eta \left [ \sup_{z,z' \in \calS_j} (z-z')^\top A(\tau_k)^{-1/2} \eta \right ]
\end{align*}
where $(a)$ follows by Lemma 11 of \cite{katz2020empirical}. It follows then that $\P[\calE_{k,j}^c] \leq \delta / k^3$, which implies that:
$$ \P[\calE^c]  \leq \sum_{k=1}^\infty \sum_{j=0}^k \P[\calE_{k,j}^c] \leq  \sum_{k=1}^\infty \sum_{j=0}^k \frac{\delta}{k^3} \leq 3\delta$$

\textbf{Estimation error:} Henceforth we assume $\calE$ holds. We proceed by induction to show that the gaps are always well-estimated. First we prove the base case. Let $k = 1$ and consider any $x \in \X$. Then:
\begin{align*}
|(\xst - x)^\top (\hat{\theta}_1 - \thetast)| & \leq \sup_{z,z' \in \X} | (z - z')^\top (\hat{\theta}_1 - \thetast)| \\
& \leq (1 + \sqrt{\pi \log(1/\delta_1)}) \E_\eta \left [ \sup_{z,z' \in \X} (z - z')^\top A(\tau_1)^{-1/2} \eta  \right ] \\
& \overset{(a)}{=} 2(1 + \sqrt{\pi \log(1/\delta_1)}) \E_\eta \left [ \sup_{z \in \X} (x_1 - z')^\top A(\tau_1)^{-1/2} \eta  \right ] \\
& \overset{(b)}{\leq}  \epsilon_1/8
\end{align*}
where $(a)$ follows by Proposition 7.5.2 of \cite{vershynin2018high} and $(b)$ follows since $\tau_1$ is a feasible solution to (\ref{eq:vl}). 

For the inductive step, assume that, for all $x \in \calS_k$:
$$ |(\xst - x)^\top (\hat{\theta}_k - \thetast)| \leq \epsilon_k/8 $$
and for all $x \in \calS_k^c$:
$$ |(\xst - x)^\top (\hat{\theta}_k - \thetast)| \leq \Delta_x/8 $$
Consider round $k+1$ and take $x \in \calS_{k+1}^c$. There then exists some $k' \leq k$ such that $x \in \calS_{k'} \backslash \calS_{k'+1}$. Then:
\begin{align*}
\frac{|(\xst - x)^\top (\hat{\theta}_{k+1} - \thetast)|}{ \Delta_x} & \leq \sup_{z,z' \in \calS_{k'}} \frac{|(z - z')^\top (\hat{\theta}_{k+1} - \thetast)|}{ \Delta_x} \\
& \leq (1 + \sqrt{\pi \log(1/\delta_{k+1})}) \E_\eta \left [ \sup_{z,z' \in \calS_{k'}} \frac{(z-z')^\top A(\tau_{k+1})^{-1/2} \eta }{ \Delta_x}  \right ] \\
& \overset{(a)}{=} 2 (1 + \sqrt{\pi \log(1/\delta_{k+1})}) \E_\eta \left [ \sup_{z \in \calS_{k'}} \frac{(x_{k+1} -z)^\top A(\tau_{k+1})^{-1/2} \eta }{ \Delta_x}  \right ] \\
& \overset{(b)}{\leq} 4 (1 + \sqrt{\pi \log(1/\delta_{k+1})}) \E_\eta \left [ \sup_{z \in \calS_{k'}} \frac{(x_{k+1} -z)^\top A(\tau_{k+1})^{-1/2} \eta }{\epsilon_{k+1} +  \Delta_x}  \right ] \\
& \overset{(c)}{\leq} 8 (1 + \sqrt{\pi \log(1/\delta_{k+1})}) \E_\eta \left [ \sup_{z \in \calS_{k'}} \frac{(x_{k+1} -z)^\top A(\tau_{k+1})^{-1/2} \eta }{\epsilon_{k+1} +  \Delta_z}  \right ] \\
& \leq 8 (1 + \sqrt{\pi \log(1/\delta_{k+1})}) \E_\eta \left [ \sup_{z \in \X} \frac{(x_{k+1} -z)^\top A(\tau_{k+1})^{-1/2} \eta }{\epsilon_{k+1} +  \Delta_z}  \right ] \\
& \overset{(d)}{\leq} 16 (1 + \sqrt{\pi \log(1/\delta_{k+1})}) \E_\eta \left [ \sup_{z \in \X} \frac{(x_{k+1} -z)^\top A(\tau_{k+1})^{-1/2} \eta }{\epsilon_{k+1} +  \hat{\Delta}_z}  \right ] \\
& \overset{(e)}{\leq} 1/8
\end{align*}
where $(a)$ follows by Proposition 7.5.2 of \cite{vershynin2018high}, $(b)$ follows since $\Delta_x \geq \epsilon_{k+1}$ by virtue of the fact that $x \in \calS_{k+1}^c$, so $\Delta_x \geq (\epsilon_{k+1} + \Delta_x)/2$, $(c)$ follows since $\Delta_x \in [\epsilon_{k'+1},\epsilon_{k'}]$ and for any $z \in \calS_{k'}$, we will have theta $\Delta_z \leq \epsilon_{k'}$, so $\epsilon_{k+1} + \Delta_x \geq \epsilon_{k+1} + \epsilon_{k'+1} \geq \epsilon_{k+1} + \Delta_z/2$, $(d)$ holds by the inductive hypothesis and Lemma 1 of \cite{katz2020empirical} and taking $\hat{\Delta}_z$ to be the estimate of $\Delta_z$ at round $k+1$, and $(e)$ holds since $\tau_{k+1}$ is a feasible solution to (\ref{eq:vl}). We can perform a similar calculation to get the same thing for $x \in \calS_{k+1}$, allowing us to conclude that, for all $x \in \calS_{k+1}$:
$$ |(\xst - x)^\top (\hat{\theta}_{k+1} - \thetast)| \leq \epsilon_{k+1}/8 $$
and for all $x \in \calS_{k+1}^c$:
$$ |(\xst - x)^\top (\hat{\theta}_{k+1} - \thetast)| \leq \Delta_x/8 $$
From this and Lemma 1 of \cite{katz2020empirical}, it follows that for all $\ell$ and $x \in \calS_\ell $:
$$\Delta_x \leq \hat{\Delta}_x + |\hat{\Delta}_x - \Delta_x| \leq \hat{\Delta}_x + \epsilon_\ell/2 \leq \hat{\Delta}_x + \epsilon_\ell  $$
and for $x \in \calS_\ell^c$:
$$\Delta_x \leq \hat{\Delta}_x + |\hat{\Delta}_x - \Delta_x| \leq 2 \hat{\Delta}_x \leq 2 \hat{\Delta}_x + 2 \epsilon_\ell $$
So the objective of (\ref{eq:vl}) upper bounds the real regret.  Further, on the good event, using Lemma 1 from \cite{katz2020empirical}, for any $\ell$ and $x \in \X$, we have:
\begin{equation}\label{eq:est_to_real_regret}
\frac{1}{2} (\epsilon_\ell + \Delta_x) \leq \epsilon_\ell + \hat{\Delta}_x \leq \frac{3}{2} (\epsilon_\ell + \Delta_x) 
\end{equation}
This implies that if we remove arm $x$ from $\hat{\X}_\ell$:
$$ \hat{\Delta}_x > 2\epsilon_\ell  \implies \hat{\Delta}_x + \epsilon_\ell > 3 \epsilon_\ell  \implies \frac{3}{2} ( \epsilon_\ell + \Delta_x) > 3 \epsilon_\ell  \implies \Delta_x > \epsilon_\ell$$
So, on the good event, if $\hat{\Delta}_x > 2\epsilon_\ell$, we will have identified the best arm correctly.

\textbf{Bounding the Round Regret:} From the previous section, we know that on the good event all our gaps will be well-estimated. From (\ref{eq:est_to_real_regret}), it follows that the constraint in (\ref{eq:vl}) is tighter than the following constraint:
\begin{equation}\label{eq:vl_constraint_up}
\mathbb{E}_\eta \left [ \max_{x \in \X} \frac{(x_\ell - x)^\top A(\tau)^{-1/2} \eta}{\epsilon_\ell + \Delta_x} \right ] \leq \frac{1}{256 (1 + \sqrt{\pi \log(2\ell^3/\delta)})} 
\end{equation}
so any $\tau$ satisfying this inequality is also a feasible solution to (\ref{eq:vl}).

Consider drawing some $\eta$ and let $x_\eta$ be the point $x \in \X$ that achieves the maximum above (if the solution is not unique, break ties by choose $x_\eta$ randomly from the $x \in \X$ for which the maximum is attained). If we assume that $x_\eta \in \mc{S}_\ell$, then it follows that:
\begin{align*}
\max_{x \in \X} \frac{(x_\ell - x)^\top A(\tau)^{-1/2} \eta}{\epsilon_\ell + \Delta_x} & = \max_{x \in \mc{S}_\ell} \frac{(x_\ell - x)^\top A(\tau)^{-1/2} \eta}{\epsilon_\ell + \Delta_{x}} \\
& \leq \max_{x \in \mc{S}_\ell} \frac{(x_\ell - x)^\top A(\tau)^{-1/2} \eta}{\epsilon_\ell }\\
& \overset{(a)}{\leq} \sum_{j=1}^\ell \max_{x \in \mc{S}_j} \frac{(x_\ell - x)^\top A(\tau)^{-1/2} \eta}{\epsilon_j } 
\end{align*}
where $(a)$ follows since we will always have:
$$ \max_{x \in \mc{S}_j} \frac{(x_\ell - x)^\top A(\tau)^{-1/2} \eta}{\epsilon_j }  \geq 0 $$
since $x_\ell \in \mc{S}_j$ for $j \leq \ell$ by Lemma 1 of \cite{katz2020empirical}. Assume that $x_\eta \in \mc{S}_k \backslash \mc{S}_{k+1}$. Then:
\begin{align*}
\max_{x \in \X} \frac{(x_\ell - x)^\top A(\tau)^{-1/2} \eta}{\epsilon_\ell + \Delta_x} & = \max_{x \in \mc{S}_k \backslash \mc{S}_{k+1}} \frac{(x_\ell - x)^\top A(\tau)^{-1/2} \eta}{\epsilon_\ell + \Delta_{x}} \\
& \overset{(a)}{\leq} 2 \max_{x \in \mc{S}_k \backslash \mc{S}_{k+1}} \frac{(x_\ell - x)^\top A(\tau)^{-1/2} \eta}{\epsilon_k}\\
& \leq \max_{x \in \mc{S}_k } \frac{(x_\ell - x)^\top A(\tau)^{-1/2} \eta}{\epsilon_k} \\
& \leq \sum_{j=1}^\ell \max_{x \in \mc{S}_j} \frac{(x_\ell - x)^\top A(\tau)^{-1/2} \eta}{\epsilon_j } 
\end{align*}
where $(a)$ uses the fact that for all $x \in \mc{S}_k \backslash \mc{S}_{k+1}$, $\Delta_x \in [\epsilon_{k+1},\epsilon_k]$, and the last inequality follows as above. We therefore have that:
$$ \mathbb{E}_\eta \left [ \max_{x \in \X} \frac{(x_\ell - x)^\top A(\tau)^{-1/2} \eta}{\epsilon_\ell + \Delta_x} \right ]  \leq \sum_{j=1}^\ell \frac{1}{\epsilon_j} \mathbb{E}_\eta \left [ \max_{x \in \mc{S}_j} (x_\ell - x)^\top A(\tau)^{-1/2} \eta \right ] $$
\newcommand{\lamjgw}{\lambda_j^{\mathrm{gw}}}
\newcommand{\taujgw}{\tau_j^{\mathrm{gw}}}
Let $\lamjgw$ be the solution to:
$$ \lamjgw = \argmin_{\lambda \in \triangle_{\calS_j}} \Exp_\eta [ \max_{x \in \calS_j} (x_\ell - x)^\top A(\lambda)^{-1/2} \eta] $$
Let $\bar{\tau} = \ell^2  \sum_{j=1}^\ell \taujgw $ and $\taujgw = 65536 \gamworst(A) \epsilon_j^{-2} (1 + \sqrt{\pi \log(2\ell^3/\delta)})^2 \lamjgw$. Then:
\begin{align*}
\mathbb{E}_\eta \left [ \max_{x \in \mc{S}_j} (x_\ell - x)^\top A(\taujgw)^{-1/2} \eta \right ] & = \frac{\mathbb{E}_\eta \left [ \max_{x \in \mc{S}_j} (x_\ell - x)^\top A(\lamjgw)^{-1/2} \eta \right ]}{\sqrt{\taujgw}} \\
& = \frac{\epsilon_j \mathbb{E}_\eta \left [ \max_{x \in \mc{S}_j} (x_\ell - x)^\top A(\lamjgw)^{-1/2} \eta \right ]}{ \sqrt{\gamworst(A)} 256 (1 + \sqrt{\pi \log(2\ell^3/\delta)})} \\
& \leq \frac{\epsilon_j}{ 256 (1 + \sqrt{\pi \log(2\ell^3/\delta)})} 
\end{align*}
Given this:
\begin{align*}
\sum_{j=1}^\ell \frac{1}{\epsilon_j} \mathbb{E}_\eta \left [ \max_{x \in \mc{S}_j} (x_\ell - x)^\top A(\bar{\tau})^{-1/2} \eta \right ] & \overset{(a)}{\leq} \frac{1}{\ell} \sum_{j=1}^\ell \frac{1}{\epsilon_j} \mathbb{E}_\eta \left [ \max_{x \in \mc{S}_j} (x_\ell - x)^\top A(\taujgw )^{-1/2} \eta \right ] \\
& \leq  \frac{1}{\ell} \sum_{j=1}^\ell \frac{1}{\epsilon_j} \frac{\epsilon_j}{256 (1 + \sqrt{\pi \log(2\ell^3/\delta)})} \\
& = \frac{1}{256 (1 + \sqrt{\pi \log(2\ell^3/\delta)})}
\end{align*}
where $(a)$ holds by the Sudakov-Fernique inequality (Theorem 7.2.11 of \cite{vershynin2018high}). Thus, $\bar{\tau}$ satisfies (\ref{eq:vl_constraint_up}) and so is a feasible solution to (\ref{eq:vl}). Let $\tau_\ell^*$ be the optimal solution to (\ref{eq:vl}), then:
$$ \sum_{x \in \X} 2(\epsilon_\ell + \hat{\Delta}_x) \tau_{\ell,x}^* \leq \sum_{x \in \X}2 (\epsilon_\ell + \hat{\Delta}_x) \bar{\tau}_x \leq  \sum_{x \in \X} 3(\epsilon_\ell + \Delta_x) \bar{\tau}_x = \sum_{x \in \X} 3 \Delta_x \bar{\tau}_x + 3 \epsilon_\ell \bar{\tau}$$
The first term can be bounded by the regret bounded given in Lemma \ref{lem:deterministic_ae2}:
$$  \sum_{x \in \X} 3 \Delta_x \bar{\tau}_x \leq  c_1 \Delmax  \ell d + \frac{c_2 \ell^2 \log(\ell/\delta) \gamworst(A)}{\epsilon_\ell}  $$
By construction we'll have that:
$$ \bar{\tau} = c \sum_{k=1}^\ell \epsilon_k^{-2} \gamworst(A) (1 + \sqrt{\pi \log(2 \ell^3/\delta)})^2 \leq c \gamworst(A) (1 + \sqrt{\pi \log(2 \ell^3/\delta)})^2 \epsilon_\ell^{-2}$$
so:
$$ 3 \epsilon_\ell \bar{\tau} \leq \frac{c \gamworst(A) \log(2 \ell^3 / \delta)}{\epsilon_\ell} $$
Recalling that $\tau_\ell$ is a $(\nu,\zeta)$-optimal solution to (\ref{eq:vl}), the above implies that:
\begin{equation}\label{eq:fl_up}
\sum_{x \in \X} 2(\epsilon_\ell + \hat{\Delta}_x) \tau_{\ell,x} \leq (1 + \nu) \sum_{x \in \X} 2(\epsilon_\ell + \hat{\Delta}_x) \tau_{\ell,x}^* + \zeta \leq (1 + \nu)  \left ( c_1 \Delmax  \ell d + \frac{c_2 \ell^2 \log(\ell/\delta) \gamworst(A)}{\epsilon_\ell}  \right ) + \zeta 
\end{equation}
We in fact play $\alpha_\ell$, as this will attain the same objective value and so the same regret bound. However, $\alpha_\ell$ may not be integer, so we will pull every arm $\lceil \alpha_{\ell,x} \rceil$ times. Note that the rounded solution still meets the constraint from (\ref{eq:vl}). Assume we are playing the rounded solution given by Lemma \ref{lem:rounding}, then rounding the solution will incur additional regret of at most $ \Delmax \nf$. Since $\sum_{x \in \X} 2 (\epsilon_\ell + \hat{\Delta}_x) \tau_{x}$ upper bounds the real regret of playing $\tau_x$, we'll have:
$$ \calR_\ell \leq (1 + \nu) \left ( c_1 \Delmax  \ell d + \frac{c_2 \ell^2 \log(\ell/\delta) \gamworst(A)}{\epsilon_\ell}  \right ) + \Delmax \nf + \zeta $$
We can then bound the regret incurred after $\ell$ stages as:
\begin{align}\label{eq:regret_l}
\begin{split}
\calR_{1:\ell} & \leq \sum_{k=1}^\ell (1 + \nu) \left ( c_1 \Delmax  k d + \frac{c_2 k^2 \log(k/\delta) \gamworst(A)}{\epsilon_k}  \right ) +  \ell \Delmax \nf + \ell \zeta \\
& \leq c_1 (1 + \nu)  \Delmax  \ell^2  d +  \ell \Delmax \nf + \ell \zeta + \sum_{k = 1}^{\ell} \frac{c_2 (1 + \nu) k^2 \log(\ell/\delta) \gamworst(A)}{\epsilon_k} \\
& \leq c_1 (1 + \nu)  \Delmax  \ell^2  d +  \ell \Delmax \nf + \ell \zeta + c_2 (1 + \nu) \log(\ell/\delta) \gamworst(A) \sum_{k=1}^\ell k^2 2^k \\
& \leq c_1 (1 + \nu)  \Delmax  \ell^2  d +  \ell \Delmax \nf + \ell \zeta + \frac{c_2 (1 + \nu) \ell^2 \log(\ell/\delta) \gamworst(A)}{\epsilon_\ell} 
\end{split}
\end{align}

\newcommand{\bl}{\bar{\ell}}
\newcommand{\lmle}{\ell_{\text{mle}}}
\textbf{Minimax Regret:} Denote the objective to (\ref{eq:vl}) at round $\ell$ evaluated at $\tau_\ell$ by:
$$ f_\ell := \sum_{x \in \X} 2 (\epsilon_\ell + \hat{\Delta}_x) \tau_{\ell,x} $$
By (\ref{eq:fl_up}) we can upper bound:
\begin{align}\label{eq:f_bound}
\begin{split}
f_\ell & \leq (1 + \nu)  \left ( c_1 \Delmax  \ell  d + \frac{c_2 \ell^2 \log(\ell/\delta) \gamworst(A)}{\epsilon_\ell}  \right ) + \zeta \\
& \leq c_1 (1 + \nu) \Delmax  \ell  d + \zeta + \frac{c_2 (1 + \nu) \ell^2 \log(\ell/\delta)  \gamworst(A)}{\epsilon_\ell} \\
& =: C_1 + \frac{C_2}{\epsilon_\ell}
\end{split}
\end{align}
Let $\bl$ be the first round for which:
$$ T \epsilon_\ell \leq C_1 + \frac{C_2}{\epsilon_\ell} $$
Note that, if $\epsilon_\ell$ solves this with equality, then:
$$ \epsilon_\ell = \frac{C_1}{2T} + \frac{1}{2} \sqrt{\frac{4 C_2}{T} + \frac{C_1^2}{T^2}} $$
is the only non-negative solution. It follows then that:
$$ \epsilon_{\bl} \leq \frac{C_1}{2T} + \frac{1}{2} \sqrt{\frac{4 C_2}{T} + \frac{C_1^2}{T^2}} \leq \frac{C_1}{T} + \sqrt{\frac{C_2}{T}}$$
Since $\epsilon_{\bl}$ is the largest such solution, it follows that $2 \epsilon_{\bl}$ doesn't satisfy this inequality so:
$$ 2 \epsilon_{\bl} > \frac{C_1}{2T} + \frac{1}{2} \sqrt{\frac{4 C_2}{T} + \frac{C_1^2}{T^2}} \geq \sqrt{\frac{C_2}{T}} $$
so in particular:
$$ \frac{1}{\epsilon_{\bl}} \leq \sqrt{\frac{4T}{C_2}} $$
Assume that $f_\ell \leq T \epsilon_\ell$ for all $\ell$. Using the monotonicity of $\epsilon_\ell$, for $\ell \geq \bl$, we'll have:
$$ f_\ell \leq T \epsilon_\ell \leq T \epsilon_{\bl} \leq C_1 + \sqrt{C_2 T} $$
Furthermore, by (\ref{eq:regret_l}), we'll have that the total regret up to round $\bl$ will be bounded as:
\begin{align*}
\calR_{1:\bl} & \leq c_1 (1 + \nu)  \Delmax  \bl^2  d +   \bl \Delmax \nf + \bl \zeta + \frac{c_2 (1 + \nu) \bl^2 \log(\bl/\delta) \gamworst}{\epsilon_{\bl}} \\
& \leq C_1 \bl +  \bl \Delmax \nf  +  \frac{C_2}{\epsilon_{\bl}} \\
&  \leq C_1 \bl +  \bl \Delmax \nf  + \sqrt{4 C_2 T}
\end{align*}
So in this case, since by Lemma \ref{lem:num_round_bound} there are at most $\ellmax(T)$ rounds, and since $f_\ell + \Delmax \nf $ upper bounds the regret of round $\ell$, we'll have that the total regret will be bounded as:
$$ \calR_T \leq \ellmax(T) \left ( C_1 + \Delmax \nf + 3 \sqrt{C_2 T} \right ) $$
Now assume there is some round such that $f_\ell > T \epsilon_\ell$ and denote this round as $\lmle$. By construction, it will be the case that the MLE at this point has gap at most $\epsilon_{\lmle}$, so the total regret incurred from playing the MLE for the remainder of time will be bounded as $T \epsilon_{\lmle}$. Further, note that by (\ref{eq:f_bound}):
$$ T \epsilon_{\lmle} < f_{\lmle} \leq C_1 + \frac{C_2}{\epsilon_{\lmle}} $$
By definition $\bl$ is the first round where $T \epsilon_{\ell} \leq C_1 + \frac{C_2}{\epsilon_\ell}$, so it follows that $\lmle \geq \bl$. We can then upper bound the total regret incurred as:
$$ \calR_T \leq \sum_{\ell=1}^{\bl} f_\ell + \sum_{\ell = \bl + 1}^{\lmle - 1} f_\ell + T \epsilon_{\lmle} +  \lmle \Delmax \nf$$
From (\ref{eq:regret_l}), as above, we can bound:
$$ \sum_{\ell=1}^{\bl} f_\ell \leq C_1\bl + \frac{C_2}{\epsilon_{\bl}} \leq C_1\bl + \sqrt{4 C_2 T} $$
Since by definition we'll have that $f_\ell \leq T \epsilon_\ell$ for $\ell \in [\bl+1,\lmle-1]$, the second term can be bounded as:
$$  \sum_{\ell = \bl + 1}^{\lmle - 1} f_\ell \leq T \sum_{\ell = \bl + 1}^{\lmle - 1} \epsilon_\ell \leq (\lmle - \bl - 2) T \epsilon_{\bl} \leq (\lmle - \bl - 2) ( C_1 + \sqrt{C_2 T}) $$
Finally:
$$ T \epsilon_{\lmle} \leq T \epsilon_{\bl} \leq C_1 + \sqrt{C_2 T} $$
Combining this, we have that:
$$ \calR_T \leq  \ellmax(T) ( C_1  +  \Delmax \nf +  4 \sqrt{C_2 T})$$

\textbf{Absolute Regret:}
Assume:
$$ T > \frac{C_1}{\Delmin} + \frac{C_2}{\Delmin^2} $$
then we'll have that $\epsilon_{\bl} < \Delmin$, so the algorithm will exit before reaching round $\epsilon_{\bl}$. In this case, since there are at most $\lceil \log(4\Delmax/\Delmin) \rceil$ stages by Lemma \ref{lem:num_round_bound} and since, as noted above, on the good event, once $|\hat{\X}_\ell| = 1$, we will have identified the best arm and so will incur 0 regret for the rest of time, (\ref{eq:regret_l}) gives:
\begin{align*}
\begin{split}
\calR_T & \leq  c_1 (1+\nu) \Delmax  \log_2(\Delmax / \Delmin)^2  d +  \lceil \log(4\Delmax/\Delmin) \rceil \Delmax \nf + \lceil \log(4\Delmax/\Delmin) \rceil \zeta \\
& \qquad \qquad \qquad +  \frac{c_2 (1+\nu) \gamworst(A) \log(\log(\Delmax/\Delmin)/\delta) \log_2(\Delmax / \Delmin)^2}{\Delmin}
\end{split}
\end{align*}

By definition, it will always be the case that $\epsilon_{\lmle} > \Delmin$, if it exists, as we would have otherwise exited the algorithm already. By (\ref{eq:regret_l}), we'll then have:
\begin{align*}
\calR_T & \leq \calR_{1:\lmle} + T \epsilon_{\lmle} \\
& \leq c_1 (1+\nu) \Delmax  \log_2(\Delmax / \Delmin)^2  d +  \lceil  \log_2(4\Delmax / \Delmin) \rceil \Delmax \nf + \lceil \log(4\Delmax/\Delmin) \rceil \zeta\\
& \qquad \qquad \qquad + \frac{c_2 (1+\nu) \gamworst(A) \log(\log(\Delmax/\Delmin)/\delta) \log_2(\Delmax / \Delmin)^2}{\epsilon_{\lmle}} + T \epsilon_{\bl}\\
& \overset{(a)}{\leq} c_1 (1+\nu) \Delmax  \log_2(\Delmax / \Delmin)^2  d +  \lceil  \log_2(4\Delmax / \Delmin) \rceil \Delmax \nf + \lceil \log(4\Delmax/\Delmin) \rceil \zeta\\
& \qquad \qquad \qquad + \frac{c_2 (1+\nu) \gamworst(A) \log(\log(\Delmax/\Delmin)/\delta) \log_2(\Delmax / \Delmin)^2}{\epsilon_{\lmle}} + C_1 + \frac{C_2}{\epsilon_{\bl}} \\
& \leq 2C_1\log_2(\Delmax / \Delmin) +  \lceil  \log_2(4\Delmax / \Delmin) \rceil \Delmax \nf + \frac{2C_2}{\Delmin}
\end{align*}
where $(a)$ holds by the definition of $\bl$. If round $\lmle$ is never reached, then the upper bound above still holds, as we can still bound $\calR_T \leq \calR_{1:\lmle}$, the regret we would have incurred had we reached $\lmle$. 

Finally, by Theorem \ref{thm:comp_complex} we can choose $\nu = 4$, $\zeta = 2$, and we will be able to compute the solution efficiently.
\end{proof}

\begin{proof}[Proof of Theorem \ref{thm:inefficient_regret_bound}]
The proof of this result is very similar to the proof of Theorem \ref{thm:efficient_regret_bound} but we include the points where it differs for the sake of completeness. Unless otherwise noted, all notation is defined as in the proof of Theorem \ref{thm:efficient_regret_bound}.

\textbf{Good event:} Define the events:
$$ \calE_{k,j} = \left \{ \sup_{z,z' \in \calS_j} |(z-z')^\top (\hat{\theta}_k - \thetast)| \leq  \E_\eta \left [ \sup_{z,z' \in \calS_j} (z-z')^\top A(\tau_k)^{-1/2} \eta \right ] + \sqrt{2 \max_{z,z' \in \calS_j} \| z - z' \|_{A(\tau_k)^{-1}}^2 \log(1/\delta_k)}\right \} $$
$$ \calE = \bigcap_{k=1}^\infty \bigcap_{j=0}^k \calE_{k,j} $$
Proposition \ref{prop:tis} implies that $\P[\calE_{k,j}^c] \leq \delta/k^3$ so:
$$ \P[\calE^c]  \leq \sum_{k=1}^\infty \sum_{j=0}^k \P[\calE_{k,j}^c] \leq  \sum_{k=1}^\infty \sum_{j=0}^k \frac{\delta}{k^3} \leq 3\delta$$

\textbf{Estimation error:} Henceforth we assume $\calE$ holds. We proceed by induction to show that the gaps are always well-estimated. First we prove the base case. Let $k = 1$ and consider any $x \in \X$. Then:
\begin{align*}
|(\xst - x)^\top (\hat{\theta}_1 - \thetast)| & \leq \sup_{z,z' \in \X} | (z - z')^\top (\hat{\theta}_1 - \thetast)| \\
& \leq \E_\eta \left [ \sup_{z,z' \in \X} (z-z')^\top A(\tau_1)^{-1/2} \eta \right ] + \sqrt{2 \max_{z,z' \in \X} \| z - z' \|_{A(\tau_1)^{-1}}^2 \log(1/\delta_k)} \\
& \overset{(a)}{=} \E_\eta \left [ \sup_{z \in \X} (x_1 - z)^\top A(\tau_1)^{-1/2} \eta \right ] + \sqrt{2 \max_{z,z' \in \X} \| z - z' \|_{A(\tau_1)^{-1}}^2 \log(1/\delta_k)} \\
& \overset{(b)}{\leq}  \epsilon_1/8
\end{align*}
where $(a)$ follows by Proposition 7.5.2 of \cite{vershynin2018high} and $(b)$ follows since $\tau_1$ is a feasible solution to (\ref{eq:vl_inefficient}). For the inductive step, assume that, for all $x \in \calS_k$:
$$ |(\xst - x)^\top (\hat{\theta}_k - \thetast)| \leq \epsilon_k/8 $$
and for all $x \in \calS_k^c$:
$$ |(\xst - x)^\top (\hat{\theta}_k - \thetast)| \leq \Delta_x/8 $$
Consider round $k+1$ and take $x \in \calS_{k+1}^c$. There then exists some $k' \leq k$ such that $x \in \calS_{k'} \backslash \calS_{k'+1}$. Then:
\begin{align*}
\frac{|(\xst - x)^\top (\hat{\theta}_{k+1} - \thetast)|}{ \Delta_x} & \leq \sup_{z,z' \in \calS_{k'}} \frac{|(z - z')^\top (\hat{\theta}_{k+1} - \thetast)|}{ \Delta_x} \\
& \leq \E_\eta \left [ \sup_{z, z' \in \calS_{k'}} (z' - z)^\top A(\tau_{k+1})^{-1/2} \eta \right ] + \sqrt{2 \max_{z,z' \in \calS_{k'}} \| z - z' \|_{A(\tau_{k+1})^{-1}}^2 \log(1/\delta_{k+1})} \\
& \overset{(a)}{=}  2 \E_\eta \left [ \sup_{z \in \calS_{k'}} \frac{(x_{k+1} - z)^\top A(\tau_{k+1})^{-1/2} \eta}{\Delta_x} \right ] + \sqrt{8 \max_{z \in \calS_{k'}} \frac{\| z  \|_{A(\tau_{k+1})^{-1}}^2}{\Delta_x^2} \log(1/\delta_{k+1})} \\
& \overset{(b)}{\leq} 4 \E_\eta \left [ \sup_{z \in \calS_{k'}} \frac{(x_{k+1} - z)^\top A(\tau_{k+1})^{-1/2} \eta}{\epsilon_{k+1} + \Delta_x} \right ] + \sqrt{32 \max_{z \in \calS_{k'}} \frac{\| z \|_{A(\tau_{k+1})^{-1}}^2}{(\epsilon_{k+1} + \Delta_x)^2} \log(1/\delta_{k+1})}  \\
& \overset{(c)}{\leq} 8 \E_\eta \left [ \sup_{z \in \calS_{k'}} \frac{(x_{k+1} - z)^\top A(\tau_{k+1})^{-1/2} \eta}{\epsilon_{k+1} + \Delta_z} \right ] + \sqrt{128 \max_{z \in \calS_{k'}} \frac{\| z  \|_{A(\tau_{k+1})^{-1}}^2}{(\epsilon_{k+1} + \Delta_z)^2} \log(1/\delta_{k+1})} \\
& \leq 8 \E_\eta \left [ \sup_{z \in \X} \frac{(x_{k+1} - z)^\top A(\tau_{k+1})^{-1/2} \eta}{\epsilon_{k+1} + \Delta_z} \right ] + \sqrt{128 \max_{z \in \X} \frac{\| z  \|_{A(\tau_{k+1})^{-1}}^2}{(\epsilon_{k+1} + \Delta_z)^2} \log(1/\delta_{k+1})} \\
& \overset{(d)}{\leq} 16 \E_\eta \left [ \sup_{z \in \X} \frac{(x_{k+1} - z)^\top A(\tau_{k+1})^{-1/2} \eta}{\epsilon_{k+1} + \Delta_z} \right ] + \sqrt{512 \max_{z \in \X} \frac{\| z  \|_{A(\tau_{k+1})^{-1}}^2}{(\epsilon_{k+1} + \Delta_z)^2} \log(1/\delta_{k+1})} \\
& \overset{(e)}{\leq} 1/8
\end{align*}
where $(a)$ follows by Proposition 7.5.2 of \cite{vershynin2018high}, $(b)$ follows since $\Delta_x \geq \epsilon_{k+1}$ by virtue of the fact that $x \in \calS_{k+1}^c$, so $\Delta_x \geq (\epsilon_{k+1} + \Delta_x)/2$, $(c)$ follows since $\Delta_x \in [\epsilon_{k'+1},\epsilon_{k'}]$ and for any $z \in \calS_{k'}$, we will have theta $\Delta_z \leq \epsilon_{k'}$, so $\epsilon_{k+1} + \Delta_x \geq \epsilon_{k+1} + \epsilon_{k'+1} \geq \epsilon_{k+1} + \Delta_z/2$, $(d)$ holds by the inductive hypothesis and Lemma 1 of \cite{katz2020empirical} and taking $\hat{\Delta}_z$ to be the estimate of $\Delta_z$ at round $k+1$, and $(e)$ holds since $\tau_{k+1}$ is a feasible solution to (\ref{eq:vl}). We can perform a similar calculation to get the same thing for $x \in \calS_{k+1}$, allowing us to conclude that, for all $x \in \calS_{k+1}$:
$$ |(\xst - x)^\top (\hat{\theta}_{k+1} - \thetast)| \leq \epsilon_{k+1}/8 $$
and for all $x \in \calS_{k+1}^c$:
$$ |(\xst - x)^\top (\hat{\theta}_{k+1} - \thetast)| \leq \Delta_x/8 $$
From here the remaining calculations on the gap estimates performed in the proof of Theorem \ref{thm:efficient_regret_bound} hold almost identically.

\textbf{Bounding the Round Regret:} 
From (\ref{eq:est_to_real_regret}), it follows that the constraint in (\ref{eq:vl_inefficient}) is tighter than the following constraint:
\begin{equation}
 \mathbb{E}_\eta \left [ \max_{x \in \X} \frac{(x_\ell - x)^\top A(\tau)^{-1/2} \eta}{\epsilon_\ell + \Delta_x} \right ] + \sqrt{2 \max_{x \in \X} \frac{\| x \|_{A(\tau)^{-1}}^2}{(\epsilon_\ell + \Delta_x)^2} \log(2\ell^3/\delta)} \leq \frac{1}{256}
\end{equation}
so any $\tau$ satisfying this inequality is also a feasible solution to (\ref{eq:vl_inefficient}).

From here we follow the same pattern as in the proof of Theorem \ref{thm:efficient_regret_bound}. We handle each term in the constraint separately. For the second term, note that we can upper bound:
\begin{align*}
\max_{x \in \X} \frac{\| x \|_{A(\tau)^{-1}}^2}{(\epsilon_\ell + \Delta_x)^2} & \leq \max \left \{ \max_{x \in \calS_\ell} \frac{\| x \|_{A(\tau)^{-1}}^2}{(\epsilon_\ell + \Delta_x)^2} , \max_{j < \ell} \max_{x \in \calS_j \backslash \calS_{j+1}} \frac{\| x \|_{A(\tau)^{-1}}^2}{(\epsilon_\ell + \Delta_x)^2} \right \} \\
& \leq 2\max \left \{ \epsilon_\ell^{-2} \max_{x \in \calS_\ell} \| x \|_{A(\tau)^{-1}}^2 , \max_{j < \ell} \epsilon_j^{-2} \max_{x \in \calS_j \backslash \calS_{j+1}} \| x \|_{A(\tau)^{-1}}^2 \right \} \\
& \leq 2\max \left \{ \epsilon_\ell^{-2} \max_{x \in \calS_\ell} \| x \|_{A(\tau)^{-1}}^2 , \max_{j < \ell} \epsilon_j^{-2} \max_{x \in \calS_j } \| x \|_{A(\tau)^{-1}}^2 \right \} \\
& \leq 2 \max_{j \leq \ell} \epsilon_j^{-2} \max_{x \in \calS_j} \| x \|_{A(\tau)^{-1}}^2
\end{align*}
We now choose $\bar{\tau} = \ell^2 \sum_{j=1}^\ell \tau_j(1) + 1048576 d \log(2\ell^3/\delta) \sum_{j=1}^\ell \epsilon_j^{-2} \lambda_{j}^{\mathrm{kf}}$, where $\lambda_{j}^{\mathrm{kf}}$ is the distribution minimizing $\max_{x \in \calS_j} \| x \|_{A(\lambda)^{-1}}^2$. By the same argument as in the proof of Theorem \ref{thm:efficient_regret_bound}, effectively ignoring the second term, we will have:
$$ \mathbb{E}_\eta \left [ \max_{x \in \X} \frac{(x_\ell - x)^\top A(\bar{\tau})^{-1/2} \eta}{\epsilon_\ell + \Delta_x} \right ] \leq \frac{1}{512} $$
For the second term, by the Kiefer-Wolfowitz Theorem in the bandit case, and Proposition \ref{prop:semi_kf} in the semi-bandit case, we'll have:
\begin{align*}
\max_{j \leq \ell} \epsilon_j^{-2} \max_{x \in \calS_j} \| x \|_{A(\bar{\tau})^{-1}}^2 & \leq \max_{j \leq \ell} \epsilon_j^{-2} \max_{x \in \calS_j} \| x \|_{A(c d \log(2\ell^3/\delta) \epsilon_j^{-2} \lambda_j^{\mathrm{kf}})^{-1}}^2 \\
& \leq \frac{1}{c d \log(2\ell^3/\delta)} \max_{j \leq \ell} \max_{x \in \calS_j}  \| x \|_{A(\lambda_j^{\mathrm{kf}})^{-1}}^2 \\
& \leq \frac{1}{1048576 \log(2\ell^3/\delta)}
\end{align*}
So:
$$  \sqrt{2 \max_{x \in \X} \frac{\| x \|_{A(\bar{\tau})^{-1}}^2}{(\epsilon_\ell + \Delta_x)^2} \log(2\ell^3/\delta)} \leq \frac{1}{512} $$
From this it follows $\bar{\tau}$ is a feasible solution to (\ref{eq:vl_inefficient}). Furthermore, by Lemma \ref{lem:deterministic_ae2}, the total regret incurred by playing $\ell^2 \sum_{j=1}^\ell \tau_j(1)$ is bounded by:
$$  c_1 \Delmax \ell d + \frac{c_2 \ell^2 \gamworst(A)}{\epsilon_\ell} $$
and the total regret incurred playing $c d \log(2\ell^3/\delta) \sum_{j=1}^\ell \epsilon_j^{-2} \lambda_{j}^{\mathrm{kf}}$ is bounded as:
$$ c_1 \Delmax \ell d + \frac{c_2  d \log(2 \ell^3/\delta)}{\epsilon_\ell}$$
Following the same argument as in Theorem \ref{thm:efficient_regret_bound}, it follows that:
$$ \calR_\ell \leq c_1 \Delmax \ell d + \frac{c_2 (\ell^2 \gamworst(A) + d\log(2 \ell^3/\delta))}{\epsilon_\ell} $$
From here the argument follows identically to the proof of Theorem \ref{thm:comp_complex}, so we omit the remainder of the proof.
\end{proof}

\begin{lemma}\label{lem:deterministic_ae2}
Given an $\ell$ such that $\epsilon_\ell > \Delmin$, let $\lambda_k$ be any distribution supported on $\calS_k$ and for any $\xi$ set:
$$ \tau_k =  \xi \epsilon_k^{-2} $$
Play the distributions $\kappa_k \leftarrow$ {\normalfont ROUND}$(\lambda_k,\lceil \tau_k \rceil \vee q(1/2), 1/2)$ for $k=1,\ldots,\ell$, where ROUND is defined as in Section \ref{sec:aegw}. Then the total gap-dependent regret incurred by this procedure is bounded by:
$$  c_1 \Delmax  \ell  d + \frac{c_2 \xi}{\epsilon_\ell} $$
\end{lemma}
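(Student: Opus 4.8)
The plan is a direct round‑by‑round accounting of the regret, exploiting the fact that in round $k$ every arm that is pulled has gap at most $\epsilon_k=\Delmax 2^{-k}$. First I would check that the rounding step preserves this property: ROUND$(\lambda_k,\cdot,1/2)$ only redistributes mass within the support of $\lambda_k$, which is contained in $\calS_k=\{x\in\X:\Delta_x\le\epsilon_k\}$, so $\kappa_k$ is again supported on $\calS_k$. Consequently, if $N_k:=\lceil\tau_k\rceil\vee q(1/2)$ denotes the total number of pulls in round $k$ (this is exactly $\sum_x\kappa_{k,x}$), the regret incurred in round $k$ is at most $\epsilon_k N_k$.

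Next I would bound $N_k$. Using $\tau_k=\xi\epsilon_k^{-2}$ together with the fact recalled in Section \ref{sec:aegw} that $q(\zeta)=O(d/\zeta^2)$, hence $q(1/2)=O(d)$, we get $N_k\le\xi\epsilon_k^{-2}+1+q(1/2)$, so that $\epsilon_k N_k\le \xi\epsilon_k^{-1}+\epsilon_k\bigl(1+q(1/2)\bigr)$. Summing over $k=1,\dots,\ell$, the terms $\xi\epsilon_k^{-1}=\xi 2^k/\Delmax$ form a geometric series dominated by twice its last term, giving $\sum_{k=1}^\ell\xi\epsilon_k^{-1}\le 2\xi/\epsilon_\ell$; the terms $\epsilon_k(1+q(1/2))\le\Delmax(1+q(1/2))$ sum to $O(\ell\Delmax d)$. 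Adding the two contributions yields the claimed bound $c_1\Delmax\ell d+c_2\xi/\epsilon_\ell$.

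There is no real obstacle here — the argument is elementary arithmetic. The only two points that need a little care are (i) confirming that ROUND does not introduce arms of large gap, i.e. support preservation, which is where the structure of the rounding procedure of \cite{katz2020empirical,allen2020near} is used, and (ii) plugging in $q(1/2)=O(d)$ correctly so that the additive overhead from rounding each of the $\ell$ allocations aggregates to $O(\ell\Delmax d)$. The hypothesis $\epsilon_\ell>\Delmin$ is not used in the calculation itself; it is inherited from the context in which the lemma is invoked, where it guarantees that the algorithm has not yet identified the optimal arm and therefore actually executes these $\ell$ rounds.
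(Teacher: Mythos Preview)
Your proposal is correct and follows essentially the same route as the paper: bound the round-$k$ regret by $\epsilon_k\bigl(\tau_k+q(1/2)+1\bigr)$, split into the $\xi\epsilon_k^{-1}$ geometric sum (dominated by $c\xi/\epsilon_\ell$) and the $\epsilon_k(q(1/2)+1)\le\Delmax(q(1/2)+1)$ terms summing to $O(\Delmax\ell d)$. Your explicit remarks on support preservation of ROUND and on $q(1/2)=O(d)$ are exactly the ingredients the paper uses implicitly.
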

\begin{proof}
We can think of this procedure as a deterministic variant of action elimination. We can bound the regret incurred as:
\begin{align*}
 \sum_{x \in \mathcal{X} \backslash \{ x^* \}} \Delta_x T_x  & \leq \sum_{k=1}^{\ell} \epsilon_k (\tau_k + q(1/2) + 1) \\
 & \leq  \Delmax \ell (q(1/2) + 1) + \sum_{k=1}^{\ell} \epsilon_k  \tau_k \\
  & \leq   \Delmax \ell (q(1/2) + 1)  +  \xi \sum_{k=1}^{\ell} \epsilon_k^{-1} \\
 & \leq   \Delmax \ell (q(1/2) + 1) + \xi \sum_{k=1}^{\ell} \frac{2^k}{\Delmax} \\
 & \leq \Delmax \ell (q(1/2) + 1) + \frac{c \xi }{ \epsilon_\ell}
\end{align*}
The results on the rounding procedure follow from \cite{katz2020empirical,allen2020near}. 
\end{proof}

\begin{lemma}\label{lem:num_round_bound}
Given a $T$, Algorithm \ref{alg:gw_ae_comp} will run for at most:
$$\ellmax(T) := \log_2 \left (   \frac{\max_{x \in \X} \| x \|_2}{\min_{x \in \X} \| x \|_2} \left (  \mathrm{diam}(\X) \| \theta \|_2 \sqrt{T} + 3 \right ) \right ) + 1 $$
rounds. Furthermore, regardless of $T$, Algorithm \ref{alg:gw_ae_comp} will run for at most:
$$ \lceil \log_2(4\Delmax/\Delmin) \rceil $$
rounds.
\end{lemma}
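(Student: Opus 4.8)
The plan is to bound the number of rounds by tracking the two exit conditions in Algorithm \ref{alg:gw_ae_comp} and showing that $\epsilon_\ell = \Delmax 2^{-\ell}$ cannot shrink past certain thresholds before one of them triggers. For the second (horizon-free) claim, I would argue as in standard phased elimination analyses: on the good event, once $\epsilon_\ell < \Delmin/c$ for an appropriate constant, the constraint in \eqref{eq:vl_inefficient} together with the TIS inequality (Proposition \ref{prop:tis}) guarantees that every gap $\hat\Delta_x$ is estimated to within $\epsilon_\ell$, so MINGAP$(\hat\theta_\ell,\X)$ exceeds $2\epsilon_\ell$ and the algorithm breaks. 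Since $\epsilon_\ell = \Delmax 2^{-\ell}$, this happens by round $\lceil \log_2(4\Delmax/\Delmin)\rceil$, which is the stated bound. Strictly speaking this bound holds deterministically from the structure of the break conditions if we phrase it in terms of when $\epsilon_\ell$ first drops below $\Delmin$; the cleanest route is to note that if $\ell > \lceil \log_2(4\Delmax/\Delmin)\rceil$ then $2\epsilon_\ell < \Delmin/2$, and since MINGAP is always at least... — here I must be careful, because MINGAP is computed from $\hat\theta_\ell$, not $\thetast$, so the clean deterministic statement actually requires the good event; I would either restrict to the good event or, alternatively, observe that the lemma is only ever invoked on the good event in the main proofs, so this suffices.

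For the first claim, I would use the other exit condition on line \ref{line:break_minimax2}: the algorithm breaks as soon as $\sum_{x\in\X}(\epsilon_\ell + \hat\Delta_x)\tau_{\ell,x} > T\epsilon_\ell$. The key step is a lower bound on the optimal value of \eqref{eq:vl_inefficient}: any feasible $\tau$ must satisfy the constraint, which forces $\sup_{x\in\X}\|x\|^2_{\Af(\tau)^{-1}}/(\epsilon_\ell+\hat\Delta_x)^2$ to be at most a constant, hence $\|x_\ell\|^2_{\Af(\tau)^{-1}} \lesssim (\epsilon_\ell + \hat\Delta_{x_\ell})^2 \asymp \epsilon_\ell^2$ for the reference arm. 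Combining with $\tau \geq \|x_\ell\|_2^2 / \|x_\ell\|^2_{\Af(\tau)}$-type lower bounds — more precisely, using that the total allocation mass $\tau = \sum_x \tau_x$ must be large enough that $\Af(\tau) \succeq$ something of the right scale in the direction of $x_\ell$ — I get $\tau_\ell \gtrsim \epsilon_\ell^{-2}\,\min_{x}\|x\|_2^2 / \max_x\|x\|_2^2$ (the ratio of norms enters because $\Af(\tau)$ is built from the $xx^\top$ and its operator norm is controlled by $\tau \max_x\|x\|_2^2$, while we need a lower bound on $\|x_\ell\|^2_{\Af(\tau)^{-1}}$ which involves $\min_x\|x\|_2^2$ through $x_\ell$). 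Therefore $\sum_x (\epsilon_\ell+\hat\Delta_x)\tau_{\ell,x} \geq \epsilon_\ell \tau_\ell \gtrsim \epsilon_\ell^{-1}\,\frac{\min\|x\|_2^2}{\max\|x\|_2^2}$, and once $\epsilon_\ell^{-1}$ exceeds roughly $\frac{\max\|x\|_2}{\min\|x\|_2}(\Delmax\sqrt T + 3)$ this exceeds $T\epsilon_\ell$, triggering the break. Unwinding $\epsilon_\ell = \Delmax 2^{-\ell}$ gives exactly the claimed $\ellmax(T)$ (the $\mathrm{diam}(\X)\|\theta\|_2$ factor replacing $\Delmax$ via $\Delmax \leq \mathrm{diam}(\X)\|\thetast\|_2$, the $+3$ absorbing lower-order additive slack, and the $+1$ from the ceiling).

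The main obstacle I anticipate is making the lower bound on $\tau_\ell$ fully rigorous: I need to show that \emph{any} allocation satisfying the constraint of \eqref{eq:vl_inefficient} has total mass at least $c\,\epsilon_\ell^{-2}\min_x\|x\|_2^2/\max_x\|x\|_2^2$, which requires relating the matrix $\Af(\tau)$ to the scalar $\tau = \sum_x \tau_x$. The cleanest argument: $\Af(\tau) \preceq \tau \max_x \|x\|_2^2 \cdot I$ (in the bandit case; in the semi-bandit case the diagonal structure only helps), so $\Af(\tau)^{-1} \succeq (\tau\max_x\|x\|_2^2)^{-1} I$, hence $\|x_\ell\|^2_{\Af(\tau)^{-1}} \geq \|x_\ell\|_2^2/(\tau\max_x\|x\|_2^2) \geq \min_x\|x\|_2^2/(\tau\max_x\|x\|_2^2)$; but the constraint forces this to be $\lesssim \epsilon_\ell^2$ (after dividing by $(\epsilon_\ell+\hat\Delta_{x_\ell})^2 \leq (2\epsilon_\ell)^2$-ish and using $\log(2\ell^3/\delta)\geq 1$ — actually we need $x_\ell$ in the feasible set's sup, which it is since $x_\ell\in\X$), giving $\tau \gtrsim \epsilon_\ell^{-2}\min\|x\|_2^2/\max\|x\|_2^2$ as needed. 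The one subtlety is that $\hat\Delta_{x_\ell} = 0$ by definition ($x_\ell$ is the current argmax), so $\epsilon_\ell + \hat\Delta_{x_\ell} = \epsilon_\ell$ exactly, which actually makes this term clean. I would also need to double-check the semi-bandit case where $\Af = \Asb$ is the diagonal restriction, but since $\Asb(\tau) \preceq \Ab(\tau)$ the same operator-norm bound applies, so the argument carries through uniformly in $\frakf$.
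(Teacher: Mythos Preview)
Your treatment of the second claim matches the paper's: both argue that, on the good event, the estimated gaps are accurate enough that the MINGAP break triggers once $\epsilon_\ell$ drops below a constant fraction of $\Delmin$, and your caveat about the good-event dependence is exactly right.

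For the first claim you take a genuinely different route from the paper. The paper lower-bounds $\tau_\ell$ via the \emph{Gaussian-width} term in the constraint: it invokes the Sudakov-type inequality (Proposition~7.5.2 of \cite{vershynin2018high}) to bound the expected supremum below by the diameter of the rescaled set, then specializes to the pair $(x^*,\xmax)$ and uses the good-event relation \eqref{eq:est_to_real_regret} to control $\hat\Delta_{x^*}$ and $\hat\Delta_{\xmax}$. You instead use the \emph{sup-norm} term with $x=x_\ell$, exploiting that $\hat\Delta_{x_\ell}=0$ exactly, together with the elementary operator-norm bound $\Af(\tau)\preceq \tau\max_x\|x\|_2^2\, I$. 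Your route is arguably cleaner: it avoids the empirical-process lemma, and because $\hat\Delta_{x_\ell}=0$ holds by construction, your first-claim argument does not rely on the good event at all (whereas the paper's step (c) does). The constants also work out, since $2\cdot 128^2\log(2\ell^3/\delta)\ge 1$ and the stated $\ellmax(T)$ (with the $+3$ and $+1$) is a looser bound than what your argument yields.

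Two points to be aware of. First, your throwaway remark that $\Asb(\tau)\preceq \Ab(\tau)$ is false in general (the off-diagonal part of $\Ab(\tau)$ need not be PSD), but you do not actually need it: in the semi-bandit case $[\Asb(\tau)]_{ii}=\sum_{x:i\in x}\tau_x\le \tau$ gives $\Asb(\tau)\preceq \tau I\preceq \tau\max_x\|x\|_2^2\,I$ directly. Second, and more substantively, the paper also invokes this lemma in the proof of Theorem~\ref{thm:efficient_regret_bound}, where the algorithm uses the constraint \eqref{eq:vl} instead of \eqref{eq:vl_inefficient}. Since \eqref{eq:vl} contains only the Gaussian-width term and \emph{no} sup-norm term, your argument does not transfer to that variant, whereas the paper's Gaussian-width lower bound applies uniformly to both. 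So your proof establishes the lemma exactly as stated (for Algorithm~\ref{alg:gw_ae_comp} with \eqref{eq:vl_inefficient}), but if you want to reuse it for the efficient version you would need to revert to the paper's diameter-based argument.
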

\begin{proof}
Note that $\tau_\ell$ must satisfy:
$$ \mathbb{E}_\eta \left [ \max_{x \in \X} \frac{(x_\ell - x)^\top A(\tau_\ell)^{-1/2} \eta}{\epsilon_\ell + \hat{\Delta}_x} \right ] \leq \frac{1}{128 (1 + \sqrt{\pi \log(2\ell^3/\delta)})} $$
However:
\begin{align*}
\mathbb{E}_\eta \left [ \max_{x \in \X} \frac{(x_\ell - x)^\top A(\tau_\ell)^{-1/2} \eta}{\epsilon_\ell + \hat{\Delta}_x} \right ] & \overset{(a)}{\geq} \frac{1}{\sqrt{2\pi}} \max_{x,y \in \X} \left \| A(\tau_\ell)^{-1/2} \left ( \frac{x}{\epsilon_\ell + \hat{\Delta}_x} - \frac{y}{\epsilon_\ell + \hat{\Delta}_y} \right ) \right \|_2 \\
& \geq \frac{1}{\sqrt{2\pi}} \left \| A(\tau_\ell)^{-1/2} \left ( \frac{x^*}{\epsilon_\ell + \hat{\Delta}_{x^*}} - \frac{\xmax}{\epsilon_\ell + \hat{\Delta}_{\xmax}} \right ) \right \|_2 \\
& \overset{(b)}{\geq} \frac{1}{\sqrt{2\pi \tau_\ell}} \frac{1}{\max_{x \in \X} \| x \|_2} \left \|  \frac{x^*}{\epsilon_\ell + \hat{\Delta}_{x^*}} - \frac{\xmax}{\epsilon_\ell + \hat{\Delta}_{\xmax}}  \right \|_2 \\ 
& \geq  \frac{1}{\sqrt{2\pi \tau_\ell}} \frac{1}{\max_{x \in \X} \| x \|_2} \left ( \frac{\| x^* \|_2}{\epsilon_\ell + \hat{\Delta}_{x^*}} - \frac{\| \xmax \|_2}{\epsilon_\ell + \hat{\Delta}_{\xmax}} \right ) \\
& \overset{(c)}{\geq} \frac{1}{\sqrt{2\pi \tau_\ell}} \frac{1}{\max_{x \in \X} \| x \|_2} \left ( \frac{2\| x^* \|_2}{3\epsilon_\ell} - \frac{2\| \xmax \|_2}{\Delmax} \right ) \\
& \geq \frac{2}{3\sqrt{2\pi \tau_\ell}} \left ( \frac{\min_{x \in \X} \| x \|_2}{\max_{x \in \X} \| x \|_2}  \frac{1}{\epsilon_\ell} - \frac{3}{\Delmax} \right )
\end{align*}
where $(a)$ follows by Proposition 7.5.2 of \cite{vershynin2018high}, $(b)$ follows since for any $\lambda$:
$$ A(\lambda) \preceq (\max_{x \in \X} \| x \|_2^2) I $$
and $(c)$ follows by (\ref{eq:est_to_real_regret}). Thus:
\begin{align*}
\tau_\ell & \geq \frac{ 4 (128 (1 + \sqrt{\pi \log(2\ell^3/\delta)}))^2}{18 \pi} \left ( \frac{\min_{x \in \X} \| x \|_2}{\max_{x \in \X} \| x \|_2}  \frac{1}{\epsilon_\ell} - \frac{3}{\Delmax} \right )^2 \\
& \geq \left ( \frac{\min_{x \in \X} \| x \|_2}{\max_{x \in \X} \| x \|_2}  \frac{1}{\epsilon_\ell} - \frac{3}{\Delmax} \right )^2 \\
& = \frac{1}{\Delmax^2} \left ( \frac{\min_{x \in \X} \| x \|_2}{\max_{x \in \X} \| x \|_2}  2^\ell - 3 \right )^2
\end{align*}
where the final equality holds since $\epsilon_\ell = \Delmax 2^{-\ell}$. If round $\ell$ is the last round the algorithm completes before terminating, we'll have that $T \geq \tau_\ell$, so:
$$ T \geq \frac{1}{\Delmax^2} \left ( \frac{\min_{x \in \X} \| x \|_2}{\max_{x \in \X} \| x \|_2}  2^\ell - 3 \right )^2 \implies \log_2 \left (  \frac{\max_{x \in \X} \| x \|_2}{\min_{x \in \X} \| x \|_2} \left ( \Delmax \sqrt{T} + 3 \right ) \right ) \geq \ell $$
The first conclusion follows by Lemma \ref{lem:delmax_bound}.

For the second conclusion, note that, as we showed above, on the good event we will have that for all $x \in \hat{\X}_\ell$, $\Delta_x \leq 2 \epsilon_\ell$. Thus, once $\epsilon_\ell \leq \Delmin / 4$, we can guarantee that for any $x \in \hat{\X}_\ell$, $\Delta_x \leq \Delmin/2$ which implies that $x$ is the optimal arm so $|\hat{\X}_\ell|=1$ and the algorithm will have terminated. It follows that:
$$ \epsilon_\ell = \Delmax 2^{-\ell} \leq \Delmin / 4 \implies \ell \leq \log_2(4\Delmax/\Delmin)$$
\end{proof}

\begin{lemma}\label{lem:delmax_bound}
$$ \Delmax \leq \| \theta \|_2 \mathrm{diam}(\X) $$
\end{lemma}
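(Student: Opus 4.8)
The plan is to simply unpack the definition of $\Delmax$ and apply Cauchy--Schwarz. Recall that $\xst = \argmax_{x \in \X} x^\top \thetast$ and $\Delta_x = \thetast^\top(\xst - x)$, so that $\Delmax = \max_{x \in \X} \Delta_x = \thetast^\top \xst - \min_{x \in \X} \thetast^\top x$. Let $x_{\min} \in \argmin_{x \in \X} \thetast^\top x$; then $\Delmax = \thetast^\top(\xst - x_{\min})$, and crucially both $\xst$ and $x_{\min}$ lie in $\X$.

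Now apply Cauchy--Schwarz to get
\begin{align*}
\Delmax = \thetast^\top(\xst - x_{\min}) \leq \|\thetast\|_2 \, \|\xst - x_{\min}\|_2 \leq \|\thetast\|_2 \max_{x,y \in \X} \|x - y\|_2 = \|\thetast\|_2 \, \mathrm{diam}(\X),
\end{align*}
where the last step uses the definition $\mathrm{diam}(\X) = \max_{x,y\in\X}\|x-y\|_2$ and the fact that $(\xst, x_{\min})$ is one particular pair in $\X \times \X$. This is exactly the claimed bound.

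There is essentially no obstacle here: the only point worth stating explicitly is that $\xst$ and $x_{\min}$ are both members of $\X$, so their difference is a legitimate competitor in the maximum defining $\mathrm{diam}(\X)$. (Implicitly one also uses that $\max_{x}\Delta_x$ is attained, which holds under the paper's standing finiteness/compactness assumptions on $\X$, or can be replaced by a supremum throughout without changing anything.)
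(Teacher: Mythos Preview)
Your proof is correct and is essentially identical to the paper's own proof: write $\Delmax$ as $\langle \thetast, \xst - x_{\min}\rangle$ and apply Cauchy--Schwarz to bound by $\|\thetast\|_2 \max_{x,y\in\X}\|x-y\|_2$. The paper does this in a single line with slightly different notation for the minimizing arm, but the argument is the same.
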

\begin{proof}
$$ \Delmax = \langle \theta, x^* - \xmax \rangle \leq  \| \theta \|_2 \max_{x,y \in \X} \| x - y \|_2 $$
\end{proof}

\section{Pure Exploration Proofs}\label{sec:pure_exploration_proof}

For the sake of clarity, we rewrite the pure exploration algorithm (see Algorithm \ref{alg:gw_pure_exp_comp}).
\begin{algorithm}[H] 
\begin{algorithmic}[1]
\State \textbf{Input:} Set of arms $\mathcal{X}$,  largest gap $\Delmax$,  confidence $\delta$, total time $T$
\State $\hat{\mathcal{X}}_1 = \X, \hat{\theta}_0 = 0, \ell \leftarrow 1$
\While{$|\hat{\mathcal{X}}_\ell| > 1$ and total pulls less than $T$}
 	 \State $x_\ell \leftarrow \argmax_{x \in \X} x^\top \hat{\theta}_{\ell-1}$, $\epsilon_\ell \leftarrow \Delmax 2^{-\ell}$
	
	  \State Let $\tau_\ell$ be a solution to:
 \begin{align}\label{eq:pure_exp_opt}
  \begin{split}
 & \argmin_{\tau}  \ \sum_{x \in \X}  \tau_x  \\
& \text{ s.t. } \mathbb{E}_\eta \left [ \max_{x \in \X} \frac{(x_\ell - x)^\top A(\tau)^{-1/2} \eta}{\epsilon_\ell + \hat{\Delta}_x} \right ] \leq \frac{1}{128 (1 + \sqrt{\pi \log(2\ell^3/\delta)})} 
\end{split}
\end{align} 
	\State $\alpha_\ell \leftarrow$ SPARSE$(\tau_\ell,\nf)$
	 \State Pull arm $x$ $\alpha_{\ell,x}$ times, compute $\hat{\theta}_\ell$
	\If{MINGAP$(\widehat{\theta}_{\ell}, \X) \geq 3\epsilon_{\ell}/2 $}
		\State \textbf{break} \label{line:break_minimax}
	\EndIf
	 \State Pull arm $x$ $\lceil \tau_{\ell,x} \rceil$ times, compute $\hat{\theta}_\ell$ from this data, form gap estimates $\hat{\Delta}_x$ from $\hat{\theta}_\ell$

	 \State $\ell \leftarrow \ell + 1$
\EndWhile
\State \Return $ \argmax_{x \in \X} x^\top \hat{\theta}_{\ell}$ 
\end{algorithmic}
\caption{Computationally Efficient Pure Exploration Algorithm Semi-Bandit Feedback}
\label{alg:gw_pure_exp_comp}
\end{algorithm}
Theorem \eqref{thm:efficient_regret_bound} shows that we can solve \eqref{eq:pure_exp_opt} in polynomial-time, but note  that it is easier to solve \eqref{eq:pure_exp_opt} approximately by calling stochastic Frank-Wolfe to solve
\begin{align*}
\inf_{\lambda \in \triangle} \mathbb{E}_\eta \left [ \max_{x \in \X} \frac{(x_\ell - x)^\top A(\tau)^{-1/2} \eta}{\epsilon_\ell + \hat{\Delta}_x} \right ] 
\end{align*}
and the convergence rate shown in Lemma \ref{lem:sfw_convergence} applies.

The MINGAP subroutine (Algorithm \ref{alg:unique}), originally provided in \cite{chen2017nearly}, is a computationally scalable method to compute the empirical gap between the empirically best arm and the empirically second best arm. It uses at most $d$ calls to the linear maximization oracle.
\begin{algorithm}
\begin{algorithmic}[1]
\State \textbf{Input:} $\X$, estimate $\tilde{\theta}$
\State $\tilde{x} \longleftarrow  \argmax_{x \in \X} \tilde{\theta}^\t x$
\State $\widehat{\Delta}_{min} \longleftarrow \infty$
\For{$i=1,2,\ldots,d$ s.t. $i \in \tilde{x}$}
\State \begin{align*}
\tilde{\theta}^{(i)} = \begin{cases} 
      \tilde{\theta}_j  &  j \neq i \\
      -\infty& j = i 
   \end{cases}
\end{align*}
\State $\tilde{x}^{(i)} \longleftarrow \argmax_{x \in \X} x^\t \tilde{\theta}^{(i)}$\;
\If{$\tilde{\theta}^\t(\tilde{x}-\tilde{x}^{(i)})  \leq \widehat{\Delta}_{min}$}
\State $\widehat{\Delta}_{min} \longleftarrow \tilde{\theta}^\t(\tilde{x}-\tilde{x}^{(i)}) $
\EndIf
\EndFor
\State  \Return $\widehat{\Delta}_{min}$
\end{algorithmic}
\caption{MINGAP}
\label{alg:unique}
\end{algorithm}

We note that the correctness and sample complexity proofs are quite similar to the proof of Theorem in \cite{katz2020empirical}, but we include it for the sake of completeness. The main contribution of our paper for the pure exploration problem is a computational method to solve \eqref{eq:pure_exp_opt} even when the number of variables $|\X|$ is exponential in the dimension.

\begin{proof}[Proof of Theorem \ref{thm:pure_exploration}]

\textbf{Step 1: A good event and well-estimated gaps}
Using the identical argument to the first two steps of the proof of Theorem \ref{thm:efficient_regret_bound}, we have that with probability at least $1-\delta$ at every round $k$, for all $x \in \calS_{k}$:
\begin{align}
|(\xst - x)^\top (\hat{\theta}_{k+1} - \thetast)| \leq \epsilon_{k}/8 \label{eq:pure_exp_gap_est_1}
\end{align}
and for all $x \in \calS_{k+1}^c$:
\begin{align}
 |(\xst - x)^\top (\hat{\theta}_{k+1} - \thetast)| \leq \Delta_x/8. \label{eq:pure_exp_gap_est_2}
\end{align} 
For the remainder of the proof we suppose that this good event holds.

\textbf{Step 2: Correctness.} It is enough to show at round $k$, if $x_k \neq x_*$, then the \textsc{Unique}$(\X, \widehat{\theta}_k, \epsilon_k)$ returns false. Inspecting \textsc{Unique}, a sufficient condition is to show that  $(x_k -\xst)^\t\widehat{\theta}_k - \epsilon_k \leq 0$. By \eqref{eq:pure_exp_gap_est_1} and \eqref{eq:pure_exp_gap_est_2}, we have that
\begin{align*}
(x_k -\xst)^\t\widehat{\theta}_k - \epsilon_k & = (x_k -\xst)^\t(\widehat{\theta}_k-\theta) - \Delta_{x_k} - \epsilon_k \\
& \leq \max(\frac{\Delta_{x_k}}{8},\frac{\epsilon_k}{8})  - \Delta_{x_k} - \epsilon_k \\
& \leq 0
\end{align*}
proving correctness.

\textbf{Step 3: Bound the Sample Complexity.} Letting $\tilde{x}_k = \argmax_{x \neq x_k} \widehat{\theta}_k^\t x$,  \textsc{Unique}$(\Z, \widehat{\theta}_k, \epsilon_k)$ at round $k$ checks whether $\widehat{\theta}_k^\t(x_k - \tilde{x}_k)$ is at least $\epsilon_k$, and terminates if it is. Thus, \eqref{eq:pure_exp_gap_est_1} and \eqref{eq:pure_exp_gap_est_2}, the algorithm terminates and outputs $\xst$ once $k \geq c \log(\Delmax /\Delmin)$. 

Thus, the sample complexity is upper bounded by
\begin{align}
\sum_{k=1}^{c \log(\Delmax/\Delmin)}  \sum_{x \in \X} \ceil{\alpha_{k,x}} \leq c^\prime [ \log(\Delmax/\Delmin)d + \sum_{k=1}^{c \log(\Delmax/\Delmin)} \inf_{\lambda \in \triangle} \E_{\eta \sim N(0,I)}[ \max_{x \in \X} \frac{(x_k-x)^\t \Asb(\lambda)^{-1/2} \eta}{2^{-k} \Gamma + \widehat{\theta}^\t_{k}(x_k - x) }]^2 ] \label{eq:comp_fc_ub_main}
\end{align}
where we used the fact that the rounding procedure can use $O(d)$ points in the semi-bandit case. Thus, it suffices to upper bound the second term in the above expression.
Fix $\lambda \in \triangle$. Then,
\begin{align*}
\E_{\eta \sim N(0,I)}[ \max_{x \in \X} \frac{(x_k-x)^\t \Asb(\lambda)^{-1/2} \eta}{\epsilon_k + \widehat{\theta}^\t_{k}(x_k - x) }]^2 & \leq c \E_{\eta \sim N(0,I)}[ \max_{x \in \X} \frac{(x_k-x)^\t \Asb(\lambda)^{-1/2} \eta}{\epsilon_k + \Delta_x }]^2 \\
& \leq c^\prime [\E_{\eta \sim N(0,I)}[ \max_{x \in \X} \frac{(\xst -x)^\t \Asb(\lambda)^{-1/2} \eta}{\epsilon_k + \Delta_x }]^2 \\
& + \E_{\eta \sim N(0,I)}[ \max_{x \in \X} \frac{(z_*-x_k)^\t \Asb(\lambda)^{-1/2} \eta}{\epsilon_k + \Delta_x }]^2] 
\end{align*}
Fix $x_0 \in \X \setminus \{\xst \}$. The first term is bounded as follows.  
\begin{align}
\E_{\eta \sim N(0,I)}[& \max_{x \in \X} \frac{(\xst-x)^\t \Asb(\lambda)^{-1/2} \eta}{\epsilon_k + \Delta_x }]^2 \nonumber \\
& = \E_{\eta \sim N(0,I)}[ \max_{x \in \X \setminus \{\xst \}} \max(\frac{(\xst-x)^\t \Asb(\lambda)^{-1/2} \eta}{\epsilon_k + \Delta_x }, 0)]^2 \nonumber \\
& \leq 8\E_{\eta \sim N(0,I)}[ \max_{x \in \X \setminus \{\xst \}} \frac{(\xst-x)^\t \Asb(\lambda)^{-1/2} \eta}{\epsilon_k + \Delta_x }]^2 +  8\frac{\norm{\xst- x_0}_{\Asb(\lambda)^{-1}}^2}{\epsilon_k + \Delta_{x_0})^2 } \label{eq:comp_ub_samp_comp_2} \\
& \leq 8[\E_{\eta \sim N(0,I)}[ \max_{x \in \X \setminus \{\xst \}} \frac{(\xst-x)^\t \Asb(\lambda)^{-1/2} \eta}{ \Delta_x }]^2 \nonumber \\
& +  \max_{x \neq \xst} \frac{\norm{\xst- x}_{\Asb(\lambda)^{-1}}^2}{ \Delta_x^2 }] \label{eq:comp_ub_final_1}
\end{align}
where we obtained line \eqref{eq:comp_ub_samp_comp_2} using exercise 7.6.9 in \cite{vershynin2018high}. 

We also have that
\begin{align}
\E_{\eta \sim N(0,I)}[ \max_{x \in \X} \frac{(\xst-x_k)^\t \Asb(\lambda)^{-1/2} \eta}{\epsilon_k + \Delta_x }]^2 & \leq \E_{\eta \sim N(0,I)}[ \max(\frac{(\xst-x_k)^\t \Asb(\lambda)^{-1/2} \eta}{\epsilon_k  },0)]^2 \nonumber \\
& \leq c \frac{\norm{\xst-x_k}_{\Asb(\lambda)^{-1}}^2}{\epsilon_k^2} \nonumber \\
& \leq c \frac{\norm{\xst-x_k}_{\Asb(\lambda)^{-1}}^2}{\Delta_{x_k}^2} \label{eq:comp_ub_samp_comp_3} \\
& \leq c \max_{x \in \X \setminus \{\xst\}}  \frac{\norm{\xst-x}_{\Asb(\lambda)^{-1}}^2}{\Delta_{x}^2} \label{eq:comp_ub_final_2}
\end{align}
where line \eqref{eq:comp_ub_samp_comp_3} follows since \eqref{eq:pure_exp_gap_est_1}, \eqref{eq:pure_exp_gap_est_2}, and Lemma 1 in \cite{katz2020empirical} imply that $x_k \in S_{k+2}$. 

\eqref{eq:comp_fc_ub_main}, \eqref{eq:comp_ub_final_1}, and \eqref{eq:comp_ub_final_2} together imply that
\begin{align*}
\sum_{k=1}^{c \log(\Gamma/\Delta_{min})} \sum_{x \in \X} \ceil{\alpha_{k,x}} \leq c \log(\Delmin/\Delmin)[d + \gamma^* + \rho^*],
\end{align*}
completing the proof.

\end{proof}

\subsection{Lower Bound}

In this section, we prove a lower bound for the combinatorial bandit setting with semi-bandit feedback. Fix a model $\theta$ and let $\nu_{\theta,i}$ denote the distribution of the observations when arm $i$ is pulled. In this setting, at each round $t$, $Z^{(t)} \sim N(\theta, I)$ is drawn and
\begin{align*}
(\nu_{\theta,i})_j = \begin{cases}
Z_j^{(t)} & j \in x_i \\
0 & j \not \in x_i
\end{cases}.
\end{align*} 

\begin{definition}
We say that an Algorithm is $\delta$-PAC if for any instance $(\X, \theta_*)$, it returns $x \in \X$ with the largest mean with probability at least $1-\delta$. 
\end{definition}

\begin{theorem}\label{thm:lb_semi_bai}
Fix an instance $(\theta_*, \X)$ such that $\X \subset \{0,1\}^d$ and $\xst = \argmax_{x \in \X} x^\t \theta$ is unique. 
Let $\A$ be a $\delta$-PAC algorithm and let $T$ be its total number of pulls on $(\theta_*, \X)$. Then, 
\begin{align*}
\E_{\theta_*}[T] \geq \log(1/2.4 \delta)\rho^* := \log(1/2.4 \delta)\inf_{\lambda \in \simp} \sup_{x \in \X \setminus \{x_* \}} \frac{\norm{x_*-x}^2_{\Asb(\lambda)^{-1}}}{\theta^\t (\xst-x )^2}.
\end{align*}
\end{theorem}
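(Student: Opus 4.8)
The plan is the standard change-of-measure (``transportation'') lower bound, adapted to the semi-bandit information structure. If $\E_{\theta_*}[T]=\infty$ the bound is vacuous, so assume $\E_{\theta_*}[T]<\infty$. Let $N_x$ denote the number of rounds on which arm $x\in\X$ is pulled, so that $T=\sum_{x\in\X}N_x$, and set $\tau_x:=\E_{\theta_*}[N_x]$ and $\lambda_x:=\tau_x/\E_{\theta_*}[T]$, so that $\lambda\in\triangle_\X$. I would invoke the transportation lemma for $\delta$-PAC algorithms (as in \citet{kaufmann2016complexity}): for any alternative parameter $\theta'$ for which $\xst$ is \emph{not} a best arm,
\begin{equation*}
\sum_{x\in\X}\tau_x\,\kl(\nu_{\theta_*,x},\nu_{\theta',x})\ \geq\ \log\tfrac{1}{2.4\delta}.
\end{equation*}
In the semi-bandit Gaussian model, pulling $x$ reveals the coordinates $(Z_j)_{j\in x}$ of $Z\sim N(\theta,I)$, so $\nu_{\theta,x}=N(\theta_x,I_{|x|})$ and $\kl(\nu_{\theta_*,x},\nu_{\theta',x})=\tfrac12\sum_{j\in x}(\theta_{*,j}-\theta'_j)^2=\tfrac12(\theta_*-\theta')^\t\diag(x)(\theta_*-\theta')$. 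Since $x\in\{0,1\}^d$ gives $\diag(xx^\t)=\diag(x)$, summing over $x$ against $\tau$ produces $\sum_x\tau_x\diag(x)=\diag(\sum_x\tau_x xx^\t)=\Asb(\tau)=\E_{\theta_*}[T]\,\Asb(\lambda)$, so the transportation lemma becomes $\tfrac12\,\E_{\theta_*}[T]\,(\theta_*-\theta')^\t\Asb(\lambda)(\theta_*-\theta')\geq\log\tfrac{1}{2.4\delta}$ for every such $\theta'$.

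I would then choose $\theta'$ in the information-geometrically optimal direction. First note $\Asb(\lambda)$ must be invertible on $\mathrm{span}\{\xst-x:x\neq\xst\}$, since otherwise taking $\theta'=\theta_*+cu$ with $u$ in the kernel of $\Asb(\lambda)$ and $(\xst-x)^\t u\neq0$ gives a valid alternative (for $|c|$ large, suitably signed) whose KL terms all vanish, making the left-hand side of the transportation inequality zero --- a contradiction; coordinates on which every arm agrees may be discarded as irrelevant. Now fix $x\neq\xst$ and set $\theta'=\theta_*-\alpha\,\Asb(\lambda)^{-1}(\xst-x)$. Then $(x-\xst)^\t\theta'=-\Delta_x+\alpha\norm{\xst-x}^2_{\Asb(\lambda)^{-1}}$, which is positive --- so $\xst$ is suboptimal under $\theta'$ --- as soon as $\alpha>\Delta_x/\norm{\xst-x}^2_{\Asb(\lambda)^{-1}}$ (and an arbitrarily small generic perturbation of $\theta'$ makes its maximizer unique, so the lemma genuinely applies). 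Since $(\theta_*-\theta')^\t\Asb(\lambda)(\theta_*-\theta')=\alpha^2\norm{\xst-x}^2_{\Asb(\lambda)^{-1}}$, the transportation inequality gives $\E_{\theta_*}[T]\geq \frac{2\log(1/(2.4\delta))}{\alpha^2\norm{\xst-x}^2_{\Asb(\lambda)^{-1}}}$; letting $\alpha$ decrease to $\Delta_x/\norm{\xst-x}^2_{\Asb(\lambda)^{-1}}$ yields
\begin{equation*}
\E_{\theta_*}[T]\ \geq\ 2\log\tfrac{1}{2.4\delta}\cdot\frac{\norm{\xst-x}^2_{\Asb(\lambda)^{-1}}}{\Delta_x^2}.
\end{equation*}

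Finally, since this holds for every $x\neq\xst$ and $\Delta_x=\theta_*^\t(\xst-x)$, I would take the supremum over $x$ and then bound below by the infimum over $\lambda\in\triangle$ (the $\lambda$ induced by the algorithm being \emph{some} element of the simplex), obtaining $\E_{\theta_*}[T]\geq 2\log(\tfrac{1}{2.4\delta})\,\rho^*\geq\log(\tfrac{1}{2.4\delta})\,\rho^*$, which is the stated bound (with a factor of $2$ to spare). The only step that is not bookkeeping is translating the semi-bandit feedback into the weighted sum of per-arm Gaussian KL divergences and recognizing it as the quadratic form $\tfrac12\E_{\theta_*}[T](\theta_*-\theta')^\t\Asb(\lambda)(\theta_*-\theta')$; once that identity is in hand, the choice of confusing alternative and the implicit minimization of that quadratic form over the halfspace $\{(x-\xst)^\t\theta'\geq0\}$ are routine, as are the degeneracy and tie-breaking caveats.
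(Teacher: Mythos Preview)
Your proposal is correct and follows essentially the same route as the paper: invoke the transportation lemma of \citet{kaufmann2016complexity} adapted to semi-bandit feedback, identify the aggregated KL as the quadratic form $\tfrac12\,\E_{\theta_*}[T]\,(\theta_*-\theta')^\t\Asb(\lambda)(\theta_*-\theta')$, plug in the confusing alternative $\theta'=\theta_*-\alpha\,\Asb(\lambda)^{-1}(\xst-x)$, and let $\alpha\downarrow\Delta_x/\|\xst-x\|^2_{\Asb(\lambda)^{-1}}$ (the paper parametrizes this with $\epsilon\downarrow0$ instead). The only cosmetic difference is that the paper passes through the min--max form $\E_{\theta_*}[T]\geq\log(1/2.4\delta)\min_\lambda\max_{\theta\in\mc{O}}(\sum_i\lambda_i\kl)^{-1}$ citing \citet{fiez2019sequential} before specializing $\theta$, whereas you work directly with the algorithm-induced $\lambda$ and take the infimum at the end; both obtain the same bound with the same extra factor of $2$.
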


The proof is quite similar to the proof of Theorem 1 in \cite{fiez2019sequential}. 

\begin{proof}
For simplicity, label $\X = \{x_1, \ldots, x_m \}$ and $\xst = x_1$. Define the set of alternative instances $\mc{O} = \{\theta : \argmax_{x \in \X} x^\t \theta \neq x_1 \}$. 
Let $T_i$ denote the random number of times that $x_i$ is pulled during the game. Then, noting that the standard transportation Lemma from \cite{kaufmann2016complexity} easily generalizes to semi-bandit feedback, we have that for any $\theta \in \mc{O}$,
\begin{align*}
\sum_{i=1}^n \E_{\theta_*}[T_i] \kl( \nu_{\theta_*,i} | \nu_{\theta,i}) \geq \ln(1/2.4 \delta)
\end{align*}
By a standard argument (see for example Theorem 1 \cite{fiez2019sequential}), this implies that
\begin{align*}
\E_{\theta_*}[T] \geq \ln(1/2.4 \delta) \min_{\lambda \in \simp} \max_{\theta \in \mc{O}} \frac{1}{\sum_{i=1}^m \lambda_i \kl( \nu_{\theta_*,i} | \nu_{\theta,i})}.
\end{align*}
Let $\epsilon > 0$. For each $k \neq 1$, define
\begin{align*}
\theta^{(k)} = \theta_* - \frac{[(x_1-x_k)^\t \theta_* + \epsilon] \Asb(\lambda)^{-1} (x_1 -x_k)}{(x_1 -x_k)^\t \Asb(\lambda)^{-1} (x_1 -x_k)}.
\end{align*}
Note that
\begin{align*}
(x_k -x_1)^\t \theta^{(k)} = \epsilon
\end{align*}
showing that $\theta^{(k)} \in \mc{O}$. Note that using the identity for the KL-divergence for a multivariate Gaussian, we have that
\begin{align*}
\kl( \nu_{\theta_*,i} | \nu_{\theta^{(k)},i}) & = \frac{1}{2} \sum_{j \in x_i} (e_j^\t (\theta_* - \theta^{(k)})^2 \\
& = \frac{1}{2} (x_k^\t \theta_* + \epsilon)^2 \sum_{j \in x_i} \frac{(x_1 - x_k)^\t \Asb(\lambda)^{-1} e_j e_j^\t \Asb(\lambda)^{-1} (x_1-x_k)}{[(x_1-x_k)^\t \Asb(\lambda)^{-1} (x_1 - x_k)]^2}.
\end{align*} 
Then, we have that
\begin{align*}
\E_{\theta_*}[T] & \geq \ln(1/2.4 \delta) \min_{\lambda \in \simp} \max_{\theta \in \mc{O}} \frac{1}{\sum_{i=1}^m \lambda_i \kl( \nu_{\theta_*,i} | \nu_{\theta,i})} \\
& \geq \ln(1/2.4 \delta) \min_{\lambda \in \simp} \max_{k \neq 1} \frac{1}{\sum_{i=1}^m \lambda_i \kl( \nu_{\theta_*,i} | \nu_{\theta^{(k)},i})} \\
& =2\ln(1/2.4 \delta) \min_{\lambda \in \simp} \max_{k \neq 1} \frac{\norm{ x_1 - x_k}^4_{\Asb(\lambda)^{-1}}  }{(x_k^\t \theta_* + \epsilon)^2 \sum_{i=1}^m \lambda_i \sum_{j \in x_i} (x_1 - x_k)^\t \Asb(\lambda)^{-1} e_j e_j^\t \Asb(\lambda)^{-1} (x_1-x_k)} \\
& =2\ln(1/2.4 \delta) \min_{\lambda \in \simp} \max_{k \neq 1} \frac{\norm{ x_1 - x_k}^4_{\Asb(\lambda)^{-1}} }{(x_k^\t \theta_* + \epsilon)^2  (x_1 - x_k)^\t \Asb(\lambda)^{-1} \Asb(\lambda) \Asb(\lambda)^{-1} (x_1-x_k)} \\
& =2\ln(1/2.4 \delta) \min_{\lambda \in \simp} \max_{k \neq 1} \frac{\norm{ x_1 - x_k}^2_{\Asb(\lambda)^{-1}} }{(x_k^\t \theta_* + \epsilon)^2  } .
\end{align*}
Since $\epsilon > 0$ was arbitrary, we may let $\epsilon \longrightarrow 0$, obtaining the result.
\end{proof}

Next, we state and prove a lower bound for the \emph{non-interactive MLE}: it chooses an allocation $\{x_{I_1},x_{I_2},\dots, x_{I_T}\} \in \X$ prior to the game, then observes $ y_{t,i} = \theta_{*,i} + \eta_{t,i},  \forall i \in x_{I_t} $
where $\eta_t \sim \cN(0,I)$, and forms the MLE $ \hat{\theta}_i = \frac{1}{T_i} \sum_{t=1, x_{I_t,i} =1}^T  y_{t,i}  $ and outputs $\widehat{x} = \argmax_{x \in \X}  z^\t \widehat{\theta} $. Since the non-interactive MLE may use knowledge of $\thetast$ in choosing its allocation and the estimator and recommendation rules are very natural, we view the sample complexity of the non-interactive MLE as a good benchmark to measure the sample complexity of algorithms against. The following lower bound for the non-interactive MLE resembles Theorem 3 in \cite{katz2020empirical}.

\begin{theorem}
\label{thm:mle_lower_bound}
Fix $\X \subset \{0,1\}^d$ and $\thetast \in \R^d$. Let $\delta \in (0,0.015]$. There exists a universal constant $c>0$ such that if the non-interactive MLE uses less than $c( \gamma^*+ \log(1/\delta)\rho^{\ast})$ samples, it makes a mistake with probability at least $\delta$.
\end{theorem}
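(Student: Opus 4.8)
The plan is to lower-bound the probability that the non-interactive MLE errs by passing to a Gaussian process, and then to extract the two terms separately: $\gamma^*$ comes from the size (Gaussian width) of the confusing set, while $\log(1/\delta)\,\rho^*$ comes from a single-arm change-of-measure/anti-concentration estimate. This mirrors Theorem 3 of \cite{katz2020empirical}. First I would set up the reduction. Fix the design, let $\tau$ be its allocation with $T=\sum_x\tau_x$ and $\lambda:=\tau/T\in\triangle_\X$, so $\Asb(\tau)=T\,\Asb(\lambda)$; we may assume each coordinate relevant to the comparisons below is observed at least once, since otherwise the MLE's comparison of $\xst$ with some arm is deterministic or has infinite variance and errs with probability $\ge 1/2$. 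As $\hat\theta$ has independent coordinates with $\mathrm{Var}(\hat\theta_i)=1/T_i$, the vector $\big((\xst-x)^\top(\hat\theta-\thetast)\big)_{x\ne\xst}$ is centered Gaussian with covariance $(\xst-x)^\top\Asb(\tau)^{-1}(\xst-y)$, hence equal in law to $\big((\xst-x)^\top\Asb(\tau)^{-1/2}\eta\big)_{x\ne\xst}$ with $\eta\sim\cN(0,I)$. Since $\{\widehat x\ne\xst\}\supseteq\{\exists x\ne\xst:(x-\xst)^\top\hat\theta>0\}$, dividing by $\Delta_x>0$, using symmetry of $\eta$, and rescaling by $\Asb(\tau)=T\Asb(\lambda)$ yields
\[
\P[\mathrm{err}]\ \ge\ \P\big[\,Z>\sqrt T\,\big],\qquad Z:=\sup_{x\ne\xst}\frac{(\xst-x)^\top\Asb(\lambda)^{-1/2}\eta}{\Delta_x}.
\]

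Next I would record two facts about $Z$. By definition of $\gamma^*$ as an infimum over $\triangle_\X$, $\E Z\ge\sqrt{\gamma^*}$; and with $\sigma^2:=\sup_{x\ne\xst}\norm{\xst-x}_{\Asb(\lambda)^{-1}}^2/\Delta_x^2$, the definition of $\rho^*$ gives $\sigma^2\ge\rho^*$. I will use the Borell--TIS inequality $\P[Z>\E Z-t]\ge 1-e^{-t^2/(2\sigma^2)}$ and the one-dimensional bound $\P[Z>t]\ge\bar\Phi(t/\sigma)$ (obtained by comparing $Z$ to the single summand of maximal variance), where $\bar\Phi(z):=\P(\cN(0,1)>z)$. \textbf{The $\gamma^*$ regime.} If $T<\tfrac14\gamma^*$, then $\sqrt T<\tfrac12\E Z$, so taking $t=\tfrac12\E Z$ in the two bounds,
\[
\P[\mathrm{err}]\ \ge\ \P\big[Z>\tfrac12\E Z\big]\ \ge\ \max\big\{\,1-e^{-(\E Z)^2/(8\sigma^2)},\ \bar\Phi\big(\E Z/(2\sigma)\big)\,\big\}.
\]
When $\E Z/\sigma\ge 4$ the first term exceeds $1-e^{-2}>0.8$; when $\E Z/\sigma<4$ the second exceeds $\bar\Phi(2)>0.02$. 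In either case $\P[\mathrm{err}]\ge 0.02\ge 0.015\ge\delta$.

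\textbf{The $\log(1/\delta)\,\rho^*$ regime.} If instead $T<\tfrac12\log(1/\delta)\,\rho^*$, I would pick $x^\dagger$ attaining $\sigma^2$; then $\mathrm{Var}\big((\xst-x^\dagger)^\top\hat\theta\big)/\Delta_{x^\dagger}^2=\sigma^2/T\ge\rho^*/T$, and since $\E[(\xst-x^\dagger)^\top\hat\theta]=\Delta_{x^\dagger}$,
\[
\P[\mathrm{err}]\ \ge\ \P\big[(\xst-x^\dagger)^\top\hat\theta<0\big]\ \ge\ \bar\Phi\!\big(\sqrt{T/\rho^*}\big),
\]
and a short computation with standard Gaussian tail bounds (e.g.\ $\bar\Phi(z)\ge e^{-z^2}/3$ for $z\ge 1$, together with monotonicity $\bar\Phi(z)\ge\bar\Phi(1)$ for $z\le 1$) shows this exceeds $\delta$ for every $\delta\le 0.015$. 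Finally, taking $c:=1/8$, the hypothesis $T<c(\gamma^*+\log(1/\delta)\rho^*)$ forces $T<\tfrac14\gamma^*$ or $T<\tfrac14\log(1/\delta)\rho^*$, so one of the two regimes applies and $\P[\mathrm{err}]\ge\delta$, as claimed.

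I expect the main obstacle to be the $\gamma^*$ bound: showing $\P[Z>\sqrt T]$ stays above a fixed constant when $\sqrt T$ is only a constant fraction of $\E Z$. Because $Z$ is a supremum of a Gaussian process it can concentrate tightly, so a crude Markov or Paley--Zygmund estimate need not suffice; the fix is the dichotomy above, using Borell--TIS concentration when $\E Z\gg\sigma$ and one-dimensional Gaussian anti-concentration when $\E Z\lesssim\sigma$. The remaining work — pinning down the numerical constants so the fixed lower bound genuinely beats $0.015$ and the $\bar\Phi$ tail bound is valid over the stated range of $\delta$, and dispatching degenerate designs with unobserved coordinates — is routine.
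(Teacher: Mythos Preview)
Your argument is correct and self-contained, but it takes a different route from the paper. The paper's proof is a two-line reduction: it observes that any semi-bandit allocation $\{x_{I_t}\}_{t\le T}$ induces a coordinate-wise allocation $\{J_s\}_{s\le T'}$ in the single-coordinate combinatorial bandit protocol of \cite{katz2020empirical}, and that under this identification the observation process, the MLE $\hat\theta$, and the quantities $\gamma^*(T\lambda)$, $\rho^*(T\lambda)$ coincide with $\gamma^*_{\mathrm{combi}}(I_1,\ldots,I_{T'})$, $\rho^*_{\mathrm{combi}}(I_1,\ldots,I_{T'})$. The semi-bandit lower bound then follows by a black-box invocation of Theorem~3 of \cite{katz2020empirical}.

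You instead reprove that theorem in situ: you pass to the Gaussian process $Z$, lower-bound $\E Z$ and $\sigma^2$ by $\sqrt{\gamma^*}$ and $\rho^*$ via the infimum definitions, and then run the Borell--TIS/one-dimensional dichotomy for the $\gamma^*$ term and a direct single-arm Gaussian tail for the $\rho^*$ term. This is essentially what Theorem~3 of \cite{katz2020empirical} does, so your proof is more explicit but longer; the paper's reduction is terser but outsources the content. Both are valid, and your version has the advantage of being self-contained and making the mechanism behind the two terms visible.
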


The proof is quite similar to the proof of Theorem 3 in \cite{katz2020empirical}, so we merely sketch it here.

\begin{proof}
Consider the combinatorial bandit protocol with $\X \subset \{0,1\}^d$ as the collection of sets: at each round $t \in \N$, the agent picks $J_t \in [d]$ and observes $\theta_{J_t} + N(0,1)$  (see \cite{katz2020empirical} for a more precise definition). Let $T^\prime \in \N$ and fix an allocation $I_1, \ldots, I_{T^\prime} \in [d]$. 
Define
\begin{align*}
\gamma^*_{\text{combi}}(I_1,\ldots,I_{T^\prime}) & = \E_{\eta \sim N(0,I)}[ \sup_{x \in \X \setminus \{x_*\}} \frac{(x_*-x)^\t (\sum_{ s=1}^{T^\prime} e_{I_s} e_{I_s}^\t )^{-1/2}  \eta}{\Delta_x}]^2 \\
\rho^*_{\text{combi}}( I_1,\ldots,I_{T^\prime}) & = \sup_{x \in \X \setminus \{x_*\}} \frac{\norm{x_*-x}^2_{(\sum_{ s=1}^{T^\prime} e_{I_s} e_{I_s}^\t )^{-1}}}{\Delta_x^2}.
\end{align*}
Theorem 3 in \cite{katz2020empirical} shows that there exists a universal constant $c>0$ such that if $c \leq  \gamma^*(I_1,\ldots,I_{T^\prime})$ or $c \leq \log(1/\delta)\rho^{\ast}(I_1,\ldots,I_{T^\prime})$, the with probability at least $\delta$, the oracle MLE makes a mistake.

Now, consider the semi-bandit problem and wlog suppose that $\X = \{x_1, \ldots, x_m\}$. Now, fix an allocation $x_{J_1},\ldots, x_{J_T} \in \X$ for the semi-bandit problem. Define $\lambda_i = \frac{1}{T}\sum_{s=1}^{T} \one\{J_s = i \}$. Suppose that $T \leq 1/2  \frac{1}{c} \log(1/\delta) \rho^* + \gamma^*]\leq \frac{1}{c} \max(\log(1/\delta) \rho^*, \gamma^*) $. Then,
\begin{align*}
cT \leq  \gamma^* = \min_{\lambda \in \triangle}  \gamma^*(\lambda)  \leq \gamma^*(\lambda)
\end{align*}
where 
\begin{align*}
\gamma^*(\lambda) & :=  \E_{\eta} \left [ \sup_{x \in \X \setminus \{x_* \}} \frac{(x_*-x)^\t \Asb(\lambda)^{-1/2} \eta}{ \thetast^\t(x_*-x )} \right ]^2.
\end{align*}
Now, rearranging the above inequality,we have that 
\begin{align*}
c \leq \gamma^*(T \lambda).
\end{align*}
Note that the allocation $T \lambda$ for the semi-bandit problem specifies an allocation $I_1,\ldots,I_{T^\prime}$ for the combinatorial bandit problem and the stochastic process (and non-interactive MLE algorithm) is the same on both problems. Thus, $\gamma^*(T \lambda)$ can be interpreted as $\gamma^*_{\text{combi}}(I_1,\ldots,I_{T^\prime})$ in the combinatorial bandit protocol for some allocation $I_1,\ldots, I_{T^\prime}$, and we may apply the proof of Theorem 3 to obtain that with probability at least $\delta$, the oracle MLE makes a mistake.

\end{proof}

\section{Computational Complexity Results}\label{sec:comp}

\subsection{Algorithmic Approach}
In this section, we present the main computational algorithms and results in the paper, culiminating in the proof of Theorem \ref{thm:main_comp}, which immediately implies Theorem \ref{thm:comp_complex}. For simplicity label $\X= \{x_1,\ldots, x_m\}$. We can always find $\tilde{x}_1, \ldots, \tilde{x}_d \in \X$ such that $\cup_{i=1}^d \tilde{x}_i = [d]$ in $d$ linear maximization oracle calls. For each $i \in [d]$, create a cost vector: 
\begin{align*}
v^{(i)}_j = \begin{cases}
\infty & j = i \\
0 & j \neq i
\end{cases} 
\end{align*}
and set $\tilde{x}_i = \argmax_{x \in \X} x^\t v^{(i)}$. Thus, by reordering we may suppose that $\cup_{i=1}^d x_i = [d]$. Now, define
\begin{align*}
{\trianglem}= \{\lambda \in \triangle : \lambda_i \geq \psi \, \, \forall i \in [d] \}
\end{align*}
where $\psi \leq 1/d$. We optimize over $\trianglem$ due to its computational benefits,e.g., controlling the second partial order derivatives of the Lagrangian of \eqref{eq:opt_problem}. 

Algorithm \ref{alg:main_compute} is the main algorithm (see Theorem \ref{thm:main_comp} for its guarantee); it essentially does a grid search over the time horizon variable, $\tau \in [T]$. Note that for a fixed $\tau \in [T]$, we have that for all $\lambda \in \triangle$
\begin{align*}
\tau \sum_{x \in \X} [\beta+\bar{\theta}^\t(\bar{x}-x) ] \lambda_x = \tau \beta + \tau \sum_{x \in \X} \bar{\theta}^\t(\bar{x}-x)  \lambda_x 
\end{align*}
and thus we can ignore the term $\tau \beta$.  Thus, Algorithm \ref{alg:main_compute} calls Algorithm \ref{alg:binary_search_mult_weight} to solve for a fixed $\tau \in [T]$ the following optimization problem. 
\begin{align}
\min_{\lambda \in \trianglem} & \tau \sum_{x \in \X} \bar{\theta}^\t(\bar{x}-x)  \lambda_x \label{eq:opt_fixed_tau} \\
& \text{ s.t. } \mathbb{E}_\eta \left [ \max_{x \in \X} \frac{(\bar{x}-x)^\top \Asb(\lambda)^{-1/2} \eta}{\beta + \bar{\theta}^\t ( \bar{x}-x)} \right ]  \leq \sqrt{\tau} C \nonumber
\end{align}
To solve the above optimization problem, we convert it into a series of convex feasibility programs of the following form: $\exists? \lambda \in \trianglem$ such that
\begin{align*}
 & \tau \sum_{x \in \X} \bar{\theta}^\t(\bar{x}-x)  \lambda_x \leq \mwtest \\
& \text{ s.t. } \mathbb{E}_\eta \left [ \max_{x \in \X} \frac{(\bar{x}-x)^\top \Asb(\lambda)^{-1/2} \eta}{\beta + \bar{\theta}^\t ( \bar{x}-x)} \right ]  \leq \sqrt{\tau} C 
\end{align*}
and perform binary search over $\mwtest$. To solve each of these convex feasibility programs, we employ the Plotkin-Shmoys-Tardos reduction to online learning and apply Algorithm \ref{alg:mult_weight}, a multiplicative weights update style algorithm. Lemmas \ref{lem:mw_feasibility} and \ref{lem:bin_search} provide the guarantees for the multiplicative weights update algorithm and for the binary search procedure, respectively.

The Plotkin-Shmoys-Tardos reduction requires a method for solving for arbitrary $\kappa_1, \kappa_2 \in [0,1]$:
\begin{align*}
\min_{\lambda \in \trianglem} \L(\kappa_1,\kappa_2 ; \tau; \lambda) := \kappa_1 \tau \sum_{x \in \X} \bar{\theta}^\t(\bar{x}-x)  \lambda_x + \kappa_2(\mathbb{E}_\eta \left [ \max_{x \in \X} \frac{(\bar{x}-x)^\top \Asb(\lambda)^{-1/2} \eta}{\beta + \bar{\theta}^\t ( \bar{x}-x)} \right ]  - \sqrt{\tau} C).
\end{align*}
To solve the above optimization problem, we use stochastic Frank-Wolfe (see Algorithm \ref{alg:sfw_sb}). Defining for a fixed $\eta \in \R^d$, 
\begin{align*}
\L( \kappa_1, \kappa_2 ; \tau; \lambda; \eta ) =\kappa_1 \tau \sum_{x \in \X} \bar{\theta}^\t(\bar{x}-x)  \lambda_x + \kappa_2( \max_{x \in \X} \frac{(\bar{x}-x)^\top \Asb(\lambda)^{-1/2} \eta}{\beta + \bar{\theta}^\t ( \bar{x}-x)}  - \sqrt{\tau} C).
\end{align*}
we see that
\begin{align*}
\E_{ \eta \sim N(0,I)} [\L( \kappa_1, \kappa_2 ; \tau; \lambda; \eta )] = \L(\kappa_1, \kappa_2 ; \tau; \lambda).
\end{align*}
See Lemma \ref{lem:sfw_convergence} for our convergence result on stochastic Frank-Wolfe.

Finally, we note that each of our algorithms uses a global variable $\tol$, which for the theory we set to $\tfrac{(\sqrt{2}-1)C}{4}$. We note that $C$ scales  as $\frac{1}{\sqrt{\log(\frac{1}{\delta})}}$ and thus a polynomial dependence on $1/\tol$ results in a polynomial dependence on $\log(1/\delta)$.

 \begin{algorithm}[H] 
\begin{algorithmic}[1]
\State \textbf{Input:} Tolerance parameter $\textsc{tol} \in (0,1)$, $\delta \in (0,1)$
\State $k \longleftarrow 1$, $\bar{\tau}_k \longleftarrow 2^k $
\While{$\bar{\tau}_k  \leq T$ }
 	 \State $(\textsc{feasible}_k, \lambda_k ) \longleftarrow \text{binSearch}(\bar{\tau}_k, \frac{\delta}{\log_2(T)})$
 	 \State $k \longleftarrow k+1$, $\bar{\tau}_k \longleftarrow 2^k $
\EndWhile
\If{$\textsc{feasible}_k$ is False for all $k$}
\State \Return ''Program is not feasible"
\EndIf
\State $\widehat{k}_* \longleftarrow \argmin_k \{\bar{\tau}_k \sum_{x \in \X} \bar{\theta}^\t (\bar{x}-x) \lambda_{k,x} : \textsc{feasible}_k \text{ is True } \}$
\State \Return $(2 \bar{\tau}_{\widehat{k}_*}, \lambda_{\widehat{k}_*})$ 
\end{algorithmic}
\caption{Main}
\label{alg:main_compute}
\end{algorithm}

 \begin{algorithm}[H] 
\begin{algorithmic}[1]
\State \textbf{Input:} $\bar{\tau} > 0$,  $\delta \in (0,1)$, Tolerance parameter $\textsc{tol} > 0$
\State $\textsc{LOW} \longleftarrow 0$, $\textsc{HIGH} \longleftarrow 2Td$
\State $(\textsc{feasible}, \bar{\lambda}) \longleftarrow \text{MW}(\bar{\tau}, \textsc{high}, \frac{\delta}{\ceil{\log_2(2Td/\tol)}+1})$ \texttt{\color{blue}{Check if program is feasible}}
\If{\textsc{feasible} is False}
\State \Return $(\textsc{feasible}, \bar{\lambda})$
\EndIf
\While{$\textsc{HIGH}  - \textsc{LOW}  \geq \tol$ }  \hfill $\triangleright$ \texttt{\color{blue}{Initiate binary search}}
\State $\mwtest \longleftarrow \frac{\textsc{LOW} + \textsc{HIGH}}{2}$
\State $(\textsc{feasible}, \bar{\lambda}) \longleftarrow \text{MW}(\bar{\tau}, \mwtest,\frac{\delta}{\ceil{\log_2(2Td/\tol)}+1})$
\If{\textsc{feasible}}
\State $\textsc{LOW}  \longleftarrow \mwtest$
\Else
\State $\textsc{high}  \longleftarrow \mwtest$
\EndIf
\EndWhile
\State $(\textsc{feasible}, \bar{\lambda}) \longleftarrow \text{MW}(\bar{\tau}, \textsc{high}, \frac{\delta}{\ceil{\log_2(2Td/\tol)}+1})$
\State \Return $(\textsc{feasible}, \bar{\lambda})$ 
\end{algorithmic}
\caption{Binary Search (binSearch)}
\label{alg:binary_search_mult_weight}
\end{algorithm}

 \begin{algorithm}[H] 
\begin{algorithmic}[1]
\State \textbf{Input:} $\bar{\tau} > 0$, $\mwtest>0$, Failure probability $\delta \in (0,1)$, Tolerance parameter $\textsc{tol} > 0$
\State $\rho = \max(2dT, c \frac{d}{\beta \psi^{1/2}})$ for an appropriately chosen universal constant $c > 0$ (see the proof of Lemma \ref{lem:mw_feasibility})
\State $\eta = \min(\frac{\tol}{4 \rho}, 1/2)$, $R \longleftarrow \frac{16 \rho^2 \ln(2)}{\tol^2}$
 	 \State \textsc{Feasible} $\longleftarrow \textsc{True}$  \hfill $\triangleright$ \texttt{\color{blue}{Assume feasible program}}
\State $w_i^{(1)} \longleftarrow 1$ for $i \in [2]$  \hfill $\triangleright$ \texttt{\color{blue}{Initiate weights}}
\For{$r=1,2, \ldots, R$}
	\State $p_1^{(r)} \longleftarrow w^{(r)}_1/(w_1^{(r)} +w_2^{(r)})$ and $p_2^{(r)} \longleftarrow w_2^{(r)}/(w_1^{(r)}+w_2^{(r)})$
 	 \State $\lambda^{(r)} \longleftarrow \text{SFW}(p_1^{(r)}, p_2^{(r)}, \frac{\delta}{2R})$
 	 \State Define
	\begin{align*}
 	 h_1(\lambda) & := \bar{\tau} \sum_{x \in \X} \bar{\theta}^\t(\bar{x}-x)  \lambda_x - \mwtest  \\
	\widehat{h}_2(\lambda^{(r)}) &= \text{estSup}(\lambda^{(r)},  \frac{\delta}{3 R}) - \sqrt{\tau}C \\
	\widehat{h}^{(r)}(\lambda^{(r)}) & := p_1^{(r)}  h_1(\lambda^{(r)}) +  p_2^{(r)} \widehat{h}_2(\lambda^{(r)})
\end{align*}	 	 
 	 \State 
 	 \If{$\widehat{h}^{(r)}(\lambda^{(r)}) > 2\textsc{tol}$}
 	 \State \textsc{Feasible} $\longleftarrow \textsc{False}$ \hfill $\triangleright$ \texttt{\color{blue}{Declare infeasible program}}
 	 \State Break
 	 \EndIf 
 	 \State $w_1^{(r+1)} \longleftarrow w_1^{(r)} (1 + \eta  h_1(\lambda^{(r)}) )$  \hfill $\triangleright$ \texttt{\color{blue}{Update weights}}
 	 \State $w_2^{(r+1)} \longleftarrow w_2^{(r)}(1+\eta \widehat{h}_2(\lambda^{(r)}))$
\EndFor
 \State $\bar{\lambda}^{(r)} = \frac{1}{r} \sum_{s=1}^r \bar{\lambda}_s$
\State \Return $(\textsc{feasible}, \bar{\lambda}^{(r)})$ 
\end{algorithmic}
\caption{Multiplicative Weights Update Algorithm for Combinatorial Bandits with Semi-Bandit Feedback (MW)}
\label{alg:mult_weight}
\end{algorithm}

\begin{algorithm}[H] 
\begin{algorithmic}[1]
\State \textbf{Input:} $\tau \geq 0$, $\kappa_1, \kappa_2 \in [0,1]$,  $\delta \in (0,1)$.
\State $\Rsfw =\frac{8 \frac{1}{\beta \psi^{5/2} } d }{\tol}$
\State $(q_r)_{r \in [R]} \in [0,1]^R$ such that $q_r = \frac{2}{r+1}$ and $(p_r)_{r \in [R]} \in \N^R$ such that $p_r = c \frac{1}{d \psi^2 q_r}\log(r^2/\delta)$ for an appropriately chosen universal constant $c > 0$ (see the proof of Lemma \ref{lem:sfw_convergence})
\State Initialize $\lambda_1 \in \trianglem$ by setting $\lambda_{1,i} = 1/d$ if $i \in [d]$ and otherwise set $\lambda_{1,i} = 0$.
\For{$r=1,2, \ldots, \Rsfw$}
 	 \State Draw $\eta^{(1)}, \ldots, \eta^{(p_r)} \sim N(0,I)$
	  \State Compute
	  \begin{align*}
	\tilde{\nabla}_r = \frac{1}{p_r} \sum_{j=1}^{p_r} \nabla \L(\kappa_1, \kappa_2; \tau; \lambda_r; \eta_j)
	\end{align*}  
	 \State Compute 
	 \begin{align*}
	i_r \longleftarrow \argmax_{i \in [m] }- \tilde{\nabla}_{r,i}  =  -[\kappa_1 \tau \bar{\theta}^\t x_i +\kappa_2 \frac{1}{2} \frac{1}{p_r} \sum_{j=1}^{p_r}\frac{1}{[\beta + \bar{\theta}^\t(\bar{x}-\tilde{x}_j) ]}  \sum_{k \in (\bar{x} \Delta \tilde{x}_j) \cap x_i } \frac{\eta_k}{(\sum_{l: k \in x_l} \lambda_l )^{3/2}}   ]
	 \end{align*}
	 where 
	 \begin{align*}
\tilde{x}_j =  \argmax_{x \in \X} \frac{\sum_{i \in \bar{x} \Delta x} \frac{\eta_i^{(j)}}{\sum_{x^\prime : i \in x^\prime} \lambda_{x^\prime} }}{\beta+ \bar{\theta}^\t(\bar{x}-x)}.
\end{align*}
is computed using Algorithm \ref{alg:compute_max}.
\State
\begin{align*}
(v_r)_i & = \begin{cases}
\begin{cases}
0 & i \not \in [d] \\
\psi & i \in [d] \setminus \{ i_r \} \\
1- (d-1) \psi & i = i_r
\end{cases}, \qquad \qquad i_r \in [d] \\
\begin{cases}
\psi &  \hspace{1.2cm}  i \in [d] \\
1- d \psi  & \hspace{1.2cm} i = i_r 
\end{cases}, \qquad \qquad  \qquad  i_r \not \in [d]
\end{cases}
\end{align*}
\State
	 \begin{align*}
	 \lambda_{r+1} \longleftarrow q_r v_r +(1-q_r) \lambda_r
	 \end{align*}
\EndFor

\State \Return $\lambda_{\Rsfw}$
\end{algorithmic}
\caption{Stochastic Frank-Wolfe for Semi-Bandit Feedback (SFW)}
\label{alg:sfw_sb}
\end{algorithm}

\subsubsection{Subroutines}

Algorithm \ref{alg:compute_max}, originally provided in \cite{katz2020empirical}, uses binary search and calls to the linear maximization oracle to compute 
\begin{align*}
 \frac{(\bar{x}- x)^\t A(\lambda)^{-1/2} \eta}{\beta + \bar{\theta}^\t(\bar{x} - x) }.
\end{align*}
Algorithm \ref{alg:est_exp_sup} estimates 
\begin{align*}
\E_{\eta}[ \max_{x \in \X} \frac{(\bar{x}-x)^\top \Asb(\lambda)^{-1/2} \eta}{\beta + \bar{\theta}^\t ( \bar{x}-x)}].
\end{align*}

\begin{algorithm}[H]
\begin{algorithmic}[1]
\State Define the following functions
\begin{align*}
g(\lambda;\eta;x) & :=   \frac{(\bar{x}- x)^\t A(\lambda)^{-1/2} \eta}{\beta + \bar{\theta}^\t(\bar{x} - x) }\\
g( \lambda; \eta;r) & := \max_{x \in \X} x^\t(A(\lambda)^{-1/2} \eta + r \bar{\theta}) - r(\beta + \bar{\theta}^\t x ) -\bar{x}^\t A(\lambda)^{-1/2} \eta \\
g(\lambda; \eta;r;x) & := x^\t(A(\lambda)^{-1/2} \eta + r \bar{\theta}) - r(\beta + \bar{\theta}^\t \bar{x} ) -\bar{x}^\t A(\lambda)^{-1/2} \eta 
\end{align*}
\State Define
\begin{align*}
\textsc{low} = 0, \qquad \textsc{high} = 2 
\end{align*}
\While{$g(\lambda; \eta: \textsc{high}) \geq 0$}
\State $\textsc{high}  \longleftarrow 2\cdot\textsc{high} $
\EndWhile
\While{$g(\lambda; \eta; \textsc{low}) \neq 0 $ }
\If{$g(\lambda; \eta;\frac{1}{2}(\textsc{high} + \textsc{low}) ) < 0$}
\State $\textsc{low}  \longleftarrow \frac{1}{2}(\textsc{high} + \textsc{low}) $
\Else
\State $\textsc{high}  \longleftarrow \frac{1}{2}(\textsc{high} + \textsc{low}) $
\State $\textsc{low} \longleftarrow g(\lambda;\eta;x^\prime)$ for some $x^\prime \in \argmax_{x \in \X} g(\lambda;\eta;\textsc{low};x)$
\EndIf
\EndWhile
\State Return $\textsc{low}$
\end{algorithmic}
 \caption{$\text{computeMax}$}
 \label{alg:compute_max}
\end{algorithm}

 \begin{algorithm}[H] 
\begin{algorithmic}[1]
\State \textbf{Input:} $\lambda \in \trianglem$, failure probability $\delta > 0$, Tolerance parameter $\textsc{tol} > 0$, , 
\State $t = c \log(1/\delta) \frac{d}{\beta^2 \psi \textsc{tol}^2}$
\State Draw $\eta_1, \ldots, \eta_t \sim N(0,I)$ 
\State Compute $g_s =  \max_{x \in \X} \frac{(\bar{x}-x)^\top \Asb(\lambda)^{-1/2} \eta_s}{\beta + \bar{\theta}^\t ( \bar{x}-x)}$ for $s =1,\ldots, t$ using Algorithm \ref{alg:compute_max}.
\State \Return $\frac{1}{t} \sum_{s=1}^t g_s$ 
\end{algorithmic}
\caption{ Estimate expected suprema (estimateSup)}
\label{alg:est_exp_sup}
\end{algorithm}

\subsection{Main Optimization Proofs}

For the sake of simplicity, we assume that $T$ is a power of $2$, and that the optimization problem is feasible. If the optimization problem is infeasible, we can determine this by applying stochastic Frank-Wolfe (see Lemma \ref{lem:sfw_convergence}).
For simplicity, we also assume that $\bar{\theta}^\t(\bar{x}-x) \leq \Delmax \leq 2d$ since typically it is assumed that $\norm{\theta}_\infty \leq 1$ and whp $\norm{\widehat{\theta}_{\ell}}_\infty = O(1)$ at every round $\ell$. Further, note that whenever the algorithm is applied $C \leq 1$, and we assume this henceforth. We introduce the following functions to bound the number of linear maximization oracle calls:
\begin{align*}
\oca(d, \beta, \psi, \tol, 1/\delta, 1/\xi) & =O(\frac{d^2}{\tol^3 \beta^4 \psi^8} [d+ \log(\frac{d}{\beta \xi \psi \tol})]) \\
\ocb(d, \beta, \psi, \tol, 1/\delta, 1/\xi) & = O( \log(1/\delta) \frac{d}{\beta^2 \psi \textsc{tol}^2}[d  + \log(\frac{d \Delmax}{\beta \xi })]) \\
\occ(d, \beta, \psi, \tol, 1/\delta, 1/\xi) & =\frac{(d T)^2 + \frac{d^2}{\beta^2 \psi}}{\tol^2}[\oca(d, \beta, \psi, \tol, 1/\delta, 1/\xi) + \ocb(d, \beta, \psi, \tol, 1/\delta, 1/\xi)]
\end{align*}
Note these are polynomial in $(d, \beta, \psi, 1/\tol, \log(1/\delta), 1/\xi) $. Our algorithms share a global parameter $\tol$; it suffices to set $\tol=\tfrac{(\sqrt{2}-1) C}{4}$. 
 Define
\begin{align*}
M = \log_2(T) \log_2(\frac{2 T d}{\tol}).
\end{align*}

We say a random variable $X$ is sub-Gaussian with parameter $\sigma^2$ and write $X \in \sg(\sigma^2)$ if for all $\lambda \in \R $
\begin{align*}
\E[e^{\lambda(X - \E[X])}] \leq e^{\lambda^2\sigma^2/2}.
\end{align*}

The following Lemma provides the convergence guarantee for stochastic Frank-Wolfe in the semi-bandit setting (see Algorithm \ref{alg:sfw_sb}).

\begin{lemma}
\label{lem:sfw_convergence}
Let $\delta \in (0,1)$, $\xi \in (0,1]$, $\kappa_1, \kappa_2 \in [0,1]$. With probability at least $1-\delta$ Algorithm \ref{alg:sfw_sb} returns $\lambda_{\Rsfw} \in \triangle$ such that
\begin{align*}
\L( \kappa_1,\kappa_2; \tau; \lambda_{\Rsfw})  \leq   \min_{\lambda \in \trianglem} \L(\kappa_1,\kappa_2; \tau; \lambda) + \tol
\end{align*}
Furthermore, with probability at least $1- \frac{c\xi }{2^d}$, 
the number of oracle calls is bounded by
\begin{align*}
\oca(d, \beta, \psi, 1/\tol, \log(1/\delta), 1/\xi).
\end{align*} 
\end{lemma}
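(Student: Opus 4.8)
The plan is to view Algorithm \ref{alg:sfw_sb} as Frank--Wolfe with an approximate (stochastic) linear-minimization oracle, run over the compact convex set $\trianglem$, and to invoke the standard $O(C_f/R)$ convergence rate under step sizes $q_r = 2/(r+1)$, where $C_f$ is the curvature constant of $\L(\kappa_1,\kappa_2;\tau;\cdot)$ over $\trianglem$. The first step is to collect the structural facts about the objective. Convexity in $\lambda$ holds because the first term $\kappa_1\tau\sum_x\bar\theta^\top(\bar x-x)\lambda_x$ is linear and the second term is an affine shift of $\kappa_2\E_\eta[\max_{x\in\X}(\bar x-x)^\top\Asb(\lambda)^{-1/2}\eta]$, which is convex in $\lambda$ by the argument of \cite{katz2020empirical}. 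For the curvature bound the crucial observation is that, after the preprocessing that guarantees $\cup_{i=1}^d\tilde x_i=[d]$, every $\lambda\in\trianglem$ places mass at least $\psi$ on each of $\tilde x_1,\dots,\tilde x_d$, so $\Asb(\lambda)\succeq\psi I$ uniformly over $\trianglem$; together with $\beta+\bar\theta^\top(\bar x-x)\ge\beta$ and the fact that $\trianglem$ has $\ell_1$-diameter at most $2$, this lets one bound the Hessian of $\L$ (equivalently its curvature) by a quantity polynomial in $d$, $1/\beta$ and $1/\psi$ --- precisely of the order that makes $\Rsfw=O\!\big(\tfrac{d}{\beta\psi^{5/2}\tol}\big)$ iterations enough to drive the \emph{deterministic-gradient} suboptimality below $\tfrac{\tol}{2}$.

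Second, I would handle the stochasticity of the gradient. By Danskin's theorem the per-sample function $\L(\kappa_1,\kappa_2;\tau;\cdot;\eta)$ is differentiable at any $\lambda\in\trianglem$ for a.e.\ $\eta$, with gradient obtained by differentiating through the (a.s.\ unique) maximizer $\tilde x(\lambda;\eta)$; the uniform lower bound $\Asb(\lambda)\succeq\psi I$ provides the integrability needed to exchange $\nabla_\lambda$ and $\E_\eta$, so $\tilde\nabla_r$ is an unbiased estimate of $\nabla\L(\lambda_r)$ conditional on $\lambda_r$. The key point enabling a polynomial sample size is that the $\lambda$-coordinate of this gradient at an arm $x$ is a \emph{linear} functional $x^\top d_r$ of a single $d$-dimensional random vector $d_r$ (this is also why $i_r=\argmax_i(-\tilde\nabla_{r,i})$ reduces to a single linear-maximization-oracle call on $-d_r$); hence $\sup_{x\in\X}|\tilde\nabla_{r,x}-\nabla\L_x(\lambda_r)|\le k\,\|d_r-\E d_r\|_\infty$, and controlling the latter only requires a $d$-coordinate concentration rather than a union bound over $|\X|$. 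Since $d_r$ is an average of $p_r$ i.i.d.\ vectors whose coordinates have sub-exponential tails with parameters polynomial in $1/\beta,1/\psi$ (the $\eta_k$ are Gaussian and appear divided by $(\sum_{l:k\in x_l}\lambda_l)^{3/2}\ge\psi^{3/2}$), taking $p_r=\Theta\!\big(\tfrac{1}{d\psi^2 q_r}\log(r^2/\delta)\big)$ together with a union bound over the $d$ coordinates and over $r\le\Rsfw$ (with per-iteration budget $\delta/r^2$, summing to at most $O(\delta)$) makes the error small enough that the inexact Frank--Wolfe recursion $h_{r+1}\le(1-q_r)h_r+\tfrac{q_r^2}{2}C_f+q_r\varepsilon_r$, with $h_r:=\L(\kappa_1,\kappa_2;\tau;\lambda_r)-\min_{\lambda\in\trianglem}\L$, telescopes to $h_{\Rsfw}\le\tol$ with probability at least $1-\delta$.

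Third, for the oracle-call bound: iteration $r$ evaluates $\tilde\nabla_r$, which requires one call to \textsc{computeMax} (Algorithm \ref{alg:compute_max}) per sample $\eta_j$ to produce $\tilde x_j$, i.e.\ $p_r$ invocations, plus one linear-maximization-oracle call to compute $i_r$. Each \textsc{computeMax} call uses $O(\mathrm{polylog})$ oracle calls and, by the guarantee of \cite{katz2020empirical}, returns correctly except on an event of probability $O(\xi/2^d)$ arising from the chance of an exponentially large Gaussian coordinate derailing its binary search --- whence the ``$1-c\xi/2^d$'' qualifier and the $\log(\cdot/\xi)$ appearing inside $\oca$. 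Summing $\sum_{r\le\Rsfw}(p_r+1)$ times the per-\textsc{computeMax} cost, using $\sum_r 1/q_r=O(\Rsfw^2)$ and $\Rsfw=O(\tfrac{d}{\beta\psi^{5/2}\tol})$, yields a total that is polynomial in $(d,\beta,1/\psi,1/\tol,\log(1/\delta),1/\xi)$ and matches $\oca(d,\beta,\psi,1/\tol,\log(1/\delta),1/\xi)$; the failure probability for this part is $O(\xi/2^d)$ by a union over iterations and samples.

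The step I expect to be the main obstacle is the curvature bound for the second term of $\L$: differentiating $\lambda\mapsto\E_\eta[\max_{x\in\X}(\bar x-x)^\top\Asb(\lambda)^{-1/2}\eta]$ twice forces one to bound the Hessian of the matrix map $\lambda\mapsto\Asb(\lambda)^{-1/2}$, which brings in $\Asb(\lambda)^{-5/2}$-type terms (hence the $\psi^{-5/2}$), \emph{while} simultaneously accounting for the nonsmoothness of the pointwise maximum --- the smoothing must come from the Gaussian expectation over $\eta$, and extracting the correct polynomial dependence on $d$, $1/\beta$ and $1/\psi$ from this interplay is the delicate part. A secondary technical point, easier but still requiring care, is verifying the sub-exponential tail parameters of the per-sample gradient coordinates uniformly over $\trianglem$, which again reduces to the lower bound $\Asb(\lambda)\succeq\psi I$.
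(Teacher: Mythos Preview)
Your proposal is correct and follows essentially the same route as the paper: smoothness of $\L$ on $\trianglem$ via an explicit bound on the second partial derivatives (the paper does this through dedicated differentiability lemmas that swap $\nabla$ and $\E_\eta$ by dominated convergence, arriving at the same $O(d/(\beta\psi^{5/2}))$ Lipschitz constant for the gradient), sub-Gaussian concentration of the stochastic gradient with $p_r$ chosen to make the deviation $O(Lq_r)$, and the oracle count via the binary-search guarantee for \textsc{computeMax}. The one minor difference is that the paper union-bounds the gradient deviation directly over $|\X|\le 2^d$ arms rather than through your $d$-coordinate reduction $\sup_x|\tilde\nabla_{r,x}-\nabla\L_x|\le k\|d_r-\E d_r\|_\infty$, but since $\log|\X|\le d\log 2$ the resulting sample sizes agree up to constants.
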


\begin{proof}

For simplicity, we focus on the case where $\kappa_1=\kappa_2 = 1$ (the other cases are similar). We write $\L(\lambda)$  and $\L(\lambda;\eta)$ as abbreviations for $\L(\kappa_1, \kappa_2 ; \lambda)$  and $\L(\kappa_1, \kappa_2 ; \lambda;\eta)$.

\textbf{Step 1: Bound the number of iterations of stochastic Frank-Wolfe.} $ \L( \lambda)$ is convex in $\lambda$ by Proposition \ref{prop:sb_convex}. Furthermore, $\max_{\lambda,\lambda^\prime \in \trianglem} \norm{\lambda -\lambda^\prime}_1 \leq 2$. Thus, by Proposition \ref{prop:sfw_gen}, it suffices to show
\begin{enumerate}
\item \textbf{Smoothness:} $\norm{\nabla \L( \lambda)  - \nabla \L(\lambda^\prime) }_\infty \leq L \norm{\lambda-\lambda^\prime}_1$ for an appropriate choice of $L$
\item \textbf{Small deviation with high probability:} $p_r$ is chosen sufficiently large to ensure that with probability at least $1-\delta/r^2$
\begin{align*}
 \norm{\wt{\nabla}_r - \nabla \L(\lambda_{r-1})}_\infty \leq \frac{L q_r}{2}
\end{align*} 
\end{enumerate}

\textbf{Step 1.1: Smoothness.} Let $\lambda, \lambda^\prime \in \trianglem$ and fix $i \in [m]$. It suffices to show that 
\begin{align*}
|\frac{\partial \L( \lambda )}{\partial \lambda_i } - \frac{\partial \L( \lambda^\prime )}{\partial \lambda_i^\prime }| \leq L \norm{\lambda - \lambda^\prime}_1.
\end{align*}
For the sake of abbreviation, define $g(\lambda) := \frac{\partial \L(\lambda )}{\partial \lambda_i } $. By Lemma \ref{lem:differentiable_2}, we have that $\L(\lambda)$ is twice differentiable and that
\begin{align*}
\frac{\partial^2 \L(\lambda)}{\partial \lambda_i \partial \lambda_j} & = \E[\frac{\partial^2 \L(\lambda;\eta)}{\partial \lambda_i \partial \lambda_j} \one\{B\}] \\
& = \E[ \frac{3}{4} \frac{1}{(\beta+ \bar{\theta}^\t (\bar{x}-\tilde{x})} \sum_{k \in (\bar{x} \Delta \tilde{x}) \cap x_i \cap x_j} \frac{\eta_k}{(\sum_{l : k \in x_l} \lambda_l)^{5/2}}: 
\tilde{x} = \argmax_{x \in \X} \frac{\sum_{i \in \bar{x} \Delta x} \frac{1}{\sum_{x^\prime : i \in x^\prime} \lambda_{x^\prime} }}{\beta+ \bar{\theta}^\t(\bar{x}-x)} 
]
\end{align*}
where 
\begin{align*}
B = \{\eta :  |\argmax_{x \in \X} \frac{\sum_{i \in \bar{x} \Delta x} \frac{1}{\sum_{x^\prime : i \in x^\prime} \lambda_{x^\prime} }}{\beta+ \bar{\theta}^\t(\bar{x}-x)} | =1 \}.
\end{align*}
For any $\lambda \in \trianglem$,
\begin{align*}
| \frac{\partial g(\lambda )}{\partial \lambda_j}| & =| \E[ \frac{3}{4} \frac{1}{(\beta+ \bar{\theta}^\t (\bar{x}-\tilde{x})} \sum_{k \in (\bar{x} \cap \tilde{x}) \cap x_i \cap x_j} \frac{\eta_k}{(\sum_{l : k \in x_l} \lambda_l)^{5/2}} \one\{B\} :
\tilde{x} = \argmax_{x \in \X} \frac{\sum_{i \in \bar{x} \Delta x} \frac{1}{\sum_{x^\prime : i \in x^\prime} \lambda_{x^\prime} }}{\beta+ \bar{\theta}^\t(\bar{x}-x)} .
] | \\
& \leq  \E[ |\frac{3}{4} \frac{1}{(\beta+ \bar{\theta}^\t (\bar{x}-\tilde{x})} \sum_{k \in (\bar{x} \cap \tilde{x}) \cap x_i \cap x_j} \frac{\eta_k}{(\sum_{l : k \in x_l} \lambda_l)^{5/2}}| \one\{B\} : \tilde{x} = \argmax_{x \in \X} \frac{\sum_{i \in \bar{x} \Delta x} \frac{1}{\sum_{x^\prime : i \in x^\prime} \lambda_{x^\prime} }}{\beta+ \bar{\theta}^\t(\bar{x}-x)} . ]  \\
& \leq \E \frac{3}{4} \frac{1}{\beta} \sum_{k=1}^d |\eta_k| \\
& \leq c\frac{1}{\beta \psi^{5/2}} d . 
\end{align*}
where we used Jensen's inequality and $c > 0$ is a universal constant. 

Now, by the mean value theorem, there exists $s \in [0,1]$ such that
\begin{align*}
|g(\lambda) - g(\lambda^\prime)| & \leq |\nabla g(s \lambda + (1-s) \lambda^\prime)^\t (\lambda - \lambda^\prime)| \\
& \leq \norm{\nabla g(s \lambda + (1-s) \lambda^\prime)}_\infty \norm{\lambda-\lambda^\prime}_1 \\
& \leq c \frac{1}{\beta  \psi^{5/2}} d \norm{\lambda-\lambda^\prime}_1
\end{align*}
where the second inequality follows by Holder's Inequality. Thus,
\begin{align*}
\norm{ \nabla \L(\kappa ; \lambda )- \nabla \L(\kappa ; \lambda^\prime ) }_\infty \leq  c \frac{1}{\beta \psi^{5/2} } d \norm{\lambda - \lambda^\prime}_1 
\end{align*}
For the sake of brevity, we write $L =\frac{1}{\beta \psi^{5/2} } d^{3/2} $ for the remainder of the proof.

\textbf{Step 1.2: Small deviation with high probability.} Now, we show that $p_r$ is chosen sufficiently large to ensure that with probability at least $1-\delta/r^2$
\begin{align}
\norm{\wt{\nabla}_r - \nabla \L(\lambda_{r-1})}_\infty \leq \frac{L q_r}{2}. \label{eq:sfw_sb_small_dev}
\end{align} 
Recall that
\begin{align*}
[\wt{\nabla}_r - \nabla \L(\lambda_{r-1})]_i & = [\wt{\nabla}_r - \nabla \L(\lambda_{r-1})]_i \\
& = \frac{1}{p_r} [\sum_{j=1}^{p_r} \frac{1}{2} \frac{1}{[\beta + \bar{\theta}^\t(\bar{x}-\tilde{x}_j) ]}  \sum_{k \in (\bar{x} \Delta \tilde{x}_j) \cap x_i } \frac{\eta_k}{(\sum_{l: k \in x_l} \lambda_l )^{3/2}}   \\
& - \E[(\frac{1}{2} \frac{1}{[\beta + \bar{\theta}^\t(\bar{x}-\tilde{x}) ]}  \sum_{k \in (\bar{x} \Delta \tilde{x}) \cap x_i } \frac{\eta_k}{(\sum_{l : k \in x_l} \lambda_l )^{3/2}} :\tilde{x} = \argmax_{x \in \X} \frac{\sum_{i \in \bar{x} \Delta x} \frac{1}{\sum_{x^\prime : i \in x^\prime} \lambda_{x^\prime} }}{\beta+ \bar{\theta}^\t(\bar{x}-x)} .
]
 \end{align*}
	 where 
	 \begin{align*}
\tilde{x}_j = \argmax_{x \in \X} \frac{\sum_{i \in \bar{x} \Delta x} \frac{\eta_i^{(j)}}{\sum_{x^\prime : i \in x^\prime} \lambda_{x^\prime} }}{\beta+ \bar{\theta}^\t(\bar{x}-x)} .
.
\end{align*}
Note that
\begin{align*}
|\frac{1}{p_r} \sum_{j=1}^{p_r} \frac{1}{2} \frac{1}{[\beta + \bar{\theta}^\t(\bar{x}-\tilde{x}_j) ]}  \sum_{k \in \tilde{x}_j \cap x_i } \frac{\eta_k}{(\sum_{l: k \in x_l} \lambda_l )^{3/2}} | \leq \frac{1}{2} \frac{1}{p_r} \frac{1}{\beta \psi^{3/2}} \sum_{k=1}^d |\eta_k|.
\end{align*}
Since 
\begin{align*}
\frac{1}{2} \frac{1}{p_r} \frac{1}{\beta \psi^{3/2}} \sum_{k=1}^d |\eta_k| \in \sg(\frac{c \frac{1}{\beta^2 \psi^{3}} d}{p_r})
\end{align*}
we then have that by Lemma 2.6.8 in \cite{vershynin2018high},
\begin{align*}
[\wt{\nabla}_r - \nabla \L(\lambda_{r-1})]_i \in \sg(\frac{c \frac{1}{\beta^2 \psi^{3}} d}{p_r}).
\end{align*}
Therefore, since $|\X| \leq 2^d$ and since $p_r = c \frac{\frac{1}{\beta^2 \psi^{3}} d^2}{L^2 q_r^2}$ for an appropriately chosen universal constant, by a standard sub-Gaussian tail bound \eqref{eq:sfw_sb_small_dev} follows.

\textbf{Step 2: Bound the number of linear maximization oracle calls.} Next, we bound the number of linear maximization oracle calls. At each round $r$, there is one linear maximization oracle call from finding the minimizing direction wrt the gradient over $\trianglem$, but the dominant source of linear maximization oracles at each round is due to applying Algorithm \ref{alg:compute_max} several times. Thus, it suffices to bound the number of linear maximization oracle calls due to Algorithm \ref{alg:compute_max}. Define the following event
\begin{align*}
\mc{E}_k & = \{\text{ the $k$th application of Algorithm \ref{alg:compute_max} requires } O(d  + \log(\frac{d}{\beta }) + \log(\Delmax\xi k^2)) \text{ oracle calls} \} \\
\mc{E} &  = \cap_k \mc{E}_k
\end{align*}
Then, we have that 
\begin{align*}
\Pr(\mc{E}) & = \prod_{r=1}^\infty \Pr(\mc{E}_r | \cap_{s=1}^{r-1} \mc{E}_s) \geq \prod_{r=1}^\infty (1-\frac{\xi}{2^d r^2}) = \frac{\sin(\pi \frac{\xi}{2^d})}{\pi+\frac{\xi}{2^d}} \geq 1 - \frac{\xi}{2^d}.
\end{align*}
where we used the independence of each draw of a multivariate Gaussian in the algorithm and Lemma \ref{lem:find_argmax}. The number of calls of Algorithm \ref{alg:compute_max} at each iteration is upper bounded by $O(p_{\Rsfw})$ and, thus, the total number of oracle calls is upper bounded by
\begin{align*}
O({\Rsfw} \cdot p_{\Rsfw} &[d  + \log(\frac{d}{\beta }) + \log(\Delmax \Rsfw/\xi)]) \\
& \leq O(\frac{d^2}{\tol^3 \beta^4 \psi^8}  [d+ \log(\frac{d}{\beta \xi}) + \log(\frac{d^2}{\tol^3 \beta^4 \psi^8} )] \\
& =  O(\frac{d^2}{\tol^3 \beta^4 \psi^8} [d+ \log(\frac{d}{\beta \xi \psi \tol})]) \\
& = \oca(d, \beta,  \psi, \tol, 1/\delta, 1/\xi).
\end{align*}

\end{proof}

The following Lemma shows that the Multiplicative Weight Update algorithm (Algorithm \ref{alg:mult_weight}) either finds an approximately feasible solution or if there is no approximately feasible solution, determines infeasibility.

\begin{lemma}
\label{lem:mw_feasibility}
Fix $\tau, \mwtest \geq 0$ and let $\delta \in (0,1)$. Define
\begin{align*}
P_\epsilon = \{ \lambda \in \trianglem : \mathbb{E}_\eta \left [ \max_{x \in \X} \frac{(\bar{x}-x)^\top \Asb(\lambda)^{-1/2} \eta}{\beta + \bar{\theta}^\t ( \bar{x}-x)} \right ]-\sqrt{\tau} C \leq \epsilon, \\
\tau \sum_{x \in \X}  \bar{\theta}^\t(\bar{x}-x)  \lambda_x - \mwtest \leq \epsilon \}
\end{align*}
With probability at least $1-\delta - \frac{1}{2^d M}$, if MW($\tau, \mwtest$) does not declare infeasibility, then MW($\tau, \mwtest$) returns $\bar{\lambda} \in P_{4 \tol}$ and if MW($\tau, \mwtest$) declares infeasibility, then $P_0$ is infeasible. Furthermore, on the same event, MW($\tau, \mwtest$) uses at most $\occ(d, \beta,  \psi, \tol, 1/\delta, 1/\xi)$ linear maximization oracle calls.
\end{lemma}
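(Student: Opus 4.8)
The plan is to carry out the standard Plotkin--Shmoys--Tardos analysis of the multiplicative-weights convex-feasibility solver, tracking the two sources of approximation error --- the $\tol$-suboptimality of stochastic Frank--Wolfe (Lemma~\ref{lem:sfw_convergence}) and the $\tol$-accuracy of the expected-suprema estimator (Algorithm~\ref{alg:est_exp_sup}) --- so that they combine into the final slack $4\tol$. Write $h_1(\lambda)=\tau\sum_{x\in\X}\bar\theta^\t(\bar x-x)\lambda_x-\mwtest$ and $h_2(\lambda)=\mathbb{E}_\eta\big[\max_{x\in\X}(\bar x-x)^\top\Asb(\lambda)^{-1/2}\eta/(\beta+\bar\theta^\t(\bar x-x))\big]-\sqrt\tau C$, so that $\L(p_1,p_2;\tau;\lambda)=p_1h_1(\lambda)+p_2h_2(\lambda)$ and at round $r$ the learner plays a distribution $p^{(r)}$ over the two constraints, invokes Algorithm~\ref{alg:sfw_sb} to obtain $\lambda^{(r)}$, and forms the gain vector $g^{(r)}=(h_1(\lambda^{(r)}),\widehat h_2(\lambda^{(r)}))$ with $\widehat h^{(r)}(\lambda^{(r)})=\langle p^{(r)},g^{(r)}\rangle$. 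First I would check the boundedness $|h_1|,|\widehat h_2|=O(\rho)$ on $\trianglem$ (on the estimation good event): membership $\lambda\in\trianglem$ forces $\Asb(\lambda)\succeq\psi I$, since every coordinate of $[d]$ is covered by some $\tilde x_i$ carrying mass at least $\psi$; hence $\|\bar x-x\|_{\Asb(\lambda)^{-1}}^2\le 4d/\psi$, so $\mathbb{E}_\eta[\max_x\cdots]\le 2d/(\beta\sqrt\psi)$, while $|h_1|=O(dT)$ --- which is exactly why $\rho=\max(2dT,cd/(\beta\psi^{1/2}))$ is the right choice.

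Next I would invoke the multiplicative-weights regret bound for two experts with gains in $[-\rho,\rho]$, learning rate $\eta=\min(\tol/(4\rho),1/2)$, and $R=16\rho^2\ln2/\tol^2$ rounds: for each $i\in\{1,2\}$,
\[
\tfrac1R\sum_{r=1}^R g_i^{(r)}\;\le\;\tfrac1R\sum_{r=1}^R\langle p^{(r)},g^{(r)}\rangle+\tfrac{\tol}{2},
\]
the parameters being chosen precisely so the average regret is at most $\tol/2$. Then I would split into cases. If MW never declares infeasibility, then $\langle p^{(r)},g^{(r)}\rangle=\widehat h^{(r)}(\lambda^{(r)})\le 2\tol$ at every round, so $\tfrac1R\sum_r g_i^{(r)}\le 2.5\tol$ for both $i$; taking $\bar\lambda=\tfrac1R\sum_r\lambda^{(r)}$, linearity of $h_1$ gives $h_1(\bar\lambda)\le 2.5\tol<4\tol$, while the accuracy of Algorithm~\ref{alg:est_exp_sup} gives $h_2(\lambda^{(r)})\le\widehat h_2(\lambda^{(r)})+\tol$ pointwise, so $\tfrac1R\sum_r h_2(\lambda^{(r)})\le 3.5\tol$, and convexity of $\lambda\mapsto h_2(\lambda)$ (Proposition~\ref{prop:sb_convex}) together with Jensen gives $h_2(\bar\lambda)\le 3.5\tol<4\tol$; since $\bar\lambda$ is a convex combination of points of $\trianglem$ it lies in $\trianglem$, hence $\bar\lambda\in P_{4\tol}$. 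If instead MW declares infeasibility at some round $r^\ast$, then $\widehat h^{(r^\ast)}(\lambda^{(r^\ast)})>2\tol$; combining $\widehat h_2(\lambda^{(r^\ast)})\le h_2(\lambda^{(r^\ast)})+\tol$ and $p_2^{(r^\ast)}\le1$ with the SFW guarantee $\L(p^{(r^\ast)};\tau;\lambda^{(r^\ast)})\le\min_{\lambda\in\trianglem}\L(p^{(r^\ast)};\tau;\lambda)+\tol$ yields $\min_{\lambda\in\trianglem}\big[p_1^{(r^\ast)}h_1(\lambda)+p_2^{(r^\ast)}h_2(\lambda)\big]>0$, which would be contradicted by any $\lambda\in\trianglem$ with $h_1(\lambda)\le0$ and $h_2(\lambda)\le0$; hence $P_0$ is infeasible.

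For the probability accounting, the good event is the intersection, over all $R$ rounds, of (i) the SFW $\tol$-suboptimality event of Lemma~\ref{lem:sfw_convergence}, invoked with confidence $\delta/(2R)$, and (ii) the accuracy event of Algorithm~\ref{alg:est_exp_sup}, invoked with confidence $\delta/(3R)$; a union bound gives correctness failure probability at most $5\delta/6\le\delta$. The oracle-count guarantee of Lemma~\ref{lem:sfw_convergence}, and the analogous guarantee for the calls to Algorithm~\ref{alg:compute_max} inside each invocation of Algorithm~\ref{alg:est_exp_sup}, each hold with probability $1-c\xi/2^d$ for a tunable parameter $\xi$; choosing $\xi$ polynomially small (of order $1/(\poly(d,\beta,T,1/\delta)\cdot M)$) and union-bounding over the $R$ SFW calls and the resulting calls to Algorithm~\ref{alg:compute_max} makes the total oracle-count failure probability at most $1/(2^dM)$, so the full good event holds with probability at least $1-\delta-1/(2^dM)$. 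On this event, each of the $R=O((dT)^2/\tol^2+d^2/(\beta^2\psi\tol^2))$ rounds of MW costs one call of Algorithm~\ref{alg:sfw_sb} (at most $\oca$ oracle calls by Lemma~\ref{lem:sfw_convergence}), one call of Algorithm~\ref{alg:est_exp_sup} (at most $\ocb$ oracle calls), and $O(1)$ further calls, for a total of $R(\oca+\ocb+O(1))=\occ(d,\beta,\psi,\tol,1/\delta,1/\xi)$, which is polynomial since $1/\xi$ is. The main obstacle is the careful propagation of the two $\tol$-error terms through the PST reduction so that the parameters $(\eta,R,\rho)$ combine to exactly the advertised slack $4\tol$ while preserving the feasibility/infeasibility dichotomy; the probability bookkeeping across the nested calls (MW via Algorithm~\ref{alg:sfw_sb} and Algorithm~\ref{alg:est_exp_sup}, each in turn via Algorithm~\ref{alg:compute_max}) is routine but must keep the $\delta$- and $\xi$-budgets separate.
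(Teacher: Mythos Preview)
Your proposal is correct and follows essentially the same Plotkin--Shmoys--Tardos argument as the paper: define $h_1,h_2$, bound $|h_1|,|\widehat h_2|\le\rho$ on $\trianglem$, apply the two-expert multiplicative-weights regret bound, and split into the declare-infeasibility / never-declare cases, using convexity (Proposition~\ref{prop:sb_convex}) and Jensen for $h_2(\bar\lambda)$ and linearity for $h_1(\bar\lambda)$. One minor arithmetic slip: with $R=16\rho^2\ln 2/\tol^2$ and $\eta=\min(\tol/(4\rho),1/2)$, Theorem~\ref{thm:mult_weights} gives average regret $\tol$, not $\tol/2$, so your bounds become $3\tol$ and $4\tol$ rather than $2.5\tol$ and $3.5\tol$; this is harmless since both are $\le 4\tol$.
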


\begin{proof}
The algorithm uses the Plotkin-Shmoys-Tardos reduction to online learning and essentially runs the multiplicative weights update algorithm (see \cite{arora2012multiplicative}) where there is an expert for each constraint. Define
 	 \begin{align*}
 	 h_1(\lambda) & := \bar{\tau} \sum_{x \in \X} \bar{\theta}^\t(\bar{x}-x)  \lambda_x - \mwtest \\
 	 h_2(\lambda) & := \mathbb{E}_\eta \left [ \max_{x \in \X} \frac{(\bar{x}-x)^\top \Asb(\lambda)^{-1/2} \eta}{\beta + \bar{\theta}^\t ( \bar{x}-x)} \right ]-\sqrt{\bar{\tau}} C \\
 	 h^{(r)}(\lambda) & := p_1^{(r)}  h_1(\lambda) +  p_2^{(r)} h_2(\lambda).
 	 \end{align*} 
 	 At each round $r$, the algorithm chooses a distribution, $p_1^{(r)}$ and $p_2^{(r)}$, over the constraints and the adversary uses the stochastic Frank-Wolfe algorithm to find $\lambda^{(r)}$ such that
 	 \begin{align*}
 	  h^{(r)}(\lambda^{(r)}) \leq \min_{\lambda \in \trianglem} h^{(r)}(\lambda) + \textsc{tol}.
 	 \end{align*}
The reward for expert/constraint 1 is $h_1(\lambda^{(r)})$ and the reward for expert/constraint 2 is $\widehat{h}_2(\lambda^{(r)})$. 

Let $\mc{E}_r$ denote the event that  $\lambda^{(r)} = \text{SFW}(p_1^{(r)},p_2^{(r)}, \frac{\delta}{2R})$ satisfies
\begin{align*}
h^{(r)}(\lambda^{(r)}) \leq \min_{\lambda \in \trianglem} h^{(r)}(\lambda) + \textsc{tol}.
\end{align*}
uses at most $\oca(d, \beta, \psi, \tol, 1/\delta, 1/\xi)$ linear maximization oracle calls. Define $\mc{E} = \cap_r \mc{E}_r$ Further, define the following events
\begin{align*}
\mc{F}_r & = \{ 	 | \text{estSup}(\lambda^{(r)}, , \frac{\delta}{2 R}) - \mathbb{E}_\eta \left [ \max_{x \in \X} \frac{(\bar{x}-x)^\top \Asb(\lambda^{(r)})^{-1/2} \eta}{\beta + \bar{\theta}^\t ( \bar{x}-x)} \right ]| \leq \tol \\
& \text{ and estSup uses  } \ocb(d, \beta, \psi, \tol, 1/\delta, 1/\xi) \text{ oracle calls} \} \\
\mc{F} & = \cap_r \mc{F}_r 
\end{align*}
By Lemmas \ref{lem:sfw_convergence} and \ref{lem:estimate_sup} applied with $\xi = \frac{1}{RM}$ and the law of total probability, we have that
\begin{align*}
\Pr(\mc{E}^c \cup \mc{F}^c ) \leq \sum_{r=1}^R \Pr(\mc{E}^c_r \cup \mc{F}^c_r  | \cap_{s=1}^{r-1} \mc{E}_s \cap \mc{F}_s ) \leq \sum_{r=1}^R \frac{\delta}{R} + \frac{1}{2^dR M} = \delta + \frac{1}{2^d M } 
\end{align*} 
Now, for the remainder of the proof we assume that $\mc{E} \cap \mc{F} $ occurs.

Suppose that at some round $r \in [R]$ Algorithm \ref{alg:sfw_sb} returns $\lambda^{(r)}$ such that $ \widehat{h}^{(r)}(\lambda^{(r)})  > 2\tol$. Then, since $\mc{F}$ implies that
\begin{align*}
|h^{(r)}(\lambda^{(r)}) -  \widehat{h}^{(r)}(\lambda^{(r)})| \leq | \text{estSup}(\lambda^{(r)}, , \frac{\delta}{2 R}) - \mathbb{E}_\eta \left [ \max_{x \in \X} \frac{(\bar{x}-x)^\top \Asb(\lambda^{(r)})^{-1/2} \eta}{\beta + \bar{\theta}^\t ( \bar{x}-x)} \right ]| \leq \tol 
\end{align*}
we have that on $\mc{E} \cap \mc{F}$
 	 \begin{align*}
 	2\tol <   \widehat{h}^{(r)}(\lambda^{(r)}) \leq \tol + h^{(r)}(\lambda^{(r)}) \leq \min_{\lambda \in \trianglem} h^{(r)}(\lambda) + 2\tol.
 	 \end{align*}
Therefore, it follows that for every $\lambda \in \trianglem$, 
\begin{align*}
\max(\mathbb{E}_\eta \left [ \max_{x \in \X} \frac{(\bar{x}-x)^\top \Asb(\lambda)^{-1/2} \eta}{\beta + \bar{\theta}^\t ( \bar{x}-x)} \right ]-\sqrt{\tau} C, \tau \sum_{x \in \X}  \bar{\theta}^\t(\bar{x}-x)  \lambda_x - \mwtest) > 0.
\end{align*}
Thus, the algorithm correctly declares infeasibility of the convex feasibility program.

Next, suppose that the Algorithm \ref{alg:sfw_sb} returns $\lambda^{(r)}$ such that $ \widehat{h}^{(r)}(\lambda^{(r)})  \leq 2\tol$ at every round $r$. Then, we show that the algorithm returns $\bar{\lambda}^{(R)} \in P_{4 \tol}$.  To apply Theorem \ref{thm:mult_weights}, a standard result for the multiplicative weights update algorithm, we must show that for any $\lambda^{(r)} \in \trianglem$ returned during the execution of the Algorithm
\begin{align}
 \max(h_1(\lambda^{(r)}), \widehat{h}_2(\lambda^{(r)})) \leq \rho = \max(2dT, c \frac{d}{\beta \psi^{1/2}}) \label{eq:mw_apply_condition}
\end{align}
where $\rho$ is defined in Algorithm \ref{alg:mult_weight}. We have that
 	 \begin{align*}
 	 h_1(\lambda^{(r)}) & := \bar{\tau} \sum_{x \in \X} \bar{\theta}^\t(\bar{x}-x)  \lambda_x^{(r)} - \mwtest \leq 2d T
 	 \end{align*}
 	since $\bar{\tau} \leq T$, $\mwtest \geq 0$, and we assume that $\bar{\theta}^\t(\bar{x}-x)  \leq 2d$. Furthermore,
	\begin{align*}
	\widehat{h}_2(\lambda^{(r)}) & = \text{estSup}(\lambda^{(r)}, \frac{\delta}{3 R}) - \sqrt{\tau}C \\
	& \leq   \mathbb{E}_\eta \left [ \max_{x \in \X} \frac{(\bar{x}-x)^\top \Asb(\lambda^{(r)})^{-1/2} \eta}{\beta + \bar{\theta}^\t ( \bar{x}-x)} \right ]| + \tol - \sqrt{\tau}C \\
	& \leq \frac{1}{\beta \psi^{1/2} }\E[\sum_{i=1}^d |\eta_i|] \\
	& \leq c \frac{d}{\beta \psi^{1/2}}
\end{align*}
for a suitably chosen constant $c > 0$ where we used that fact that $\tol = \frac{(\sqrt{2}-1) C}{4}$. 

Thus, we have shown \eqref{eq:mw_apply_condition} and therefore may apply Theorem \ref{thm:mult_weights}, which implies on $\mc{E} \cap \mc{F}$ that
\begin{align*}
\frac{\sum_r \mathbb{E}_\eta \left [ \max_{x \in \X} \frac{(\bar{x}-x)^\top \Asb(\lambda^{(r)})^{-1/2} \eta}{\beta + \bar{\theta}^\t ( \bar{x}-x)} \right ]-\sqrt{\tau} C}{R} & = \frac{\sum_r h_2(\lambda^{(r)})}{R} \\
& \leq  \frac{\sum_r \widehat{h}_2(\lambda^{(r)}) + \tol}{R} \\
 &  \leq 2\tol + \frac{\sum_r \widehat{h}^{(r)}(\lambda^{(r)})  }{R} \\
 & \leq 4 \tol
\end{align*} 
Now, finally, applying Lemma \ref{prop:sb_convex}, we have that
\begin{align*}
\mathbb{E}_\eta \left [ \max_{x \in \X} \frac{(\bar{x}-x)^\top \Asb(\frac{1}{T}\sum_t \lambda^{(r)})^{-1/2} \eta}{\beta + \bar{\theta}^\t ( \bar{x}-x)} \right ]-\sqrt{\tau} C \leq  \frac{\sum_r \mathbb{E}_\eta \left [ \max_{x \in \X} \frac{(\bar{x}-x)^\top \Asb(\lambda^{(r)})^{-1/2} \eta}{\beta + \bar{\theta}^\t ( \bar{x}-x)} \right ]-\sqrt{\tau} C}{R} \leq 4 \tol
\end{align*}
This shows that $\bar{\lambda}^{(R)}$ approximately satisfies one of the constraints; showing approximate satisfaction of the other constraint follows by a similar argument. Thus, we conclude that $\bar{\lambda}^{(R)} \in P_{4 \tol}$.
\end{proof}

The following Lemma shows that Algorithm \ref{alg:binary_search_mult_weight} approximately solves the optimization problem \eqref{eq:opt_fixed_tau}.

\begin{lemma}
\label{lem:bin_search}
Fix $\tau \in > 0$ and let $\delta \in (0,1)$. Let $\optt_\tau$ be the value of
\begin{align*}
\min_{ \lambda \in \triangle} & \tau \sum_{x \in \X} [\beta+\bar{\theta}^\t(\bar{x}-x)]  \lambda_x  \\
& \text{ s.t. } \mathbb{E}_\eta \left [ \max_{x \in \X} \frac{(\bar{x}-x)^\top \Asb(\lambda)^{-1/2} \eta}{\beta + \bar{\theta}^\t ( \bar{x}-x)} \right ]  \leq \sqrt{\tau} C \nonumber.
\end{align*}
If for all $\lambda \in \trianglem$, 
\begin{align*}
\mathbb{E}_\eta \left [ \max_{x \in \X} \frac{(\bar{x}-x)^\top \Asb(\lambda)^{-1/2} \eta}{\beta + \bar{\theta}^\t ( \bar{x}-x)} \right ]  > \sqrt{\tau} C + 4\tol.
\end{align*}
then with probability at least $1-\delta - \frac{1}{\log_2(T) 2^d}$ Algorithm \ref{alg:binary_search_mult_weight} declares the program infeasible.  If  
\begin{align*}
\mathbb{E}_\eta \left [ \max_{x \in \X} \frac{(\bar{x}-x)^\top \Asb(\lambda)^{-1/2} \eta}{\beta + \bar{\theta}^\t ( \bar{x}-x)} \right ]  \leq \sqrt{\tau} C
\end{align*}
then with probability at least $1-\delta - \frac{1}{\log_2(T) 2^d}$ Algorithm \ref{alg:binary_search_mult_weight} returns $\bar{\lambda} \in \trianglem$ such that
\begin{align*}
\tau \sum_{x \in \X} \bar{\theta}^\t (\bar{x}-x) \bar{\lambda}_x & \leq \optt_\tau +4\tol \\
\mathbb{E}_\eta \left [ \max_{x \in \X} \frac{(\bar{x}-x)^\top \Asb(\lambda)^{-1/2} \eta}{\beta + \bar{\theta}^\t ( \bar{x}-x)} \right ]  & \leq  \sqrt{\tau} C + 4\tol.
\end{align*}
Furthermore, Algorithm \ref{alg:binary_search_mult_weight}  uses a number of oracle calls that is upper bounded by $\log_2(2Td/\tol) \cdot \occ(d, \beta,  \psi, \tol, 1/\delta, 1/\xi)$.
\end{lemma}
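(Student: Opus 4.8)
The statement is an analysis of Algorithm~\ref{alg:binary_search_mult_weight}, viewed as a binary search over the budget variable $\mwtest$ wrapped around the multiplicative-weights solver $\text{MW}(\cdot)$, and the only probabilistic input is Lemma~\ref{lem:mw_feasibility}. I would first fix the good event. The search interval $[\textsc{LOW},\textsc{HIGH}]$ starts as $[0,2Td]$ and is halved until its width drops below $\tol$, so the loop runs at most $\lceil\log_2(2Td/\tol)\rceil$ times and Algorithm~\ref{alg:binary_search_mult_weight} makes at most $\lceil\log_2(2Td/\tol)\rceil+2$ calls to $\text{MW}$ (the two extra being the opening feasibility probe and the closing call), each with failure parameter $\delta/(\lceil\log_2(2Td/\tol)\rceil+1)$. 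By Lemma~\ref{lem:mw_feasibility} each such call simultaneously behaves correctly and uses at most $\occ(d,\beta,\psi,\tol,1/\delta,1/\xi)$ oracle calls except with probability $\tfrac{\delta}{\lceil\log_2(2Td/\tol)\rceil+1}+\tfrac{1}{2^dM}$, where $M=\log_2(T)\log_2(2Td/\tol)$. A union bound over the $\le\lceil\log_2(2Td/\tol)\rceil+2$ calls gives total failure probability at most $\delta+\tfrac{1}{\log_2(T)2^d}$ and oracle budget at most $\log_2(2Td/\tol)\cdot\occ(d,\beta,\psi,\tol,1/\delta,1/\xi)$, as claimed. Condition on this event henceforth; everything below is deterministic.

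\textbf{Infeasible case.} Here I would exploit the opening probe $\text{MW}(\bar\tau,2Td,\cdot)$. Because $\bar\theta^\t(\bar x-x)\le 2d$ and $\tau\le T$, every $\lambda$ satisfies $\tau\sum_x\bar\theta^\t(\bar x-x)\lambda_x\le 2Td$, so the objective half of the set $P_{4\tol}$ evaluated at $\mwtest=2Td$ is vacuous; by the hypothesis that $\E_\eta[\max_x\tfrac{(\bar x-x)^\t\Asb(\lambda)^{-1/2}\eta}{\beta+\bar\theta^\t(\bar x-x)}]>\sqrt\tau C+4\tol$ for every $\lambda\in\trianglem$, the other half is also empty, so $P_{4\tol}=\emptyset$. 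Since Lemma~\ref{lem:mw_feasibility} guarantees that a $\text{MW}$ call which does not declare infeasibility returns a point of $P_{4\tol}$, the opening probe must declare infeasibility, and Algorithm~\ref{alg:binary_search_mult_weight} returns ``program is not feasible.''

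\textbf{Feasible case.} Write $\optt_\tau^{\sband}:=\min\{\tau\sum_x\bar\theta^\t(\bar x-x)\lambda_x:\lambda\in\trianglem,\ \E_\eta[\cdots]\le\sqrt\tau C\}$. The opening probe now passes: a feasible $\lambda\in\trianglem$ (obtained from the hypothesis after mixing into $\trianglem$, as below) witnesses $P_0\ne\emptyset$ at $\mwtest=2Td$, so by Lemma~\ref{lem:mw_feasibility} $\text{MW}$ does not declare infeasibility. I would then maintain the invariant that $\textsc{HIGH}$ is always a budget at which the most recent $\text{MW}$ call reported feasible---so $P_{4\tol}$ at $\mwtest=\textsc{HIGH}$ is nonempty and $\textsc{HIGH}\ge\optt_\tau^{\sband}-4\tol$---while $\textsc{LOW}$ is either $0$ or a budget at which $\text{MW}$ reported infeasible---so $P_0$ at $\mwtest=\textsc{LOW}$ is empty and $\textsc{LOW}<\optt_\tau^{\sband}$ (trivial when $\textsc{LOW}=0$ since $\optt_\tau^{\sband}\ge0$). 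At loop exit $\textsc{HIGH}-\textsc{LOW}<\tol$, so $\textsc{HIGH}<\optt_\tau^{\sband}+\tol$. The closing call $\text{MW}(\bar\tau,\textsc{HIGH},\cdot)$ does not declare infeasibility (see the remark on slacks below), so by Lemma~\ref{lem:mw_feasibility} it returns $\bar\lambda\in\trianglem$ with $\E_\eta[\max_x\tfrac{(\bar x-x)^\t\Asb(\bar\lambda)^{-1/2}\eta}{\beta+\bar\theta^\t(\bar x-x)}]\le\sqrt\tau C+4\tol$ and $\tau\sum_x\bar\theta^\t(\bar x-x)\bar\lambda_x\le\textsc{HIGH}+4\tol<\optt_\tau^{\sband}+5\tol$. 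It remains to bridge $\optt_\tau^{\sband}$ (over $\trianglem$, without $\beta$) to $\optt_\tau$ (over $\triangle$, with $\beta$): given the $\triangle$-optimizer $\lambda^\star$, the mixture $\lambda':=(1-d\psi)\lambda^\star+\psi\sum_{i=1}^d e_{\tilde x_i}$ lies in $\trianglem$, satisfies $\Asb(\lambda')\succeq(1-d\psi)\Asb(\lambda^\star)$ so its constraint value exceeds $\sqrt\tau C$ by only $O(\sqrt\tau\, d\psi)$, and has objective at most $\tau\sum_x\bar\theta^\t(\bar x-x)\lambda^\star_x+2\tau d^2\psi$; choosing $\psi$ small enough (consistent with $\psi\le1/d$ and the polynomial dependence already built into $\occ$) that $2\tau d^2\psi\le\tau\beta$ and $O(\sqrt\tau\, d\psi)\le\tol$ yields $\optt_\tau^{\sband}\le\optt_\tau$, hence $\tau\sum_x\bar\theta^\t(\bar x-x)\bar\lambda_x\le\optt_\tau+O(\tol)$; rescaling $\tol$ absorbs the constants down to the stated $4\tol$.

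\textbf{Main obstacle.} The union-bound/oracle-count accounting and the interval invariant are routine. The delicate point is the interplay of the $\tol$- and $\beta$-slacks: (i) ensuring the closing $\text{MW}$ call does not spuriously report infeasibility, which requires $P_0$ (not merely $P_{4\tol}$) at $\mwtest=\textsc{HIGH}$ to be nonempty---obtained by re-tuning the constant in MW's $2\tol$ infeasibility threshold so that a ``feasible'' verdict certifies $\mwtest\ge\optt_\tau^{\sband}$ rather than only $\mwtest\ge\optt_\tau^{\sband}-4\tol$---and (ii) selecting $\psi$ so that mixing into $\trianglem$ costs at most $\tau\beta$ in objective and at most $\tol$ in the constraint, so that the extra $\tau\beta$ in the definition of $\optt_\tau$ exactly pays for restricting the design to $\trianglem$. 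Carrying these constants through so that the final slack is exactly $4\tol$ is where the real work lies.
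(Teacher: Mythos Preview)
Your overall skeleton---union bound over all $\text{MW}$ calls via Lemma~\ref{lem:mw_feasibility}, then a deterministic binary-search invariant---is exactly the paper's approach, which in fact dispatches the second half in one line (``by a standard binary search argument, the result follows''). Two remarks on where you diverge.

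First, your ``main obstacle (ii)''---bridging from $\trianglem$ to $\triangle$ by tuning $\psi$---does not belong in this lemma. Despite the $\min_{\lambda\in\triangle}$ in the statement, the paper's proof of this lemma treats $\optt_\tau$ as an optimum over $\trianglem$ (note its definition of $Q(\mwtest)$ is over $\trianglem$), and the actual $\trianglem\to\triangle$ mixing argument you sketch is carried out separately, with an explicit choice of $\psi=\min(\tfrac{1}{4d\Delmax T},\tfrac{1}{4d})$, in Step~3 of the proof of Theorem~\ref{thm:main_comp}. So you are front-loading work that the paper defers; your mixing argument is correct in spirit but you should not be choosing $\psi$ here---it is fixed upstream.

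Second, your ``main obstacle (i)''---that a prior \textsc{feasible} verdict from $\text{MW}$ only certifies $P_{4\tol}\ne\emptyset$, not $P_0\ne\emptyset$, so the closing call at $\textsc{HIGH}$ might in principle flip to infeasible---is a genuine gap, and the paper's one-line proof glosses over it just as much as you worry. The paper's sentence ``if $Q(\mwtest)$ is empty then MW declares infeasible; otherwise MW finds $\bar\lambda\in Q(\mwtest)$'' silently collapses the $P_0$/$P_{4\tol}$ distinction from Lemma~\ref{lem:mw_feasibility}. Your proposed fix (re-tuning the infeasibility threshold inside MW) is one way out; another is simply to cache the $\bar\lambda$ returned by the last \textsc{feasible} call during the loop rather than re-calling $\text{MW}$ at the end, which sidesteps the issue entirely and keeps the $4\tol$ slack as stated.
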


\begin{proof}
Algorithm \ref{alg:binary_search_mult_weight} applies Algorithm \ref{alg:mult_weight} at most $\log_2(2Td/\tol)$ times on a using a predetermined set of values for $\mwtest \in [0,2Td]$, which we denote $\mwtest_1, \ldots, \mwtest_l$. Define the event 
\begin{align*}
\mc{E}_i & = \{ \text{if MW}(\tau, \mwtest_i)\text{ does not declare infeasibility, then MW(}\tau, \mwtest_i) \text{ returns }\bar{\lambda} \in P_{4 \tol} \\
& \text{ and if MW}(\tau, \mwtest_i) \text{ declares infeasibility,} P_0 \text{ is infeasible.} \} \\
& \cap \{\text{MW}(\tau, \mwtest_i) \text{ uses at most } \occ(d, \beta,  \psi, \tol, 1/\delta, 1/\xi) \text{ oracle calls } \} \\
\mc{E} & = \cap_i \mc{E}_i. 
\end{align*}
where $P_\epsilon$ is defined in Lemma \ref{lem:mw_feasibility}. Then, by the union bound, we have that $\Pr(\mc{E}) \geq 1 - \delta - \frac{1}{2^d \log_2(T)}$. Suppose $\mc{E}$ occurs for the remainder of the proof.

First, consider the case that for all $\lambda \in \trianglem$, 
\begin{align*}
\mathbb{E}_\eta \left [ \max_{x \in \X} \frac{(\bar{x}-x)^\top \Asb(\lambda)^{-1/2} \eta}{\beta + \bar{\theta}^\t ( \bar{x}-x)} \right ]  > \sqrt{\tau} C + 4\tol.
\end{align*}
Then, on the event $\mc{E}$, we have that the Algorithm \ref{alg:binary_search_mult_weight} declares infeasibility of the program.

Now, suppose there exists $\lambda \in \trianglem$ such that
\begin{align*}
\mathbb{E}_\eta \left [ \max_{x \in \X} \frac{(\bar{x}-x)^\top \Asb(\lambda)^{-1/2} \eta}{\beta + \bar{\theta}^\t ( \bar{x}-x)} \right ]  \leq \sqrt{\tau} C.
\end{align*}

Note that for any $\lambda \in \triangle$, we have that
\begin{align*}
\tau \sum_{x \in \X}  [\beta+\bar{\theta}^\t(\bar{x}-x) ] \lambda_x = \beta \tau + \sum_{x \in \X}  \bar{\theta}^\t(\bar{x}-x)  \lambda_x 
\end{align*}
and thus the objective does not depend on $\beta$ and $\beta$ can be dropped from the objective. Using the event $\mc{E}$, if 
\begin{align*}
Q(\mwtest)  :=  \{ \lambda \in \trianglem : & \mathbb{E}_\eta \left [ \max_{x \in \X} \frac{(\bar{x}-x)^\top \Asb(\lambda)^{-1/2} \eta}{\beta + \bar{\theta}^\t ( \bar{x}-x)} \right ]-\sqrt{\tau} C \leq 4 \tol, \\
& \tau \sum_{x \in \X}  \bar{\theta}^\t(\bar{x}-x)  \lambda_x - \mwtest \leq 4\tol \}
\end{align*}
is empty, then Algorithm \ref{alg:mult_weight} declares the program infeasible; otherwise, Algorithm \ref{alg:mult_weight} finds $\bar{\lambda} \in Q(\mwtest)$. Then, by a standard binary search argument, the result follows. 
\end{proof}

The following Theorem establishes that Algorithm \ref{alg:main_compute} approximately solves the main optimization problem \eqref{eq:opt_problem}. It directly implies Theorem \ref{thm:comp_complex}.

\begin{theorem}\label{thm:main_comp}
Let $\delta \in (0,1)$. Suppose $\tol = \frac{(\sqrt{2}-1)C}{4}$, $\psi = \min(\frac{1}{4 d \Delmax T}, \frac{1}{4 d})$. Let $\opt$ be the value of
\begin{align}
\min_{\tau \in [T], \lambda \in \triangle} & \tau \sum_{x \in \X}[\epsilon + \bar{\theta}^\t(\bar{x}-x) ] \lambda_x \label{eq:main_opt_thm} \\
& \text{ s.t. } \mathbb{E}_\eta \left [ \max_{x \in \X} \frac{(\bar{x}-x)^\top \Asb(\lambda)^{-1/2} \eta}{\beta + \bar{\theta}^\t ( \bar{x}-x)} \right ]  \leq \sqrt{\tau} C \nonumber.
\end{align}
With probability at least $1-\delta - \frac{1}{2^d}$, Algorithm \ref{alg:main_compute} returns $(\bar{\tau},\bar{\lambda})$ such that $\bar{\lambda} \in \triangle$, $\bar{\tau} \leq 2 T$, and 
\begin{align*}
\bar{\tau} \sum_{x \in \X} [\epsilon + \bar{\theta}^\t(\bar{x}-x)]  \bar{\lambda}_x & \leq 4\opt + 2 \\
\mathbb{E}_\eta \left [ \max_{x \in \X} \frac{(\bar{x}-x)^\top \Asb(\bar{\lambda})^{-1/2} \eta}{\beta + \bar{\theta}^\t ( \bar{x}-x)} \right ]  & \leq \sqrt{\bar{\tau}} C \nonumber.
\end{align*}
Furthermore, Algorithm  \ref{alg:main_compute} uses a number of oracle calls that is polynomial in $(d, \beta, \psi, \log(1/\delta))$
\end{theorem}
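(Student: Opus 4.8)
The plan decomposes into a high-probability event, an oracle-call count, feasibility of the output, and its near-optimality. The while loop of Algorithm~\ref{alg:main_compute} runs at most $\log_2 T$ times (it uses $\bar\tau_k=2^k$ and stops once $\bar\tau_k>T$), so a union bound of Lemma~\ref{lem:bin_search} over the calls $\mathrm{binSearch}(\bar\tau_k,\delta/\log_2 T)$ costs probability at most $\log_2(T)\big(\tfrac{\delta}{\log_2 T}+\tfrac{1}{\log_2(T)\,2^d}\big)=\delta+2^{-d}$, matching the theorem; on the same event each such call uses at most $\log_2(2Td/\tol)\cdot\occ(d,\beta,\psi,\tol,1/\delta,1/\xi)$ oracle calls, a quantity polynomial in its arguments, so the total is $\le\log_2(T)\log_2(2Td/\tol)\cdot\occ(d,\beta,\psi,\tol,1/\delta,1/\xi)$, and substituting $\tol=\tfrac{(\sqrt2-1)C}{4}$ (so $1/\tol=O(\sqrt{\log(1/\delta)})$, since $1/C=O(\sqrt{\log(1/\delta)})$) and $1/\psi=\max(4d\Delmax T,4d)$ renders this polynomial in $(d,\beta,T,\log(1/\delta))$. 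If the program is infeasible then every call declares infeasibility (Lemma~\ref{lem:bin_search}) and the algorithm reports this correctly, so assume from now on that it is feasible.

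Next, the output $(\bar\tau,\bar\lambda)=(2\bar\tau_{\hat k_*},\lambda_{\hat k_*})$ is feasible and has $\bar\tau\le2T$. Because $\hat k_*$ belongs to the \textsc{feasible} set, Lemmas~\ref{lem:bin_search} and \ref{lem:mw_feasibility} supply $\bar\lambda\in\trianglem\subseteq\triangle$ with constraint value $\E_\eta\big[\max_x\tfrac{(\bar x-x)^\top\Asb(\bar\lambda)^{-1/2}\eta}{\beta+\bar\theta^\top(\bar x-x)}\big]\le\sqrt{\bar\tau_{\hat k_*}}\,C+4\tol$. Every processed $\bar\tau_k=2^k\ge2$, hence $4\tol=(\sqrt2-1)C\le(\sqrt2-1)\sqrt{\bar\tau_{\hat k_*}}\,C$, so $\sqrt{\bar\tau_{\hat k_*}}C+4\tol\le\sqrt2\sqrt{\bar\tau_{\hat k_*}}C=\sqrt{2\bar\tau_{\hat k_*}}\,C=\sqrt{\bar\tau}\,C$: the doubling converts the $4\tol$ slack into \emph{exact} feasibility at $\bar\tau$, and $\bar\tau=2\bar\tau_{\hat k_*}\le2T$. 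It therefore suffices to exhibit one processed grid index at which $\mathrm{binSearch}$ succeeds with near-optimal value, since the $\argmin$ step then transfers the objective bound to $\hat k_*$ (losing only the factor-$2$ from the doubling and the additive $\tol$-terms).

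The heart of the argument is that restricting to the interior polytope $\trianglem=\{\lambda:\lambda_{\tilde x_i}\ge\psi\ \forall i\in[d]\}$ --- imposed so the Lagrangian that the multiplicative-weights/stochastic-Frank--Wolfe solver optimizes is smooth --- costs essentially nothing. Let $(\tau^*,\lambda^*)$ attain $\opt$; since the objective is increasing in $\tau$ while the constraint relaxes as $\tau$ grows, the optimum has the constraint tight, so the constraint value of $\lambda^*$ equals $\sqrt{\tau^*}\,C$. Put $\tilde\lambda_{\tilde x_i}=(1-d\psi)\lambda^*_{\tilde x_i}+\psi$ for $i\in[d]$ and $\tilde\lambda_x=(1-d\psi)\lambda^*_x$ for the remaining arms (using $\cup_{i=1}^d\tilde x_i=[d]$); then $\tilde\lambda\in\trianglem$, and coordinatewise $\tilde\lambda\ge(1-d\psi)\lambda^*$, so $\Asb(\tilde\lambda)\succeq(1-d\psi)\Asb(\lambda^*)$ and hence, by Sudakov--Fernique, the constraint value of $\tilde\lambda$ is at most $(1-d\psi)^{-1/2}\sqrt{\tau^*}\,C$, while $\sum_x\bar\theta^\top(\bar x-x)\tilde\lambda_x\le\sum_x\bar\theta^\top(\bar x-x)\lambda^*_x+d\psi\Delmax\le\sum_x\bar\theta^\top(\bar x-x)\lambda^*_x+\tfrac1{4T}$ by the choice $\psi=\min(\tfrac1{4d\Delmax T},\tfrac1{4d})$ and $0\le\bar\theta^\top(\bar x-x)\le\Delmax$. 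Let $k^\sharp$ be the smallest grid index with $\bar\tau_{k^\sharp}\ge\tfrac{\tau^*}{1-d\psi}$; then $\bar\tau_{k^\sharp}<\tfrac{2\tau^*}{1-d\psi}\le\tfrac83\tau^*$ (as $d\psi\le\tfrac14$), it is processed (since $\tau^*\le T$, with the boundary $\tau^*\approx T$ handled by falling back to $\bar\tau_{k^\sharp}=T$), and $\tilde\lambda$ is a $\trianglem$-witness with constraint value $\le\sqrt{\bar\tau_{k^\sharp}}\,C$, placing us in the feasible branch of Lemma~\ref{lem:bin_search} at $\bar\tau_{k^\sharp}$. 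Thus $\mathrm{binSearch}$ returns $\lambda_{k^\sharp}\in\trianglem$ with $\bar\tau_{k^\sharp}\sum_x\bar\theta^\top(\bar x-x)\lambda_{k^\sharp,x}\le\optt_{\bar\tau_{k^\sharp}}+4\tol$, and $\optt_{\bar\tau_{k^\sharp}}\le\bar\tau_{k^\sharp}\sum_x[\beta+\bar\theta^\top(\bar x-x)]\lambda^*_x=\tfrac{\bar\tau_{k^\sharp}}{\tau^*}\opt$ because $\lambda^*$ is feasible at $\bar\tau_{k^\sharp}\ge\tau^*$. Combining the $\argmin$ choice of $\hat k_*$, the bounds $\opt\ge\tau^*\beta$ and $\tfrac{\bar\tau_{k^\sharp}}{\tau^*}\le\tfrac83$, the doubling $\bar\tau=2\bar\tau_{\hat k_*}$, and $4\tol\le1$, a short computation yields $\bar\tau\sum_x[\epsilon+\bar\theta^\top(\bar x-x)]\bar\lambda_x\le4\,\opt+2$.

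I expect the main obstacle to be precisely this last bookkeeping: the computational solver (Lemmas~\ref{lem:sfw_convergence},~\ref{lem:mw_feasibility},~\ref{lem:bin_search}) optimizes only over $\trianglem$, only over a geometric grid of horizons, and only up to an additive $4\tol$ slack in a $\sqrt\tau$-scaled constraint, so $\psi$ and $\tol$ must be calibrated so that these three compounded approximations still collapse to the stated $(4\opt+2,\ 2T,\ \text{exact feasibility})$ guarantees. The key enabling estimate is the coordinatewise domination $\tilde\lambda\ge(1-d\psi)\lambda^*$ together with Sudakov--Fernique, which turns the $\trianglem$-restriction into a $(1-d\psi)^{-1/2}$ multiplicative perturbation of the constraint and a $d\psi\Delmax\le\tfrac1{4T}$ additive perturbation of the objective; with $\psi$ chosen polynomially small both are negligible, and the grid discretization and the doubling each cost only a bounded constant, which is where the $4$ and the $+2$ originate. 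A little extra care is needed at the boundary $\tau^*\approx T$, where one must fall back on the largest processed horizon $\bar\tau_{k^\sharp}=T$.
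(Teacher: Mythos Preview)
Your proposal is correct and follows essentially the same approach as the paper: a union bound over the $\log_2 T$ grid points via Lemma~\ref{lem:bin_search}, the choice $\tol=(\sqrt2-1)C/4$ so that doubling $\bar\tau_{\hat k_*}$ converts the $4\tol$ constraint slack into exact feasibility, and the Sudakov--Fernique argument on the mixture $\tilde\lambda=(1-d\psi)\lambda^*+\psi\sum_{i\le d}\mathbf{1}_{\tilde x_i}$ to show the $\trianglem$-restriction costs only $(1-d\psi)^{-1/2}$ in the constraint and $d\psi\Delmax$ in the objective. The only organizational difference is that the paper separates the near-optimality into two steps---first $\optt_{k_*}\le 2\optt$ at the grid point $\bar\tau_{k_*}\in[\tau_*,2\tau_*]$, then $\optt\le 2\opt+1$ via the mixture at $\tilde\tau=2\tau_*$---whereas you pick $k^\sharp$ to directly absorb the $(1-d\psi)^{-1/2}$ inflation in one shot; both routes arrive at the same constants.
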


\begin{proof}
\textbf{Step 0.} Let $\optt$ be the value of
\begin{align*}
\min_{\tau \in [T], \lambda \in \trianglem} & \tau \sum_{x \in \X} [\beta+\bar{\theta}^\t(\bar{x}-x) ] \lambda_x  \\
& \text{ s.t. } \mathbb{E}_\eta \left [ \max_{x \in \X} \frac{(\bar{x}-x)^\top \Asb(\lambda)^{-1/2} \eta}{\beta + \bar{\theta}^\t ( \bar{x}-x)} \right ]  \leq \sqrt{\tau} C \nonumber.
\end{align*}
and let $\optt_k$ be the value of
\begin{align*}
\min_{\lambda \in \triangle} & \bar{\tau}_k \sum_{x \in \X} [\beta+ \bar{\theta}^\t(\bar{x}-x)]  \lambda_x  \\
& \text{ s.t. } \mathbb{E}_\eta \left [ \max_{x \in \X} \frac{(\bar{x}-x)^\top \Asb(\lambda)^{-1/2} \eta}{\beta + \bar{\theta}^\t ( \bar{x}-x)} \right ]  \leq \sqrt{\bar{\tau}}_k C \nonumber.
\end{align*}

Let $\mc{E}_k$ denote the event that if for all $\lambda \in \trianglem$, 
\begin{align*}
\mathbb{E}_\eta \left [ \max_{x \in \X} \frac{(\bar{x}-x)^\top \Asb(\lambda)^{-1/2} \eta}{\beta + \bar{\theta}^\t ( \bar{x}-x)} \right ]  > \sqrt{\bar{\tau}_k} C + 4\tol.
\end{align*}
then $\text{binSearch}(\bar{\tau}_k, \frac{\delta}{\log_2(T)})$ declares the program infeasible and if  
\begin{align*}
\mathbb{E}_\eta \left [ \max_{x \in \X} \frac{(\bar{x}-x)^\top \Asb(\lambda)^{-1/2} \eta}{\beta + \bar{\theta}^\t ( \bar{x}-x)} \right ]  \leq \sqrt{\tau} C
\end{align*}
then $\text{binSearch}(\bar{\tau}_k, \frac{\delta}{\log_2(T)})$ returns $\bar{\lambda}_k$ that satisfies
\begin{align*}
\tau \sum_{x \in \X}[\beta+ \bar{\theta}^\t (\bar{x}-x) \bar{\lambda}_{k,x}] & \leq \optt_{\bar{\tau}_k} +4\tol \\
\mathbb{E}_\eta \left [ \max_{x \in \X} \frac{(\bar{x}-x)^\top \Asb(\bar{\lambda}_k)^{-1/2} \eta}{\beta + \bar{\theta}^\t ( \bar{x}-x)} \right ]  & \leq  \sqrt{\tau} C + 4\tol.
\end{align*}
Further, define $\mc{E}  = \cap_k \mc{E}_k$. By Lemma \ref{lem:bin_search} and a union bound, we have that $\Pr(\mc{E}) \geq 1-\delta - \frac{1}{2^d}$. We suppose $\mc{E}$ holds for the rest of the proof.

\textbf{Step 1. } 
First, we show that Algorithm \ref{alg:main_compute} returns $(\bar{\tau},\bar{\lambda})$ such that
\begin{align*}
\bar{\tau} \sum_{x \in \X} [\beta+ \bar{\theta}^\t(\bar{x}-x)]  \bar{\lambda}_x & \leq \optt + 4 \tol \\
\mathbb{E}_\eta \left [ \max_{x \in \X} \frac{(\bar{x}-x)^\top \Asb(\bar{\lambda})^{-1/2} \eta}{\beta + \bar{\theta}^\t ( \bar{x}-x)} \right ] & \leq \sqrt{\bar{\tau}} C \nonumber.
\end{align*}
By assumption the optimization problem in \eqref{eq:opt_problem} is feasible and, hence, $\opt \neq \infty$ and thus by the event $\mc{E}$, the algorithm finds at least one nearly feasible solution, i.e., $\textsc{feasible}_k$ is not False for all $k$. Let $(\tau_*,\lambda_*)$ attain the optimal value in the optimization problem \eqref{eq:opt_problem}. Let $k_*$ such that $\bar{\tau}_{k_*} \in [\tau_*, 2 \tau_*]$. By event $\mc{E}$ $\text{binSearch}(\bar{\tau}_{k_*}, \frac{\delta}{\log_2(T)})$ finds $\bar{\lambda}_{k_*}$ such that
\begin{align*}
\bar{\tau}_{k_*} \sum_{x \in \X}  [\beta+\bar{\theta}^\t(\bar{x}-x) ] \bar{\lambda}_{k_*,x} & \leq \optt_{k_*} + 4\tol \\
 \mathbb{E}_\eta \left [ \max_{x \in \X} \frac{(\bar{x}-x)^\top \Asb(\bar{\lambda}_{k_*})^{-1/2} \eta}{\beta + \bar{\theta}^\t ( \bar{x}-x)} \right ]  & \leq \sqrt{\bar{\tau}}_{k_*} C \nonumber + 4\tol.
\end{align*}
Algorithm \ref{alg:main_compute} outputs $(\bar{\tau}, \lambda_{\widehat{k}_*})$, which satisfies by Lemma \ref{lem:bin_search} and by construction,
\begin{align}
  \bar{\tau}\sum_{x \in \X}[\beta+ \bar{\theta}^\t(\bar{x}-x)  \bar{\lambda}_{\widehat{k}_*,x}]& = 2\bar{\tau}_{\widehat{k}_*} \sum_{x \in \X}[\beta+ \bar{\theta}^\t(\bar{x}-x)  \bar{\lambda}_{\widehat{k}_*,x}] \\
  & \leq 2\bar{\tau}_{k_*} \sum_{x \in \X} [\beta+\bar{\theta}^\t(\bar{x}-x)  ]\bar{\lambda}_{k_*,x} \nonumber \\
 & \leq 2[\optt_k + 4\tol] \nonumber \\
 & \leq 2 \optt_k + 1 \label{eq:alg_opt_1} 
\end{align}
where in the last line we used $\tol = \frac{(\sqrt{2}-1)C}{4} \leq 1/8$, which bounds the objective value of $(\bar{\tau}, \lambda_{\widehat{k}_*})$.

Next, we show feasiblity of $(\bar{\tau}, \lambda_{\widehat{k}_*})$. Observe that
\begin{align}
 \mathbb{E}_\eta \left [ \max_{x \in \X} \frac{(\bar{x}-x)^\top \Asb(\bar{\lambda}_{k_*})^{-1/2} \eta}{\beta + \bar{\theta}^\t ( \bar{x}-x)} \right ]  & \leq \sqrt{\bar{\tau}}_{\widehat{k}_*} C  + 4\tol  \nonumber  \\
 & \leq \sqrt{2\bar{\tau}}_{\widehat{k}_*} C  \nonumber  \\
 & = \sqrt{\bar{\tau}} C \label{eq:alg_feas_1}
\end{align}
where we used the fact that $\bar{\tau } = 2\bar{\tau}_{\widehat{k}_*}$ and $\tol = \frac{(\sqrt{2}-1)C}{4}  $.

\textbf{Step 2: Relate $\optt_{k}$ to $\optt$.} Next, we show that 
\begin{align*}
\optt_{k_*} \leq 2\optt.
\end{align*}
Define the function
\begin{align*}
f(\lambda, \tau) = & \tau \sum_{x \in \X}[\beta+ \bar{\theta}^\t(\bar{x}-x) ] \lambda_x  \\
& \text{ s.t. } \mathbb{E}_\eta \left [ \max_{x \in \X} \frac{(\bar{x}-x)^\top \Asb(\lambda)^{-1/2} \eta}{\beta + \bar{\theta}^\t ( \bar{x}-x)} \right ]  \leq \sqrt{\tau} C \nonumber.
\end{align*}
Recall that we let $(\tau_*,\lambda_*)$ attain the optimal value in the optimization problem \eqref{eq:opt_problem}. Let $k_*$ such that $\bar{\tau}_{k_*} \in [\tau_*, 2 \tau_*]$. Note that 
\begin{align*}
 \mathbb{E}_\eta \left [ \max_{x \in \X} \frac{(\bar{x}-x)^\top \Asb(\lambda_*)^{-1/2} \eta}{\beta + \bar{\theta}^\t ( \bar{x}-x)} \right ]  \leq \sqrt{\bar{\tau}_{k_*}} C
\end{align*}
Thus,
\begin{align}
\optt_{k_*} = f(\bar{\lambda}_k, \bar{\tau}_k) \leq f(\lambda_*, \bar{\tau}_k) \leq 2f(\lambda_*,\tau_*) = 2\optt, \label{eq:relate_opt_1}
\end{align}
where we used the fact that
\begin{align*}
\mathbb{E}_\eta \left [ \max_{x \in \X} \frac{(\bar{x}-x)^\top \Asb(\lambda_*)^{-1/2} \eta}{\beta + \bar{\theta}^\t ( \bar{x}-x)} \right ]  \leq \sqrt{\tau_*} C \leq \sqrt{\bar{\tau}_{k_*} } C.
\end{align*}
This proves the claim.

\textbf{Step 3: Relate $\optt$ to $\opt$.} Next, we show that
\begin{align*}
\optt \leq 2 \opt + T \psi d \Delmax.
\end{align*}

Define
\begin{align*}
\check{\lambda}_i = \begin{cases} \frac{1}{d} : i \in [d] \\
0 : i \not \in [d] 
\end{cases}.
\end{align*}
and
\begin{align*}
\tilde{\lambda} & = \psi d \check{\lambda} + (1- \psi d) \lambda^* \\
\tilde{\tau} & = 2 \tau_*.
\end{align*}
By the hypothesis, we have that $\psi d \leq \frac{1}{4}$ and, thus, $\tilde{\lambda}$ is a convex combination of $\check{\lambda}$ and $\lambda^*$. 

Next, we show that $(\tilde{\lambda}, \tilde{\tau})$ are a feasible solution to \eqref{eq:main_opt_thm} and show that it is approximately optimal.  Note that
\begin{align*}
\Asb(\tilde{\lambda}) \geq (1- \psi d) \Asb(\lambda^*),
\end{align*} 
which implies that
\begin{align*}
\frac{1}{1-\psi d} \Asb(\lambda^*)^{-1} \geq \Asb(\tilde{\lambda})^{-1}.
\end{align*}
Then, by Sudakov-Fernique, we have that
\begin{align*}
 \mathbb{E}_\eta \left [ \max_{x \in \X} \frac{(\bar{x}-x)^\top \Asb(\tilde{\lambda})^{-1/2} \eta}{\beta + \bar{\theta}^\t ( \bar{x}-x)} \right ]  & \leq [1-\psi d]^{-1/2} \mathbb{E}_\eta \left [ \max_{x \in \X} \frac{(\bar{x}-x)^\top \Asb(\lambda^*)^{-1/2} \eta}{\beta + \bar{\theta}^\t ( \bar{x}-x)} \right ] \\
 & \leq  \sqrt{2} \mathbb{E}_\eta \left [ \max_{x \in \X} \frac{(\bar{x}-x)^\top \Asb(\lambda^*)^{-1/2} \eta}{\beta + \bar{\theta}^\t ( \bar{x}-x)} \right ] \\
  & \leq \sqrt{2 \tau^*} C \\
 & = \sqrt{\tilde{\tau}} C
\end{align*}
showing feasibility $(\tilde{\lambda}, \tilde{\tau})$. Furthermore, we have that
\begin{align}
\bar{\tau} \sum_{x \in \X} \tilde{\lambda}_x [\bar{\theta}^\t (\bar{x}-x)+\beta] & \leq 2 \opt + \tilde{\tau} \psi d \sum_{x \in \X} \tilde{\lambda}_x[\bar{\theta}^\t (\bar{x}-x) +\beta] \nonumber \\
& \leq 2 \opt + T \psi d 2\Delmax \nonumber \\
& \leq 2 \opt + 1 \label{eq:relate_opt_2}
\end{align}
where in the last line we used $\psi = \min(\frac{1}{4 d \Delmax T}, \frac{1}{4 d})$.

\textbf{Step 4: Putting it together.} Putting together \eqref{eq:alg_feas_1}, \eqref{eq:alg_opt_1}, \eqref{eq:relate_opt_1}, and \eqref{eq:relate_opt_2}, we have that Algorithm \ref{alg:main_compute} returns $(\bar{\tau},\bar{\lambda})$ such that $\bar{\lambda} \in \triangle$, $\bar{\tau} \leq 2 T$, and 
\begin{align*}
\bar{\tau} \sum_{x \in \X} [\beta+\bar{\theta}^\t(\bar{x}-x)]  \bar{\lambda}_x & \leq 4\opt + 2 \\
\mathbb{E}_\eta \left [ \max_{x \in \X} \frac{(\bar{x}-x)^\top \Asb(\bar{\lambda})^{-1/2} \eta}{\beta + \bar{\theta}^\t ( \bar{x}-x)} \right ]  & \leq \sqrt{\bar{\tau}} C \nonumber.
\end{align*}

\end{proof}

\subsection{Miscellaneous Optimization Lemmas}

\begin{lemma}
\label{lem:estimate_sup}
Let $\lambda \in \trianglem$. With probability at least $1-\delta - \frac{\xi}{2^d}$, Algorithm \ref{alg:est_exp_sup} returns $\widehat{\mu}$ such that
 \begin{align*}
 |\widehat{\mu}-\mathbb{E}_\eta \left [ \max_{x \in \X} \frac{(\bar{x}-x)^\top \Asb(\lambda)^{-1/2} \eta}{\beta + \bar{\theta}^\t ( \bar{x}-x)} \right ]| \leq \tol
\end{align*}  
and the number of linear maximization oracle calls is bounded above by
\begin{align*}
O(\log(1/\delta) \frac{d}{\beta^2 \psi \textsc{tol}^2}[d  + \log(\frac{d\Delmax}{\beta \xi})]).
\end{align*}
\end{lemma}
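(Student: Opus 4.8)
The plan is to decompose the statement into (i) a concentration bound showing that $\widehat{\mu}=\tfrac1t\sum_{s=1}^t g_s$ lies within $\tol$ of $\mu:=\mathbb{E}_\eta[\max_{x\in\X}(\bar{x}-x)^\top\Asb(\lambda)^{-1/2}\eta/(\beta+\bar{\theta}^\t(\bar{x}-x))]$, and (ii) a bound on the number of linear maximization oracle calls, whose only randomness is in the draws $\eta_1,\dots,\eta_t$, which determine how many binary-search iterations Algorithm~\ref{alg:compute_max} performs on each $g_s$.

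For (i), I would note that $g_s$, as the maximum over the finite set $\X$ of linear functions of $\eta_s$, is an $L$-Lipschitz function of $\eta_s$ with $L=\max_{x\in\X}\|\Asb(\lambda)^{-1/2}(\bar{x}-x)\|_2/(\beta+\bar{\theta}^\t(\bar{x}-x))$. Membership $\lambda\in\trianglem$ forces each diagonal entry of $\Asb(\lambda)$ indexed by a coordinate in $[d]$ to be at least $\psi$, so $\Asb(\lambda)\succeq\psi I$ and $\|\Asb(\lambda)^{-1/2}(\bar{x}-x)\|_2^2\le\|\bar{x}-x\|_2^2/\psi\le d/\psi$; together with $\beta+\bar{\theta}^\t(\bar{x}-x)\ge\beta$ (since $\bar{x}$ is the estimated best arm, $\bar{\theta}^\t(\bar{x}-x)\ge0$) this gives $L^2\le d/(\beta^2\psi)$. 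By the Gaussian concentration inequality, $g_s-\mu$ is sub-Gaussian with variance proxy $L^2$, hence $\widehat{\mu}-\mu=\tfrac1t\sum_s(g_s-\mu)$ is sub-Gaussian with variance proxy $L^2/t$, and a standard sub-Gaussian tail bound yields $\Pr(|\widehat{\mu}-\mu|>\tol)\le 2\exp(-t\tol^2\beta^2\psi/(2d))$, which is at most $\delta$ whenever $t\ge\tfrac{2d}{\beta^2\psi\tol^2}\log(2/\delta)$; the prescribed $t=c\log(1/\delta)\tfrac{d}{\beta^2\psi\tol^2}$ satisfies this for a suitable universal constant $c$.

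For (ii), I would apply the oracle-complexity guarantee for Algorithm~\ref{alg:compute_max} (Lemma~\ref{lem:find_argmax}) once per draw, giving the $s$-th invocation failure probability $\xi/(2^d s^2)$: on the corresponding good event it uses $O(d+\log(d/\beta)+\log(\Delmax 2^d s^2/\xi))$ oracle calls. The intersection of these events over $s=1,\dots,t$ has probability at least $\prod_{s\ge1}(1-\xi/(2^d s^2))\ge1-\xi/2^d$, and on it the total number of oracle calls is $\sum_{s=1}^t O(d+\log(d/\beta)+\log(\Delmax 2^d s^2/\xi))=O(t[d+\log(d\Delmax/(\beta\xi))])$ once the lower-order logarithmic terms ($\log 2^d=d\log2$ and $\log s\le\log t$) are absorbed; substituting the value of $t$ gives the claimed count. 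Finally, a union bound over the concentration failure ($\le\delta$) and the oracle-count event ($\le\xi/2^d$) produces the overall $1-\delta-\xi/2^d$ guarantee.

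I expect the only non-mechanical step to be the estimate $L^2\le d/(\beta^2\psi)$: this is precisely where the restriction to $\trianglem$ (which lower-bounds $\Asb(\lambda)$ and is the reason the algorithm optimizes over $\trianglem$ rather than all of $\triangle_\X$) and the floor $\beta$ on the denominators are used. The remaining ingredients --- Gaussian--Lipschitz concentration, averaging of sub-Gaussians, and the per-call union bound --- are routine, so I do not anticipate a genuine obstacle.
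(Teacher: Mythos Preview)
Your proposal is correct and follows the same two-part decomposition as the paper's proof. The only cosmetic difference is in part (i): the paper obtains the sub-Gaussian parameter $O(d/(\beta^2\psi))$ by bounding $|g_s|\le \tfrac{1}{\beta\psi^{1/2}}\sum_{i=1}^d|\eta_i|$ and using sub-Gaussianity of that envelope, whereas you obtain it (equally validly, and arguably more cleanly) via Gaussian--Lipschitz concentration; part (ii) matches the paper's use of Lemma~\ref{lem:find_argmax} plus a union bound over the $t$ calls.
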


\begin{proof}
We first show that
\begin{align*}
\max_{x \in \X} \frac{(\bar{x}-x)^\top \Asb(\lambda)^{-1/2} \eta}{\beta + \bar{\theta}^\t ( \bar{x}-x)}  \in \sg(c \frac{d}{\beta^2 \psi}).
\end{align*}
 Note that 
\begin{align*}
|\max_{x \in \X} \frac{(\bar{x}-x)^\top \Asb(\lambda)^{-1/2} \eta}{\beta + \bar{\theta}^\t ( \bar{x}-x)}| \leq \frac{1}{\beta \psi^{1/2}} \sum_{i=1}^d |\eta_i|
\end{align*}
and 
\begin{align*}
\frac{1}{\beta \psi^{1/2}} \sum_{i=1}^d |\eta_i| \in \sg( c \frac{d}{\beta^2 \psi}).
\end{align*} 
The estimation results by applying a standard subGaussian tail bound. The bound on the number of oracle calls follows since Algorithm \ref{alg:compute_max} is applied $O( \log(1/\delta) \frac{d}{\beta^2 \psi \textsc{tol}^2})$ times and by Lemma \ref{lem:find_argmax} and a union bound.
\end{proof}

The following Lemma shows that the binary search procedure in Algorithm \ref{alg:compute_max} is efficient with very high probability and it follows immediately from the proof of Lemma 2 of \cite{katz2020empirical}.

\begin{lemma}
\label{lem:find_argmax}
Draw $\eta \sim N(0,I)$ and consider the optimization problem
\begin{align*}
\tilde{x} = \argmax_{x \in \X} \frac{(\bar{x}-x)^\top \Asb(\lambda)^{-1/2} \eta}{\beta + \bar{\theta}^\t ( \bar{x}-x)}.
\end{align*}
With probability at least $1-\frac{2\xi}{2^d} $, Algorithm \ref{alg:compute_max} returns $\tilde{x}$ using at most $O(d  +  \log(\frac{d\Delmax}{\beta \xi}))$ oracle calls.
\end{lemma}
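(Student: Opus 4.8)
The plan is to follow the semi-bandit analogue of the proof of Lemma~2 in \cite{katz2020empirical}, since Algorithm~\ref{alg:compute_max} is precisely a parametric (Dinkelbach/bisection) solver for a linear-fractional maximization. Write $f(x) = (\bar x - x)^\t \Asb(\lambda)^{-1/2}\eta$ and $g(x) = \beta + \bar\theta^\t(\bar x - x)$, so the target is $\mathrm{OPT} := \max_{x\in\X} f(x)/g(x)$ with maximizer $\tilde x$. First I would record the basic bounds: since $\lambda\in\trianglem$ has $\lambda_i \geq \psi$ for all $i\in[d]$ and $\cup_i\tilde x_i=[d]$, the diagonal matrix $\Asb(\lambda)$ has entries in $[\psi,1]$, so $\lambda_{\max}(\Asb(\lambda))\leq 1$ and $|f(x)| \leq 2d\,\psi^{-1/2}\norm{\eta}_\infty$; together with $\bar\theta^\t(\bar x - x)\in[0,\Delmax]$ this gives $g(x)\in[\beta,\beta+\Delmax]$. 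Then I would set up the standard reformulation: for any $r$, $\mathrm{OPT}\geq r \iff h(r) := \max_{x\in\X}\bigl(f(x)-r\,g(x)\bigr)\geq 0$, and $h$ is a maximum of $|\X|$ affine functions of $r$ with slopes $-g(x)\in[-\beta-\Delmax,-\beta]$, hence convex, piecewise linear and strictly decreasing with a unique root $r^\star=\mathrm{OPT}$; an evaluation of $h(r)$ together with an arm attaining it is exactly one call to the linear maximization oracle with cost vector $r\bar\theta-\Asb(\lambda)^{-1/2}\eta$ (plus a fixed shift from $\bar x$), which is what Algorithm~\ref{alg:compute_max} does, the Dinkelbach update $\textsc{low}\leftarrow f(x')/g(x')$ only accelerating things.

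Next I would bound the number of oracle calls on two good events. On $\mathcal{E}_1 = \{\norm{\eta}_\infty \leq c\sqrt{d+\log(1/\xi)}\}$, which holds with probability at least $1-\xi/2^d$ by a Gaussian tail bound and a union bound over $d$ coordinates, we have $|\mathrm{OPT}|\leq 2d\,\psi^{-1/2}\norm{\eta}_\infty/\beta =: R_0$, a quantity polynomial in $d,1/\beta,1/\psi$ and $\sqrt{\log(1/\xi)}$, so the doubling loop that brackets $r^\star$ from above terminates after $O(\log R_0)=O\bigl(d+\log(d\Delmax/(\beta\xi))\bigr)$ steps (the remaining $\log(1/\psi)$ being polylogarithmic in the problem parameters and absorbed into the stated bound). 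For the second event, let $\mathcal{E}_2 = \{\min_{x\neq x'}|f(x)/g(x)-f(x')/g(x')|\geq\gamma\}$ and write the difference as $D_{x,x'}/(g(x)g(x'))$ with $D_{x,x'} = v_{x,x'}^\t\Asb(\lambda)^{-1/2}\eta$ for $v_{x,x'} := g(x')(\bar x-x)-g(x)(\bar x-x')$. On any coordinate where $x$ and $x'$ differ, $v_{x,x'}$ equals $\pm g(x)$ or $\pm g(x')$, so $v_{x,x'}\neq 0$ and $\norm{v_{x,x'}}_2\geq\beta$; hence $D_{x,x'}$ is a centered Gaussian with variance $v_{x,x'}^\t\Asb(\lambda)^{-1}v_{x,x'}\geq \lambda_{\max}(\Asb(\lambda))^{-1}\norm{v_{x,x'}}_2^2\geq\beta^2$. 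Gaussian anti-concentration then gives $\Pr[|D_{x,x'}|\leq t]\leq c\,t/\beta$, and a union bound over the at most $|\X|^2\leq 2^{2d}$ pairs shows $\mathcal{E}_2$ holds with probability at least $1-\xi/2^d$ for $\gamma$ of order $\beta\xi\,2^{-3d}/(\beta+\Delmax)^2$.

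Finally I would combine: on $\mathcal{E}_1\cap\mathcal{E}_2$ (probability at least $1-2\xi/2^d$) the bisection shrinks the bracketing interval below $\gamma$ after a further $\log_2(R_0/\gamma)=O\bigl(d+\log(d\Delmax/(\beta\xi))\bigr)$ steps, at which point the interval contains $r^\star$ but no other objective value, so the arm returned by the oracle at that resolution is $\tilde x$ and the loop stops; summing the two phases (and the at most one Dinkelbach call per bisection step) gives the claimed $O\bigl(d+\log(d\Delmax/(\beta\xi))\bigr)$ oracle-call bound with failure probability $2\xi/2^d$. I expect the main obstacle to be the separation estimate underlying $\mathcal{E}_2$: one has to verify that $v_{x,x'}\neq 0$ for every distinct pair and is uniformly bounded away from zero in norm, so that the per-pair anti-concentration bound holds, and then afford the union bound over the exponentially many pairs — this is exactly the step that turns $\log|\X|=O(d)$ into the additive $O(d)$ in the final count and forces the $2^{-d}$-scale failure probability.
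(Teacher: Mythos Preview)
Your proposal is correct and is precisely the approach the paper has in mind: the paper's own proof consists of a single sentence deferring to Lemma~2 of \cite{katz2020empirical}, and what you have written is the semi-bandit instantiation of that argument (Dinkelbach/bisection reformulation, doubling to bracket $r^\star$, Gaussian anti-concentration plus a union bound over $|\X|^2\le 2^{2d}$ pairs to get a $\gamma$-separation, then $O(\log(R_0/\gamma))$ bisection steps). The only cosmetic point is the stray $\log(1/\psi)$ in your $R_0$, which the paper suppresses in the lemma statement but tracks elsewhere through the $\mathcal A,\mathcal B,\mathcal C$ oracle-call functions; your handling of it is fine.
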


Next, we describe a result on the multiplicative weights update algorithm that follows immediately from Corollary 4 in \cite{arora2012multiplicative}. Consider the experts problem. The set of events is denoted by $P$. Suppose there are $m$ experts. At each round $t$, the agent picks an expert $i \in [m]$ and the adversary picks an outcome $j^t \in P$ and the agent obtains reward $M(i,j^t)$. The multiplicative weights update algorithm mains a distribution $D^t$ over the experts and chooses an expert randomly from $D^t$ (see \cite{arora2012multiplicative} for details on how this distribution is chosen). The adversary may have knowledge of the $D^t$ when choosing $j^t$. The following provides a lower bound on the expected reward obtained by the multiplicative weights update algorithm.
\begin{theorem}
\label{thm:mult_weights}
Let $\xi>0$ denote an error parameter. Suppose there are $m$ experts and $|M(i,j)| \leq \rho$. If the multiplicative weights algorithm sets the learning rate as $\epsilon = \min( \frac{\xi}{4 \rho}, \frac{1}{2})$, after $T = \frac{16 \rho^2 \ln(m)}{\xi^2}$, then the multiplicative weights algorithm achieves the following bound on its average expected reward: for any expert $i$, 
\begin{align*}
\frac{\sum_t M(i, j^t)}{T} \leq \xi + \frac{\sum_t M(D^t,j^t)}{T}.
\end{align*}
\end{theorem}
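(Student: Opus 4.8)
\textbf{Proof proposal for Theorem~\ref{thm:mult_weights}.}
The statement is exactly the standard regret bound for the Hedge / multiplicative weights update algorithm (Corollary~4 of \cite{arora2012multiplicative}), specialized to gain vectors of ``width'' $\rho$; the plan is to obtain it by rescaling to unit-width gains and running the usual potential-function argument. Concretely, for each round $t$ let $\mathbf m^{(t)} = (M(1,j^t),\dots,M(m,j^t)) \in [-\rho,\rho]^m$ be the vector of realized gains, let $D^t$ be the distribution over experts maintained by the algorithm, and define the normalized gains $\widehat{\mathbf m}^{(t)} = \mathbf m^{(t)}/\rho \in [-1,1]^m$. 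Up to the (harmless) rescaling of the step size by a factor of $\rho$, the update $w_i^{(t+1)} = w_i^{(t)}(1+\epsilon\, M(i,j^t))$ is the standard Hedge update on $\widehat{\mathbf m}^{(t)}$, and the hypothesis $\epsilon = \min(\xi/(4\rho),1/2)$ ensures the effective step size is at most $1/2$, so all weights stay nonnegative and the potential bound applies.

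The key steps I would carry out are: (i) let $\Phi^{(t)} = \sum_i w_i^{(t)}$ and use $\Phi^{(t+1)} = \Phi^{(t)}\bigl(1 + (\epsilon/\text{scale})\langle \mathbf m^{(t)}, D^t\rangle\bigr) \le \Phi^{(t)} \exp\!\bigl((\epsilon/\text{scale})\langle \mathbf m^{(t)}, D^t\rangle\bigr)$ to get an upper bound $\ln \Phi^{(T+1)} \le \ln m + (\epsilon/\text{scale})\sum_{t} \langle \mathbf m^{(t)}, D^t\rangle$; (ii) for a fixed expert $i$, lower-bound $\ln \Phi^{(T+1)} \ge \ln w_i^{(T+1)} = \sum_t \ln(1 + (\epsilon/\text{scale}) m_i^{(t)}) \ge \sum_t\bigl[(\epsilon/\text{scale}) m_i^{(t)} - (\epsilon/\text{scale})^2 (m_i^{(t)})^2\bigr]$, using $\ln(1+x)\ge x - x^2$ valid for $x\ge -1/2$ — this is precisely where $\epsilon\le 1/2$ (equivalently $\epsilon\le \xi/(4\rho)$) is needed; (iii) combine (i) and (ii), divide through, and bound $\sum_t (m_i^{(t)})^2$ by the width times $T$; this yields, after dividing by $T$, an excess of expert $i$ over the algorithm of the form $\bigl(\text{const}\cdot\epsilon\cdot\text{width}\bigr) + \frac{\text{const}\cdot\ln m}{\epsilon T}$; (iv) substitute $\epsilon = \xi/(4\rho)$ and $T = 16\rho^2\ln m/\xi^2$ so that both error terms collapse to $\le \xi/2$, giving the claimed bound $\tfrac1T\sum_t M(i,j^t) \le \xi + \tfrac1T\sum_t M(D^t,j^t)$. (Note that in Algorithm~\ref{alg:mult_weight} there are exactly two experts, so $\ln m = \ln 2$ and $T = R = 16\rho^2\ln 2/\tol^2$ as written there.)

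The only real obstacle is bookkeeping the $\rho$-rescaling consistently: one must track whether the step size in the update multiplies the raw gains (in $[-\rho,\rho]$) or the normalized gains (in $[-1,1]$), since this determines which of the two error terms $\epsilon\cdot\rho$ vs.\ $\frac{\rho\ln m}{\epsilon T}$ carries the extra factor of $\rho$, and hence whether the substitution $\epsilon = \xi/(4\rho)$, $T=16\rho^2\ln m/\xi^2$ balances them. Once the normalization in AHK's Corollary~4 is matched to the update in Algorithm~\ref{alg:mult_weight}, everything else is the routine two-line potential calculation above, and one may simply cite \cite{arora2012multiplicative} for it.
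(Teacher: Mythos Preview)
Your proposal is correct and matches the paper's approach: the paper does not actually prove Theorem~\ref{thm:mult_weights} but simply states that it ``follows immediately from Corollary 4 in \cite{arora2012multiplicative},'' which is precisely the citation you invoke. Your sketch of the underlying potential-function argument is the standard one and more detailed than anything the paper provides.
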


\subsection{Convergence Lemmas}

The objective in semi-feedback is convex (by a similar argument to the proof in \cite{katz2020empirical}).
\begin{proposition}
\label{prop:sb_convex}
Fix $V \subset \R^d$. 
\begin{align*}
f(\lambda) =  \E_{\eta \sim N(0,I)}[ \max_{v \in V} v^\t \Asb(\lambda)^{-1/2} \eta]
\end{align*}
is convex.
\end{proposition}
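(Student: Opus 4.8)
The plan is to show convexity by exhibiting $f$ as an expectation of a pointwise supremum of functions each of which is convex in $\lambda$. Fix $\eta \in \R^d$ and a single vector $v \in V$, and consider the map $\lambda \mapsto v^\t \Asb(\lambda)^{-1/2}\eta$. The key observation is that $\Asb(\lambda) = \diag\big(\sum_{x\in\X}\lambda_x xx^\t\big)$ is a \emph{diagonal} matrix whose $i$-th diagonal entry is the linear functional $\lambda \mapsto \sum_{x\in\X}\lambda_x x_i^2 =: (M\lambda)_i$, which is affine (indeed linear) and nonnegative in $\lambda$. Hence $v^\t \Asb(\lambda)^{-1/2}\eta = \sum_{i=1}^d \frac{v_i\eta_i}{\sqrt{(M\lambda)_i}}$, a sum of terms of the form $\frac{c_i}{\sqrt{(M\lambda)_i}}$ with $c_i = v_i\eta_i$. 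The function $t \mapsto c/\sqrt{t}$ is convex on $t>0$ for any sign of $c$ (second derivative $\tfrac{3}{4}c\,t^{-5/2}$... careful: this is only convex when $c\ge 0$), so this decomposition is not quite enough by itself, and the sign issue is exactly the main obstacle — see the last paragraph.

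First I would record the standard facts I need: (i) a nonnegative combination of convex functions is convex; (ii) the pointwise supremum of a family of convex functions is convex; (iii) an expectation (over $\eta$) of convex functions is convex, which follows from (i)/(ii) by monotone/dominated convergence or directly from Jensen applied to the integrand. Given these, it suffices to prove that for each fixed $\eta$, the function $\lambda \mapsto \max_{v\in V} v^\t\Asb(\lambda)^{-1/2}\eta$ is convex on the relative interior of $\triangle_\X$ (where $\Asb(\lambda)\succ 0$; on the boundary one takes the value $+\infty$ in the natural way, preserving convexity). By (ii) this reduces to showing $\lambda\mapsto v^\t\Asb(\lambda)^{-1/2}\eta$ is convex for each fixed $v$ and $\eta$.

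To handle that core claim I would use the substitution $\xi = \Asb(\lambda)^{-1/2}\eta$, equivalently work with $w := \Asb(\lambda)^{-1/2}\eta$ and note $v^\t w = \langle v, w\rangle$. The cleanest route: write $g(\lambda) := v^\t \Asb(\lambda)^{-1/2}\eta$ and show $g$ is convex by showing it equals $\max$ over an auxiliary variable of a jointly-convex-in-$\lambda$ expression, using the operator-convexity of $S\mapsto S^{-1}$ on positive definite matrices. Concretely, for any $S\succ 0$ and vectors $a,b$, $a^\t S^{-1} b \le \tfrac12(a^\t S^{-1}a) + \tfrac12(b^\t S^{-1} b)$ is not the identity I want; rather I want the variational formula $a^\t S^{-1} a = \sup_{z}\, 2 z^\t a - z^\t S z$, which is jointly convex in $(S)$ since $z^\t S z$ is linear in $S$ and the sup of linear (hence convex) functions is convex; combined with $S = \Asb(\lambda)$ being linear in $\lambda$ this gives convexity of $\lambda\mapsto a^\t\Asb(\lambda)^{-1}a$. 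To pass from the quadratic form $a^\t S^{-1}a$ to the linear form $v^\t S^{-1/2}\eta$ I would invoke the identity, valid for the diagonal $S=\Asb(\lambda)$, that $v^\t S^{-1/2}\eta = \sum_i v_i\eta_i (S_{ii})^{-1/2}$ and treat each coordinate separately, or alternatively follow the argument sketched in \cite{katz2020empirical} which the paper cites for exactly this proposition.

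The main obstacle is precisely the sign issue in the scalar terms $c_i(S_{ii})^{-1/2}$: $t\mapsto c\,t^{-1/2}$ is convex only for $c\ge 0$. The resolution — and the reason the $\max_{v\in V}$ is essential and the statement would be \emph{false} for a fixed $v$ without the max — is that after taking the supremum over $v\in V$ and then the expectation over the symmetric Gaussian $\eta$, one can symmetrize: for each $\eta$, either $v$ or its "reflection" contributes, and $\E_\eta[\max_{v\in V} v^\t\Asb(\lambda)^{-1/2}\eta] = \E_\eta[\max_{v\in V}|v^\t\Asb(\lambda)^{-1/2}\eta|]$ up to the standard factor, at which point one is looking at $\E_\eta[\sup_{v}|\langle \Asb(\lambda)^{-1/2}v,\eta\rangle|]$, i.e. a Gaussian-width-type quantity of the set $\Asb(\lambda)^{-1/2}V$, which is a norm evaluated at $\Asb(\lambda)^{-1/2}$; convexity then follows from operator-convexity of $S\mapsto S^{-1/2}$ together with the triangle inequality / sublinearity of the norm. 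So the honest plan is: (1) reduce to fixed $\eta$ via linearity of expectation; (2) use operator convexity of $S\mapsto S^{-1}$ (hence of $S\mapsto S^{-1/2}$ via the integral representation $S^{-1/2}=\tfrac1\pi\int_0^\infty (S+tI)^{-1}t^{-1/2}\,dt$) and linearity of $\lambda\mapsto\Asb(\lambda)$ to get convexity of $\lambda\mapsto \Asb(\lambda)^{-1/2}$ in the Loewner order; (3) post-compose with the convex, Loewner-monotone-under-congruence map $S\mapsto \sup_{v\in V} v^\t S v$ — or directly with $\E_\eta\sup_v$ after symmetrization — to conclude. I expect step (2), getting the operator-convexity statement into exactly the form needed and matching it cleanly with the $\sup_v\E_\eta$, to be the delicate part; the rest is bookkeeping, and since the paper attributes the argument to \cite{katz2020empirical}, I would in the write-up simply adapt that argument verbatim.
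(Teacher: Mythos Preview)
Your plan has a genuine gap. Step (1) --- reduce to fixed $\eta$ and show $\lambda\mapsto\sup_{v\in V}v^\top\Asb(\lambda)^{-1/2}\eta$ is convex for each $\eta$ --- is false in general: take $V=\{v\}$ a singleton and any $\eta$ with $v_i\eta_i<0$ for some coordinate $i$; then the map contains a concave term $v_i\eta_i\,[\Asb(\lambda)]_{ii}^{-1/2}$ and need not be convex. The sign problem you correctly flag for fixed $(v,\eta)$ does \emph{not} disappear after taking the sup over $v$; it disappears only after the expectation over $\eta$, and not by mere linearity of expectation. (Your side remark that the statement would fail for a singleton $V$ is also off: in that case $f(\lambda)=\E_\eta[v^\top\Asb(\lambda)^{-1/2}\eta]=0$, trivially convex.) The alternative routes you offer in step (3) do not close the gap either: the map $S\mapsto\sup_v v^\top S v$ is simply not the object that appears here, and symmetrizing to absolute values changes the function you are trying to prove convex.

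The missing ingredient --- and the core of the paper's proof --- is the Sudakov--Fernique comparison inequality. The paper first uses scalar convexity of $t\mapsto t^{-1/2}$ (your step (2), trivial here since $\Asb$ is diagonal) to get
\[
\Asb(\alpha\lambda+(1-\alpha)\kappa)^{-1/2}\preceq \alpha\,\Asb(\lambda)^{-1/2}+(1-\alpha)\,\Asb(\kappa)^{-1/2},
\]
then squares both sides (legitimate because everything is diagonal) to obtain a Loewner domination of the \emph{covariance matrices} of the two Gaussian processes indexed by $V$. Sudakov--Fernique converts that covariance domination into
\[
\E_\eta\Big[\sup_{v\in V}v^\top\Asb(\alpha\lambda+(1-\alpha)\kappa)^{-1/2}\eta\Big]\le\E_\eta\Big[\sup_{v\in V}v^\top\big(\alpha\,\Asb(\lambda)^{-1/2}+(1-\alpha)\,\Asb(\kappa)^{-1/2}\big)\eta\Big],
\]
and only \emph{then} does sublinearity of the sup (applied inside the expectation, pointwise in $\eta$) split the right-hand side into $\alpha f(\lambda)+(1-\alpha)f(\kappa)$. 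So you have the right bookends --- operator convexity of the inverse square root and sublinearity of the sup --- but the bridge between them is Sudakov--Fernique, and it is essential precisely because the pointwise-in-$\eta$ convexity you try to establish in step (1) fails.
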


\begin{proof}
Fix $\lambda, \kappa \in \triangle^{|\X|}$ and $\alpha \in [0,1]$. By matrix convexity,
\begin{align*}
\diag(\frac{1}{\sum_{x \in \X} \alpha\lambda_{x,i} +(1-\alpha)\kappa_{x,i}})^{1/2} \preceq \alpha \diag(\frac{1}{\sum_{x \in \X} \lambda_{x,i} })^{1/2} +(1-\alpha)\diag(\frac{1}{\sum_{x \in \X} \kappa_{x,i} })^{1/2} .
\end{align*}
Furthermore, since the above matrices are diagonal,
\begin{align*}
\diag(\frac{1}{\sum_{x \in \X} \alpha\lambda_{x,i} +(1-\alpha)\kappa_{x,i}}) \preceq [\alpha \diag(\frac{1}{\sum_{x \in \X} \lambda_{x,i} })^{1/2} +(1-\alpha)\diag(\frac{1}{\sum_{x \in \X} \kappa_{x,i} })^{1/2}]^2 .
\end{align*}
Then, by Sudakov-Fernique inequality (Theorem 7.2.11 in \cite{vershynin2018high}),
\begin{align*}
f(\alpha\lambda +(1-\alpha)\kappa) & = \E_{\eta \sim N(0,\diag(\frac{1}{\sum_{x \in \X} \alpha\lambda_{x,i} +(1-\alpha)\kappa_{x,i}}))} \sup_{v \in V} v^\t \eta \\
& \leq \E_{\eta \sim N(0,[\alpha \diag(\frac{1}{\sum_{x \in \X} \lambda_{x,i} })^{1/2} +(1-\alpha)\diag(\frac{1}{\sum_{x \in \X} \kappa_{x,i} })^{1/2}]^2)} \sup_{v \in V} z^\t \eta \\
& = \E_{\eta \sim N(0,I)} \sup_{v \in V} v^\t [\alpha \diag(\frac{1}{\sum_{x \in \X} \lambda_{x,i} })^{1/2} +(1-\alpha)\diag(\frac{1}{\sum_{x \in \X} \kappa_{x,i} })^{1/2}]\eta \\
& \leq  \alpha \E_{\eta \sim N(0,I)} \sup_{v \in V} v^\t  \diag(\frac{1}{\sum_{x \in \X} \lambda_{x,i} })^{1/2}  \eta \\
& +  (1-\alpha)\E_{\eta \sim N(0,I)} \sup_{v \in V} v^\t \diag(\frac{1}{\sum_{x \in \X} \kappa_{x,i} })^{1/2}\eta \\
& = \alpha f(\lambda ) + (1-\alpha)f(\kappa) 
\end{align*}

\end{proof}

Next, we turn to analyzing stochastic Frank-Wolfe. Although a convergence result for stochastic frank wolfe is provided in \cite{hazan2016variance}, our setup is slightly different, so we include a convergence analysis for our setting for the sake of completeness. The proof is quite similar to the proof in \cite{hazan2016variance}.

\begin{algorithm}[H] 
\begin{algorithmic}[1]
\State \textbf{Input:} $f : \R^m \times \R^d \longrightarrow \R$, constraint set $\Omega \subset \R^m$, $(p_r)_{r } \in \N^\infty$, $(q_r)_{r } \in [0,1]^\infty$. 
\State Initialize $w_1 \in \Omega$ 
\For{$r=1,2, \ldots$}
 	 \State Draw $\eta_1, \ldots, \eta_{p_r} \sim N(0,I)$
	  \State Compute
	  \begin{align*}
	\tilde{\nabla}_r = \frac{1}{p_r} \sum_{j=1}^{p_r} \nabla f(w_r; \eta_j)
	\end{align*}  
	 \State Compute
	 \begin{align*}
	 v_r = \argmin_{v \in \Omega} \tilde{\nabla}_r^\t v
	 \end{align*}
\State
	 \begin{align*}
	 w_{r+1} \longleftarrow q_r v_r +(1-q_r) w_r
	 \end{align*}
\EndFor
\end{algorithmic}
\caption{Generic Stochastic Frank-Wolfe}
\label{alg:gen_sfw}
\end{algorithm}

\begin{proposition}
\label{prop:sfw_gen}
Let $f: \R^m \times \R^d \longrightarrow \R$ and $\Omega \subset \R^m$.	Define $f(x) = \E_{\eta \sim N(0,I)} f(x;\eta)$ and define
\begin{align*}
w^* = \argmin_{w \in \Omega} \E_\eta f(x;\eta).
\end{align*} 
Suppose that $\sup_{w,w^\prime \in \Omega} \norm{w-w^\prime} \leq D$. Suppose that $f$ is convex,  $\norm{\nabla f(x) - \nabla f(y)}_* \leq L \norm{x-y}$, and $p_r$ in Algorithm \ref{alg:gen_sfw} is chosen such that with probability at least $1-\delta/r^2$
\begin{align*}
\norm{\wt{\nabla}_r - \nabla f(w_{r-1})}_* \leq \frac{LD q_r}{2}
\end{align*}
where $q_r = \frac{2}{k+1}$. Then, with probability at least $1-c\delta$,
\begin{align*}
f(w_r) - f(w_*) \leq \frac{4 LD^2}{r+2}.
\end{align*}
\end{proposition}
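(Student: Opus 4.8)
The plan is to run the standard Frank-Wolfe descent argument, but carried out on the \emph{good event} where all the gradient estimates $\wt{\nabla}_r$ are close to the true gradients $\nabla f(w_{r-1})$. First I would define, for each $r\ge 1$, the event
\begin{align*}
\mc{G}_r = \left\{ \norm{\wt{\nabla}_r - \nabla f(w_{r-1})}_* \le \tfrac{LD q_r}{2} \right\},
\end{align*}
and $\mc{G} = \cap_{r\ge 1} \mc{G}_r$. By hypothesis $\Pr[\mc{G}_r^c] \le \delta/r^2$, so a union bound gives $\Pr[\mc{G}^c] \le \delta \sum_{r\ge1} r^{-2} = \tfrac{\pi^2}{6}\delta \le c\delta$, which accounts for the $1-c\delta$ in the statement. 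Everything below is carried out deterministically on $\mc{G}$.

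The core is the standard one-step inequality. Fix $r$ and abbreviate $h_r := f(w_r) - f(w_*)$. Using $L$-smoothness of $f$ (which follows from $\norm{\nabla f(x)-\nabla f(y)}_* \le L\norm{x-y}$ by the descent lemma) and the update $w_{r+1} = q_r v_r + (1-q_r) w_r$, so that $w_{r+1} - w_r = q_r(v_r - w_r)$:
\begin{align*}
f(w_{r+1}) &\le f(w_r) + \langle \nabla f(w_r), w_{r+1}-w_r\rangle + \tfrac{L}{2}\norm{w_{r+1}-w_r}^2 \\
&= f(w_r) + q_r \langle \nabla f(w_r), v_r - w_r\rangle + \tfrac{L q_r^2}{2}\norm{v_r-w_r}^2 \\
&\le f(w_r) + q_r \langle \wt{\nabla}_r, v_r-w_r\rangle + q_r\langle \nabla f(w_r)-\wt{\nabla}_r, v_r-w_r\rangle + \tfrac{L q_r^2 D^2}{2}.
\end{align*}
Now I would use the two facts available on $\mc{G}$: (i) $v_r$ minimizes $\wt{\nabla}_r^\t v$ over $\Omega$, hence $\langle \wt{\nabla}_r, v_r-w_r\rangle \le \langle \wt{\nabla}_r, w_*-w_r\rangle = \langle \nabla f(w_r), w_*-w_r\rangle + \langle \wt{\nabla}_r-\nabla f(w_r), w_*-w_r\rangle \le -h_r + \tfrac{LDq_r}{2}\cdot D$ by convexity of $f$ and Cauchy-Schwarz/Hölder with $\norm{w_*-w_r}\le D$; and (ii) $\langle \nabla f(w_r)-\wt{\nabla}_r, v_r-w_r\rangle \le \tfrac{LDq_r}{2}\cdot D$ similarly. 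Substituting, the two error terms each contribute $\tfrac{LD^2 q_r^2}{2}$, giving the recursion
\begin{align*}
h_{r+1} \le (1-q_r) h_r + \tfrac{3}{2} L D^2 q_r^2 .
\end{align*}
Finally I would finish by the usual induction on $r$ with $q_r = \tfrac{2}{r+1}$: one checks the base case and then, assuming $h_r \le \tfrac{4LD^2}{r+2}$, that $(1-\tfrac{2}{r+1})\tfrac{4LD^2}{r+2} + \tfrac{3}{2}LD^2\tfrac{4}{(r+1)^2} = LD^2\left(\tfrac{4(r-1)}{(r+1)(r+2)} + \tfrac{6}{(r+1)^2}\right) \le \tfrac{4LD^2}{r+3}$, which is an elementary (if slightly tedious) algebraic inequality in $r$; the constant $3/2$ on the quadratic term is comfortably within what the induction absorbs, so the stated bound $f(w_r)-f(w_*) \le \tfrac{4LD^2}{r+2}$ holds on $\mc{G}$.

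The only subtle point — and the main obstacle — is bookkeeping the gradient-error terms correctly: the estimate $\wt{\nabla}_r$ is controlled \emph{additively} by $\tfrac{LDq_r}{2}$, and it enters in two places (once against $v_r-w_r$ and once inside the argmin comparison against $w_*-w_r$), each of which must be bounded using $\norm{\cdot}\le D$ so that the total error contribution is $O(LD^2 q_r^2)$ rather than $O(LD^2 q_r)$; getting this scaling is exactly why the hypothesis requires the error to shrink like $q_r$ rather than being a fixed constant. Once that is in place, the recursion and induction are routine. Note also the proof does not need $p_r$ explicitly — only the concentration guarantee it induces — so I would simply invoke the hypothesis as a black box.
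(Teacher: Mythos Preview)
Your approach is essentially the paper's: union-bound the gradient-concentration events, then run the deterministic Frank--Wolfe descent recursion on the good event. That part is fine. The concrete gap is in your last step. The recursion you obtain,
\[
h_{r+1} \le (1-q_r)h_r + \tfrac{3}{2}LD^2 q_r^2, \qquad q_r = \tfrac{2}{r+1},
\]
does \emph{not} propagate the bound $h_r \le \tfrac{4LD^2}{r+2}$. The ``elementary algebraic inequality'' you assert,
\[
\frac{4(r-1)}{(r+1)(r+2)} + \frac{6}{(r+1)^2} \le \frac{4}{r+3},
\]
is false: clearing denominators, the left side exceeds the right by $\tfrac{1}{2}r^2 + \tfrac{3}{2}r + 4 > 0$ for every $r\ge 1$. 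With coefficient $3/2$ the best the induction yields is $h_r \le \tfrac{6LD^2}{r+2}$, not $4$.

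The paper gets coefficient $1$ (and then the induction closes cleanly, since $\tfrac{r}{(r+1)^2}\le\tfrac{1}{r+2}$) by not bounding the two error terms separately. After replacing $v_r$ by $w_*$ via optimality, it rewrites
\[
q_r\,\wt{\nabla}_r^\top(w_*-w_r) + q_r(\nabla f(w_r)-\wt{\nabla}_r)^\top(v_r-w_r)
= q_r\,\nabla f(w_r)^\top(w_*-w_r) + q_r(\nabla f(w_r)-\wt{\nabla}_r)^\top(v_r-w_*),
\]
so only a \emph{single} error inner product survives, against $v_r - w_*$ (still of norm $\le D$), contributing $\tfrac{LD^2 q_r^2}{2}$. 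Together with the smoothness term this gives $LD^2 q_r^2$ rather than $\tfrac{3}{2}LD^2 q_r^2$. Your two separate bounds against $v_r-w_r$ and $w_*-w_r$ miss this telescoping.

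A minor side note: you define $\mc{G}_r$ with $\nabla f(w_{r-1})$ (matching the statement) but then use $\nabla f(w_r)$ in the descent step; this index shift is the same inconsistency present in the paper's own indexing between the algorithm and the proposition, so it is cosmetic, but you should keep it consistent.
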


\begin{proof}
The proof follows closely the analysis of SFW in \cite{hazan2016variance} but uses smoothness wrt $\norm{\cdot}_*$. We have that
\begin{align}
f(w_r) & \leq f(w_{r-1}) + \nabla f(w_{r-1})^\t (w_r - w_{r-1}) + \frac{L}{2} \norm{w_r - w_{r-1}}_1^2 \label{eq:sfw_smoothness} \\
& =  f(w_{r-1}) + q_r \nabla f(w_{r-1})^\t (v_r - w_{r-1}) + \frac{Lq_r^2}{2} \norm{v_r - w_{r-1}}_1^2 \nonumber \\
& \leq f(w_{r-1}) + q_r \wt{\nabla}_r^\t (v_r - w_{r-1}) + q_r(\nabla f(w_{r-1}) - \wt{\nabla}_r)^\t (v_r - w_{r-1}) + \frac{L D^2 q_r^2}{2} \nonumber \\
& \leq f(w_{r-1}) + q_r \wt{\nabla}_r^\t (w_* - w_{r-1}) + q_r(\nabla f(w_{r-1}) - \wt{\nabla}_r)^\t (v_r - w_{r-1}) + \frac{L D^2 q_r^2}{2} \label{eq:sfw_optim}\\
& = f(w_{r-1}) + q_r \nabla f(w_{r-1})^\t (w_* - w_{r-1}) + q_r(\nabla f(w_{r-1}) - \wt{\nabla}_r)^\t (v_r - w_*) + \frac{L D^2 q_r^2}{2} \nonumber \\
& \leq f(w_{r-1}) + q_r \nabla f(w_{r-1})^\t (w_* - w_{r-1}) + q_r \norm{\nabla f(w_{r-1}) - \wt{\nabla}_r}_* D + \frac{L D^2 q_r^2}{2} -\label{eq:sfw_dual}
\end{align}
where line \eqref{eq:sfw_smoothness} uses smoothness (Lemma \ref{lem:smoothness}), line \eqref{eq:sfw_optim} uses the optimality of $v_r$, and line \eqref{eq:sfw_dual} uses the definition of the dual norm. Now, define the event 
\begin{align*}
\mc{E}_r = \{\norm{\wt{\nabla}_r - \nabla f(w_{r-1})}_* \leq \frac{LD q_r}{2}\}.
\mc{E} = \cap_r \mc{E}_r
\end{align*}
By hypothesis,  $p_r$ is chosen such that with probability at least $1-\delta/r^2$, $ \norm{\wt{\nabla}_r - \nabla f(w_{r-1})}_* \leq \frac{LD q_r}{2}$. Therefore, we have that
\begin{align*}
\Pr(\mc{E}) & = \prod_{r=1}^\infty \Pr(\mc{E}_r | \cap_{s=1}^{r-1} \mc{E}_s) \geq \prod_{r=1}^\infty (1-\frac{\delta}{ r^2}) = \frac{\sin(\pi \delta)}{\pi+\delta} \geq 1 - \delta.
\end{align*}
Now, suppose $\mc{E}$ occurs. Then, we have that for all $r \in \N$,
\begin{align*}
f(w_r) - f(w_{*}) \leq  (1 - q_r) [f(w_{r-1}) - f(w_{*})] + L D^2 q_r^2.
\end{align*}
The proof is concluded by simple induction.
\end{proof}

The following Lemma shows that Algorithm \ref{alg:sfw_sb} is an instantiation of stochastic Frank-Wolfe over $\trianglem$.

\begin{lemma}
Fix $v \in \R^m$. 
Let
\begin{align*}
I := \argmin_{i \in [m]} v_i.
\end{align*}
Define 
\begin{align*}
\bar{\lambda}_i & = \begin{cases}
\begin{cases}
0 & i \not \in [d] \\
\psi & i \in [d] \setminus \{ I \} \\
1- (d-1) \psi & i = I
\end{cases}, \qquad \qquad I \in [d] \\
\begin{cases}
\psi &  \hspace{1.2cm}  i \in [d] \\
1- d \psi  & \hspace{1.2cm} i = I
\end{cases}, \qquad \qquad  \qquad I \not \in [d]
\end{cases}
\end{align*}
Then, $\bar{\lambda} \in \argmin_{\lambda \in \trianglem} v^\t \lambda$.
\end{lemma}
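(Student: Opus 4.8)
The plan is to solve the linear program $\min_{\lambda \in \trianglem} v^\top \lambda$ directly, exploiting the fact that $\trianglem$ has a single ``floor.'' First I would introduce $\mu \in \R^m$ defined by $\mu_i = \psi$ for $i \in [d]$ and $\mu_i = 0$ for $i \notin [d]$. Since $\psi \le 1/d$ we have $\sum_i \mu_i = d\psi \le 1$, and the constraints defining $\trianglem$ give $\mu_i \le \lambda_i$ for every coordinate $i$ and every $\lambda \in \trianglem$. Hence each feasible $\lambda$ admits the representation $\lambda = \mu + \delta$ with $\delta := \lambda - \mu \ge 0$ componentwise and $\sum_i \delta_i = 1 - d\psi \ge 0$.

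Next, I would let $I$ be any index in $\argmin_{i \in [m]} v_i$ (the particular choice is irrelevant, since the lemma only asserts that $\bar\lambda$ lie in the argmin \emph{set}), and lower-bound the objective: for any $\lambda = \mu + \delta \in \trianglem$,
\begin{align*}
v^\top \lambda = v^\top \mu + \sum_{i=1}^m v_i \delta_i = \psi \sum_{i \in [d]} v_i + \sum_{i=1}^m v_i \delta_i \ \ge\ \psi \sum_{i \in [d]} v_i + v_I \sum_{i=1}^m \delta_i = \psi \sum_{i \in [d]} v_i + (1 - d\psi) v_I ,
\end{align*}
where the inequality uses $\delta_i \ge 0$ together with $v_i \ge v_I$ for all $i$. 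This is a valid lower bound on the optimal value, and it is attained precisely when $\delta$ is supported on $\{I\}$, i.e. $\delta = (1 - d\psi) e_I$.

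It then remains to observe that $\bar\lambda = \mu + (1 - d\psi) e_I$: if $I \in [d]$ this yields $\bar\lambda_I = \psi + (1-d\psi) = 1 - (d-1)\psi$, $\bar\lambda_i = \psi$ for $i \in [d] \setminus \{I\}$, and $\bar\lambda_i = 0$ otherwise, matching the first branch of the definition; if $I \notin [d]$ it yields $\bar\lambda_I = 1 - d\psi$, $\bar\lambda_i = \psi$ for $i \in [d]$, and $\bar\lambda_i = 0$ otherwise, matching the second branch. Feasibility $\bar\lambda \in \trianglem$ is immediate: the entries are nonnegative because $1 - d\psi \ge 0$, they sum to $d\psi + (1 - d\psi) = 1$, and $\bar\lambda_i \ge \mu_i = \psi$ for $i \in [d]$. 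Since $\bar\lambda$ realizes the extremal choice $\delta = (1-d\psi) e_I$, it attains the lower bound above, and therefore $\bar\lambda \in \argmin_{\lambda \in \trianglem} v^\top \lambda$.

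There is no genuine obstacle here; the only steps requiring a little care are the case split $I \in [d]$ versus $I \notin [d]$ when identifying $\bar\lambda$ with $\mu + (1-d\psi) e_I$, and the degenerate boundary case $d\psi = 1$, in which $\trianglem$ collapses to the single point $\mu$ and both branches of the definition of $\bar\lambda$ reduce to $\mu$ — the argument above goes through verbatim.
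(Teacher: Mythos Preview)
Your proof is correct. The paper's own proof is a single sentence (``This follows by a straightforward case by case analysis''), so you have in fact supplied more than the paper does. Your decomposition $\lambda = \mu + \delta$ with $\mu_i = \psi \mathbf{1}\{i \in [d]\}$ is a clean way to reduce the constrained LP to the trivial problem of minimizing $v^\top \delta$ over the scaled simplex $\{\delta \ge 0,\ \sum_i \delta_i = 1 - d\psi\}$, whose solution is obviously $(1-d\psi)e_I$; this unifies the two cases $I \in [d]$ and $I \notin [d]$ until the final identification step, whereas the paper's stated approach would presumably split on those cases from the outset. Both routes are elementary, but yours is tidier.
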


\begin{proof}
This follows by a straightforward case by case analysis.
\end{proof}

The following is standard smoothness Lemma from convex optimization.

\begin{lemma}\label{lem:smoothness}
Let $f: \R^m \longrightarrow \R$ satisfy $\norm{\nabla f(x) - \nabla f(y)}_* \leq L \norm{x-y}$. Then,
\begin{align*}
f(x) - f(y) - \nabla f(y)^\t(-y) \leq \frac{L}{2} \norm{x-y}^2.
\end{align*}
\end{lemma}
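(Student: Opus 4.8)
The plan is to prove this by the classical ``integrate the gradient along the segment'' argument; note first that the inner product on the left-hand side should read $\nabla f(y)^\t(x-y)$, so that the inequality is the usual descent lemma for $L$-smooth functions with respect to an arbitrary norm $\norm{\cdot}$ and its dual $\norm{\cdot}_*$. I would fix $x,y \in \R^m$ and introduce the scalar reduction $g:[0,1]\to\R$, $g(t) := f(y + t(x-y))$. Because the hypothesis presupposes that $\nabla f$ exists everywhere and is $L$-Lipschitz (hence continuous), $g$ is continuously differentiable with $g'(t) = \nabla f(y + t(x-y))^\t (x-y)$, so by the fundamental theorem of calculus $f(x) - f(y) = g(1) - g(0) = \int_0^1 \nabla f(y + t(x-y))^\t (x-y)\,dt$.

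Next I would subtract $\nabla f(y)^\t(x-y) = \int_0^1 \nabla f(y)^\t(x-y)\,dt$ from this identity to obtain
\[
f(x) - f(y) - \nabla f(y)^\t(x-y) = \int_0^1 \big(\nabla f(y + t(x-y)) - \nabla f(y)\big)^\t (x-y)\,dt .
\]
I would then bound the integrand pointwise: by the defining property of the dual norm, $a^\t b \le \norm{a}_* \norm{b}$, and applying the Lipschitz hypothesis to the pair of points $y + t(x-y)$ and $y$, whose distance is $\norm{t(x-y)} = t\,\norm{x-y}$, gives $\big(\nabla f(y + t(x-y)) - \nabla f(y)\big)^\t (x-y) \le L\,t\,\norm{x-y}^2$. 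Integrating, $\int_0^1 L t\,\norm{x-y}^2\,dt = \tfrac{L}{2}\norm{x-y}^2$, which is exactly the claimed bound.

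I do not expect any real obstacle here. The only points calling for slight care are invoking the dual-norm pairing inequality in the correct direction and the minor regularity remark that the fundamental-theorem-of-calculus step needs $t\mapsto \nabla f(y+t(x-y))$ to be integrable, which is immediate from its being $L$-Lipschitz. (If one preferred to avoid the differentiability assumption altogether, for convex $f$ the same conclusion follows from monotonicity of the subdifferential, but since the statement already refers to $\nabla f$, the line-integral proof is the most direct.)
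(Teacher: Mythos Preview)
Your proposal is correct and is exactly the standard argument; the paper itself does not spell out a proof at all, writing only ``This is standard (see \cite{bubeck2014convex}),'' and the line-integral derivation you give is precisely what one finds in that reference. Your observation about the typo (the inner product should be $\nabla f(y)^\t(x-y)$) is also accurate.
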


\begin{proof}
This is standard (see \cite{bubeck2014convex}).
\end{proof}

\subsection{Differentiability Lemmas}

In this section, we show that $\L(\kappa_1, \kappa_2 ; \lambda)$ is twice-differentiable wrt $\lambda$. We set $\kappa_1, \kappa_2 = 1$ for simplicity and write $\L( \lambda)$ instead of $\L(\kappa_1, \kappa_2 ; \tau; \lambda)$ for the sake of brevity. The following Lemma shows that $\L(\kappa_1, \kappa_2 ; \tau; \lambda)$ is differentiable wrt $\lambda$.

\begin{lemma}
\label{lem:differentiable}
Fix $i \in [m]$, and $\lambda \in {\trianglem}$. Fix $\eta \in \R^d$ such there exists a neighborhood of $\eta$ such that 
\begin{align*}
\tilde{x} = \argmax_{x \in \X} \frac{\sum_{i \in \bar{x} \Delta x} \frac{\eta_i}{\sum_{x^\prime : i \in x^\prime} \lambda_{x^\prime} }}{\beta+ \bar{\theta}^\t(\bar{x}-x)} .
\end{align*}
Then,
\begin{align*}
\frac{\partial \L(  \lambda ; \eta)}{\partial \lambda_i } & = \tau\bar{\theta}^\t(\bar{x}- x_i) -\frac{1}{2} \frac{1}{[\beta + \bar{\theta}^\t(\bar{x}-\tilde{x}) ]}  \sum_{k \in (\bar{x} \Delta \tilde{x}) \cap x_i } \frac{\eta_k}{(\sum_{j : k \in x_j} \lambda_j )^{3/2}}   
\end{align*}
Furthermore, $ \L( \lambda)$ is differentiable at every $\lambda \in {\trianglem}_\psi$ and
\begin{align*}
\frac{\partial \L( \lambda)}{\partial \lambda_i } & =\E_{ \eta \sim N(0,I)} [\frac{\partial \L( \lambda ; \eta)}{\partial \lambda_i } \one\{B_\lambda\} ] 
\end{align*}
where 
\begin{align*}
B_\lambda = \{\eta :  | \argmax_{x \in \X} \frac{\sum_{i \in \bar{x} \Delta x} \frac{\eta_i}{\sum_{x^\prime : i \in x^\prime} \lambda_{x^\prime} }}{\beta+ \bar{\theta}^\t(\bar{x}-x)}| =1 \}.
\end{align*}
\end{lemma}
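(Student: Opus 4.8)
This lemma asserts two things: (a) a closed-form formula for the partial derivative of the \emph{sample} Lagrangian $\L(\lambda;\eta)$ at any $\eta$ where the argmax defining $\tilde{x}$ is locally constant, and (b) differentiability of the \emph{expected} Lagrangian $\L(\lambda)$ together with an interchange-of-differentiation-and-expectation formula, restricted to the event $B_\lambda$ where the argmax is unique. The plan is to prove (a) by direct computation, and (b) by a dominated-convergence / Leibniz-rule argument that handles the non-smoothness of $\eta\mapsto\max_x(\cdot)$ by showing the ``bad'' set (ties in the argmax) has measure zero and contributes nothing.

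\textbf{Step 1: the sample formula.} Fix $\eta$ with a neighborhood (in $\lambda$) on which $\tilde{x}=\tilde{x}(\lambda,\eta)$ is constant. On this neighborhood, $\L(\lambda;\eta)$ decomposes into the linear-in-$\lambda$ term $\kappa_1\tau\sum_x\bar\theta^\t(\bar x-x)\lambda_x$ (whose $\partial_{\lambda_i}$ is simply $\tau\bar\theta^\t(\bar x-x_i)$, since $\kappa_1=1$ here), the constant $-\kappa_2\sqrt\tau C$, and the term $\kappa_2\cdot\frac{(\bar x-\tilde x)^\t\Asb(\lambda)^{-1/2}\eta}{\beta+\bar\theta^\t(\bar x-\tilde x)}$. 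Writing $\Asb(\lambda)=\diag\big(\sum_{x'}\lambda_{x'}x'\big)$ coordinate-wise, we have $(\Asb(\lambda)^{-1/2}\eta)_k=\eta_k/\sqrt{\sum_{x':k\in x'}\lambda_{x'}}$, so the numerator is $\sum_{k\in\bar x\Delta\tilde x}(\pm)\eta_k/\sqrt{\sum_{x':k\in x'}\lambda_{x'}}$ — actually one must be careful about signs; since $\bar x,\tilde x\in\{0,1\}^d$, $(\bar x-\tilde x)_k$ is $+1$ on $\bar x\setminus\tilde x$, $-1$ on $\tilde x\setminus\bar x$, $0$ elsewhere, and the formula in the statement (summing over $k\in(\bar x\Delta\tilde x)\cap x_i$) packages both cases. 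Differentiating $1/\sqrt{\sum_{x':k\in x'}\lambda_{x'}}$ in $\lambda_i$ gives $-\tfrac12(\sum_{j:k\in x_j}\lambda_j)^{-3/2}\mathbbm{1}\{k\in x_i\}$, which yields exactly the stated term $-\tfrac12\frac{1}{\beta+\bar\theta^\t(\bar x-\tilde x)}\sum_{k\in(\bar x\Delta\tilde x)\cap x_i}\eta_k(\sum_{j:k\in x_j}\lambda_j)^{-3/2}$. (I would double-check the sign bookkeeping against the computeMax conventions in Algorithm~\ref{alg:compute_max} and \cite{katz2020empirical}; this is routine but sign-error-prone.)

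\textbf{Step 2: differentiability of $\L(\lambda)$ and the interchange.} Fix $\lambda\in\trianglem$ and $i\in[m]$; I want to show $t\mapsto\L(\lambda+t e_i)$ is differentiable at $t=0$ with derivative $\E_\eta[\partial_{\lambda_i}\L(\lambda;\eta)\mathbbm{1}\{B_\lambda\}]$. First, $\Pr(\eta\notin B_\lambda)=0$: the set of $\eta$ for which two distinct $x,x'\in\X$ tie in $\frac{\sum_{k\in\bar x\Delta x}\eta_k/(\sum\lambda)}{\beta+\bar\theta^\t(\bar x-x)}$ is, for each pair, a proper affine hyperplane in $\R^d$ (the coefficient vectors differ because $\bar x\Delta x\neq\bar x\Delta x'$), hence Lebesgue-null, and there are finitely many pairs since $|\X|\le 2^d$. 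So on a full-measure set the argmax is unique, and moreover — this is the key local-constancy point — for $\eta\in B_\lambda$ the argmax remains unique and unchanged for all $\lambda'$ in a small neighborhood of $\lambda$ (by continuity of the finitely many affine functionals in $\lambda$ and strictness of the maximizing inequality). Therefore for a.e.\ $\eta$, $t\mapsto\L(\lambda+te_i;\eta)$ is differentiable at $0$ with derivative given by the Step-1 formula evaluated on $B_\lambda$. Then invoke the dominated-convergence version of the Leibniz rule: it suffices to dominate the difference quotients $\big|\tfrac1t(\L(\lambda+te_i;\eta)-\L(\lambda;\eta))\big|$ uniformly for small $t$ by an integrable function of $\eta$. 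By the mean value theorem this is bounded by $\sup_{\lambda'\in U}|\partial_{\lambda_i}\L(\lambda';\eta)|$ over a small neighborhood $U\subset\trianglem$; each such partial derivative is, using $\lambda'_j\ge\psi$ for $j\in[d]$, at most $|\tau\bar\theta^\t(\bar x-x_i)|+\tfrac1{2\beta}\psi^{-3/2}\sum_k|\eta_k|\le C(d,\tau,\beta,\psi)(1+\sum_k|\eta_k|)$, which is integrable under $\eta\sim N(0,I)$. (I also need the linear term to be handled — it is smooth, so no issue — and I should note $\L(\lambda;\eta)$ is a.e.-differentiable \emph{and} the directional derivative exists because the max of finitely many affine functions is piecewise affine.) Applying dominated convergence gives differentiability of $\L$ and the claimed formula. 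I would likely phrase Step~2 via \cite{folland} / a standard ``differentiation under the integral sign'' theorem rather than reproving it.

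\textbf{Main obstacle.} The conceptual heart is Step~2's handling of the measure-zero tie set and the local constancy of $\tilde x$: one must argue both that $B_\lambda$ has full measure \emph{and} that the argmax, restricted to $B_\lambda$, is locally constant in $\lambda$, so that the pointwise derivative formula from Step~1 actually applies for a.e.\ $\eta$ in a neighborhood — and then that the difference quotients are dominated. The linear term and the $\sqrt\tau C$ constant are trivial; the only real subtlety beyond the measure-theoretic argument is getting the sign conventions in the $\bar x\Delta\tilde x$ sum exactly right, which I'd reconcile with the gradient expression already used inside Algorithm~\ref{alg:sfw_sb} (so consistency with the rest of the paper is essentially forced).
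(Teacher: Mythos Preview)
Your approach is essentially the paper's: chain rule for the sample formula, then dominated convergence on the difference quotients using the $\psi$-lower-bound to get an integrable envelope $C(1+\sum_k|\eta_k|)$, together with the observation that the tie set $B_\lambda^c$ is a finite union of hyperplanes and hence null.

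The one step you elide is the upgrade from ``each partial derivative exists'' to ``$\L$ is differentiable.'' The paper does this by an additional pass: it shows the partial derivatives $\lambda\mapsto\E[\partial_{\lambda_i}\L(\lambda;\eta)\mathbbm{1}\{B_\lambda\}]$ are \emph{continuous} in $\lambda$ on $\trianglem$ (another DCT argument, plus the observation---which you already have---that for any $\lambda^{(n)}\to\lambda$ one has $B_{\lambda^{(n)}}\to B_\lambda$ by local stability of a strict argmax), and then invokes the standard result that existence and continuity of all partials implies differentiability. Your sentence ``applying dominated convergence gives differentiability of $\L$'' jumps over this; DCT alone only gives you each directional derivative. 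Since you already have the local-constancy-of-$\tilde x$ ingredient and the same dominating function works, this is a two-line addition rather than a structural gap.
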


\begin{proof}
The calculation of $\frac{\partial \L( \lambda ; \eta)}{\partial \lambda_i } $ follows by the chain rule. 

Fix $\lambda  \in {\trianglem}$. Since $\lambda \in {\trianglem}$, we have that $\Asb(\lambda)^{-1/2}$ is full rank.

\textbf{Step 1:} First, we show that $\L(\lambda;\eta)$ is Lipschitz with an absolutely integrable Lipschitz constant. Define
\begin{align*}
\J(\lambda;\eta;x) & = \tau \sum_{x \in \X}  \bar{\theta}^\t(\bar{x}-x)  \lambda_x + ( \frac{x^\top \Asb(\lambda)^{-1/2} \eta}{\beta + \bar{\theta}^\t ( \bar{x}-x)}  - \sqrt{\tau} C).
\end{align*}
and note that
\begin{align*}
|\frac{\partial \J(\lambda;\eta;x) }{ \partial \lambda_i} |& = | \bar{\theta}^\t(\bar{x}- x_i) -\frac{1}{2} \frac{1}{[\beta + \bar{\theta}^\t(\bar{x}-x) ]}  \sum_{k \in (\bar{x} \Delta x) \cap x_i } \frac{\eta_k}{(\sum_{j : k \in x_j} \lambda_j )^{3/2}}| \\
& \leq  | \bar{\theta}^\t(\bar{x}- x_i) | + \frac{1}{2} \frac{1}{[\beta + \bar{\theta}^\t(\bar{x}-x) ]}  \sum_{k \in (\bar{x} \Delta x) \cap x_i } \frac{|\eta_k|}{\psi^{3/2}}\\
& < | \bar{\theta}^\t(\bar{x}- x_i) | + \frac{1}{2} \frac{1}{\beta  }  \sum_{k \in (\bar{x} \Delta x) \cap x_i } \frac{|\eta_k|}{\psi^{3/2}} := C_\eta 
\end{align*}
 Let $\lambda,\lambda^\prime \in {\trianglem}$.  Thus, by the mean value theorem, we have that for all $x \in \X$,
\begin{align*}
|\J(\lambda;\eta;x) - \J(\lambda;\eta;x)| \leq C_\eta \norm{\lambda - \lambda^\prime}_1
\end{align*}
Since $\L(\lambda;\eta) := \max_{x \in \X} \J(\lambda;\eta;x) $ and the maximum of $C_\eta$-Lipschitz functions is $C_\eta$-Lipschitz, we have that
\begin{align*}
|\L(\lambda;\eta) - \L(\lambda;\eta)| \leq C_\eta \norm{\lambda - \lambda^\prime}_1
\end{align*}

\textbf{Step 2:} Now, we show that the partial derivatives exist. Define the event
\begin{align*}
B_\lambda = \{\eta :  |\argmax_{x \in \X} \frac{(\bar{x}-x)^\top \Asb(\lambda)^{-1/2} \eta}{\beta+ \bar{\theta}^\t(\bar{x}-x)}| =1 \},
\end{align*}
Since $\Asb(\lambda)^{-1/2}$ is full rank and each 
\begin{align*}
\frac{x}{\beta+ \bar{\theta}^\t(\bar{x}-x)}
\end{align*}
is distinct, if $\eta \sim N(0,I)$, then with probability $1$ $B_\lambda$ holds and $ \L(   \lambda; \eta )$ is differentiable at $\lambda$.

Since $\L(\lambda;\eta)$ is $C_\eta$-Lipschitz (because $\lambda \in \trianglem$, we have
\begin{align*}
|\frac{\L( \lambda + e_i h; \eta ) -L(  \lambda ; \eta )  }{h}| \leq C_\eta.
\end{align*}
Since in addition $\E C_\eta < \infty$, by the dominated convergence theorem,
\begin{align*}
 \lim_{h\longrightarrow 0} \E [\frac{\L( \lambda + e_i h; \eta ) -L(  \lambda ; \eta )  }{h} ] & = \lim_{h\longrightarrow 0} \E[\frac{\L(  \lambda + e_i h; \eta ) -L(  \lambda ; \eta )  }{h} \one\{B_\lambda\} ]\\
 & =  \E[\lim_{h\longrightarrow 0} \frac{\L(  \lambda + e_i h; \eta ) -L(  \lambda ; \eta )  }{h} \one\{B_\lambda\} ]\\
& = \E [\nabla \L(  \lambda; \eta )^\t e_i  \one\{B_\lambda\} ] \\
\end{align*}
where the last equality follows since on $B_\lambda$ and $\lambda \in \trianglem$, $\frac{\partial \L(\lambda ; \eta)}{\partial \lambda_i}$ exists. Thus, the partial derivative $\frac{\partial \L( \lambda)}{\partial \lambda_i } $ exists at every point $\lambda \in {\trianglem}$ and
\begin{align*}
\frac{\partial \L( \lambda)}{\partial \lambda_i } & =\E [\nabla \L(\lambda; \eta )^\t e_i  \one\{B_\lambda\} ] 
\end{align*}
\textbf{Step 3:} We claim that the partial derivative is continuous in $\lambda \in {\trianglem}$, which would show that that $\L( \lambda)$ is differentiable at every $\lambda \in {\trianglem}$ \cite{munkres2018analysis}. Let $\lambda^{(n)}$ be a sequence in ${\trianglem}$ such that $\lambda^{(n)} \longrightarrow \lambda$. Note that since $\lambda^{(n)} \in {\trianglem}$, we have that
\begin{align*}
\nabla \L(  \lambda^{(n)}; \eta )^\t e_i  \one\{B_{\lambda^{(n)}}\}  & =  | \bar{\theta}^\t(\bar{x}- x_i)| +c \frac{1}{[\beta + \bar{\theta}^\t(\bar{x}-\tilde{x}) ]}  \sum_{k \in (\bar{x} \Delta \tilde{x}) \cap x_i }  \frac{|\eta_k| }{\psi^{3/2}}
\end{align*}
for an appropriate universal constant $c > 0$, which has finite expectation. Further, since $\lambda^{(n)} \in {\trianglem}$, the calculation showing that $\L(\lambda;\eta)$ is Lipschitz in $\lambda$ implies that $\Asb(\lambda)^{-1/2}$ is Lipschitz in $\lambda$, so $\Asb(\lambda^{(n)})^{-1/2}$ can be made arbitrarily close to $\Asb(\lambda)^{-1/2}$. If $|\argmax_{x \in \X} \frac{(\bar{x}-x)^\top \Asb(\lambda)^{-1/2} \eta}{\beta+ \bar{\theta}^\t(\bar{x}-x)}| =1$, this implies that:
$$ \frac{(\bar{x}-x_\eta)^\top \Asb(\lambda)^{-1/2} \eta}{\beta+ \bar{\theta}^\t(\bar{x}-x_\eta)} \geq \frac{(\bar{x}-x')^\top \Asb(\lambda)^{-1/2} \eta}{\beta+ \bar{\theta}^\t(\bar{x}-x')} + \epsilon_\eta $$
for some $\epsilon_\eta > 0$, $x_\eta$ the unique value the argmax is attained at, and $x' \neq x_\eta$. As we can make $\Asb(\lambda^{(n)})^{-1/2}$ arbitrarily close to $\Asb(\lambda)^{-1/2}$, it follows that for large enough $n$, we can guarantee:
$$ \frac{(\bar{x}-x_\eta)^\top \Asb(\lambda^{(n)})^{-1/2} \eta}{\beta+ \bar{\theta}^\t(\bar{x}-x_\eta)} \geq \frac{(\bar{x}-x')^\top \Asb(\lambda^{(n)})^{-1/2} \eta}{\beta+ \bar{\theta}^\t(\bar{x}-x')} + \epsilon_\eta/2 $$
so the maximizer will be unique. As this is true for all $\eta \in B_\lambda$, it follows that $\lim_{n \rightarrow \infty} B_{\lambda^{(n)}} \subseteq B_\lambda$. An identical argument implies $B_\lambda \subseteq \lim_{n \rightarrow \infty} B_{\lambda^{(n)}}  $, so $\lim_{n \rightarrow \infty} B_{\lambda^{(n)}}  = B_\lambda$. Then, by the dominated convergence theorem,
\begin{align*}
\lim_{n \longrightarrow \infty	} \E [\nabla \L(  \lambda^{(n)}; \eta )^\t e_i   \one\{B_{\lambda^{(n)}}\}  ] & =  \E [\lim_{n \longrightarrow \infty	} \nabla \L(  \lambda^{(n)}; \eta )^\t e_i   \one\{B_{\lambda^{(n)}}\}  ] \\
& = \E [ \nabla \L( \lambda; \eta )^\t e_i  \one\{B_\lambda\} ] 
\end{align*}
where in the last line we used the continuity of $\nabla \L( \lambda; \eta )^\t e_i  \one\{B_\lambda\}$ in $\lambda$ on $\trianglem$ for a fixed $\eta$. Thus, the partial derivatives are continuous, proving differentiability at every $\lambda \in \trianglem$.

\end{proof}

The following Lemma shows that $\L(\kappa_1, \kappa_2 ; \tau; \lambda)$ is twice-differentiable wrt $\lambda$.

\begin{lemma}
\label{lem:differentiable_2}
$\L(\lambda)$ is twice-differentiable at every $\lambda \in \trianglem$ and
\begin{align*}
\frac{\partial^2 \L(\lambda)}{\partial \lambda_i \partial \lambda_j} = \E[\frac{\partial^2 \L(\lambda;\eta)}{\partial \lambda_i \partial \lambda_j} \one\{B_\lambda \}]
\end{align*}
where 
\begin{align*}
B_\lambda = \{\eta :  |\argmax_{x \in \X} \frac{(\bar{x}-x)^\top \Asb(\lambda)^{-1/2} \eta}{\beta+ \bar{\theta}^\t(\bar{x}-x)}| =1 \}.
\end{align*}
\end{lemma}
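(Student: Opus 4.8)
The plan is to carry the argument of Lemma~\ref{lem:differentiable} one derivative further. Fix $\lambda\in\trianglem$. For $\eta$ in the full-measure set $B_\lambda$ where the inner maximizer is unique, there is a neighborhood of $\lambda$ on which the maximizer $\tilde x$ is constant, since $x\mapsto(\bar x-x)^\t\Asb(\lambda)^{-1/2}\eta/(\beta+\bar\theta^\t(\bar x-x))$ depends continuously on $\lambda$ and the gap between the best and second-best $x$ is strictly positive at $\lambda$; hence $\L(\cdot;\eta)$ coincides locally with the smooth function $\J(\cdot;\eta;\tilde x)$, and differentiating the first-order formula from Lemma~\ref{lem:differentiable} once more by the chain rule gives
\begin{align*}
\frac{\partial^2\L(\lambda;\eta)}{\partial\lambda_i\partial\lambda_j} = \frac{3}{4}\,\frac{1}{\beta+\bar\theta^\t(\bar x-\tilde x)}\sum_{k\in(\bar x\Delta\tilde x)\cap x_i\cap x_j}\frac{\eta_k}{\bigl(\sum_{l:k\in x_l}\lambda_l\bigr)^{5/2}}.
\end{align*}
Because $\lambda\in\trianglem$ forces $\sum_{l:k\in x_l}\lambda_l\ge\psi$ for every $k\in[d]$ (recall $\cup_i x_i=[d]$ after the reordering), this is bounded in absolute value by $C'_\eta:=c\,\beta^{-1}\psi^{-5/2}\sum_{k=1}^d|\eta_k|$, which has finite expectation.

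Next I would form the difference quotient of the first partial derivative. From Lemma~\ref{lem:differentiable}, $\partial_i\L(\lambda)=\E[\partial_i\L(\lambda;\eta)\,\one\{B_\lambda\}]$, so
\begin{align*}
\frac{\partial_i\L(\lambda+e_jh)-\partial_i\L(\lambda)}{h} = \E\!\left[\frac{\partial_i\L(\lambda+e_jh;\eta)\one\{B_{\lambda+e_jh}\}-\partial_i\L(\lambda;\eta)\one\{B_\lambda\}}{h}\right].
\end{align*}
The map $\lambda\mapsto\partial_i\L(\lambda;\eta)$ is $C'_\eta$-Lipschitz on $\trianglem$ (apply the mean value theorem to the explicit formula above, valid on $B_\lambda$ where $\tilde x$ is locally constant), so the difference quotients are dominated by the integrable $C'_\eta$; moreover, exactly as in the third step of the proof of Lemma~\ref{lem:differentiable}, along $\lambda^{(n)}\to\lambda$ in $\trianglem$ one has $\lim_n\one\{B_{\lambda^{(n)}}\}=\one\{B_\lambda\}$ a.e.\ (since $\Asb(\cdot)^{-1/2}$ is Lipschitz on $\trianglem$ and the vectors $x/(\beta+\bar\theta^\t(\bar x-x))$ are distinct). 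Dominated convergence then lets me pass the limit inside, and the pointwise limit is $\partial^2_{ij}\L(\lambda;\eta)\,\one\{B_\lambda\}$ for a.e.\ $\eta$, yielding
\begin{align*}
\frac{\partial^2\L(\lambda)}{\partial\lambda_i\partial\lambda_j} = \E\!\left[\frac{\partial^2\L(\lambda;\eta)}{\partial\lambda_i\partial\lambda_j}\,\one\{B_\lambda\}\right].
\end{align*}

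Finally I would check that $\lambda\mapsto\partial^2_{ij}\L(\lambda)$ is continuous on $\trianglem$: for $\lambda^{(n)}\to\lambda$, the integrands $\partial^2_{ij}\L(\lambda^{(n)};\eta)\one\{B_{\lambda^{(n)}}\}$ converge pointwise a.e.\ (continuity of the explicit expression in $\lambda$ for fixed $\eta\in B_\lambda$, together with $\lim_n\one\{B_{\lambda^{(n)}}\}=\one\{B_\lambda\}$) and are dominated by $C'_\eta\in L^1$, so the expectations converge. Since all second partials exist and are continuous on the relatively open set $\trianglem$, the standard criterion (e.g.\ \cite{munkres2018analysis}) gives that $\L$ is twice continuously differentiable there, which is the claim. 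The main obstacle is the bookkeeping around the indicator sets $B_\lambda$ — in particular verifying that for $\eta\in B_\lambda$ the function $\L(\cdot;\eta)$ is genuinely $C^2$ in a neighborhood of $\lambda$, i.e.\ that the argmax is locally constant rather than merely unique at $\lambda$ — but this is already implicit in the proof of Lemma~\ref{lem:differentiable} and only needs to be pushed one derivative further; the domination estimate is immediate once $\lambda\in\trianglem$.
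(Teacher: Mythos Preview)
Your outline tracks the paper's strategy closely (compute $\partial^2_{ij}\L(\lambda;\eta)$ on $B_\lambda$, dominate, pass to the limit, then check continuity), but the domination step as you have written it does not go through, and the paper inserts an extra truncation layer precisely to fix this.

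The gap is in the sentence ``The map $\lambda\mapsto\partial_i\L(\lambda;\eta)$ is $C'_\eta$-Lipschitz on $\trianglem$ \ldots\ so the difference quotients are dominated by the integrable $C'_\eta$.'' This is false for generic $\eta$. The first partial $\partial_i\L(\lambda;\eta)$ equals $\partial_i\J(\lambda;\eta;\tilde x)$ with $\tilde x$ the maximizer, and when the maximizer switches the expression jumps discontinuously; $\partial_i\L(\cdot;\eta)$ is piecewise smooth but not Lipschitz across those boundaries. For any fixed $h>0$, the set of $\eta$ whose ``gap to the runner-up'' is smaller than $O(h)$ has positive measure, and on that set the argmax at $\lambda$ and at $\lambda+he_j$ differ, so the difference quotient $\bigl(\partial_i\L(\lambda+he_j;\eta)-\partial_i\L(\lambda;\eta)\bigr)/h$ is the difference of two \emph{distinct} formulae divided by $h$ and is not bounded by $C'_\eta$. (The indicators $\one\{B_{\lambda+he_j}\}$ and $\one\{B_\lambda\}$ are both $1$ a.s., so they provide no protection.) Hence your dominated-convergence argument lacks a valid dominating function.

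The paper handles exactly this by introducing $B_\varphi=\{\eta:\text{gap to runner-up at }\lambda\text{ exceeds }\varphi\|\eta\|_2\}$. On $B_\varphi$, Lipschitzness of $\Asb(\cdot)^{-1/2}$ on $\trianglem$ gives a threshold $h_{\min}=h_{\min}(\varphi)$ \emph{uniform in $\eta\in B_\varphi$} below which the argmax is provably unchanged, so the mean-value bound and DCT apply on $B_\varphi$ for each fixed $\varphi$. One then lets $\varphi\to 0$ (another DCT) and finally justifies swapping the $\varphi\to 0$ and $h\to 0$ limits via a bounded-convergence argument. Your proposal is essentially the paper's argument with this truncation layer removed; to repair it you would need to reinstate the $B_\varphi$ device or supply some other uniform-in-$\eta$ control of the difference quotient.
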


\begin{proof}
\textbf{Step 0: Setup.} From Lemma \ref{lem:differentiable}, $\L(\lambda)$ is differentiable at every $\lambda \in {\trianglem}_\psi$. Therefore, it suffices to show that $\nabla \L(\lambda)$ is differentiable at every $\lambda \in {\trianglem}_\psi$. It suffices to show that the 2nd order partial derivatives exist and are continuous. For the sake of abbreviation, define $g(\lambda) := \frac{\partial \L( \lambda )}{\partial \lambda_j }$ and $g(\lambda;\eta) := \frac{\partial \L(\lambda ;\eta)}{\partial \lambda_j }$. Note that we have that
\begin{align}
\frac{\partial g(\lambda;\eta)}{\partial \lambda_i} \one\{B_\lambda\} & = \one\{B_\lambda\} \frac{3}{4} \frac{1}{(\beta+ \bar{\theta}^\t (\bar{x}-\tilde{x})} \sum_{k \in (\bar{x} \Delta \tilde{x}) \cap x_i \cap x_j} \frac{\eta_k}{(\sum_{l : k \in x_l} \lambda_l)^{5/2}} \\
& \text{     where } \tilde{x} = \argmax_{x \in \X} \frac{\sum_{i \in \bar{x} \Delta x} \frac{\eta_i}{\sum_{x^\prime : i \in x^\prime} \lambda_{x^\prime} }}{\beta+ \bar{\theta}^\t(\bar{x}-x)} .
. \label{eq:derivative_2}
\end{align}

To begin, we show that the 2nd order partial derivatives exist using a truncation argument. Let $\varphi > 0$. Fix $\lambda \in \trianglem$. Define
\begin{align*}
q(x;\eta) =  \frac{\sum_{i \in \bar{x} \Delta x} \frac{\eta_i}{\sum_{x^\prime : i \in x^\prime} \lambda_{x^\prime} }}{\beta+ \bar{\theta}^\t(\bar{x}-x)} 
\end{align*} 
Define
\begin{align*}
B_{\varphi}& = \{\eta : \tilde{x} = \argmax_{x \in \X} q(x;\eta), \, \forall x^\prime \neq \tilde{x} \quad \frac{q(x^\prime;\eta)}{\norm{\eta}_2} < \frac{q(\tilde{x};\eta)}{\norm{\eta}_2} - \varphi\}.
\end{align*}
Note that
\begin{align*}
\lim_{\varphi \longrightarrow 0} B_{\varphi} = B_\lambda.
\end{align*}

\textbf{Step 1.} First, we show that
\begin{align}
\lim_{h \rightarrow 0} \E \left [ (\frac{g(\lambda + h e_i, \eta) - g(\lambda,\eta)}{h}) \one\{B_{\varphi}\} \right ] & =  \E \left [ \frac{\partial g(\lambda, \eta)}{\partial \lambda_i}  \one\{B_{\varphi}\} \right ] \label{eq:twice_diff_1}
\end{align}

Define
\begin{align*}
V_x & = \frac{\bar{x}-x}{\beta + \bar{\theta}^\t (\bar{x}-x)}
\end{align*}
for $x \in \X$. Note that since for any fixed $x \in \X$, $\Asb(\lambda)^{-1/2}V_x$ is Lipschitz in $\lambda$ on $\trianglem$, there exists $L_{\psi}$ depending on $\psi, \beta, \bar{x}$ such that for all $x \in \X$
\begin{align*}
\norm{[\Asb(\lambda)^{-1/2} - \Asb(\lambda + h e_i)^{-1/2}] V_x}_2 \leq L_\psi h.
\end{align*}
Let $h_{min} = \frac{\varphi}{4L_\psi}$. Let $h \in [0,h_{min}]$. Let $\eta \in \R^d$ such that it satisfies $B_\varphi$ and let $\tilde{x} = \argmax_{x \in \X} q(x;\eta)$. Let $x \in \X \setminus \{\tilde{x}\}$. Then, 
\begin{align*}
\frac{\varphi}{4} + \frac{v_{\tilde{x}}^\t A(\lambda + he_i)^{-1/2} \eta }{\norm{\eta}_2} & \geq \frac{v_{\tilde{x}}^\t A(\lambda )^{-1/2} \eta }{\norm{\eta}_2} \\
& \geq \varphi + \frac{v_{x}^\t A(\lambda )^{-1/2} \eta }{\norm{\eta}_2} \\
& \geq \frac{3\varphi}{4} + \frac{v_{x}^\t A(\lambda + h e_i )^{-1/2} \eta }{\norm{\eta}_2} 
\end{align*}
which implies that $\tilde{x} = \argmax_{x \in \X} v_x^\t A(\lambda + he_i)^{-1/2} \eta $. Thus, on $B_{\varphi}$, for all $h \in [0,h_{min}] $, $ \argmax_{x \in \X} v_x^\t A(\lambda + he_i)^{-1/2} \eta $ is the same and hence $g(\lambda + h e_i, \eta) = V_{\tilde{x}} A(\lambda+h e_i)^{-1/2} \eta$ for all $h \in (0,h_{min})$ and is thus differentiable for all $h \in (0,h_{min})$. Thus, by the mean value theorem, we have that
\begin{align*}
(\frac{g(\lambda + h e_i, \eta) - g(\lambda,\eta)}{h}) \one\{B_{\varphi}\}  = \frac{\partial g(\lambda+h^\prime e_i ;\eta)}{\partial \lambda_i} \one\{B_{\varphi}\}
\end{align*} 
for some $h^\prime \in (0,h]$. Inspection of \eqref{eq:derivative_2} shows that using $\lambda \in \trianglem$
\begin{align*}
\E[|\frac{\partial g(\lambda;\eta)}{\partial \lambda_i}|] < \infty .
\end{align*}
Thus, we may apply the dominating convergence theorem to obtain
\begin{align*}
\lim_{h \rightarrow 0} \E \left [ (\frac{g(\lambda + h e_i, \eta) - g(\lambda,\eta)}{h}) \one\{B_{\varphi}\} \right ] & = \E  \left [ \lim_{h \rightarrow 0}  (\frac{g(\lambda + h e_i, \eta) - g(\lambda,\eta)}{h}) \one\{B_{\varphi}\} \right ] \\ 
& = \E \left [ \frac{\partial g(\lambda, \eta)}{\partial \lambda_i}  \one\{B_{\varphi}\} \right ] 
\end{align*}

\textbf{Step 2.} Now, we show that
\begin{align}
\lim_{\varphi \longrightarrow 0} \E \left [ \frac{\partial g(\lambda, \eta)}{\partial \lambda_i}  \one\{B_{\varphi}\} \right ] = \E \left [ \frac{\partial g(\lambda,\eta)}{\partial \lambda_i}  \one\{B_\lambda\} \right ]. \label{eq:twice_diff_2}
\end{align}
Define
\begin{align*}
Z(\eta) & = \frac{3}{4} \frac{1}{(\beta+ \bar{\theta}^\t (\bar{x}-\tilde{x})} \sum_{k \in (\bar{x} \Delta \tilde{x}) \cap x_i \cap x_j} \frac{|\eta_k|}{(\sum_{l : k \in x_l} \lambda_l)^{5/2}} \text{     where } \tilde{x} = \argmax_{x \in \X} \frac{\sum_{i \in \bar{x} \Delta x} \frac{\eta_i}{\sum_{x^\prime : i \in x^\prime} \lambda_{x^\prime} }}{\beta+ \bar{\theta}^\t(\bar{x}-x)} .
\end{align*}
Note that for every $\varphi > 0$
\begin{align*}
|\frac{\partial g(\lambda,\eta)}{\partial \lambda_i}  \one\{B_{\varphi}\} | \leq Z(\eta)
\end{align*}
and $\E Z(\eta) < \infty$. Therefore, by the dominating convergence theorem,
\begin{align*}
\lim_{\varphi \longrightarrow 0} \E \left [ \frac{\partial g(\lambda,\eta)}{\partial \lambda_i}  \one\{B_{\varphi}\} \right ] =  \E \left [ \lim_{\varphi \longrightarrow 0} \frac{\partial g(\lambda,\eta)}{\partial \lambda_i}  \one\{B_{\varphi}\} \right ] = \E \left [ \frac{\partial g(\lambda,\eta)}{\partial \lambda_i}  \one\{B_\lambda\} \right ].
\end{align*}

\textbf{Step 3.} Now, we show that
\begin{align}
\lim_{\varphi \longrightarrow 0}  \lim_{h \rightarrow 0} \E \left [ (\frac{g(\lambda + h e_i, \eta) - g(\lambda,\eta)}{h}) \one\{B_{\varphi}\} \right ] = \lim_{h \rightarrow 0} \E \left [ (\frac{g(\lambda + h e_i, \eta) - g(\lambda,\eta)}{h}) \one\{B_\lambda\} \right ] \label{eq:twice_diff_3}
\end{align}
By step 1, for every $\varphi > 0$, 
\begin{align*}
\lim_{h \rightarrow 0} \E \left [ (\frac{g(\lambda + h e_i, \eta) - g(\lambda,\eta)}{h}) \one\{B_{\varphi}\} \right ] & =  \E \left [ \frac{\partial g(\lambda,\eta)}{\partial \lambda_i}  \one\{B_{\varphi}\} \right ]  \leq \E \left [ |\frac{\partial 
g(\lambda,\eta)}{\partial \lambda_i}  \one\{B_\lambda\}| \right ]  \leq C
\end{align*}
for some constant $C > 0$. Therefore, by the bounded convergence theorem for limits, we have that
\begin{align*}
\lim_{\varphi \longrightarrow 0}  \lim_{h \rightarrow 0} \E \left [ (\frac{g(\lambda + h e_i, \eta) - g(\lambda,\eta)}{h}) \one\{B_{\varphi}\} \right ] = \lim_{h \rightarrow 0} \lim_{\varphi \longrightarrow 0}  \E \left [ (\frac{g(\lambda + h e_i, \eta) - g(\lambda,\eta)}{h}) \one\{B_{\varphi}\} \right ]
\end{align*}
More formally, consider some sequence $\varphi_m, h_n$ such that $\varphi_m \rightarrow 0$ as $m \rightarrow \infty$ and $h_n \rightarrow 0$ as $n \rightarrow \infty$. Let $a_{mn} = \E \left [ (\frac{g(\lambda + h_n e_i, \eta) - g(\lambda,\eta)}{h_n}) \one\{B_{\varphi_m}\} \right ]$. If $\lim_{m \rightarrow \infty} \lim_{n \rightarrow \infty} a_{mn} = \lim_{n \rightarrow \infty} \lim_{m \rightarrow \infty} a_{mn}$ then the result is proven. Let $c_{mn} = a_{mn} - a_{m,n-1}$ and $c_{m0} = 0$. Note that for finite $m$, $c_{mn}$ is uniformly bounded for all $n$. Then the Bounded Convergence Theorem applied to the counting measure gives that:
$$ \lim_{m \rightarrow \infty} \sum_{n=0}^\infty c_{mn} = \sum_{n=0}^\infty \lim_{m \rightarrow \infty} c_{mn}$$
However, $\sum_{n=0}^\infty c_{mn} = \lim_{N \rightarrow \infty} \sum_{n=0}^N c_{mn}$, so the above implies:
$$ \lim_{m \rightarrow \infty}  \lim_{N \rightarrow \infty} \sum_{n=0}^N c_{mn} =  \lim_{N \rightarrow \infty}  \lim_{m \rightarrow \infty} \sum_{n=0}^N c_{mn}$$
By construction, we have $\sum_{n=0}^N c_{mn} = a_{mN}$, which proves the result.

Fix $h > 0$. Define
\begin{align*}
Y(h) = |\frac{g(\lambda + h e_i, \eta) - g(\lambda,\eta)}{h}) |\one\{B_\lambda\}.
\end{align*}
Note that for every $\varphi > 0$
\begin{align*}
|(\frac{g(\lambda + h e_i, \eta) - g(\lambda,\eta)}{h}) \one\{B_{\varphi}\}| \leq Y(h)
\end{align*}
and $E Y(h) < \infty$. Thus, by the dominating convergence theorem,
\begin{align*}
\lim_{\varphi \longrightarrow 0}  \E \left [ (\frac{g(\lambda + h e_i, \eta) - g(\lambda,\eta)}{h}) \one\{B_{\varphi}\} \right ] & =  \E \left [ \lim_{\varphi \longrightarrow 0} (\frac{g(\lambda + h e_i, \eta) - g(\lambda,\eta)}{h}) \one\{B_{\varphi}\} \right ] \\
& = \E \left [ (\frac{g(\lambda + h e_i, \eta) - g(\lambda,\eta)}{h}) \one\{B_\lambda\} \right ].
\end{align*}
This completes the step. 

\textbf{Step 4.} Putting together \eqref{eq:twice_diff_1}, \eqref{eq:twice_diff_2}, and \eqref{eq:twice_diff_3}, we have shown that
\begin{align*}
\lim_{h \longrightarrow 0} \E \left [ [(\frac{g(\lambda + h e_i, \eta) - g(\lambda,\eta)}{h}) \one\{B_\lambda\} \right ] & = \lim_{\varphi \longrightarrow 0} \lim_{h \longrightarrow 0} \E \left [ (\frac{g(\lambda + h e_i, \eta) - g(\lambda,\eta)}{h}) \one\{B_{\varphi}\} \right ] \\
 & =  \lim_{\varphi \longrightarrow 0} \E \left [ \frac{\partial g(\lambda, \eta)}{\partial \lambda_i}  \one\{B_{\varphi}\} \right ] \\
 & = \E \left [ \frac{\partial g(\lambda, \eta)}{\partial \lambda_i}  \one\{B_\lambda\} \right ]
\end{align*}

Thus, we have that that the second order partial derivatives exist and derived an expression for them. Showing that the second order partial derivatives are continuous proceeds as in the proof of Lemma \ref{lem:differentiable} (apply the dominating convergence theorem). 

\end{proof}

\section{Rounding}\label{sec:rounding}
\begin{theorem}[Caratheodory's Theorem]
For any point $y$ in the convex hull of a set $\mathcal{P} \subseteq \R^d$, $y$ can be written as a convex combination of at most $d+1$ points in $\mathcal{P}$. 
\end{theorem}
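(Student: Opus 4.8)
The statement to prove is Caratheodory's Theorem: any point in the convex hull of $\mathcal{P} \subseteq \R^d$ can be written as a convex combination of at most $d+1$ points of $\mathcal{P}$.

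\medskip

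The plan is to argue by a dimension-counting / affine-dependence reduction. Suppose $y = \sum_{i=1}^{k} \alpha_i p_i$ with $p_i \in \mathcal{P}$, $\alpha_i > 0$, $\sum_i \alpha_i = 1$, and $k$ minimal among all such representations. I would first reduce to the case where all weights are strictly positive (discard any zero-weight point). The key claim is then $k \le d+1$. Suppose for contradiction $k \ge d+2$. Then the $k-1 \ge d+1$ vectors $p_2 - p_1, p_3 - p_1, \dots, p_k - p_1$ live in $\R^d$, so they are linearly dependent: there exist scalars $\beta_2, \dots, \beta_k$, not all zero, with $\sum_{i=2}^{k} \beta_i (p_i - p_1) = 0$. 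Setting $\beta_1 := -\sum_{i=2}^k \beta_i$, this gives $\sum_{i=1}^k \beta_i p_i = 0$ with $\sum_{i=1}^k \beta_i = 0$ and the $\beta_i$ not all zero (in particular at least one $\beta_i > 0$).

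\medskip

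Next I would perturb the representation along this affine dependence: for $t \ge 0$, define $\alpha_i(t) = \alpha_i - t\beta_i$. Because $\sum_i \beta_i = 0$ we still have $\sum_i \alpha_i(t) = 1$, and because $\sum_i \beta_i p_i = 0$ we still have $\sum_i \alpha_i(t) p_i = y$. Now choose $t^* = \min_{i : \beta_i > 0} \alpha_i / \beta_i$, which is well-defined and positive since some $\beta_i > 0$ and all $\alpha_i > 0$. For $t = t^*$ all $\alpha_i(t^*) \ge 0$ (the nonnegative-$\beta_i$ coordinates only increase or stay, and the positive-$\beta_i$ coordinates stay nonnegative by the choice of $t^*$), and at least one coordinate — the one achieving the minimum — becomes exactly zero. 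This yields a convex combination of $y$ using at most $k-1$ points of $\mathcal{P}$, contradicting minimality of $k$. Hence $k \le d+1$.

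\medskip

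I do not expect a serious obstacle here; the argument is entirely elementary linear algebra. The one point requiring a little care is the edge-case bookkeeping: ensuring $t^*$ is strictly positive (needs some $\beta_i>0$, which follows from $\sum\beta_i = 0$ together with the $\beta_i$ not all zero), and ensuring that after the perturbation the weights remain nonnegative and sum to one while genuinely dropping a point. An alternative, essentially equivalent presentation would note that a minimal representation forces $\{p_i\}$ to be affinely independent, and an affinely independent set in $\R^d$ has at most $d+1$ elements; but the perturbation argument above is self-contained and is the cleanest route to write out.
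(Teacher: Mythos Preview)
Your proof is correct and is the standard affine-dependence reduction argument for Carath\'eodory's Theorem. The paper itself does not give a proof at all---it simply cites a reference (``This is a standard result in convex geometry, see for instance \cite{eggleston1958convexity}'')---so there is nothing to compare against; you have supplied exactly the kind of self-contained argument one would write out if asked. One trivial slip in wording: where you write ``the nonnegative-$\beta_i$ coordinates only increase or stay,'' you mean the \emph{nonpositive}-$\beta_i$ coordinates (those with $\beta_i \le 0$), since $\alpha_i(t) = \alpha_i - t\beta_i \ge \alpha_i$ precisely when $\beta_i \le 0$. The logic is otherwise clean.
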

\begin{proof}
This is a standard result in convex geometry, see for instance \cite{eggleston1958convexity}.
\end{proof}

\begin{lemma}\label{lem:cara_bandit}
Given any $\lambda \in \triangle_{\X}$, in the bandit setting, there exists a distribution $\lambda' \in \triangle_{\X}$ that is $(d^2 + d + 1)$-sparse and:
$$ \Ab(\lambda) = \Ab(\lambda'), \quad \sum_{x\in \X} \lambda_x x = \sum_{x \in \X} \lambda_x' x$$
In the semi-bandit setting, when $\X \subseteq \{0,1\}^d$,  there exists a distribution $\lambda' \in \triangle_{\X}$ that is $(d + 1)$-sparse and:
$$ \Asb(\lambda) = \Asb(\lambda'), \quad \sum_{x\in \X} \lambda_x x = \sum_{x \in \X} \lambda_x' x$$
\end{lemma}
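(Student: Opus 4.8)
The plan is to derive both claims from Carathéodory's Theorem (stated just above), by embedding $\X$ into a Euclidean space in which the quantities that must be preserved appear as (coordinate functions of) a single vector, and then observing that the target quantity is literally a convex combination of the embedded points, hence lies in their convex hull.

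For the bandit case, I would define the embedding $\phi : \X \to \R^{d^2} \times \R^d \cong \R^{d^2+d}$ by $\phi(x) = (xx^\t, x)$, where $xx^\t$ is read as its $d^2$-dimensional vectorization. Given $\lambda \in \triangle_{\X}$, the point $y := \sum_{x \in \X} \lambda_x \phi(x)$ is by definition a convex combination of $\{\phi(x) : x \in \X\}$, hence lies in the convex hull of this set in $\R^{d^2+d}$. Carathéodory's Theorem then yields nonnegative weights $\alpha$ summing to $1$ and points $x_1,\dots,x_{d^2+d+1} \in \X$ with $y = \sum_{i=1}^{d^2+d+1}\alpha_i \phi(x_i)$. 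Setting $\lambda'_x := \sum_{i : x_i = x}\alpha_i$ defines a distribution in $\triangle_{\X}$ supported on at most $d^2+d+1$ arms; reading off the first $d^2$ coordinates of the identity $\sum_x \lambda'_x \phi(x) = y = \sum_x \lambda_x \phi(x)$ gives $\Ab(\lambda') = \Ab(\lambda)$, and the last $d$ coordinates give $\sum_x \lambda'_x x = \sum_x \lambda_x x$.

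For the semi-bandit case, the key simplification is that since $\X \subseteq \{0,1\}^d$, every $x \in \X$ satisfies $x_i^2 = x_i$, so the diagonal of $xx^\t$ equals $x$ itself; hence $\Asb(\lambda) = \diag\!\big(\sum_x \lambda_x xx^\t\big) = \diag\!\big(\sum_x \lambda_x x\big)$, i.e. $\Asb(\lambda)$ is a function of the single vector $v := \sum_x \lambda_x x$ alone. It therefore suffices to produce a $(d+1)$-sparse $\lambda' \in \triangle_{\X}$ with $\sum_x \lambda'_x x = v$. But $v$ is a convex combination of $\X \subseteq \R^d$, hence lies in $\mathrm{conv}(\X)$, and Carathéodory's Theorem in $\R^d$ directly provides such a $\lambda'$ supported on at most $d+1$ arms; then $\Asb(\lambda') = \diag(v) = \Asb(\lambda)$ automatically, and the first moment is preserved by construction.

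There is no real obstacle here; the only points needing care are (i) the $\{0,1\}$-valued identity $x_i^2 = x_i$ that collapses the semi-bandit problem to ambient dimension $d$, and (ii) the bookkeeping when passing from Carathéodory's convex combination back to a distribution on $\X$ — merging any repeated atoms $x_i$ only decreases the support, so the stated sparsities are upper bounds. I would also note in passing that in the bandit case one could exploit symmetry of $xx^\t$ to reduce the ambient dimension to $\binom{d+1}{2}+d$ and sharpen the bound, but the naive vectorization already gives the stated $d^2+d+1$, which is all that is used downstream.
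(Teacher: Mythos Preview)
Your proposal is correct and follows essentially the same argument as the paper: embed each $x$ into $\R^{d^2+d}$ via $\phi(x)=(xx^\top,x)$ and apply Carath\'eodory's Theorem, then in the semi-bandit case exploit $x_i^2=x_i$ for $x\in\{0,1\}^d$ so that $\Asb(\lambda)$ is determined by $\sum_x \lambda_x x$ alone, reducing the ambient dimension to $d$. Your additional remarks on merging repeated atoms and on the sharper $\binom{d+1}{2}+d$ bound via symmetry are correct but not in the paper; otherwise the two proofs are the same.
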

\begin{proof}
This is a direct corollary of Caratheodory's Theorem. Take $\lambda \in \triangle_{|\X|}$ and let $z_\lambda \in \R^{d + d^2}$, which we define as:
$$ z_\lambda = \left [\sum_{x \in \X} \lambda_x x; \text{vec} \left (\sum_{x \in \X} \lambda_x x x^\top \right ) \right ] $$
Define the set:
$$ \cV := \left \{ \left [ x; \text{vec}(x x ^\top) \right ] : x \in \X \right \} \subseteq \R^{d + d^2} $$
For any $\lambda$, we see that $z_\lambda$ lies in the convex hull of $\cV$. Caratheodory's Theorem then immediately implies the result in the bandit case, since $\text{vec} \left (\sum_{x \in \X} \lambda_x x x^\top \right )$ uniquely determines $\Ab(\lambda)$.

In the semi-bandit case, we note that the diagonal of $\Asb(\lambda)$ is equal to $\sum_{x \in \X} \lambda_x x$. Thus, we only need to consider a $d$-dimensional space, so Caratheodory implies we can find a $d+1$ sparse distribution.
\end{proof}

\begin{proof}[Proof of Lemma \ref{lem:rounding}]
Given some allocation $\tau$, let $\lambda$ the corresponding distribution, and $\bar{\tau} = \sum_{x \in \X} \tau_x$ (so $\tau = \bar{\tau} \lambda$). 

Since we only care about the sparsity of $\lambda$, consider $\bar{\tau}$ fixed. Then, given a solution $\lambda$ to (\ref{eq:vl_inefficient}) or (\ref{eq:vl}), the value of the constraint and objective the solution achieves achieves are fully specified by $\Af(\lambda)$ and $\sum_{x \in \X} \lambda_x x$. To see the latter, note that $\sum_{x \in \X} (\epsilon + \Delta_x) \lambda_x = \epsilon + \sum_{x \in \X} \theta^\top(\xst - x) \lambda_x = \epsilon + \theta^\top \xst + \theta^\top \sum_{x \in \X} \lambda_x x$. Lemma \ref{lem:cara_bandit} then implies that there exists a distribution $\lambda$ that is $(d^2 + d + 1)$-sparse in the bandit case and $(d+1)$-sparse in the semi-bandit case that achieves the same value of the constraint and objective of (\ref{eq:vl_inefficient}) or (\ref{eq:vl}).

To see the second part of the result, note that if we run the procedure of Theorem \ref{thm:comp_complex}, we will run stochastic Frank Wolfe for a polynomial number of steps, each increasing the support of our distribution by at most 1, so we will obtain an approximate solution that has at most $n = \poly(d,\Delmin,T,1/\delta)$ non-zero entries. By Theorem 6 in \cite{maalouf2019fast}, it then follows that we can compute the $(d+1)$-sparse distribution achieving the same value of the constraint and objective in time $\calO(nd)$. 
\end{proof}

\section{Gaussian Width Results}\label{sec:gw}
\newcommand{\diam}{\mathrm{diam}}
\newcommand{\adj}{\mathrm{adj}}

\begin{proposition}\label{prop:semi_kf}
\begin{equation}\label{eq:semi_kf}
\inf_{\lambda \in \triangle_\X} \max_{x \in \X} \| x \|_{\Asb(\lambda)^{-1}}^2 = d 
\end{equation}
\end{proposition}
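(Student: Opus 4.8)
The plan is to treat Proposition~\ref{prop:semi_kf} as the semi-bandit analogue of the Kiefer--Wolfowitz equivalence theorem, proving the equality by a direct lower bound together with a matching upper bound obtained from a $D$-optimal design. Throughout I will assume, consistently with the rest of the paper, that $\bigcup_{x\in\X}\{i:i\in x\}=[d]$ (otherwise one replaces $d$ by the number of covered coordinates and restricts attention to that subspace). The first step is a reformulation: setting $p_i(\lambda):=\sum_{x:\,i\in x}\lambda_x$ one has $\Asb(\lambda)=\diag\big(p_1(\lambda),\dots,p_d(\lambda)\big)$, which is invertible exactly when all $p_i(\lambda)>0$, and in that case $x_i^2=x_i$ for $x\in\{0,1\}^d$ gives
\[
\|x\|_{\Asb(\lambda)^{-1}}^2=\sum_i \frac{x_i}{p_i(\lambda)}=\big\langle x,\,1/p(\lambda)\big\rangle,\qquad 1/p:=(1/p_1,\dots,1/p_d).
\]
Since $\lambda\mapsto p(\lambda)=\sum_x\lambda_x x$ maps $\triangle_\X$ onto $\mathrm{conv}(\X)$, the left-hand side of~\eqref{eq:semi_kf} equals $\inf_{p\in\mathrm{conv}(\X)}\max_{x\in\X}\langle x,1/p\rangle$ (with the objective read as $+\infty$ whenever some $p_i=0$).

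For the lower bound I would fix $p\in\mathrm{conv}(\X)$: if some coordinate $p_i=0$ the covering assumption supplies an $x\in\X$ with $x_i=1$ and the objective is $+\infty\ge d$; otherwise, because the maximum of a linear functional over $\X$ equals its maximum over $\mathrm{conv}(\X)$ and $p$ itself lies there,
\[
\max_{x\in\X}\langle x,1/p\rangle\ \ge\ \langle p,1/p\rangle\ =\ \sum_{i=1}^d \frac{p_i}{p_i}\ =\ d.
\]
Hence the infimum is at least $d$. For the upper bound I would invoke the $D$-optimal design: let $p^*$ maximize the concave function $g(p)=\sum_i\log p_i=\log\det\diag(p)$ over the compact set $\mathrm{conv}(\X)$. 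The covering assumption provides a feasible $p$ with all coordinates positive, while $p_i\le1$ on $\mathrm{conv}(\{0,1\}^d)$ forces $g\to-\infty$ near any face with a vanishing coordinate, so the maximizer $p^*$ has $p^*_i>0$ for all $i$; first-order optimality over $\mathrm{conv}(\X)$ then gives $\langle\nabla g(p^*),\,q-p^*\rangle\le0$ for all $q\in\mathrm{conv}(\X)$, i.e.\ $\langle x,1/p^*\rangle\le\langle p^*,1/p^*\rangle=d$ for every $x\in\X$. Picking any $\lambda^*\in\triangle_\X$ with $p(\lambda^*)=p^*$ (so $\Asb(\lambda^*)=\diag(p^*)\succ0$) yields $\max_{x\in\X}\|x\|_{\Asb(\lambda^*)^{-1}}^2\le d$, and combining the two bounds gives~\eqref{eq:semi_kf}, with the infimum attained at $\lambda^*$.

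The one point I expect to require a little care is the existence of the maximizer $p^*$ inside the open orthant $\{p_i>0\}$ rather than only in a limiting sense on a bad face of $\mathrm{conv}(\X)$; this is exactly what the covering assumption plus the bound $p_i\le1$ rule out, after which the argument is a verbatim translation of the classical Kiefer--Wolfowitz equivalence ($G$-optimality $=$ $D$-optimality) to the diagonal design matrix $\Asb$. Everything else --- the reformulation, the lower bound, and the first-order optimality computation --- is routine.
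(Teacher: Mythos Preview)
Your proof is correct and follows essentially the same Kiefer--Wolfowitz strategy as the paper: both maximize the log-determinant of $\Asb$ and read off the upper bound from the first-order optimality condition, while the lower bound comes from the trace identity $\sum_x \lambda_x\|x\|_{\Asb(\lambda)^{-1}}^2=d$ (your $\langle p,1/p\rangle=d$). The only difference is cosmetic: you reparametrize by the coordinate marginals $p_i=\sum_{x\ni i}\lambda_x$ and work over $\mathrm{conv}(\X)\subseteq\R^d$, whereas the paper differentiates $\log\det\Asb(\lambda)$ directly in the $|\X|$-dimensional simplex; your version is a bit cleaner and makes the covering assumption (needed for invertibility) explicit.
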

\begin{proof}
This proof closely mirrors the proof of Theorem 21.1 of \cite{lattimore2020bandit}.

Let:
$$ f(\lambda) = \log \det \Asb(\lambda) $$
Noting that:
$$ \frac{d}{dt} \det (A(t))  = \tr \left ( \adj(A(t)) \frac{d}{dt} A(t) \right ) $$
and $A^{-1} = \adj(A)^\top / \det(A)$ \cite{lattimore2020bandit}, we can compute the gradient of $f(\lambda)$ as:
$$ \frac{d}{d \lambda_x} f(\lambda) = \frac{1}{\det \Asb(\lambda)} \tr \left ( \adj(\Asb(\lambda)) \diag(x x^\top) \right ) = \tr \left ( \Asb(\lambda)^{-1} \diag(x x^\top) \right ) $$
Since $\Asb(\lambda)$ is diagonal, we have:
$$ \tr \left ( \Asb(\lambda)^{-1} \diag(x x^\top) \right ) = \sum_{i=1}^d \frac{x_i^2}{[\Asb(\lambda)]_i} = x^\top \Asb(\lambda)^{-1}  x = \| x \|_{\Asb(\lambda)^{-1}}^2 $$
Note also that, by the identity above, for any $\lambda$:
\begin{align*}
\sum_{x \in \X} \lambda_x \| x \|_{\Asb(\lambda)^{-1}}^2 & =  \sum_{x \in \X} \lambda_x \tr \left (\Asb(\lambda)^{-1} \diag(x x^\top)  \right ) \\
& = \tr \left ( \Asb(\lambda)^{-1} \left ( \sum_{x \in \X} \lambda_x \diag(xx^\top) \right ) \right ) \\
& = \tr \left ( \Asb(\lambda)^{-1} \Asb(\lambda) \right ) \\
& = \tr (I) \\
& = d
\end{align*}
Then, since $\log \det X$ is concave and $\Asb(\lambda)$ is linear in $\lambda$, it follows that $f(\lambda)$ is concave. Applying standard first-order optimality conditions and denoting $\lambda^*$ the solution to (\ref{eq:semi_kf}), we have, for any $\lambda$:
\begin{align*}
0 & \geq \langle f(\lambda^*), \lambda - \lambda^* \rangle \\
& = \sum_{x \in \X} \lambda_x \| x \|_{\Asb(\lambda^*)^{-1}}^2 - \sum_{x \in \X} \lambda_x^* \| x \|_{\Asb(\lambda^*)^{-1}}^2 \\
& =\sum_{x \in \X} \lambda_x \| x \|_{\Asb(\lambda^*)^{-1}}^2 - d
\end{align*}
Choosing $\lambda$ to be the distribution putting all its mass on $x$, we have:
$$ d \geq \| x \|_{\Asb(\lambda^*)^{-1}}^2  $$
To see the equality, note that the above implies:
$$ d = \sum_{x \in \X} \lambda_x^* \| x \|_{\Asb(\lambda^*)^{-1}}^2 \leq \max_{x \in \X} \| x \|_{\Asb(\lambda^*)^{-1}}^2 \leq d $$
\end{proof}

\begin{proof}[Proof of Proposition \ref{prop:gw_topk_product}]
Let $S = \{x \in \X : \Delta_x \leq \epsilon \}$ for some fixed $\epsilon > 0$. Therefore, $x^* \in S$. Define
\begin{align*}
S_1 & = \{ (x,x^*_{m+1:n+m}): x \in \{0,1\}^m  \text{ s.t. there exists } x^\prime \in S \text{ s.t. } \Pi_{[m]} x^\prime = x \} \\
S_2 & = \{(x^*_{1:m}, x) : x \in \{0,1\}^{n} \text{ s.t. there exists } x^\prime \in S \text{ s.t. } \Pi_{[n+m]\setminus [m]} x^\prime = x \} 
\end{align*}
where $\Pi_A$ is the coordinate wise projection onto the coordinates $A \subset \N$. 
Then, using the fact that $\E[(x^*)^\t A(\lambda)^{-1/2} \eta ] = 0$, we have that
\begin{align*}
\min_{\lambda \in \triangle^{|S|}} \E[\sup_{x \in S} x^\t A(\lambda)^{-1/2} \eta]^2 & \leq \min_{\lambda \in \triangle^{|S|}} \E[\sup_{x_1 \in S_1} \sum_{i=1}^m x_{1,i} [A(\lambda)^{-1/2} \eta]_i+\sup_{x_2 \in S_2} \sum_{i=m+1}^{n+m} x_{2,i} [A(\lambda)^{-1/2} \eta]_i ]^2 \\
& = \min_{\lambda \in \triangle^{|S|}} \E[\sup_{x_1 \in S_1} \sum_{i=1}^m x_{1,i} [A(\lambda)^{-1/2} \eta]_i+\sup_{x_2 \in S_2} \sum_{i=m+1}^{n+m} x_{2,i} [A(\lambda)^{-1/2} \eta]_i  \\
& + \sum_{i=1}^{n+m} x^*_i [A(\lambda)^{-1/2} \eta]_i ]^2 \\
& = \min_{\lambda \in \triangle^{|S|}} \E[\sup_{x_1 \in S_1} x^\t_1 A(\lambda)^{-1/2} \eta+\sup_{x_2 \in S_2}  x_{2}^\t A(\lambda)^{-1/2} \eta ]^2 \\
& \leq  \min_{\lambda \in \triangle^{|S|}} c[\E[\sup_{x_1 \in S_1} x_1^\t A(\lambda) \eta ]^2+\E[\sup_{x_2 \in S_2} x_2^\t A(\lambda) \eta ]^2] \\
& \leq \min_{\lambda \in \triangle^{|S|}} c^\prime [k \log(m) \max_{x_1 \in S_1} \norm{x_1}_{A(\lambda)^{-1}}^2+\ell \log(n) \max_{x_2 \in S_2} \norm{x_2}_{A(\lambda)^{-1}}^2] \\
& \leq  c^{\prime \prime} [k \log(m) \min_{\lambda \in \triangle^{|S|}} \max_{x_1 \in S_1} \norm{x_1}_{A(\lambda)^{-1}}^2+ \ell \log(n) \min_{\lambda \in \triangle^{|S|}}  \max_{x_2 \in S_2} \norm{x_2}_{A(\lambda)^{-1}}^2] \\
\end{align*}
We begin by bounding the first term. Notice that $S_1 \subset S$ since $S = \{x \in \X : \Delta_x \leq \epsilon \}$ for some fixed $\epsilon > 0$ and thus if $x \in \{0,1\}^m \text{ s.t. there exists } x^\prime \in S \text{ s.t. } \Pi_{[m]} x^\prime = x$, then $(x,x^*_{m+1:n+m}) \in S$. Furthermore, the span of the vectors in $S_1$ has dimension at most $m+1$ since for any $x_1 \in S_1$, for all $i \geq m+1$, we have that
\begin{align*}
[x_1 - (\vec{0}_{1:m}, x^*_{m+1:n+m}) ]_i = 0 .
\end{align*}
Thus, by the Kiefer-Wolfowitz Theorem \cite{lattimore2020bandit}:
\begin{align*}
 \min_{\lambda \in \triangle^{|S|}} \max_{x_1 \in S_1} \norm{x_1}_{A(\lambda)^{-1}}^2 \leq m+1.
\end{align*}
and:
\begin{align*}
\min_{\lambda \in \triangle^{|S|}}  \max_{x_2 \in S_2} \norm{x_2}_{A(\lambda)^{-1}}^2 \leq n.
\end{align*}
Therefore, 
\begin{align*}
\min_{\lambda \in \triangle^{|S|}} \E[\sup_{x \in S} x^\t A(\lambda)^{-1/2} \eta]^2 \leq c [k \log(m) m+ \ell \log(n) n].
\end{align*}

To lower bound $|\X|$, note that:
$$ |\X| = \binom{m}{k} \binom{n}{\ell} \geq \left ( \frac{m}{k} \right )^k \left ( \frac{n}{\ell} \right )^\ell $$ 
For the second conclusion we set $\ell = \calO(1)$, $k = \sqrt{m}$, and $n = m^{3/2}$ and apply our regret bound. 

For the regret bound of competing algorithms, LinUCB will scale as $\tilO(d\sqrt{T}) = \tilO(m^{3/2} \sqrt{T})$. Given the above lower bound on $|\X|$, the regret of action elimination will scale as $\tilO(m\sqrt{T})$. In the semi-bandit setting, \cite{kveton2015tight} obtain a regret bound of $\tilO(m\sqrt{T})$ and, ignoring logarithmic terms, \cite{degenne2016combinatorial} obtain the same bound. Other existing works \citep{combes2015combinatorial,perrault2020statistical} do not state minimax bounds but, using the standard analysis to obtain a minimax bound from a gap-dependent bound, their regret will also scale as $\tilO(m\sqrt{T})$. Note that in this comparison we have ignored $\log(T)$ terms and have taken the dominate term to be the term with leading $m$ dependence that hits the $\sqrt{T}$. 
\end{proof}

\begin{proof}[Proof of Proposition \ref{prop:gw_bound_union}]
$ \Exp_\eta[\max_{x \in \X} x^\top A(\lambda)^{-1/2} \eta]$ is the Gaussian width of the set $\{ A(\lambda)^{-1/2} x \ : \ x \in \X \}$. By Proposition 7.5.2 of \cite{vershynin2018high}:
$$ \Exp_\eta[\max_{x \in \X} x^\top A(\lambda)^{-1/2} \eta] \leq c \sqrt{d} \diam(\{ A(\lambda)^{-1/2} x \ : \ x \in \X \}) $$
and:
$$ \diam(\{ A(\lambda)^{-1/2} x \ : \ x \in \X \}) = \max_{x_1, x_2 \in \X} \| A(\lambda)^{-1/2} (x_1 - x_2) \| \leq 2 \max_{x \in \X} \| x \|_{A(\lambda)^{-1}} $$
Taking the infimum over $\lambda \in \triangle_{\X}$, in the bandit feedback case Kiefer-Wolfowitz gives $\inf_{\lambda \in \triangle_{\X}}  \max_{x \in \X} \| x \|_{A(\lambda)^{-1}} \leq \sqrt{d}$, and in the semi-bandit case, Proposition \ref{prop:semi_kf} gives the same result. Since $\X$ was chosen arbitrarily, it follows that $\gamworst(\X) \leq d^2$.

For the second bound, Exercise 7.5.10 of \cite{vershynin2018high} gives that:
$$ \Exp_\eta[\max_{x \in \X} x^\top A(\lambda)^{-1/2} \eta] \leq c \sqrt{\log |\X|} \diam(\{ A(\lambda)^{-1/2} x \ : \ x \in \X \}) $$
from which the result follows immediately. 
\end{proof}

\begin{proof}[Proof of Proposition \ref{prop:gw_bound_comb}]
If $\X \subseteq \{0,1\}^d$ and $k = \max_{x \in \X} \| x \|_1$, then $\X$ at most contains all subsets of size $k$ and less so:
$$ |\X| \leq \sum_{j=1}^k \binom{d}{j} \leq c \sum_{j=1}^k (d/j)^j \leq c \sum_{j=1}^k d^j = c \frac{d(d^k - 1)}{d - 1} \leq c d^k$$
Thus, Proposition \ref{prop:gw_bound_union} gives:
$$ \gamma^* \leq c d k \log d $$
\end{proof}

\begin{proof}[Proof of Proposition \ref{prop:gw_topk_plus1}]
Consider the Top-$k$ problem in the semi-bandit feedback regime, but augment the action set by adding the vector of all 1s to it. In this case, then, we can either query a subset of size $k$, or we can query every point at once. Assume that $\theta_i \geq 0$ for all $i$. Note that by our assumption on $\theta_i$, $\mathbf{1}$ will always be in the action set regardless of how we are filtering on the gaps. If we put all our mass on $\mathbf{1}$, we will have that $\Asb(\lambda) = I$. Thus:
\begin{align*}
\gamworst(\Asb) & = \sup_{\epsilon > 0} \inf_{\lambda \in \triangle_{\X_\epsilon}} \mathbb{E}_\eta [ \sup_{x \in \X_\epsilon} x^\top A(\lambda)^{-1/2} \eta]^2 \\
& \leq \mathbb{E}_\eta [ \sup_{x \in \X_\epsilon} x^\top \eta]^2 \\
& \leq \E_{\eta}[\max_{x \in \X} |x^\top \eta|]^2 \\
& \leq c\left ( \E_{\eta}[\max_{x \in \X \backslash \mathbf{1} } |x^\top \eta|]^2 + \E_{\eta}[|\mathbf{1}^\top \eta|]^2 \right ) \\
& \leq  c\left ( \E_{\eta}[\max_{x \in \X \backslash \mathbf{1} } |x^\top \eta|]^2 + d \right ) \\
& \leq c \left ( k^2 \E_\eta[\max_{z : \| z \|_1 \leq 1} |z^\top \eta| ]^2 + d \right ) \\
& \leq c (k^2 \log d + d)
\end{align*}
where the last inequality follows since the gaussian complexity is within a constant of the Gaussian width when the set contains 0, by Exercise 7.6.9 of \cite{vershynin2018high}. The result then follows by choosing $k = \sqrt{d}$.
\end{proof}

\begin{theorem}[Tsirelson-Ibragimov-Sudakov Inequality \cite{cirel1976norms}]\label{thm:tis_full}
Let $\calS \subseteq \R^d$ be bounded. Let $(V_s)_{s \in \calS}$ be a Gaussian process such that $\Exp[V_s] = 0$ for all $s \in \calS$. Define $\sigma^2 = \sup_{s \in \calS} \Exp[V_s^2]$. Then, for all $u > 0$:
$$ \P [ | \sup_{s \in \calS} V_s - \Exp \sup_{x \in \calS}| \geq u] \leq 2 \exp \left ( \frac{-u^2}{2\sigma^2} \right ) $$
\end{theorem}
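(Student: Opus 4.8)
The plan is to reduce the statement to the classical concentration inequality for Lipschitz functions of a standard Gaussian vector, and to obtain that inequality with the sharp sub-Gaussian constant via the Gaussian logarithmic Sobolev inequality together with the Herbst argument.

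First I would reduce to the case where $\calS$ is finite: since $\calS \subseteq \R^d$, the process $(V_s)_{s\in\calS}$ has a separable version, so $\sup_{s\in\calS}V_s$ is the almost sure increasing limit of $\sup_{s\in\calS_n}V_s$ along finite sets $\calS_n \uparrow$ with union dense in $\calS$; monotone convergence controls $\Exp[\sup_s V_s]$, Fatou passes the tail bound to the limit, and $\sigma^2$ only shrinks on subsets. For finite $\calS=\{s_1,\dots,s_n\}$ I would write the covariance of $(V_{s_1},\dots,V_{s_n})$ as $BB^\top$ and realize $V_{s_i} = \langle b_i, g\rangle$ with $g\sim\cN(0,I_n)$ and $b_i$ the $i$-th row of $B$, so that $\|b_i\|_2^2 = \Exp[V_{s_i}^2] \le \sigma^2$. (In the paper's applications $V_s$ is already a linear functional of a Gaussian, so this step is immediate.) Then $F(x) := \max_{i}\langle b_i, x\rangle$ satisfies $\sup_s V_s = F(g)$ and, picking a maximizing index $i$, $F(x)-F(y)\le \langle b_i, x-y\rangle\le\|b_i\|_2\|x-y\|_2 \le \sigma\|x-y\|_2$; hence $F$ is $\sigma$-Lipschitz (and, incidentally, convex).

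Next, mollifying $F$ if needed to make it smooth without increasing its Lipschitz constant, I would apply the Gaussian log-Sobolev inequality $\mathrm{Ent}(f^2) \le 2\,\Exp\|\nabla f\|_2^2$ with $f = e^{\lambda F/2}$. Using $\|\nabla F\|_2\le\sigma$ a.e., this becomes a differential inequality for $H(\lambda) := \Exp e^{\lambda(F-\Exp F)}$, namely $\lambda H'(\lambda) - H(\lambda)\log H(\lambda) \le \tfrac{\sigma^2\lambda^2}{2}H(\lambda)$; setting $K(\lambda)=\lambda^{-1}\log H(\lambda)$ turns it into $K'(\lambda)\le\sigma^2/2$, and since $K(0^+)=0$ (because $\Exp[F-\Exp F]=0$), integration gives $\log H(\lambda)\le\sigma^2\lambda^2/2$ for all $\lambda>0$. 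A Chernoff bound with $\lambda=u/\sigma^2$ then gives $\P(F-\Exp F\ge u)\le e^{-u^2/(2\sigma^2)}$, and applying the same reasoning to $-F$ and adding the two bounds yields $\P(\,|\sup_s V_s - \Exp \sup_s V_s|\ge u\,)\le 2e^{-u^2/(2\sigma^2)}$, as claimed.

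The only genuinely non-routine ingredient is the Gaussian log-Sobolev inequality itself (equivalently one could invoke the Gaussian isoperimetric inequality of Borell and Sudakov--Tsirelson); everything else --- the separability reduction, the Lipschitz bound, the Herbst computation --- is mechanical. If one wished to avoid citing LSI, the Maurey--Pisier interpolation $g_\theta = g\sin\theta + g'\cos\theta$ for $\theta\in[0,\pi/2]$ combined with Jensen's inequality reproves the moment bound, but only with constant $\pi^2/8$ in place of $1/2$; recovering the sharp constant appearing in the statement really does require the isoperimetric/LSI input, so that is where I would concentrate the effort --- or, as the paper does, simply cite it.
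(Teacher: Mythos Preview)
The paper does not prove Theorem~\ref{thm:tis_full} at all: it is stated as a classical result with citation to \cite{cirel1976norms}, and the proof environment that immediately follows it in the appendix is actually the proof of Proposition~\ref{prop:tis} (the application of TIS to the least-squares estimator), not of the TIS inequality itself. So there is no ``paper's own proof'' to compare against.

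Your proof sketch is correct and is one of the standard routes to the sharp TIS inequality: reduce to finitely many linear functionals of a standard Gaussian, observe the supremum is $\sigma$-Lipschitz, and then invoke the Gaussian log-Sobolev inequality plus the Herbst argument to obtain the sub-Gaussian moment bound $\Exp e^{\lambda(F-\Exp F)}\le e^{\sigma^2\lambda^2/2}$, from which the two-sided tail follows by Chernoff. Your remark that the sharp constant $1/2$ genuinely requires the LSI/isoperimetric input (while Maurey--Pisier interpolation gives only $\pi^2/8$) is also correct. In short, you have supplied a valid proof where the paper simply cites the result.
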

\begin{proof}[Proof of Proposition \ref{prop:tis}]
The proof in the bandit setting is identical to the proof given in \cite{katz2020empirical} and we therefore omit it.

In the semibandit setting, we have that:
$$ \hat{\theta}_i = \theta_i + \frac{1}{T_i} \sum_{t=1}^T x_{t,i} \eta_{t,i} $$
so $\Exp[\hat{\theta}_i] = \theta_i$ and:
\begin{align*}
\Exp[(\hat{\theta}_i - \theta_i)^2] = \frac{1}{T_i^2} \sum_{t=1}^T x_{t,i} = \frac{1}{T_i}
\end{align*}
Furthermore, since the noise is uncorrelated between coordinates, we have $\Exp[(\hat{\theta}_i - \theta_i)(\hat{\theta}_j - \theta_j)] = 0$. Since $x_t \in \{0,1\}^d$, it follows then that:
$$ \hat{\theta} \overset{\mathrm{distribution}}{=} \thetast + \wt{A}^{-1/2} \eta $$
for $\eta \sim \cN(0,I)$. Now consider the Gaussian process $V_x :=  x^\top (\hat{\theta} - \thetast) = x^\top \wt{A}^{-1/2} \eta$ for $x \in \X$. Noting that $\Exp[V_x^2] = x^\top \wt{A}^{-1} x \leq \max_{x \in \X} \| x \|_{\wt{A}^{-1}}^2$, we can then apply Theorem \ref{thm:tis_full} to this process, which gives the result. 
\end{proof}

\section{Lower Bound for Semi-Bandit Feedback and Optimistic Strategies}

A policy $\pi$ is \emph{consistent} if for all $\theta$ and $p > 0$, $R^{\pi}_\theta(T) = o(T^p)$.  Let $T_x$ denote the number of times that $x \in \X$ is pulled and $T_i$ the number of times that $i \in [d]$ is pulled. 

\begin{theorem}
\label{thm:lower_bound}
Let $\pi$ be a consistent policy such that $T_i \geq 1$ for all $i \in [d]$ with probability $1$, $\theta \in \R^d$ such that there is a unique optimal arm in $\X$. Let $G_T = \E[\sum_{t=1}^T \diag(x_t x_t^\t )]$ where $x_t$ is chosen at round $t \in [T]$. Then, 
\begin{align*}
\limsup_{T \longrightarrow \infty} \log(T) \norm{x}_{G_T^{-1}}^2 \leq \frac{\Delta_x^2}{2}
\end{align*}
for all $x \in \X$. Furthermore,
\begin{align*}
\limsup_{T \longrightarrow \infty} \frac{R^{\pi}_\theta(T)}{\log(T)} \geq c(\X, \theta)
\end{align*}
where 
\begin{align*}
c(\X, \theta) := \min_{ \tau \in [0,\infty)^{|\X|}}&  \sum_{x \in \X} \tau_x \Delta_x \\
& \text{s.t. } \sum_{i \in x} \frac{1}{\sum_{x^\prime : i \in x^\prime} \tau_{x^\prime} } \leq \frac{\Delta_x^2}{2} \quad \forall x \in \X \setminus \{x_* \}.
\end{align*}
\end{theorem}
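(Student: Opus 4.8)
The plan is a change-of-measure argument in the Graves--Lai / Lai--Robbins style, reusing the alternative-instance construction from the proof of Theorem~\ref{thm:lb_semi_bai}. Write $w_i^{(T)} := \E_\theta[T_i(T)] = [G_T]_{ii}$, so that $G_T = \Asb(\E_\theta[T_\cdot(T)])$ is diagonal and invertible (the hypothesis $T_i\geq 1$ a.s.\ forces $w_i^{(T)}\geq 1$), and $\|x\|_{G_T^{-1}}^2 = \sum_{i\in x}1/w_i^{(T)}$. I would use two consequences of consistency. First, since $R^{\pi}_\theta(T)\geq\Delmin\bigl(T-\E_\theta[T_{x_*}(T)]\bigr)=o(T^p)$ for every $p>0$, each coordinate $i\in x_*$ satisfies $w_i^{(T)}\geq \E_\theta[T_{x_*}(T)]\geq T-o(T^p)$, hence $\log(T)/w_i^{(T)}\to 0$. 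Second, the transportation / divergence-decomposition lemma invoked in the proof of Theorem~\ref{thm:lb_semi_bai} (\cite{kaufmann2016complexity,fiez2019sequential}), together with the fact that a consistent policy has sub-polynomial regret on every instance, yields: for every alternative $\theta'$ under which $x_*$ is not optimal, $\sum_{x'\in\X}\E_\theta[T_{x'}(T)]\,\kl(\nu_{\theta,x'}\,|\,\nu_{\theta',x'})\geq (1-o(1))\log T$; and for Gaussian semi-bandit feedback $\kl(\nu_{\theta,x'}|\nu_{\theta',x'})=\tfrac12\sum_{i\in x'}(\theta_i-\theta'_i)^2$, so this sum equals $\tfrac12\sum_i(\theta_i-\theta'_i)^2 w_i^{(T)}$.

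For the bound on $\log(T)\|x\|_{G_T^{-1}}^2$, fix a suboptimal $x$ and $\epsilon>0$ and take $\theta':=\theta-\frac{[(x_*-x)^\t\theta+\epsilon]\,G_T^{-1}(x_*-x)}{(x_*-x)^\t G_T^{-1}(x_*-x)}$, i.e.\ the $\theta^{(k)}$ construction from the proof of Theorem~\ref{thm:lb_semi_bai} with design matrix $G_T$. As there, $(x-x_*)^\t\theta'=\epsilon>0$, so $x_*$ is not optimal under $\theta'$, and the same one-line computation gives $\tfrac12\sum_i(\theta_i-\theta'_i)^2 w_i^{(T)}=\frac{(\Delta_x+\epsilon)^2}{2\|x_*-x\|_{G_T^{-1}}^2}$. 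Plugging into the consistency inequality and letting $\epsilon\downarrow 0$ gives $\log(T)\,\|x_*-x\|_{G_T^{-1}}^2 = \log(T)\sum_{i\in x_*\triangle x}1/w_i^{(T)}\leq\tfrac12\Delta_x^2(1+o(1))$. Since $x_*\triangle x$ and $x$ differ only in coordinates belonging to $x_*$, and those contribute $o(1)$ to $\log(T)\,(\cdot)$ by the first consequence of consistency, both $\log(T)\|x\|_{G_T^{-1}}^2$ and $\log(T)\sum_{i\in x_*\triangle x}1/w_i^{(T)}$ equal $\log(T)\sum_{i\in x\setminus x_*}1/w_i^{(T)}+o(1)$; hence $\limsup_T\log(T)\|x\|_{G_T^{-1}}^2\leq\tfrac12\Delta_x^2$ (and trivially for $x=x_*$), which is the first claim.

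For the regret bound I would argue by contradiction from the first claim. Set $\tau_x^{(T)}:=\E_\theta[T_x(T)]/\log T$, so $R^{\pi}_\theta(T)/\log T=\sum_x\tau_x^{(T)}\Delta_x$, $\sum_{x':i\in x'}\tau_{x'}^{(T)}=w_i^{(T)}/\log T$, and the first claim reads $\sum_{i\in x}\bigl(\sum_{x':i\in x'}\tau_{x'}^{(T)}\bigr)^{-1}\leq\tfrac12\Delta_x^2+o(1)$ for every $x\neq x_*$. Suppose $\limsup_T R^{\pi}_\theta(T)/\log T<c(\X,\theta)$ and pass to a subsequence $T_n$ with $\sum_x\tau_x^{(T_n)}\Delta_x\to L<c(\X,\theta)$. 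For each $x\neq x_*$, $\tau_x^{(T_n)}\Delta_x\leq\sum_y\tau_y^{(T_n)}\Delta_y$ is bounded and $\Delta_x>0$, so $\tau_x^{(T_n)}$ is bounded; pass to a further subsequence along which $\tau_x^{(T_n)}\to\tau_x^*<\infty$ for all $x\neq x_*$, and fix $\tau_{x_*}^*$ arbitrarily (it affects neither the objective nor, via the first consequence of consistency, the limiting constraints). Taking $n\to\infty$ in the constraint --- the terms $i\in x\cap x_*$ drop out since their denominators diverge, while for $i\in x\setminus x_*$ the denominator converges and stays bounded away from $0$, as otherwise the constraint would be violated --- shows $\tau^*$ is feasible for the program defining $c(\X,\theta)$. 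Hence $L=\sum_{x\neq x_*}\tau_x^*\Delta_x\geq c(\X,\theta)$, a contradiction, so $\limsup_T R^{\pi}_\theta(T)/\log T\geq c(\X,\theta)$.

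The step demanding the most care --- the main obstacle --- is making the consistency inequality $\sum_{x'}\E_\theta[T_{x'}(T)]\,\kl(\cdot)\geq(1-o(1))\log T$ rigorous with a genuine $o(1)$ when, as in the proof of the first claim, it is applied to a sequence of alternatives $\theta'=\theta'_T$ that varies with $T$ through $G_T$. The Bretagnolle--Huber step uses the event $\{T_{x_*}(T)>T/2\}$ and requires $\P_{\theta'_T}(T_{x_*}(T)>T/2)$ to be eventually smaller than every polynomial in $1/T$; since the $\theta'_T$ have $x_*$-gap at least $\epsilon$ and range over a compact set, this follows from consistency together with a standard but slightly delicate upgrade of the pointwise sub-polynomial regret bound to one uniform over that compact family, after which one may let $\epsilon=\epsilon_T\to 0$ slowly. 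The remaining ingredients --- the algebraic identity for the perturbed KL and the passage to the limit in the convex program --- are routine and already appear essentially verbatim in the proof of Theorem~\ref{thm:lb_semi_bai}.
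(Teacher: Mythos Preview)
Your proof is correct and follows the same change-of-measure argument as the paper: the paper invokes Bretagnolle--Huber with the event $\{T_{x_*}\leq T/2\}$ explicitly, you phrase it via the transportation lemma, but the substance is identical, including the same alternative $\theta'=\theta+(\Delta_x+\epsilon)\,G_T^{-1}(x-x_*)/\|x-x_*\|_{G_T^{-1}}^2$. You are in fact more careful than the paper on three points it passes over silently: the passage from the derived bound on $\|x-x_*\|_{G_T^{-1}}^2$ to the stated bound on $\|x\|_{G_T^{-1}}^2$ (via the $o(1)$ contribution of coordinates in $x_*$); the $T$-dependence of $\theta'$ and the attendant need for a uniform-over-compact consistency upgrade; and the second claim, for which the paper merely cites ``a similar argument to Corollary~2 of \cite{lattimore2017end}'' while you spell out the subsequence/compactness argument. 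One small wrinkle in that last step: the limiting constraint you extract is on $\sum_{i\in x\setminus x_*}$, whereas the program defining $c(\X,\theta)$ constrains $\sum_{i\in x}$, so ``fix $\tau_{x_*}^*$ arbitrarily'' is not quite enough when some limiting constraint is tight --- take instead $(1+\delta)\tau_x^*$ for $x\neq x_*$ together with $\tau_{x_*}^*$ large (costless since $\Delta_{x_*}=0$) to absorb the $i\in x\cap x_*$ terms, and let $\delta\downarrow 0$ to conclude $c(\X,\theta)\leq L$.
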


\begin{proof}

We use a similar argument to the proof of Theorem 1 in \cite{lattimore2017end}. We construct an alternative instance $\theta^\prime$ to obtain an asymptotic lower bound. Let $P^\prime$ denote the probability measure of the associated instance (which we will specify shortly). We note that the Divergence Lemma (Lemma 15.1 \cite{lattimore2020bandit}) is easily adapted to the semi-bandit feedback setting. Thus, by a standard argument that applies the Divergence Lemma and the Bretagnolle–Huber inequality (Theorem 14.2 in \cite{lattimore2020bandit}), we have that
\begin{align}
\frac{1}{2} \norm{\theta - \theta^\prime}_{G_T}^2 \geq \log(\frac{1}{2 \P(E) + 2 \P^\prime(E^c)}) \label{eq:lb_div_pink}
\end{align}
for any event $E$. Define
\begin{align*}
\theta^\prime & = \theta + \frac{G_T^{-1} [x-x_*](\Delta_x + \epsilon)}{\norm{x-x_*}_{G_T^{-1}}^2}.
\end{align*}
Note that
\begin{align*}
(x-x_*)^\t \theta^\prime = \epsilon > 0.
\end{align*}
Let $R_T^\prime$ denote the regret of $\pi$ on the alternative instance $\theta^\prime$. Choose $E = \{T_{x_*} \leq \frac{T}{2} \}$. We have that
\begin{align*}
R_T = \sum_x \E[T_x] \Delta_x \geq \Delmin \frac{T}{2} \P(T_{x_*} \leq T/2).
\end{align*}
Furthermore, 
\begin{align*}
R^\prime_T =\sum_x \E[T_x] \Delta_x^\prime \geq \frac{\epsilon T}{2} \P^\prime(T_{x_*} \geq T/2).
\end{align*}
Thus, assuming that $\epsilon \leq \Delmin$, we have that
\begin{align}
\frac{R_T+R^\prime_T}{\epsilon T} \geq \P(E) +  \P^\prime(E^c). \label{eq:prob_small}
\end{align}
Then, inequalities \eqref{eq:lb_div_pink} and \eqref{eq:prob_small} imply that
\begin{align*}
\frac{(\Delta_x + \epsilon)^2}{2\norm{x-x_*}_{G_T^{-1}}^2 } \geq \log(\frac{\epsilon T }{2[R_T + R^\prime_T]}).
\end{align*}
Dividing both sides by $\log(T)$, we have that
\begin{align*}
\frac{(\Delta_x + \epsilon)^2}{2\norm{x-x_*}_{G_T^{-1}}^2 } \geq 1 - \frac{\log(1/2\epsilon)}{\log(T)} - \frac{\log(2 R_T - R_T^\prime)}{\log(T)}.
\end{align*}
Consistency of the policy $\pi$ implies that
\begin{align*}
\liminf_{T \longrightarrow \infty} \frac{(\Delta_x + \epsilon)^2}{2\norm{x-x_*}_{G_T^{-1}}^2 \log(T)} \geq 1.
\end{align*}
Rearranging, we have that 
\begin{align*}
\frac{(\Delta_x + \epsilon)^2}{2} \geq \limsup_{T \longrightarrow \infty} \norm{x-x_*}_{G_T^{-1}}^2 \log(T).
\end{align*}
This establishes the first claim in the lower bound. The second claim follows by a similar argument to the argument in Corollary 2 of \cite{lattimore2017end}.

\end{proof}

\begin{proof}[Proof of Proposition \ref{prop:opt_counterexample_semibandit}]
\textbf{Proof of lower bound for optimism:} Define the following problem instance
\begin{align*}
\theta_i & = \begin{cases}
1  & i = 1 \\
1-\epsilon & i \in \{2,\ldots, m\} \\
-1 + \epsilon & i \in \{m+1, \ldots, 2m -1 \} \\
-1 & i \in \{2m, \ldots, 2m + \sqrt{m}  \}
\end{cases}
\end{align*}
with $\X = \{\{1\}, \ldots, \{m\}, [2m+ \sqrt{m}] \}$. Let $x^{(i)} = \{i\}$ for $i \leq m$ and $x^{(m+1)} = [2m+ \sqrt{m}]$. Note that $\Delta_i = \epsilon$ if $i \leq m$ and $\Delta_{m+1} = \sqrt{m}+1$. Then, the optimization problem in Theorem \ref{thm:lower_bound} becomes
\begin{align*}
\min_{ \tau \in [0,\infty)^{|\X|}}&  \sum_{i \leq m} \tau_i \epsilon + \tau_{m+1} (\sqrt{m}+1)\\
 \text{s.t. }& \frac{1}{\tau_i + \tau_{m+1}} \leq \epsilon^2/2 \quad \forall i \in \{2,\ldots, m\}  \\
& \sum_{i\in [m]} \frac{1}{\tau_i + \tau_{m+1}} + \frac{m+\sqrt{m}}{\tau_{m+1}} \leq \frac{(\sqrt{m}+1)^2}{2}
\end{align*}
Consider the solution is $\tau_{m+1} = \frac{4}{\epsilon^2}$ and $\tau_i = 0$ otherwise. This attains a value of
\begin{align*}
O(\frac{\sqrt{m}}{\epsilon^2}).
\end{align*}

Now, consider the performance of the generic optimistic algorithm. Let $T_i$ denote the number of times that arm $i$ is chosen. Define the event
\begin{align*}
\mc{E} & = \{ |x^\t (\widehat{\theta}_t -\theta)| \leq \ucb(x,\{x_s\}_{s \in [t-1]})  \forall x \in \X, \, \,  \forall t \in [T] \}.
\end{align*}
Suppose $\mc{E}$ holds. Now, suppose that $T_{m+1} = 4 \alpha \log(T)$. Then,
\begin{align*}
[x^{(m+1)}]^\t \widehat{\theta}_t + \ucb(x^{(m+1)},\{x_s\}_{s=1}^{t-1}) & \leq [x^{(m+1)}]^\t \theta + 2 \ucb(x^{(m+1)},\{x_s\}_{s=1}^{t-1}) \\
& \leq -\sqrt{m} + 2\sqrt{\alpha \norm{x}_{(\sum_{s=1}^{t-1} x_s x_s^\t )^{-1} }^2 \log(T)} \\
&  \leq 0.
\end{align*}
On the other hand, on $\mc{E}$, we have that $[x^{(1)}]^\t (\widehat{\theta}_t + \ucb(x^{(1)},\{x_s\}_{s=1}^{t-1})) \geq 1$ and hence $x^{(m)}$ is pulled at $4 \alpha \log(T)$ times. Since $\P(\mc{E}^c) \leq \frac{1}{T}$, we have that 
\begin{align}
\E[T_{m+1} ] \leq 4 \alpha \log(T) + 1 \label{eq:pull_upper_bound}
\end{align} 

Recall that $G_T = \E[\sum_{t=1}^T \diag(x_t x_t^\t)]$. By Theorem \ref{thm:lower_bound}, we have that
\begin{align*}
\limsup_{T \longrightarrow \infty} \log(T) \norm{x^{(1)} - x^{(i)}}_{G_T^{-1}}^2 \leq \epsilon^2/2
\end{align*}
for all $i$, which together with \eqref{eq:pull_upper_bound} implies that
\begin{align*}
\E[T_i]/\log(T) = \Omega(1/\epsilon^2)
\end{align*}
for all $i \in \{2,\ldots, m\}$. Thus,
\begin{align*}
\limsup_{T \longrightarrow \infty} \frac{R^{optimistic}_\theta(T)}{\log(T)} & = \Omega(m/\epsilon ).
\end{align*}

\textbf{Proof of upper bound for Algorithm \ref{alg:gw_ae_comp}:} 
From the proof of Theorem \ref{thm:efficient_regret_bound}, we know that, for all $\ell$ simultaneously, with probability at least $1-\delta$: 
 \begin{align*}
 \calR_\ell \leq & \min_{\tau}  \ \sum_{x \in \X} 2(\epsilon_\ell + \hat{\Delta}_x) \tau_x  \\
& \text{ s.t. } \mathbb{E}_\eta \left [ \max_{x \in \X} \frac{(x_\ell - x)^\top \Asb(\tau)^{-1/2} \eta}{\epsilon_\ell + \hat{\Delta}_x} \right ] \leq \frac{1}{128 (1 + \sqrt{\pi \log(2\ell^3/\delta)})} 
\end{align*}  
and a $\tau$ satisfying:
$$ \mathbb{E}_\eta \left [ \max_{x \in \X} \frac{(x_\ell - x)^\top \Asb(\tau)^{-1/2} \eta}{\epsilon_\ell + \Delta_x} \right ] \leq \frac{1}{512 (1 + \sqrt{\pi \log(2\ell^3/\delta)})} $$
is also feasible for the problem above. Note that if we put all our mass on $\mathbf{1}$ we will have $\Asb(\tau) = \tau I$, so a feasible solution to the above problem requires that:
$$\left ( 512 (1 + \sqrt{\pi \log(2\ell^3/\delta)}) \mathbb{E}_\eta \left [ \max_{x \in \X} \frac{(x_\ell - x)^\top \eta}{\epsilon_\ell + \Delta_x} \right ] \right )^2 \leq \tau$$
we can upper bound:
\begin{align*}
\mathbb{E} \left [ \max_{x \in \X} \frac{(x_\ell - x)^\top \eta}{\epsilon_\ell + \Delta_x} \right ] & =  \mathbb{E} \left [ \max \left \{ \frac{x_\ell^\top \eta}{\epsilon_\ell + \epsilon} +  \frac{\max_{i=1,\ldots,m} -\eta_i}{\epsilon_\ell + \epsilon}, \frac{(x_\ell - \mathbf{1})^\top \eta}{\epsilon_\ell + \sqrt{m}} \right \} \right ] \\
& \leq \frac{1}{\epsilon_\ell + \epsilon} \Exp[|x_\ell^\top \eta|] + \frac{1}{\epsilon_\ell + \epsilon} \Exp[\max_{i=1,\ldots,m} | \eta_i|] + \frac{1}{\epsilon_\ell + \sqrt{m}} \Exp[|(x_\ell - \mathbf{1})^\top \eta|] 
\end{align*}
Since $x_\ell$ is a candidate for the best arm at round $\ell$, on the good event we must have that $\Delta_{x_\ell} \leq c \epsilon_\ell$. In particular, then, we will either have that $\| x_\ell \|_1 = 1$, or $\epsilon_\ell = O(\sqrt{m})$, so regardless of $\ell$, $\frac{1}{\epsilon_\ell + \epsilon} \Exp[|x_\ell^\top \eta|] \leq c/\epsilon_\ell$. By \cite{vershynin2018high}, since each $\eta_i$ has unit variance, we'll have $\Exp[\max_{i=1,\ldots,m} | \eta_i|] \leq c \sqrt{\log(m)}$. Finally, noting that $x_\ell - \mathbf{1}$ has at most $c(m + \sqrt{m})$ non-zero entries, $(x_\ell - \mathbf{1})^\top \eta$ has variance bounded as $c(m + \sqrt{m})$, so $\Exp[|(x_\ell - \mathbf{1})^\top \eta|] \leq \calO(\sqrt{m})$. We conclude that:
$$ \mathbb{E} \left [ \max_{x \in \X} \frac{(x_\ell - x)^\top \eta}{\epsilon_\ell + \Delta_x} \right ] \leq \calO \left (\frac{\sqrt{\log m}}{\epsilon_\ell} \right ) $$
It follows that:
$$ \tau \geq \calO \left ( \frac{\log(\ell^3/\delta) \log m}{\epsilon_\ell^2} \right ) $$
is sufficient. Since this is a feasible solution, we'll then have that:
$$ \calR_\ell \le \sum_{x\in\X} 2(\epsilon_\ell + \hat{\Delta}_x) \tau_{\ell,x}^* \leq  \calO \left ( (\epsilon_\ell + \sqrt{m} ) \frac{\log(\ell^3/\delta) \log m}{\epsilon_\ell^2} \right ) \leq  \calO \left ( \sqrt{m} \frac{\log(\ell^3/\delta) \log m}{\epsilon_\ell^2} \right ) $$
where the last inequality holds since $\sqrt{m} = \Delmax$. Ignoring $\log$ factors that do not involve $\delta$, and noting that there are at most $\log(\sqrt{m}/\epsilon)$ rounds, the total regret is bounded as:
$$\calO \left (  \sum_{\ell=1}^{\log(\sqrt{m}/\epsilon)}   \frac{\sqrt{m} \log(1/\delta)}{\epsilon_\ell^2} \right ) \leq \calO \left (   \frac{\sqrt{m} \log(1/\delta)}{m} 4^{\log(\sqrt{m}/\epsilon)} \right ) = \calO \left ( \frac{\sqrt{m} \log(1/\delta)}{\epsilon^2} \right )$$
Choosing $\delta = 1/T$ completes the proof.

\end{proof}

\textbf{Failure of Thompson Sampling for semi-bandit feedback:} We now provide a sketch as to why Thompson sampling fails on the instance in Proposition \ref{prop:opt_counterexample_semibandit}. Intuitively, Thompson Sampling is optimistic in a randomized fashion, so we would expect it to fail in the same way as optimistic algorithms. Slightly more formally, consider a typical version of Thompson sampling where at each round $t$, $\wt{\theta}_t \sim N(\widehat{\theta}_t, (\sum_{s=1}^{t-1} \diag(x_s x_s^\t))^{-1}))$ where $x_s$ is the arm chosen at time $s$ and $x_t = \argmax_{x \in \X} x^\t \wt{\theta}_t$. Note that with high probability, we will have that:
$$ | x^\top \wt{\theta}_t - x^\top \thetast  | \leq  \sqrt{\alpha \| x \|_{(\sum_{s=1}^{t-1} \diag(x_s x_s^\top))^{-1}}^2 \log(T)} $$
so we will essentially only pull an arm when$\sqrt{\alpha \| x \|_{(\sum_{s=1}^{t-1} \diag(x_s x_s^\top))^{-1}}^2 \log(T)} > \Delta_x$. In the case of $\mathbf{1}$, we will have:
$$ \| x \|_{(\sum_{s=1}^{t-1} \diag(x_s x_s^\top))^{-1}}^2 \approx \frac{\sqrt{m}}{T_{m+1}}$$
where $T_{m+1}$ are the total pulls of $\mathbf{1}$. Since $\Delta_{m+1} = \sqrt{m}$, the above inequality reduces to:
$$ \sqrt{\frac{\alpha \sqrt{m} \log(T)}{T_{m+1}}} > \sqrt{m} \implies \frac{\log(T)}{\sqrt{m}} > T_{m+1} $$
so arm $\mathbf{1}$ will only be pulled a logarithmic number of times in $T$, which, as with optimism, is not sufficient to achieve optimal regret.

\section{Additional Experimental Results}\label{sec:add_exp}
\begin{figure}[H]
     \centering
     \begin{subfigure}[b]{0.32\textwidth}
         \centering
        \includegraphics[width=\linewidth]{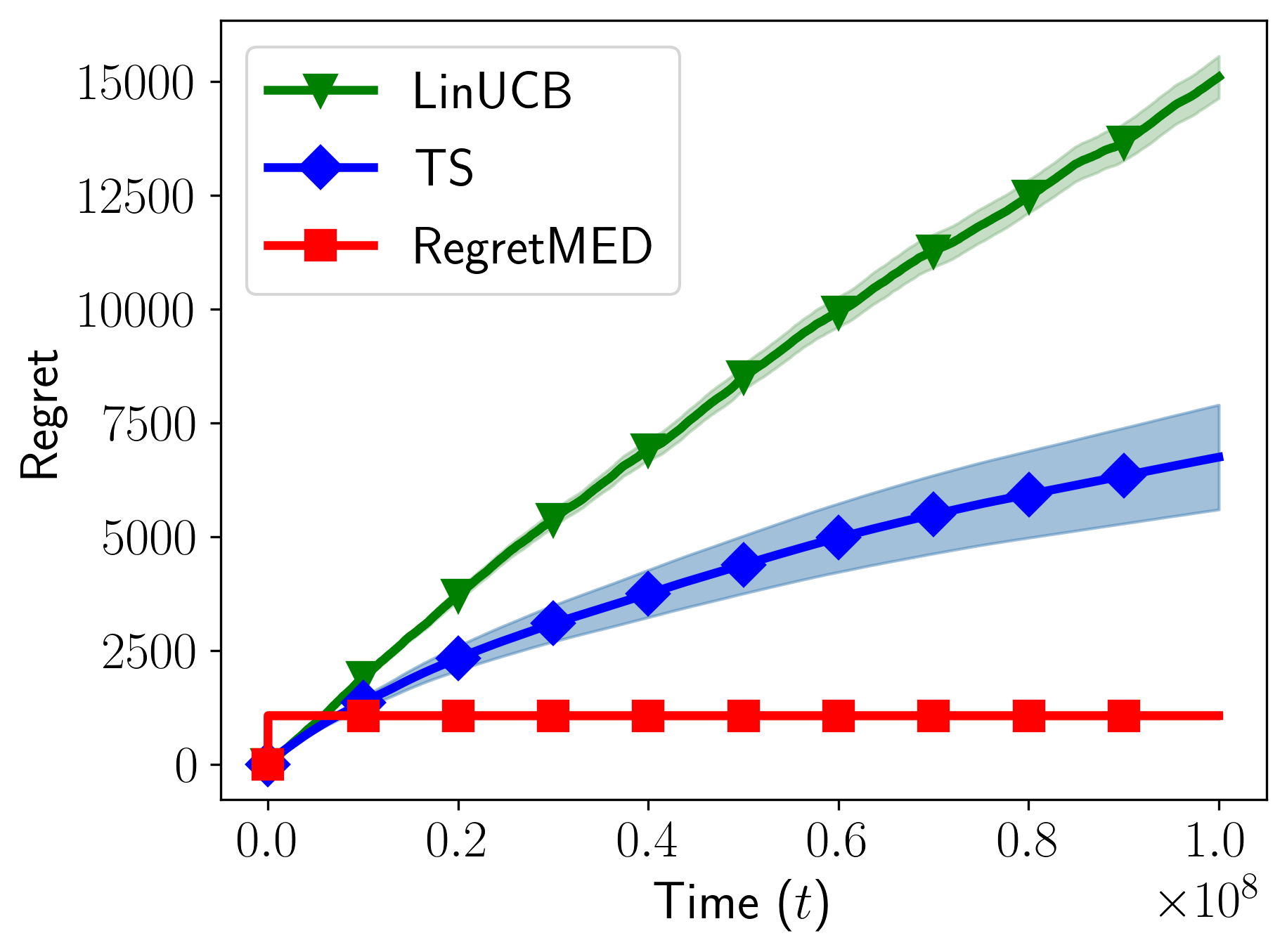}
  \caption{$\epsilon = 0.0005$}
     \end{subfigure}
     \hfill
     \begin{subfigure}[b]{0.32\textwidth}
         \centering
         \includegraphics[width=\linewidth]{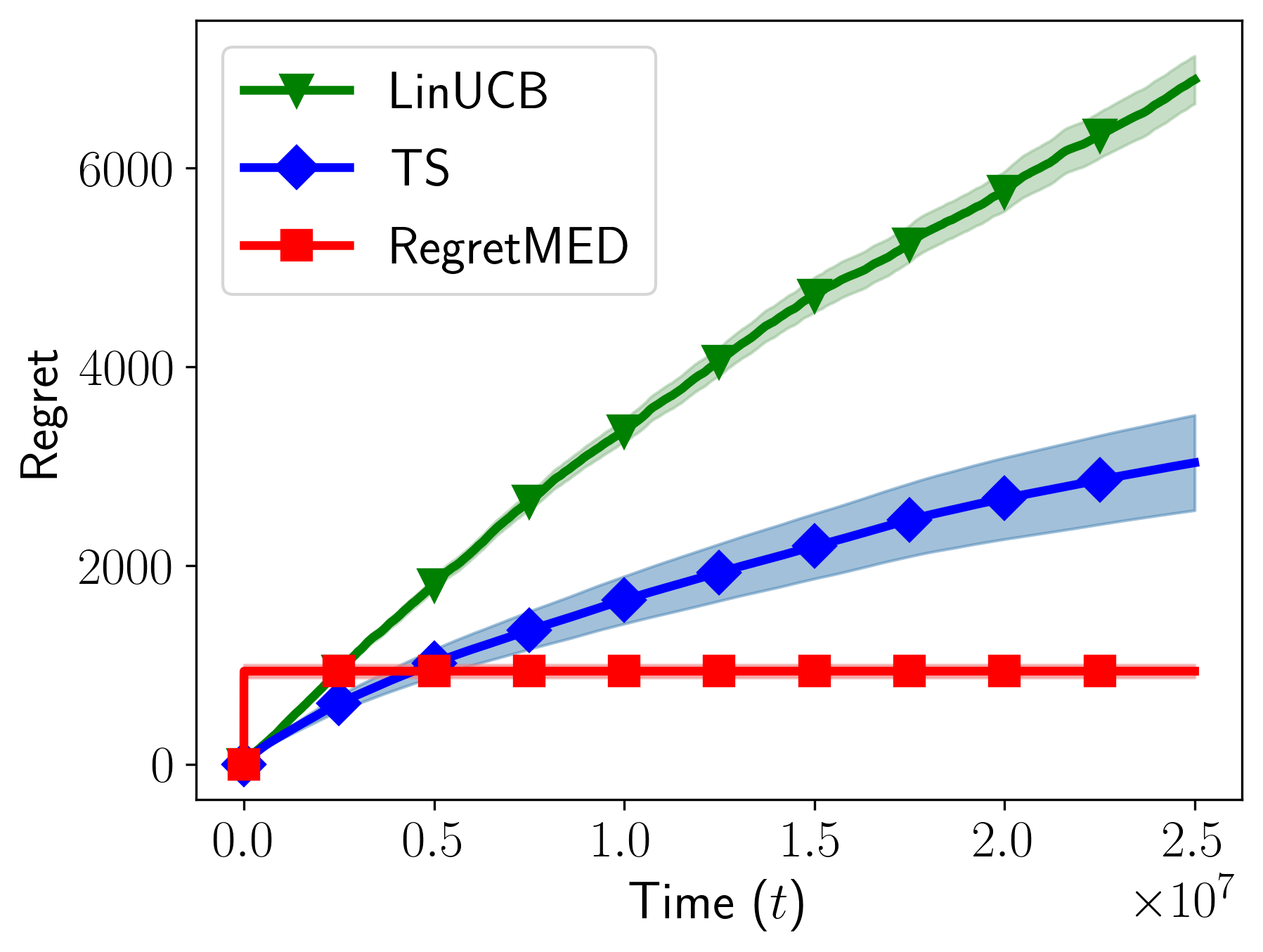}
  \caption{$\epsilon = 0.001$}
     \end{subfigure}
     \hfill
     \begin{subfigure}[b]{0.32\textwidth}
         \centering
           \includegraphics[width=\linewidth]{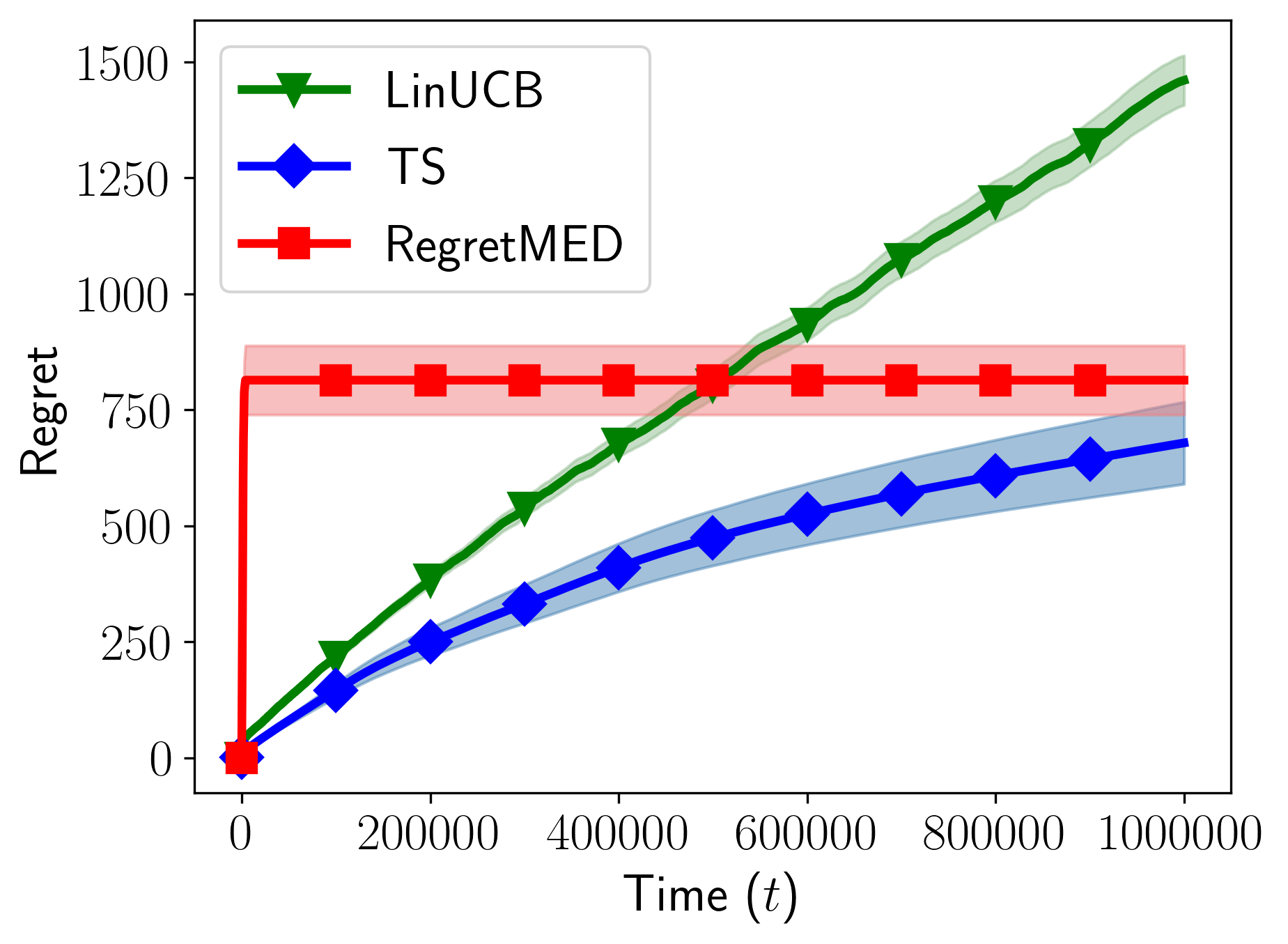}
  	\caption{$\epsilon = 0.005$}
     \end{subfigure}
        \caption{Regret against time plots for data points in Figure \ref{fig:soare}}
\end{figure}

\rev{We remark that, when running RegretMED, we do not use the exact constants specified in the algorithm. These constants are likely somewhat loose due to looseness in our analysis. In addition, we do not run the computationally efficient procedure derived formally but instead found that a much simpler heuristic---running stochastic Frank-Wolfe on the Lagrangian relaxation---works well in practice. We also do not use the precise value of $\Delmax$, and instead use an upper bound that can be computed using only knowledge of the arms.}

\rev{The algorithms we compare against do not contain significant hyperparameters, and we choose reasonable values for the parameters they do require. In particular, for LinUCB, we use the regularization $\lambda = 1$.}

\end{document}